\theoremstyle{plain}%
\newtheorem{theorem}{Theorem}[section]
\newtheorem{corollary}[theorem]{Corollary}
\newtheorem{lemma}[theorem]{Lemma}
\theoremstyle{definition}
\theoremstyle{remark}
\newtheorem{remark}{Remark}
\newcommand{\R}{{\mathbb R}}
\newcommand{\nBG}{G} %
\newcommand{\Gk}{\mathcal{G}_k}
\newcommand{\Gks}{\mathcal{G}^{\sigma'}_k\hspace{-0.08em}}
\newcommand{\Exp}{\mathbb{E}}
\newcommand{\eqdef}{:=}
\def\<#1,#2>{\left\langle #1, #2 \right\rangle}
\newcommand{\aggpar}{\nu}
\newcommand{\hk}{\vsub{\hv}{k}}
\newcommand{\Xk}{\vsub{X}{k}}
\newcommand{\alphak}{\vsub{\alphav}{k}}
 \newtheorem{assumption}[theorem]{Assumption}
\definecolor{orange}{rgb}{1,0.5,0}
\newcommand{\mcx}[1]{{#1}}
\newcommand{\gs}[1]{{#1}}
\newcommand{\vsubset}[2]{#1_{[#2]}}
\newcommand{\vc}[2]{#1^{#2}}
\newcommand{\vsub}[2]{#1_{[#2]}}
\newcommand{\cocoa}{\textsc{CoCoA}\xspace} 
\newcommand{\cocoap}{\textsc{CoCoA$\!^{\bf \textbf{\footnotesize+}}$}\xspace}
\title{Distributed Optimization with Arbitrary Local Solvers}
\author{[names will come later]}
\newcommand{\hv}{\mathbf{h}}
\newcommand{\sv}{\mathbf{s}}
\newcommand{\uv}{\mathbf{u}}
\newcommand{\vv}{\mathbf{v}}
\newcommand{\wv}{\mathbf{w}}
\newcommand{\xv}{\mathbf{x}}
\newcommand{\alphav}{{\boldsymbol\alpha}}%
\newcommand\tagthis{\addtocounter{equation}{1}\tag{\theequation}}
\newcommand{\bD}{D}
\newcommand{\bP}{\mathcal{P}}
\newcommand{\Ggk}{\mathcal{G}^{\sigma'}_k\hspace{-0.08em}}
\newcommand{\gap}{{Gap}}
\begin{document}

\title{\textit{Distributed Optimization with Arbitrary Local Solvers}  }

\author{
\name{Chenxin Ma\textsuperscript{a},
Jakub Kone\v{c}n\'y\textsuperscript{b},
Martin Jaggi\textsuperscript{c},
Virginia Smith\textsuperscript{d},
Michael I. Jordan\textsuperscript{d},\\
Peter Richt\'arik\textsuperscript{b}
and Martin Tak\'a\v{c}\textsuperscript{a}$^{\ast}$\thanks{$^\ast$Corresponding author. Email: \href{mailto:takac.mt@gmail.com}{takac.mt@gmail.com}.
This paper presents a substantial extension of an earlier conference article which appeared in ICML 2015 -- Proceedings of the 32th International Conference on Machine Learning, 2015 \cite{Ma:2015ti}. 
A conference article \cite{Smith:2015ua} exploring non-strongly convex regularizers is currently under review.
}}
\affil{
\textsuperscript{a}Lehigh University, 27 Memorial Dr W, Bethlehem, PA 18015, USA;
\textsuperscript{b}University of Edinburgh, Old College, South Bridge, Edinburgh EH8 9YL, United Kingdom;  
\textsuperscript{c}EPFL, Station 14, CH-1015 Lausanne, Switzerland;
\textsuperscript{d}UC Berkeley, 253 Cory Hall, Berkeley, CA 94720-1770, USA
}
\received{v2.0 released \today}
}

\maketitle

\begin{abstract} 
With the growth of data and necessity for distributed optimization methods, solvers that work well on a single machine must be re-designed to leverage distributed computation. Recent work in this area has been limited by focusing heavily on developing highly specific methods for the distributed environment. These special-purpose methods are often unable to fully leverage the competitive performance of their well-tuned and customized single machine counterparts. Further, they are unable to easily integrate improvements that continue to be made to single machine methods. To this end, we present a framework for distributed optimization that both allows the flexibility of arbitrary solvers to be used on each (single) machine locally, and yet maintains competitive performance against other state-of-the-art special-purpose distributed methods. We give strong primal-dual convergence rate guarantees for our framework that hold for arbitrary local solvers. We demonstrate the impact of local solver selection both 
theoretically and in an extensive experimental comparison. Finally, we provide thorough implementation details 
for our framework, highlighting areas for practical performance gains.\end{abstract}

\section{Motivation}
\label{sec:motivation}
Regression and classification techniques, represented in the general class of regularized loss minimization problems \cite{vapnik1998statistical}, are among the most central tools in modern big data analysis, machine learning, and signal processing.
For these tasks, much effort from both industry and academia has gone into the development of highly tuned and customized solvers.
However, with the massive growth of available datasets, major roadblocks still persist in the distributed setting, where data no longer fits in the memory of a single computer, and computation must be split across multiple machines in a network \cite{bottou2010large,marecek2014distributed,Ma:2015tt,
Balcan:2012tc,richtarik2013distributed,
Duchi:2013te,takavc2015distributed,
fercoq2014fast,chen2014large,ma2015linear,
Shamir:2014tp,DANE,ALPHA, 
zhang2015communication, konecny2015federated}.

On typical real-world systems, communicating data between machines is several orders of magnitude slower than reading data from main memory, e.g., when leveraging commodity hardware.
Therefore when trying to translate the highly tuned existing single machine solvers to the distributed setting, great care must be taken to avoid this significant communication bottleneck \cite{Jaggi:cocoa,Yang:2013vl}.

While several distributed solvers for the problems of interest have been recently developed, they are often unable to fully leverage the competitive performance of their highly tuned and customized single machine counterparts, which have already received much more research attention. 
More importantly, it is unfortunate that distributed solvers cannot automatically benefit from improvements made to the single machine solvers, and therefore are forced to lag behind the most recent developments.

In this paper we make a step towards resolving these issues by proposing a general communication-efficient distributed framework that can employ arbitrary single machine local solvers and thus directly leverage their benefits and problem-specific improvements.
Our framework works in rounds, where in each round the local solvers on each machine find a (possibly weak) solution to a specified subproblem of the same structure as the original master problem. On completion of each round, the partial updates between the machines are efficiently combined by leveraging the primal-dual structure of the global problem \cite{Yang:2013vl,Jaggi:cocoa,Ma:2015ti}. The framework therefore completely decouples the local solvers from the distributed communication. Through this decoupling, it is possible to balance communication and computation in the distributed setting, by controlling the desired accuracy and thus computational effort spent to determine each subproblem solution. Our framework holds with this abstraction even if the user wishes to use a different local solver on each machine.

\vspace{-1em}
\subsection{Contributions}

\vspace{0em}
 
\paragraph*{\it Reusability of Existing Local Solvers.}
The proposed framework allows for distributed optimization with the use of arbitrary local solvers on each machine. This abstraction makes the resulting framework highly flexible, and means that it can easily leverage the benefits of well-studied, problem-specific single machine solvers. In addition to increased flexibility and ease-of-use, this can result in large performance gains, as single machine solvers for the problems of interest have typically been thoroughly tuned for optimal performance. Moreover, any performance improvements that continue to be made to these local solvers will be automatically translated by the framework into the distributed setting.

\vspace{-1em}
\paragraph*{\it Adaptivity to Communication Cost.}
On real-world compute systems, the cost of communication versus computation typical varies by many orders of magnitude, from high-performance computing environments to very slow disk-based distributed workflow systems such as MapReduce/Hadoop. 
For optimization algorithms, it is thus essential to accommodate varying amounts of work performed locally per round, while still providing convergence guarantees.  Our framework provides exactly such control. 

\vspace{-1em}
\paragraph*{\it Strong Theoretical Guarantees.}
In this paper we extend and improve upon the \cocoa~\cite{Jaggi:cocoa} method. Our theoretical convergence rates apply to both smooth and non-smooth losses, and for both \cocoa as well as the more general \cocoap framework here. 
 The framework also exhibits favorable \emph{strong scaling} properties for the class of problems considered, as the number of machines~$K$ increases and the data size is kept fixed. More precisely, while the convergence rate of \cocoa degrades as $K$ is increased, the stronger theoretical convergence rate here is---in the worst case---\emph{independent} of $K$. 
Since the number of communicated vectors is only one per round and worker, 
this favorable scaling might be surprising. Indeed, for existing methods, splitting data among more machines generally
increases communication requirements \cite{Shamir:2014tp,Arjevani:2015vka}, which
can severely affect overall runtime.

\vspace{-1em}
\paragraph*{\it Primal-Dual Convergence.} We additionally strengthen the rates by showing stronger 
primal-dual convergence for both algorithmic frameworks, which are almost 
tight to their dual-only (or primal-only) counterparts.
Primal-dual rates for \cocoa had not previously been analyzed in the general convex case.
Our primal-dual rates allow efficient and practical certificates for the 
optimization quality, e.g., for stopping criteria. %

\vspace{-1em}
\paragraph*{\it Experimental Results.}
Finally, we give an extensive experimental comparison highlighting the impact of using various arbitrary solvers locally on each machine on several real-world, distributed datasets. We compare the performance of \cocoa and \cocoap across these datasets and choices of solvers, in particular illustrating the performance on a massive 280 GB dataset. Our code is available in an open source C\texttt{++} library, at: \url{https://github.com/optml/CoCoA}.

\vspace{-1em}
 
\subsection{Outline} The rest of the paper is organized as follows. Section~\ref{sec:problem} provides context and states the problem of interest, including necessary assumptions and their consequences. In Section~\ref{sec:algorithm} we formulate the algorithm in detail and explain how to implement it efficiently in practice. The main theoretical results are presented in Section~\ref{sec:result},
followed by discussion of relevant related work in Section~\ref{sec:relatedWork}.
 Practical experiments demonstrating the strength of the proposed framework are given in Section~\ref{sec:experiments}.
 Finally, we prove the main results in the appendix, in Section~\ref{sec:analysis}.

\vspace{-1em}

\section{Background and Problem Formulation}
\label{sec:problem}

To set our framework in context, we first state traditional complexity measures and convergence rates for single machine algorithms,  and then demonstrate that these must be adapted to more accurately represent the performance of an algorithm in the distributed setting.

When running an iterative optimization algorithm $\mathcal{A}$ on a single machine, its performance is typically measured by the total runtime:
\begin{equation}
\label{eq:runtime:default}
\tag{T-A}
\text{TIME}(\mathcal{A}) = \mathcal{I}_\mathcal{A} (\epsilon) \times \mathcal{T}_\mathcal{A} \, .
\end{equation}
Here, $\mathcal{T}_\mathcal{A}$ stands for the time it takes to perform a single iteration of algorithm $\mathcal{A}$, and $\mathcal{I}_\mathcal{A} (\epsilon)$ is the number of iterations  $\mathcal{A}$ needs to attain an $\epsilon$-accurate objective.\footnote{While for many algorithms the cost of a single iteration will vary throughout the iterative process, this simple model will suffice for our purpose to highlight the key issues associated with extending algorithms to a distributed framework.} 
 
\mcx{On a single machine, most of the state-of-the-art first-order optimization methods can achieve quick convergence in practice in terms of \eqref{eq:runtime:default} by performing a large amount of relatively fast iterations.} In the distributed setting, however, time to communicate between two machines can be several orders of magnitude slower than even a single iteration of such an algorithm. As a result, the overall time needed to perform a single iteration can increase significantly. 

Distributed timing can therefore be better illustrated using the following practical distributed efficiency model (see also \cite{marecek2014distributed}), where
\begin{equation}
\label{eq:runtime:distributed}
\tag{T-B}
\text{TIME}(\mathcal{A}) = \mathcal{I}_\mathcal{A} (\epsilon) \times \left( c + \mathcal{T}_\mathcal{A} \right).
\end{equation}
The extra term $c$ is the time required to perform one round of communication.\footnote{%
For simplicity, we assume here a fixed network architecture, and compare only to classes of algorithms that communicate a single vector in each iteration, rendering $c$ to be a constant, assuming we have a fixed number of machines. Most first-order algorithms would fall into this class.}%
 As a result, an algorithm that performs well in the setting of \eqref{eq:runtime:default} does not necessarily perform well in the distributed setting \eqref{eq:runtime:distributed}, especially when implemented in a straightforward or na\"{i}ve way. 
In particular, if $c \gg \mathcal{T}_\mathcal{A}$, we could intuitively expect less potential for improvement, as most of the time in the method will be spent on communication, not on actual computational effort to solve the problem. In this setting, novel optimization procedures are needed that carefully consider the amount of communication and the distribution of data across multiple machines.

One approach to this challenge is to design novel optimization algorithms from scratch, designed to be efficient in the distributed setting. This approach has one obvious practical drawback: 
There have been numerous highly efficient solvers developed, fine-tuned to  particular problems of interest, as long as the problem fits onto a single machine. These solvers are ideal if run on a single machine, but with the growth of data and necessity of data distribution, they must be re-designed to work in modern data regimes.

Recent work \cite{Jaggi:cocoa,Ma:2015ti,Yang:2013vl,Yang:2013ui,Yu:2012fp,Smith:2015ua} has attempted to address this issue by designing algorithms that reduce the communication bottleneck by allowing infrequent communication, while utilizing already existing algorithms as local sub-procedures. 
\mcx{The presented work here builds on the promising approach of \cite{Yang:2013vl,Jaggi:cocoa} in this direction.
See Section \ref{sec:relatedWork} for a detailed discussion of the related literature.
}%

The core idea is that one can formulate a local subproblem for each individual machine, and run an arbitrary local solver dependent only on local data for a number of iterations --- obtaining a partial local update. After each worker returns its partial update, a global update is formed by their aggregation.  

The big advantage of this is that companies and practitioners do not have to implement new algorithms that would be suitable for the distributed setting. We provide a way for them to utilize their existing algorithms that work on a single machine, and provide a novel communication protocol on top of this. 

In the original work on \cocoa \cite{Jaggi:cocoa}, authors provide convergence analysis only for the case when \mcx{the overall update is formed as an average of the partial updates}, and note that in practice it is possible to improve performance \mcx{by making a longer step in the same direction}. The main contribution of this work is a more general convergence analysis of various settings, which enables us to do better than averaging. In one case, we can even sum the partial updates \mcx{to obtain the overall update}, which yields the best result, both in theory and practice. We will see that this can result in significant performance gains, see also \cite{Ma:2015ti,Smith:2015ua}.

In the analysis, we will allow local solvers of arbitrarily weak accuracy, each working on its own subproblem which is defined in a completely data-local way for each machine. The relative accuracy obtained by each local solver will be denoted by $\Theta\in[0,1]$, where $\Theta=0$ describes an exact solution of the subproblem, and~$\Theta=1$ means that the local subproblem objective has not improved at all, for this call of the local solver. 
This paradigm results in a substantial change in how we analyze efficiency in the distributed setting. The formula practitioners are interested in minimizing thus changes to become:
\begin{equation}
\label{eq:runtime:cocoa}
\tag{T-C}
\text{TIME}(\mathcal{A}, \Theta) = \mathcal{I} (\epsilon, \Theta) \times \left( c + \mathcal{T}_\mathcal{A}(\Theta) \right).
\end{equation}
Here, the function $\mathcal{T}_\mathcal{A}(\Theta)$ represents the time the local algorithm $\mathcal{A}$ needs to obtain accuracy~$\Theta$ on the local subproblem.  
Note that the number of outer iterations~$\mathcal{I} (\epsilon, \Theta)$ is independent of choice of the inner algorithm $\mathcal{A}$, which will also be reflected by our convergence analysis presented in Section~\ref{sec:result}. Our convergence rates will hold for any local solver $\mathcal{A}$ achieving local quality $\Theta$.
For strongly convex problems, the general form will be 
$ \mathcal{I} (\epsilon, \Theta) = \frac{\mathcal{O}(\log(1/\epsilon))}{1 - \Theta} \, . $
The inverse dependence on $1 - \Theta$ suggests that there is a limit to how much we can gain by solving local subproblems to high accuracy --- $\Theta$ close to~$0$. There will always be order of $\log(1 / \epsilon)$ outer iterations needed. Hence, excessive local accuracy should not be necessary. On the other hand, if $\Theta \rightarrow 1$, meaning that the cost and quality of the local solver diminishes, then the number of rounds $\mathcal{I} (\epsilon, \Theta)$ will explode, which is to be expected.

To illustrate the strength of the paradigm \eqref{eq:runtime:cocoa} compared to \eqref{eq:runtime:distributed}, suppose we run gradient descent as local solver $\mathcal{A}$ for just a single iteration. It turns out that within our framework, choosing this local solver would lead to a method which is equivalent to running naively distributed gradient descent\footnote{Note that this is not obvious at this point. They are identical, subject to choice of local subproblems as specified in Section~\ref{sec:subproblem}.}. Running gradient descent for a single iteration would happen to attain a particular value of $\Theta$. Note that we typically do not set the value $\Theta$ explicitly. It is implicitly chosen by the number of iterations or stopping criterion specified by the user for the local solver. There is no reason to think that the value attained by single iteration of gradient descent would be optimal. For instance, it may be the case that running gradient descent for, say,  $200$ iterations, instead of 
just one, would give substantially better result in practice, due to better communication efficiency. These kinds of considerations are discussed at length in Section~\ref{sec:experiments}.

In general, one would intuitively expect that the optimal choice would be to have~$\Theta$ such that $\mathcal{T}_\mathcal{A}(\Theta) = \mathcal{O}(1) \times c$. In practice, however, the best strategy for any given local solver is to try several values for the number of local iterations to estimate the optimal choice. We will further discuss the importance of $\Theta$, both theoretically and empirically, later in Sections~\ref{sec:result} and \ref{sec:experiments}.
\vspace{-1mm}

\subsection{Problem Formulation}
\label{sec:problemformulation}
Let the training data $\{\xv_i \in \R^d, y_i \in \R \}_{i=1}^n$ be the set of input-output pairs, where $y_i$ can be real valued, but also from a discrete set in the case of classification problems.
We will assume without loss of generality  that $\forall i: \|\xv_i\|\leq 1$. Many common tasks in machine learning and signal processing can be cast as the following optimization problem:
\begin{equation}
\label{eq:primal}
\min_{\wv\in \R^d} \left\{ P(\wv) \eqdef \tfrac1n \textstyle{\sum}_{i=1}^n \ell_i( \xv_i^T \wv) + \lambda g(\wv) \right\},
\end{equation}
where $\ell_i$ is some convex loss function and $g$ is a regularizer. Note that $y_i$ is typically hidden in the formulation of functions $\ell_i$. Table \ref{tbl:differentLossFunctions} lists several common loss functions together with their convex conjugates $\ell^*_i$ \cite{ShalevShwartz:2014dy}.

\begin{table} 
\tbl{Examples of commonly used loss functions.}
{\begin{tabular}{llll}
\toprule
\multicolumn{1}{c}{Loss function} 
  & \multicolumn{1}{c}{ $\ell_i(a)$}  & \multicolumn{1}{c}{$\ell^*_i(b)$}
  & \multicolumn{1}{c}{Property  of $\ell$}
   \\ \colrule
Quadratic loss  & $\frac12 (a-y_i)^2$ &  $\frac12 b^2 + y_ib $
& Smooth 
 \\
Hinge loss & $\max\{0, y_i-a\}$ &  $y_i b, \quad b\in[-1,0] $ %
 & Continuous \\
Squared hinge loss  & $(\max\{0, y_i-a\})^2$ & $\frac{b^2}{4}, \quad b\in[-\infty,0] $ %
 & Smooth \\
Logistic loss  & $\log(1+\exp{(-y_ia)})$ & $-\frac{b}{y_i}\log \left(-\frac{b}{y_i}\right) + \left( 1 +\frac{b}{y_i} \right) \log \left( 1 + \frac{b}{y_i} \right) $ 
& Smooth \\
\botrule 
\end{tabular}
}
\label{tbl:differentLossFunctions}
\end{table}

The dual optimization problem for formulation \eqref{eq:primal} -- as a special case of Fenchel duality -- can be written as follows \cite{yuan2012recent,ShalevShwartz:2014dy}:
\begin{equation}
\label{eq:dual}
\max_{\alphav \in \R^n}
 \left\{
 D(\alphav )\eqdef  
 \tfrac1n \left(\textstyle{\sum}_{i=1}^n -\ell_i^*(- \alpha_i)\right)
 - \lambda g^* \left( \tfrac1{\lambda n} X \alphav \right) \right\},
\end{equation}
where $X = [\xv_1, \xv_2, \dots, \xv_n] \in \R^{d \times n}$, and $\ell_i^*$ and $g^*$ are the convex conjugate functions of $\ell_i$ and $g$, respectively. The convex (Fenchel) conjugate of a function $\phi : \R^k \rightarrow \R$ is defined as the function $\phi^* : \R^k \rightarrow \R$, with $\phi^*(\uv) := \sup_{\sv\in\R^k} \{ \sv^T \uv - \phi(\sv) \} $.

For simplicity throughout the paper, let us denote 
\begin{equation}
\label{eq:fRdefinition}
f(\alphav) \eqdef  \lambda g^*\left( \tfrac{1}{\lambda n} X \alphav \right) 
\qquad \text{and} \qquad R(\alphav) \eqdef \tfrac1n \textstyle{\sum}_{i=1}^n \ell_i^*(- \alpha_i)\, ,
\end{equation}
such that
$
D(\alphav) \overset{\eqref{eq:dual}+\eqref{eq:fRdefinition}}{=} - f(\alphav) - R(\alphav).
$

It is well known \cite{QUARTZ,takavc2013mini,ShalevShwartz:2014dy,Dunner:2016vga} that the first-order optimality conditions give rise to \mcx{a natural mapping that relates pairs of primal and dual variables. The mapping employs the linear map given by the data $X$, and maps any dual variable $\alphav \in\R^n$ to a primal candidate vector $\wv \in\R^d$ as follows:}
$$ \wv(\alphav) :=  \nabla g^*( \vv(\alphav) ) 
= \nabla g^*\left( \tfrac1{\lambda n} X \alphav \right),$$
where we denote
$
\vv(\alphav) := \tfrac1{\lambda n} X \alphav \, .
$

For this mapping, under the assumptions that we make in Section~\ref{sec:techassump} below, it holds that if $\alphav^\star$ is an optimal solution of \eqref{eq:dual}, then $\wv(\alphav^\star)$ is an optimal solution of~\eqref{eq:primal}. In particular, {\em strong duality} holds between the primal and dual problems. If we define the duality gap function as
\begin{align}
\label{eq:gap}
\gap(\alphav)
 := \bP(\wv(\alphav))-\bD(\alphav),
\end{align}
then $\gap(\alphav^\star)=0$, 
which ensures that by solving the dual problem \eqref{eq:dual} we also solve the original primal problem of interest~\eqref{eq:primal}. As we will later see, there are many benefits to leveraging this primal-dual relationship, including the ability to use the duality gap as a certificate of solution quality, and, in the distributed setting, as a tool through which we can effectively distribute computation.

\vspace{-1em}
\paragraph*{\it Notation.}

We assume that to solve problem \eqref{eq:dual} we have a network of $K$ machines at our disposal.
The data $\{\xv_i,y_i\}_{i=1}^n$ is residing on the $K$ machines in a distributed way, with every machine only holding a part of the whole dataset.
In the same way we split the dual variables $\alpha_i$, with each corresponding to an individual data point~$\xv_i$. The given data distribution is described using a partition $\mathcal{P}_1, \dots, \mathcal{P}_K$ that corresponds to the indices of data and dual variables residing on machine~$k$. Formally, $\mathcal{P}_k \subseteq \{1, 2, \dots, n\}$ for each $k$,  $\mathcal{P}_k \cap \mathcal{P}_l = \emptyset$ whenever $k \neq l$, and $\bigcup_{k = 1}^K \mathcal{P}_k = \{1, 2, \dots, n\}$.

In order to efficiently use this structure in the text, we introduce the following notation. For any $\hv \in \R^n$, let $\hk$ be the vector in $\R^n$ defined in such a way that
$ (\hk)_i = h_i$ if $i \in \mathcal{P}_k$ and $0$ otherwise.
Note that, in particular, 
$
\hv = \textstyle{\sum}_{k = 1}^ K   \hk.
$
Analogously, we write~$\Xk$ for the matrix consisting only of the columns $i \in \mathcal{P}_k$, padded with zeros in all other columns.

\subsection{Technical Assumptions}\label{sec:techassump}
Here we first state the properties and assumptions used throughout the paper.
We assume that for all $i\in \{1,\dots,n\}$, the function 
$\ell_i$ is convex, i.e.,
$\forall \lambda \in [0, 1]$ and $\forall x,y \in \R$ we have 
$\ell_i(\lambda x + (1 - \lambda) y) \leq \lambda \ell_i(x) + (1 - \lambda) \ell_i(y) \, .$
 
We also assume the function  
$g$ to be $1$-strongly convex, i.e., for all $\wv, \uv \in \R^d$ it holds that
$
g(\wv +  \uv) \geq g(\wv) + \< \nabla g(\wv), \uv > + \tfrac{1}{2} \| \uv \|^2,
$
where $\nabla g(\wv)$ is any subgradient\footnote{A subgradient of a convex function $\phi$ in a point $\xv' \in \R^d$ is defined as any $\xi \in \R^d$ satisfying for all $\xv \in \R^d$, $\phi(\xv) \geq \phi(\xv') + \< \xi, \xv - \xv' >$.} of the function $g$.
Here, $\| \cdot \|$ denotes the standard Euclidean norm.

Note that we use subgradients in the definition of strong convexity. This is due to the fact that while we will need the function $g$ to be strongly convex in our analysis, we do not require smoothness. An example used in practice is $g(\wv) = \| \wv \|^2 + \lambda' \| \wv \|_1$ for some $\lambda' \in \R$. Also note that in the problem formulation \eqref{eq:primal} we have a regularization parameter~$\lambda$, which controls the strong convexity parameter of the entire second term. Hence, fixing the strong convexity parameter of $g$ to $1$ is not restrictive in this regard. For instance, this setting has been used previously in~\cite{ShalevShwartz:2014dy,QUARTZ,csiba2015stochastic}.

The following assumptions state properties of the functions $\ell_i$, which we use only in certain results in the paper. We always explicitly state when we require each assumption.

\begin{assumption}[$(1/\gamma)$-Smoothness]
\label{ass:1gammasmooth}
Functions $\ell_i: \R \rightarrow \R$ are $1/\gamma$-smooth, if $\forall i \in \{ 1, \dots, n \}$ and $\forall x, h \in \R$ it holds that
\begin{equation}
\label{def:Lsmoothness}
\ell_i(x + h) \leq \ell_i(x) + h \nabla \ell_i(x) + \tfrac{1}{2 \gamma} h^2,
\end{equation}
where $\nabla \ell_i(x)$ denotes the gradient of the function $\ell_i$.
\end{assumption}
\begin{assumption}[$L$-Lipschitz Continuity]
\label{ass:LLip}
Functions $\ell_i: \R \rightarrow \R$ are $L$-Lipschitz continuous, if $\forall i \in \{ 1, \dots, n \}$ and $\forall x, h \in \R$ it holds that
\begin{equation}
\label{def:LLip}
|\ell_i(x + h) - \ell_i(x)| \leq L |h|.
\end{equation}
\end{assumption}

\begin{remark}
As a consequence of having $1/\gamma$-smoothness of $\ell_i$ and $1$-strong convexity of $g$, we have that the functions $\ell_i^*(\cdot)$ are $\gamma$-strongly convex and $g^*(\cdot)$ is $1$-smooth~\cite{rockafellar2015convex}. These are the properties we will ultimately use as we will be solving the dual problem~\eqref{eq:dual}. Note that $1$-smoothness of $g^* : \R^d \rightarrow \R$ means that for all~$\xv, \hv \in \R^d$,
\begin{equation}
\label{def:Lsmoothness:gstar}
g^*(\xv + \hv) \leq g^*(\xv) + \< \nabla g^*(\xv), \hv > + \tfrac{1}{2} \| \hv \|^2.
\end{equation}
\end{remark}
Following lemma, a consequence of 1-smoothness of $g^*$ and the definition of $f$,  will be crucial in deriving a meaningful local subproblem for the proposed distributed framework.
\begin{lemma}
\label{lem:quartz}
Let $f$ be defined in \eqref{eq:fRdefinition}. Then for all $\alphav, \hv \in \R^n$ we have
\begin{equation}
\label{eq:quartz}
f(\alphav + \hv) \leq f(\alphav) + \< \nabla f(\alphav), \hv> + \tfrac{1}{2 \lambda n^2} \hv^T X^T X \hv.
\end{equation}
\end{lemma}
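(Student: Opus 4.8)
The plan is to derive \eqref{eq:quartz} directly from the $1$-smoothness inequality \eqref{def:Lsmoothness:gstar} for $g^*$, combined with the chain rule for the gradient of the composition $f = \lambda\,(g^* \circ \vv)$. Recall from \eqref{eq:fRdefinition} that $f(\alphav) = \lambda g^*(\vv(\alphav))$, where $\vv(\alphav) = \tfrac{1}{\lambda n} X\alphav$ is a linear map of $\alphav$. The whole argument is essentially a substitution: smoothness of $g^*$ in the $\R^d$ variable $\vv$ is pushed back through the linear map $X$ to give smoothness of $f$ in the $\R^n$ variable $\alphav$, with the Euclidean norm $\|\cdot\|^2$ becoming the quadratic form $\hv^T X^T X \hv$.

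First I would exploit linearity to rewrite the perturbed argument: since $\vv(\alphav + \hv) = \vv(\alphav) + \tfrac{1}{\lambda n} X \hv$, we have $f(\alphav + \hv) = \lambda g^*\!\big(\vv(\alphav) + \tfrac{1}{\lambda n} X\hv\big)$. Next I apply \eqref{def:Lsmoothness:gstar} with base point $\xv = \vv(\alphav)$ and increment $\tfrac{1}{\lambda n} X\hv$, then multiply through by $\lambda > 0$. This produces three terms. The first, $\lambda g^*(\vv(\alphav))$, is by definition $f(\alphav)$. The third, $\tfrac{\lambda}{2}\big\|\tfrac{1}{\lambda n}X\hv\big\|^2 = \tfrac{1}{2\lambda n^2}\hv^T X^T X\hv$, matches the quadratic term in \eqref{eq:quartz} after expanding the Euclidean norm and simplifying the factors of $\lambda$ and $n$.

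The only step requiring a little care is identifying the linear term. After multiplying by $\lambda$, the middle term reads $\tfrac{1}{n}\langle \nabla g^*(\vv(\alphav)), X\hv\rangle = \tfrac{1}{n}\hv^T X^T \nabla g^*(\vv(\alphav))$. To recognize this as $\langle \nabla f(\alphav), \hv\rangle$, I would compute $\nabla f$ by the chain rule: the Jacobian of $\vv$ is the constant matrix $\tfrac{1}{\lambda n} X$, so $\nabla f(\alphav) = \lambda\big(\tfrac{1}{\lambda n}X\big)^T \nabla g^*(\vv(\alphav)) = \tfrac{1}{n} X^T \nabla g^*(\vv(\alphav))$, and the middle term is therefore exactly $\langle \nabla f(\alphav),\hv\rangle$. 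Collecting the three terms yields \eqref{eq:quartz}.

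No step here is a genuine obstacle; the proof is a short computation. The main thing to watch is keeping the chain-rule bookkeeping and the factors of $\lambda$ and $n$ consistent across the three terms. I would also note, for rigor, that the inequality \eqref{def:Lsmoothness:gstar} and the gradient $\nabla g^*$ are legitimate even though $g$ is only assumed strongly convex (not smooth): the Remark guarantees that $g^*$ is $1$-smooth and hence differentiable, so $f$ inherits differentiability and the whole derivation is justified without any second-order or twice-differentiability assumptions.
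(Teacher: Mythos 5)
Your proposal is correct and follows essentially the same route as the paper's proof: apply the $1$-smoothness inequality \eqref{def:Lsmoothness:gstar} for $g^*$ at the point $\tfrac{1}{\lambda n}X\alphav$ with increment $\tfrac{1}{\lambda n}X\hv$, multiply by $\lambda$, and identify the linear term as $\langle\nabla f(\alphav),\hv\rangle$ via the chain rule. The bookkeeping of the constants and the remark about differentiability of $g^*$ are both consistent with the paper.
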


\begin{remark} Note that although the above inequality appears as a consequence of the problem structure~\eqref{eq:dual} and of strong convexity of $g$, there are other ways to satisfy it. Hence, our dual analysis holds for all optimization problems of the form $\max_{\alphav} D(\alphav)$, where $D(\alphav) = -f(\alphav) - R(\alphav)$, and where $f$ satisfies inequality \eqref{eq:quartz}. However, for the  duality gap  analysis we naturally do require \eqref{eq:quartz} that the dual problem arises from the primal problem with $g$ being strongly convex.
\end{remark}

\vspace{-1em}
\section{The Framework}
\vspace{-1em}
\label{sec:algorithm}

In this section we start by giving a general view of the proposed framework, explaining the most important concepts needed to make the framework efficient. In Section~\ref{sec:subproblem} we discuss the formulation of the local subproblems, and in Section~\ref{sec:implementationnotes} specific details and best practices for implementation.

The data distribution plays a crucial role in Algorithm~\ref{alg:cocoa}, where in each outer iteration indexed by $t$, machine $k$ runs an arbitrary local solver on a problem described only by the data that particular machine owns and other fixed constants or linear functions. 

The crucial property is that the optimization algorithm on machine $k$ changes only coordinates of the dual optimization variable $\alphav^t$ corresponding to the partition~$\mathcal{P}_k$ to obtain an approximate solution to the local subproblem. We will formally specify this in Assumption~\ref{ass:localImprovement}. After each such step, updates from all machines are aggregated to form a new iterate $\alphav^{t+1}$. The aggregation parameter $\aggpar$ will typically be between $\aggpar = 1/K$, corresponding to averaging, and $\aggpar = 1$, to adding. 

\begin{algorithm} 
\caption{Improved \textsc{CoCoA}\texttt{+} Framework} %
\label{alg:cocoa}
\begin{algorithmic}[1]
\State {\bf Input:} starting point $\alphav^0 \in \R^{n}$, aggregation parameter $\aggpar \in (0,1]$, data partition $\{\mathcal{P}_k\}_{k=1}^K$
\For {$t = 0, 1, 2, \dots $}
  \For {$k \in \{1,2,\dots,K\}$ {\bf in parallel over machines}}
     \State Let $\hk^t$ be an approximate solution of the local problem \eqref{eq:subproblem:sigma1}, i.e.\vspace{-2mm}
     \[
     \max_{ \hk \in \R^n } \Gk(\hk; \alphav^t)
     \vspace{-2mm}
     \]
  \EndFor
  \State Set $\alphav^{t+1} := \alphav^t + \aggpar \sum_{k=1}^K \hk^t$
\EndFor  
\end{algorithmic}
\end{algorithm}

Here we list the core conceptual properties of Algorithm~\ref{alg:cocoa}, which are important qualities that allow it to run efficiently.  
\begin{description}

\item [{Locality.}] The local subproblem $\Gk$ \eqref{eq:subproblem:sigma1} is defined purely based on the data points residing on machine $k$, as well as a single shared vector in $\R^d$ (representing the state of the $\alphav^t$ variables of the other machines). Each local solver can then run independently and in parallel, i.e., there is no need for communication while solving the local subproblems.
\item [{Local changes.}] The optimization algorithm used to solve the local subproblem $\Gk$ outputs a vector $\hk^t$ with nonzero elements only in coordinates corresponding to variables $\alphak$ stored locally (i.e., $i\in\mathcal{P}_k$). %
\item [{Efficient maintenance.}] Given the description of the local problem $\Gk(\,\cdot\,; \alphav^t)$ at time $t$, the new local problem $\Gk(\,\cdot\,; \alphav^{t+1})$ at time $t+1$ can be formed on each machine, requiring only communication of a single vector in $\R^d$ from each machine $k$ to the master node, and vice versa, back to each machine $k$.
 
\end{description}

Let us now comment on these properties in more detail. Locality is important for making the method versatile, and is the way we escape the restricted setting described by \eqref{eq:runtime:distributed} that allows us much greater flexibility in designing the overall optimization scheme. %
Local changes result from the fact that along with data, we distribute also coordinates of the dual variable $\alphav$ in the same way, and thus only make updates to the coordinates stored locally.
As we will see, efficient maintenance of the subproblems can be obtained. For this, a  communication efficient encoding of the current shared state $\alphav$ is necessary. To this goal, we will in Section~\ref{sec:implementationnotes} show that communication of a single $d$-dimensional vector is enough to formulate the subproblems \eqref{eq:subproblem:sigma1} in each round, by carefully exploiting  their partly separable structure.

Note that Algorithm~\ref{alg:cocoa} is the ``analysis friendly'' formulation of our algorithm framework, and it is not yet fully illustrative for implementation purposes. In Section~\ref{sec:implementationnotes} we will precisely formulate the actual communication scheme, and illustrate how the above properties can be achieved.

Before that, we will formulate the precise subproblem $\Gk$ in the following section.

\subsection{The Local Subproblems}
\label{sec:subproblem}

We can define a data-local subproblem of the original dual optimization problem~\eqref{eq:dual}, which can be solved on machine $k$ and only requires accessing data which is already available locally, i.e., datapoints with $i\in\mathcal{P}_k$. More formally, each machine $k$ is assigned the following local subproblem, depending only on the previous shared primal vector~$\wv\in\R^d$, and the change in the local dual variables~$\alpha_i$ with $i\in\mathcal{P}_k$:
\begin{equation} 
\max_{\hk\in\R^{n}} \Gks(  \hk; \alphav).
\end{equation}
We are now ready to define the local objective $\Gks(\,\cdot\,; \alphav)$ as follows:
\begin{align}
\label{eq:subproblem:sigma1}
\tag{LO}
\Gks(\hk; \alphav) :=  -\tfrac{1}K f(\alphav) - \< \nabla f(\alphav), \hk > - \tfrac{\lambda \sigma'}{2}  \left\| \tfrac{1}{\lambda n}\Xk \hk \right\|^2 %
 - R_k\!\left( \alphak + \hk \right),
\end{align}
where $ R_k(\alphak) \eqdef  \frac1n \sum_{i \in \mathcal{P}_k} \ell_i^*(-\alphav_i) $.
The role of the parameter $\sigma'\geq 1$ is to measure the ``difficulty'' of the data partition, in a sense which we will discuss in detail in Section \ref{sec:choiceofsigma} below.

The interpretation of the above defined subproblems is that they will form a quadratic approximation of the smooth part of the true objective $D$, which becomes separable over the machines. The approximation keeps the non-smooth $R$ part intact.
The variable $\hk$ expresses the update proposed by machine $k$. In this spirit, note also that the approximation coincides with $D$ at the reference point $\alphav$, i.e. $\sum_{k=1}^K \Gks( {\bf 0}; \alphav) = D(\alphav)$.
We will discuss the interpretation and properties of these subproblems in more detail below in Section \ref{sec:choiceofsigma}.

\vspace{-1em}

\subsection{Practical Communication Efficient Implementation}
\label{sec:implementationnotes}

We will now discuss how Algorithm~\ref{alg:cocoa} can efficiently be implemented in a distributed environment. Most importantly, it remains to clarify how the ``local'' subproblems can actually be formulated and solved by using only local information from the corresponding machine, and to make precise what information needs to be communicated in each round.

Recall that the local subproblem objective $\Gks(\,\cdot\,; \alphav)$ was defined in \eqref{eq:subproblem:sigma1}.
We will now equivalently rewrite this optimization problem, to clarify how it is expressed only using \emph{local} information. To do so, we use our simplifying notation~$\vv = \vv(\alphav) := \tfrac1{\lambda n} X \alphav$ for given~$\alphav$. As we see in the reformulation, it is precisely this vector $\vv\in\R^d$ which contains all the necessary shared information between the machines. Given the vector~$\vv$, the subproblem \eqref{eq:subproblem:sigma1} writes equivalently as
\begin{align}
\label{eq:subproblemPr}
\tag{LO'}
\Gks(\hk; \vv, \alphak) &:=  %
 -\tfrac{\lambda}K g^*(\vv) 
 - \langle \tfrac{1}{n} \Xk^T \nabla g^*(\vv), \hk \rangle - \tfrac{\lambda}{2} \sigma' \left\| \tfrac{1}{\lambda n}\Xk \hk \right\|^2 %
 \\&\qquad  - R_k\!\left( \alphak + \hk \right). \notag
\end{align} 
Here for the reformulation of the gradient term, we have simply used the chain rule on the objective $f$ (recall the definition $f(\alphav) \eqdef  \lambda g^*( \vv )$), giving
$$
\vsub{\nabla f(\alphav)}{k} = \tfrac{1}{n} \Xk^T \nabla g^* ( \vv ) .
$$

\paragraph*{\it Practical Distributed Framework.}
In summary, we have seen that each machine can formulate the local subproblem given purely local information (the local data $\Xk$ as well as the local dual variables $\alphak$). No information about the other machines variables $\alphav$ or their data is necessary.

The only requirement for the method to work is that between the rounds, the changes in the $\alphak$ variables on each machine and the resulting global change in~$\vv$ are kept consistent, in the sense that $\vv^t = \vv(\alphav^t) := \tfrac1{\lambda n} X \alphav^t$ must always hold. Note that for the evaluation of $\nabla g^*(\vv)$, the vector $\vv$ is all that is needed. In practice, $g$ as well as its conjugate $g^*$ are simple vector valued regularization functions, the most prominent example being $g(\vv) = g^*(\vv) = \frac12 \| \vv \|^2$.

In the following more detailed formulation of the \cocoap framework shown in Algorithm~\ref{alg:cocoaPractical} (equivalent reformulation of Algorithm~\ref{alg:cocoa}), the crucial communication pattern of the framework finally becomes more clear: Per round, \emph{only a single vector} (the update on $\vv\in\R^d$) needs to be sent over the communication network. 
The reduce-all operation in line 10 means that each machine sends their vector $\Delta \vv_k^t\in\R^d$ to the network, which performs the addition operation of the $K$ vectors to the old~$\vv^t$. The resulting vector $\vv^{t+1}$ is then communicated back to all machines, so that all  have the same copy of $\vv^{t+1}$ before the beginning of the next round.

The framework as shown below in Algorithm~\ref{alg:cocoaPractical} clearly maintains the consistency of $\alphav^t$ and $\vv^t = \vv^t(\alphav^t)$ after each round, no matter which local solver is used to approximately solve \eqref{eq:subproblemPr}. A diagram illustrating the communication and computation involved in the first two full iterations of Algorithm~\ref{alg:cocoaPractical} is given in Figure~\ref{fig:diagram}.

\begin{algorithm} 
\caption{Improved \textsc{CoCoA}\texttt{+} Framework, Practical Implementation}
\label{alg:cocoaPractical}
\begin{algorithmic}[1]
\State {\bf Input:} starting point $\alphav^0 \in \R^n$, aggregation parameter $\aggpar \in (0,1]$, data partition $\{\mathcal{P}_k\}_{k=1}^K$
\State $\vv^0 := \frac1{\lambda n} X \alphav^0  \in \R^d$
\For {$t = 0, 1, 2, \dots $}
  \For {$k \in \{1,2,\dots,K\}$ {\bf in parallel over machines}}
	\State Precompute $ \Xk^T \nabla g^*(\vv^t)$
     \State Let $\hk^t$ be an approximate solution of the local problem \eqref{eq:subproblemPr}, i.e.\vspace{-2mm}
\[
\max_{ \hk \in \R^n } \Gks(\hk; \vv^t, \alphak^t)  %
\vspace{-6mm}
\]  
	\Comment{{\it computation}}
	\State Update local variables $\alphak^{t+1} := \alphak^t + \aggpar \hk^t$
  	
	\State Let $\Delta \vv_k^t := \frac{1}{\lambda n} \Xk \hk^t$
  \EndFor %
  \State \textbf{reduce all} to compute
	$
	\vv^{t+1} := \vv^t + \aggpar \sum_{k=1}^K \Delta \vv_k^t
	$
 \Comment{{\it communication}}
\EndFor  
\end{algorithmic}
\end{algorithm}

\begin{figure}[t]
\centering
\includegraphics[width=\linewidth,trim={10 60 20 100}]{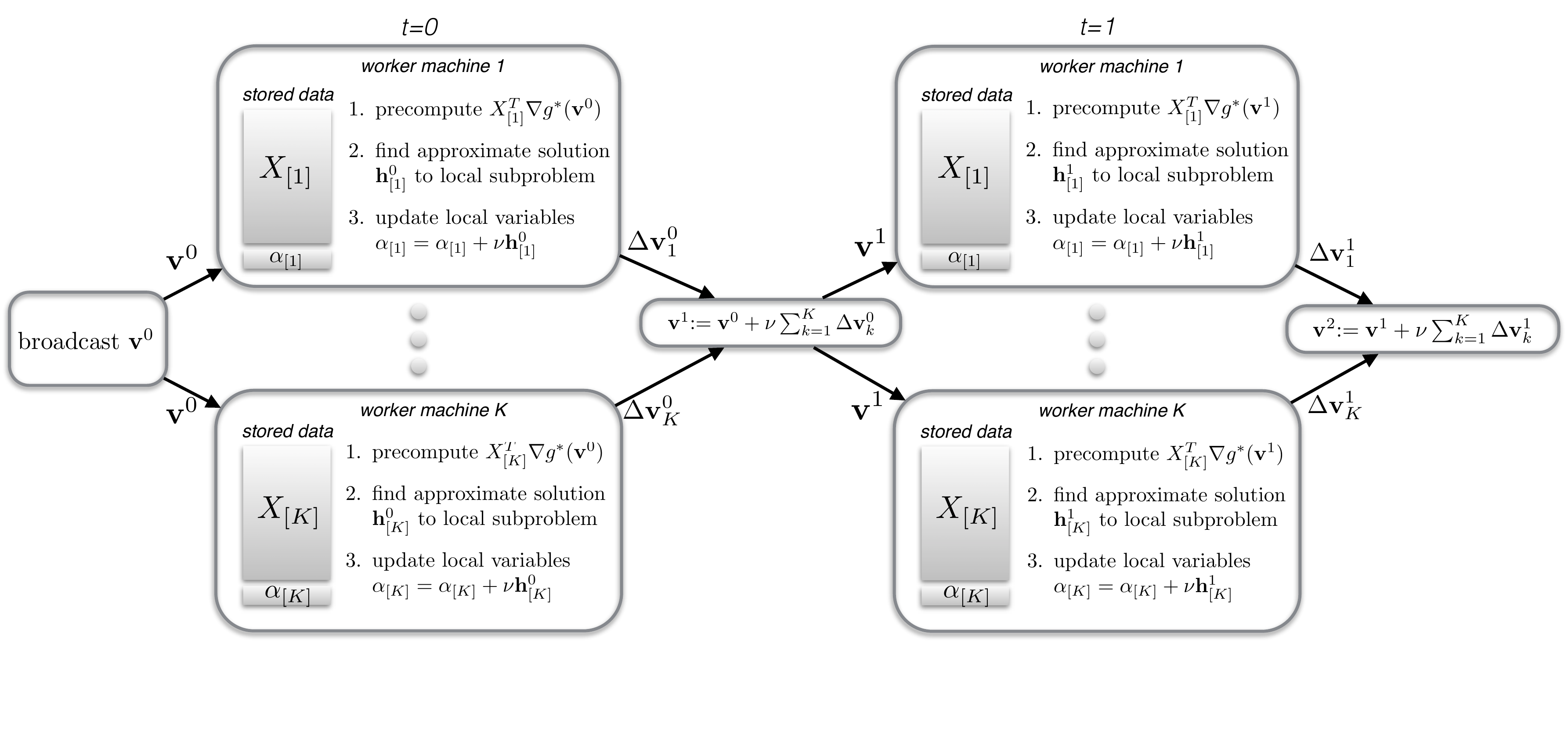}
\caption{The first two iterations of the improved framework (practical implementation).}
\label{fig:diagram}
\end{figure}

\vspace{-1em} 
\subsection{Compatibility of the Subproblems for Aggregating Updates}
\label{sec:choiceofsigma}
In this subsection, we shed more light on the local subproblems on each machine, as defined in \eqref{eq:subproblem:sigma1} above, and their interpretation.
More formally, we will show how the aggregation parameters $\nu$ (controlling the level of adding versus averaging the resulting updates from each machine) and $\sigma'$ (the subproblem parameter) interplay together, to in each round achieve a valid approximation to the global objective function~$D$.

The role of the subproblem parameter~$\sigma'$ is to measure the difficulty of the given data partition. For the convergence results discussed below to hold, $\sigma'$ must be chosen not smaller than
\begin{equation}
\label{eq:sigmaPrimeSafeDefinition}
\sigma'
\geq
\sigma'_{min}
 \eqdef
 \nu \cdot
 \max_{\hv\in \R^n}
 \big\{
 \hv^T X^T X \hv \ \big|\ \hv^T \nBG \hv \le 1\big\} \, . \vspace{-1mm}
\end{equation}

Here, $\nBG$ is the block diagonal submatrix of the data covariance matrix $X^T X$, corresponding to the partition $\{\mathcal{P}_k\}_{k=1}^K$, i.e.,
\begin{equation}
\label{eq:nBGDefinition}
\nBG_{ij} \eqdef
\begin{cases}
	\xv_i^T \xv_j = (X^TX)_{ij}, & \mbox{if}\ \exists k \ \mbox{such that} \ i,j \in \mathcal{P}_k, \\ 
	0,& \mbox{otherwise}. 
\end{cases}
\end{equation}

In this notation, it is easy to see that the crucial quantity defining $\sigma'_{min}$ above is written as $\hv^T \nBG \hv = \textstyle\sum_{k=1}^K \|X_{[k]} \hk\|^2$.

The following lemma shows that if the aggregation and subproblem parameters
$\nu$ and~$\sigma'$ satisfy \eqref{eq:sigmaPrimeSafeDefinition}, then the sum of the subproblems $\sum_k \Gks$ will closely approximate the global objective function $D$. More precisely, this sum is a block-separable lower bound on $D$.

\begin{lemma}
\label{lem:step:sigma1}
\label{lowerboundOnD}
\label{lem:RelationOfDTOSubproblems}
Let $\sigma' \geq 1$ and
$\nu \in [0, 1]$ %
satisfy \eqref{eq:sigmaPrimeSafeDefinition} (that is $\sigma' \geq \sigma'_{min}$).
Then
$\forall \alphav, \hv  \in \R^n$, it holds that
\begin{equation}
\label{eq:asfdjalkfjlsaflasdfa}
D\big(\alphav+\nu \textstyle{\sum}_{k=1}^K \hk\big)
\geq (1-\nu) D(\alphav) +
\nu  \textstyle{\sum}_{k=1}^K \Gks(\hk; \alphav),
\end{equation}
\end{lemma}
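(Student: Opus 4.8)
The plan is to expand $D$ through its defining decomposition $D(\alphav) = -f(\alphav) - R(\alphav)$ and to bound the smooth part $f$ and the non-smooth part $R$ separately, writing $\hv = \sum_{k=1}^K \hk$ throughout so that the left-hand argument reads $\alphav + \nu\hv$.

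For the smooth part I would apply Lemma~\ref{lem:quartz} at $\alphav$ with increment $\nu\hv$, obtaining
\[
f(\alphav + \nu\hv) \leq f(\alphav) + \nu\langle \nabla f(\alphav),\hv\rangle + \tfrac{\nu^2}{2\lambda n^2}\,\hv^T X^T X \hv .
\]
The linear term splits across machines as $\langle \nabla f(\alphav),\hv\rangle = \sum_{k}\langle \nabla f(\alphav),\hk\rangle$, since $\hv = \sum_k \hk$. The crucial and most delicate step is the quadratic term: smoothness produces the full, non-separable covariance form $\hv^T X^T X \hv$, whereas the subproblems $\Gks$ contain only the block-diagonal energy $\sum_k \|\Xk\hk\|^2 = \hv^T \nBG \hv$. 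Here I would invoke the defining property \eqref{eq:sigmaPrimeSafeDefinition} of $\sigma'$, which by homogeneity is equivalent to $\nu\,\hv^T X^T X \hv \leq \sigma'\,\hv^T \nBG \hv$ for every $\hv\in\R^n$ (the degenerate case $\hv^T\nBG\hv=0$ forces $X\hv=0$ and holds trivially). Multiplying through by $\tfrac{\nu}{2\lambda n^2}$ converts the covariance term into exactly $\nu\sum_k \tfrac{\lambda\sigma'}{2}\|\tfrac1{\lambda n}\Xk\hk\|^2$, matching the quadratic penalties in the sum of subproblems.

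For the non-smooth part $R(\alphav) = \tfrac1n\sum_i \ell_i^*(-\alpha_i)$ I would use convexity of each $\ell_i^*$. For each coordinate $i$, which lies in exactly one block $\mathcal{P}_k$, I would write the argument as the convex combination $-(\alpha_i + \nu h_i) = (1-\nu)(-\alpha_i) + \nu\big(-(\alpha_i + h_i)\big)$, valid since $\nu\in[0,1]$, and apply Jensen's inequality to get $\ell_i^*(-(\alpha_i+\nu h_i)) \leq (1-\nu)\ell_i^*(-\alpha_i) + \nu\,\ell_i^*(-(\alpha_i+h_i))$. Summing over $i$ and regrouping by partition, together with $R = \sum_k R_k$ on the blocks, yields $R(\alphav+\nu\hv) \leq (1-\nu)R(\alphav) + \nu\sum_k R_k(\alphak+\hk)$.

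Finally I would assemble the pieces. Negating both bounds and adding them gives a lower bound on $D(\alphav+\nu\hv) = -f(\alphav+\nu\hv) - R(\alphav+\nu\hv)$, which I would match term by term against the expansion of the target right-hand side $(1-\nu)D(\alphav) + \nu\sum_k\Gks(\hk;\alphav)$. The only bookkeeping subtlety is the constant $f$-term: the $K$ copies of $-\tfrac1K f(\alphav)$ inside the subproblems contribute $\nu\sum_k(-\tfrac1K f(\alphav)) = -\nu f(\alphav)$, which combines with the $-(1-\nu)f(\alphav)$ coming from $(1-\nu)D(\alphav)$ to produce exactly $-f(\alphav)$, as required. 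The main obstacle is the quadratic separation step of the previous paragraph; everything else follows directly from convexity and the block decomposition $\hv = \sum_k\hk$.
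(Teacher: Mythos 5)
Your proof is correct and follows essentially the same route as the paper's: the smoothness bound of Lemma~\ref{lem:quartz} on $f$ with increment $\nu\hv$, Jensen's inequality on each $\ell_i^*$ for the $R$ part, and condition \eqref{eq:sigmaPrimeSafeDefinition} (via homogeneity) to replace $\nu\,\hv^T X^T X\hv$ by $\sigma'\,\hv^T \nBG\hv$; the only cosmetic difference is that you split $D=-f-R$ and bound the two pieces separately, whereas the paper carries everything through a single chain of inequalities.
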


The following lemma gives a simple choice for the subproblem parameter $\sigma'$, which is trivial to calculate for all values of the aggregation parameter $\aggpar\in\R$, and safe in the sense of the desired condition \eqref{eq:sigmaPrimeSafeDefinition} above. 
Later we will show experimentally (Section \ref{sec:experiments}) that the choice of this safe upper bound for $\sigma'$ only has a minimal effect on the overall performance of the algorithm. 

\begin{lemma}\label{lem:sigmaPrimeNotBad}
For any %
aggregation parameter $\aggpar\in[0,1]$, the choice of the subproblem parameter $\sigma' := \aggpar K$ is valid for \eqref{eq:sigmaPrimeSafeDefinition}, i.e.,
$
\aggpar K
\geq
\sigma'_{min}. 
$
\end{lemma}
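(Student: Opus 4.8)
The plan is to establish the claimed inequality $\nu K \geq \sigma'_{min}$ directly from the definition \eqref{eq:sigmaPrimeSafeDefinition}. Since $\sigma'_{min} = \nu \cdot \max_{\hv} \{ \hv^T X^T X \hv \mid \hv^T \nBG \hv \leq 1 \}$ and the candidate value $\nu K$ share a common factor of $\nu$, the case $\nu = 0$ is immediate (both sides vanish), and for $\nu > 0$ I can cancel $\nu$ and reduce the statement to the factor-free inequality
\[
K \;\geq\; \max_{\hv \in \R^n} \big\{ \hv^T X^T X \hv \ \big|\ \hv^T \nBG \hv \leq 1 \big\}.
\]
Equivalently, I want to verify that $\hv^T X^T X \hv \leq K$ for every $\hv$ satisfying the constraint $\hv^T \nBG \hv \leq 1$.

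The key step is to re-express both quadratic forms through the block decomposition $\hv = \sum_{k=1}^K \hk$ induced by the partition $\{\mathcal{P}_k\}_{k=1}^K$. Because each $\hk$ is supported on $\mathcal{P}_k$, one has $X \hk = \Xk \hk$, so that $X\hv = \sum_{k=1}^K \Xk \hk$ and hence $\hv^T X^T X \hv = \big\| \sum_{k=1}^K \Xk \hk \big\|^2$. On the constraint side, I will use the identity already recorded in the text, namely $\hv^T \nBG \hv = \sum_{k=1}^K \| \Xk \hk \|^2$. This puts both forms in terms of the same $K$ vectors $\Xk \hk \in \R^d$.

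With that reformulation in hand, the remaining inequality is a one-line application of the elementary bound $\big\| \sum_{k=1}^K a_k \big\|^2 \leq K \sum_{k=1}^K \| a_k \|^2$ (which follows from convexity of $\| \cdot \|^2$, equivalently Cauchy--Schwarz). Taking $a_k := \Xk \hk$ yields $\hv^T X^T X \hv \leq K \sum_{k=1}^K \| \Xk \hk \|^2 = K\, \hv^T \nBG \hv \leq K$, exactly as required. Notably, this argument uses no structural hypothesis on $X$ or on the partition, which is precisely why $\sigma' = \nu K$ is a universally safe choice.

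There is no genuine obstacle here; the only place demanding attention is the factor-of-$K$ step, which is where the number of machines enters the bound. It is worth flagging that this is exactly the worst-case inequality: equality $\big\| \sum_k a_k \big\|^2 = K \sum_k \| a_k \|^2$ holds only when all the $\Xk \hk$ coincide, so the bound $\nu K$ is pessimistic whenever the blocks $\Xk \hk$ are misaligned. This matches the subsequent remark that the safe upper bound $\sigma' = \nu K$ typically overestimates $\sigma'_{min}$ yet has only minimal effect on overall performance.
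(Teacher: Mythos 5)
Your proposal is correct and follows essentially the same route as the paper: both decompose $\hv=\sum_k \hk$, identify $\hv^T \nBG\hv=\sum_k\|\Xk\hk\|^2$, and bound $\hv^TX^TX\hv=\|\sum_k \Xk\hk\|^2$ by $K\sum_k\|\Xk\hk\|^2$ — the paper just spells out this last inequality via the pairwise bound $2\hk^TX^TX\hv_{[l]}\le \hk^TX^TX\hk+\hv_{[l]}^TX^TX\hv_{[l]}$ from positive semi-definiteness, which is the same as your convexity/Cauchy--Schwarz step. Your explicit handling of $\nu=0$ and the remark on the equality case are fine additions but do not change the argument.
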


\section{Main Results}
\label{sec:result}

In this section we state the main theoretical results of this paper. Before doing so, we elaborate on one of the most important aspects of the algorithmic framework: the \emph{quality of approximate local solutions}. 

\subsection{Quality of Local Solutions}

The notion of approximation quality provided by the local solvers is measured according to the following:
\begin{assumption}[Quality of local solution]
\label{ass:localImprovement}
Let $\Theta \in [0, 1)$ and $\alphav \in \R^n$ be fixed, and let $\hk^\star$ be the optimal solution of a local subproblem $\Gk(\,\cdot\,; \alphav)$.
We assume the local optimization procedure run on every node $k \in [K]$, in each iteration $t$ produces a (possibly random) output $\hk$ satisfying
\begin{equation}
\label{eq:localQualityOfImprovement}
\Exp \left[ \Gk(\hk^\star; \alphav) - \Gk(\hk; \alphav) \right] \leq \Theta \left[ \Gk(\hk^\star; \alphav) - \Gk({\bf 0}; \alphav) \right].
\end{equation}
\end{assumption}

The assumption specifies the (relative) accuracy $\Theta$ obtained on solving the local subproblem $\Gk$. Considering the two extreme examples, setting $\Theta = 0$ would require to find the exact maximum, while $\Theta = 1$ states that no improvement was achieved at all by the local solver. Intuitively, we would prefer $\Theta$ to be small, but spending many computational resources to drive $\Theta$ to $0$ can be excessive in practice, since~$\Gk$ is actually not the problem we are interested in solving \eqref{eq:dual}, but is the problem to be solved per communication round.
The best choice in practice will therefore be to choose $\Theta$ such that the local solver runs for a time comparable to the time it takes for a single communication round. This freedom of choice of $\Theta \in [0,1]$ is a crucial property of our proposed framework, allowing it to adapt to the full range of communication speeds on real world systems, ranging from supercomputers on one extreme to very slow communication rounds like MapReduce systems on the other extreme.

In Section \ref{sec:experiments} we study impact of different values of this parameter to the overall performance on solving \eqref{eq:dual}.

\subsection{Complexity Bounds}

Now we are ready to state the main results.
Theorem \ref{thm:mainResult} covers the case
when $\forall i$ the loss function 
$\ell_i$ is $1/\gamma$ smooth
and Theorem \ref{thm:mainResult:gcc}
covers the case
when 
$\ell_i$ is   $L$-Lipschitz continuous. For simplicity in the rates, we define the following two quantities:
$$
\forall k: \sigma_k \eqdef
 \max_{\vsubset{\alphav}{k} \in \R^n}
 \tfrac{\|\Xk \vsubset{\alphav}{k}\|^2}{
 \|\vsubset{\alphav}{k}\|^2}
 \qquad\mbox{and}\qquad
 \sigma \eqdef \textstyle{\sum} _{k=1}^K 
\sigma_k  |\mathcal{P}_k|.
$$

\begin{theorem}[Smooth loss functions]
\label{thm:convergenceSmoothCase}
\label{thm:mainResult}
Assume the loss functions functions 
$\ell_i$ are $(1/\gamma)$-smooth $\forall i\in[n]$.
We define $\sigma_{\max} = 
\max_{k\in[K]} \sigma_k$. Then after $T$ iterations of Algorithm \ref{alg:cocoaPractical}, with  
$$
 T
    \geq 
\tfrac{1}
   {\aggpar
(1-\Theta)}
\tfrac
{\lambda\gamma n+
\sigma_{\max} \sigma'}
{ \lambda\gamma n }
    \log \tfrac1{\epsilon_\bD} , \vspace{-1mm}
$$
it holds that\vspace{-3mm}
$$\Exp[\bD(\alphav^\star)
  - \bD(\vc{\alphav}{T})]
   \leq \epsilon_\bD.$$
Furthermore, after $T$ iterations with\vspace{-1mm}
\begin{equation}
\label{afdsafdafsafdsafda}
 T 
    \geq 
\tfrac{1}
   {\aggpar
(1-\Theta)}
\tfrac
{\lambda\gamma n+
\sigma_{\max} \sigma'}
{ \lambda\gamma n }
    \log 
\left(
\tfrac{1}
   {\aggpar
(1-\Theta)}
\tfrac
{\lambda\gamma n+
\sigma_{\max} \sigma'}
{ \lambda\gamma n }
    \tfrac1{\epsilon_\gap}
    \right)  ,
\end{equation}
we have the expected duality gap
$$
\Exp[
\bP( \wv(\vc{\alphav}{T})) - \bD(\vc{\alphav}{T})
]\leq \epsilon_\gap.
$$
\end{theorem}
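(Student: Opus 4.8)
The plan is to run a primal--dual descent-lemma argument in four stages: (i) a one-step expected-decrease bound, (ii) a lower bound on the \emph{optimal} local improvement obtained by probing with a well-chosen test direction, (iii) the geometric dual rate, and (iv) the gap rate obtained by re-running stage (ii) with a primal--dual response direction.

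First I would turn the aggregation step into a per-round guarantee. Applying Lemma~\ref{lem:step:sigma1} with $\hk=\hk^t$ and $\aggpar$ to the update $\alphav^{t+1}=\alphav^t+\aggpar\sum_k\hk^t$ gives $D(\alphav^{t+1})-D(\alphav^t)\ge \aggpar\big(\sum_k\Gks(\hk^t;\alphav^t)-D(\alphav^t)\big)$. Taking the conditional expectation over the local solvers and invoking Assumption~\ref{ass:localImprovement} blockwise, together with the identity $\sum_k\Gks(\mathbf{0};\alphav^t)=D(\alphav^t)$, yields $\Exp[\Gks(\hk^t;\alphav^t)\mid\alphav^t]\ge(1-\Theta)\Gks(\hk^\star;\alphav^t)+\Theta\,\Gks(\mathbf{0};\alphav^t)$, which sums to $\Exp[D(\alphav^{t+1})-D(\alphav^t)\mid\alphav^t]\ge\aggpar(1-\Theta)\big(\sum_k\Gks(\hk^\star;\alphav^t)-D(\alphav^t)\big)$.

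Next, since the subproblems are separable and $\hk^\star$ maximizes each $\Gks$, I can lower-bound $\sum_k\Gks(\hk^\star;\alphav^t)$ by evaluating at the admissible direction $\hk=s\,\hv_{[k]}$ for any $s\in[0,1]$ and any $\hv\in\R^n$. Expanding \eqref{eq:subproblem:sigma1}, the quadratic penalty is controlled by the definition of $\sigma_k$ as $\tfrac{\sigma' s^2}{2\lambda n^2}\sum_k\|\Xk\hv_{[k]}\|^2\le\tfrac{\sigma'\sigma_{\max}s^2}{2\lambda n^2}\|\hv\|^2$, while $\gamma$-strong convexity of each $\ell_i^*$ applied to $\alphak+s\hv_{[k]}=(1-s)\alphak+s(\alphak+\hv_{[k]})$ gives $-\sum_kR_k(\alphak+s\hv_{[k]})\ge-(1-s)R(\alphav^t)-sR(\alphav^t+\hv)+\tfrac{\gamma s(1-s)}{2n}\|\hv\|^2$. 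Collecting terms and subtracting $D(\alphav^t)$ produces, for every $s$ and $\hv$,
\[
\textstyle\sum_{k}\Gks(\hk^\star;\alphav^t)-D(\alphav^t)\;\ge\; s\,G^t(\hv)-\Big(\tfrac{\sigma'\sigma_{\max}s^2}{2\lambda n^2}-\tfrac{\gamma s(1-s)}{2n}\Big)\|\hv\|^2,
\]
where $G^t(\hv):=-\langle\nabla f(\alphav^t),\hv\rangle+R(\alphav^t)-R(\alphav^t+\hv)$. The decisive point is that $s=s^\star:=\tfrac{\lambda\gamma n}{\lambda\gamma n+\sigma'\sigma_{\max}}\in[0,1]$ annihilates the bracketed coefficient exactly, leaving $\sum_k\Gks(\hk^\star;\alphav^t)-D(\alphav^t)\ge s^\star G^t(\hv)$. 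For the dual rate I then set $\hv=\alphav^\star-\alphav^t$, so $R(\alphav^t+\hv)=R(\alphav^\star)$, and use convexity of $f$ (i.e. $f(\alphav^\star)\ge f(\alphav^t)+\langle\nabla f(\alphav^t),\hv\rangle$) to get $G^t(\hv)\ge D(\alphav^\star)-D(\alphav^t)=:\varepsilon_D^t$. Combining with stage (i) gives the contraction $\Exp[\varepsilon_D^{t+1}]\le(1-\rho)\Exp[\varepsilon_D^t]$ with $\rho:=\aggpar(1-\Theta)s^\star$; unrolling and using $1-\rho\le e^{-\rho}$ (with the normalization $\varepsilon_D^0\le1$) yields $\Exp[\varepsilon_D^T]\le\epsilon_\bD$ once $T\ge\tfrac1\rho\log\tfrac1{\epsilon_\bD}$, which is the stated bound since $\tfrac1\rho=\tfrac{1}{\aggpar(1-\Theta)}\tfrac{\lambda\gamma n+\sigma_{\max}\sigma'}{\lambda\gamma n}$.

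For the duality gap I would instead choose the primal--dual response $\hv=\uv-\alphav^t$ with each $u_i$ satisfying $-u_i\in\partial\ell_i(\xv_i^T\wv^t)$, where $\wv^t=\nabla g^*(\vv^t)$. Using the Fenchel--Young equality $\ell_i^*(-u_i)=-u_i\,\xv_i^T\wv^t-\ell_i(\xv_i^T\wv^t)$ and the chain-rule identity $(\nabla f(\alphav^t))_i=\tfrac1n\xv_i^T\wv^t$, a direct computation collapses $G^t(\hv)$ \emph{exactly} to the per-coordinate Fenchel gaps, giving $G^t(\hv)=\tfrac1n\sum_i[\ell_i(\xv_i^T\wv^t)+\ell_i^*(-\alpha_i^t)+\alpha_i^t\,\xv_i^T\wv^t]=\gap(\alphav^t)$ by \eqref{eq:gap}. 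With the same $s^\star$ this yields $\Exp[D(\alphav^{t+1})-D(\alphav^t)\mid\alphav^t]\ge\rho\,\gap(\alphav^t)$; taking full expectations and telescoping the left side gives $\rho\,\Exp[\gap(\alphav^t)]\le\Exp[\varepsilon_D^t]-\Exp[\varepsilon_D^{t+1}]\le\Exp[\varepsilon_D^t]$, hence $\Exp[\gap(\alphav^t)]\le\tfrac1\rho e^{-\rho t}\varepsilon_D^0$, and forcing this below $\epsilon_\gap$ reproduces the extra $\log\tfrac1\rho$ inside the logarithm. The main obstacle is precisely this last construction: exhibiting the correct primal--dual direction that makes $G^t$ equal the gap \emph{term-by-term} rather than merely bounding it, and verifying that the single choice $s^\star$ simultaneously cancels the quadratic remainder in both the dual and the gap arguments; the strong-convexity expansions and the conditional/unconditional expectation bookkeeping are then routine.
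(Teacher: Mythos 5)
Your proposal is correct and follows essentially the same route as the paper: the one-step bound from Lemma~\ref{lem:step:sigma1} plus Assumption~\ref{ass:localImprovement}, probing the optimal subproblem value with a scaled test direction, cancelling the quadratic remainder via $\gamma$-strong convexity of $\ell_i^*$ with the same choice $s^\star=\lambda\gamma n/(\lambda\gamma n+\sigma_{\max}\sigma')$, and collapsing $G^t(\uv-\alphav^t)$ to the duality gap via Fenchel--Young (this is precisely Lemma~\ref{lem:basicLemma} in the appendix). The only cosmetic difference is that you derive the dual-suboptimality rate with a second test direction $\alphav^\star-\alphav^t$ and convexity of $f$, whereas the paper uses the single direction $\uv-\alphav^t$ and then invokes weak duality $\gap(\alphav^t)\ge \bD(\alphav^\star)-\bD(\alphav^t)$; both give identical constants.
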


\begin{theorem}[Lipschitz continuous loss functions]
\label{thm:convergenceNonsmooth}
 \label{thm:mainResult:gcc}
Consider Algorithm \ref{alg:cocoaPractical} with Assumption \ref{ass:localImprovement}. 
Let $\ell_i(\cdot)$ be $L$-Lipschitz continuous,
and $\epsilon_\gap$ $>0$ be the desired duality gap (and hence an upper-bound on primal sub-optimality).
Then after $T$ iterations, where
\begin{align}\label{eq:dualityRequirements}
T
&\geq
T_0 + 
\max\left\{\left\lceil \tfrac1{\aggpar (1-\Theta)}\right\rceil,\tfrac
{4L^2  \sigma   \sigma'}
{\lambda n^2 \epsilon_\gap
\aggpar (1-\Theta)}\right\},  
\\
T_0
&\geq t_0+
 \max\left\{0,
\tfrac{2}{ \aggpar (1-\Theta) }
\left(
\tfrac
{8L^2  \sigma   \sigma'}
{\lambda n^2 \epsilon_\gap}
-1
\right)
\right\},\notag
\\
t_0 &\geq 
  \max\left\{0,\Big\lceil \tfrac1{\aggpar (1-\Theta)}
\log(
\tfrac{
 2\lambda n^2 (\bD(\alphav^\star )-\bD(\vc{\alphav}{0}))
  }{4 L^2 \sigma \sigma'}
  )
 \Big\rceil\right\},\notag
\end{align}
we have that the expected duality gap satisfies
\[
\Exp[\bP( \wv(\overline\alphav)) - \bD(\overline \alphav) ] \leq \epsilon_\gap,
\]
at the averaged iterate
\begin{equation}\label{eq:averageOfAlphaDefinition}
\overline \alphav: = \tfrac1{T-T_0}\textstyle{\sum}_{t=T_0+1}^{T-1} \vc{\alphav}{t}. 
\end{equation}

\end{theorem}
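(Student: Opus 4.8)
The plan is to derive a single one-step inequality that lower-bounds the expected dual ascent by a multiple of the current duality gap, and then to unroll this recursion in two phases to obtain a sublinear rate, finally converting dual progress into a gap bound via averaging. First I would combine the block-separable lower bound of Lemma~\ref{lem:RelationOfDTOSubproblems} with the local-quality Assumption~\ref{ass:localImprovement}. Using $\sum_k \Gks(\mathbf 0;\alphav)=\bD(\alphav)$, this gives $\Exp[\bD(\alphav^{t+1})-\bD(\alphav^t)]\ge \aggpar(1-\Theta)\sum_k[\Gks(\hk^{\star};\alphav^t)-\Gks(\mathbf 0;\alphav^t)]$. Since $\hk^\star$ maximizes the subproblem, I may replace it by any feasible trial point, and I would choose $\hk=s(\vsubset{\uv}{k}-\vsubset{\alphav}{k})$ for a step size $s\in[0,1]$ and a reference dual vector $\uv$. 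Expanding $\Gks$ at this point and using convexity of $R_k$ (inherited from convexity of each $\ell_i^*$) to bound $R_k((1-s)\vsubset{\alphav}{k}+s\vsubset{\uv}{k})\le(1-s)R_k(\vsubset{\alphav}{k})+sR_k(\vsubset{\uv}{k})$, the linear and conjugate terms collect into $s[-\langle\nabla f(\alphav^t),\uv-\alphav^t\rangle+R(\alphav^t)-R(\uv)]$ while the quadratic penalty contributes $-\tfrac{s^2\sigma'}{2\lambda n^2}\sum_k\|\Xk(\uv-\alphav^t)_{[k]}\|^2$.

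The crucial choice is to take $\uv$ with $-u_i\in\partial\ell_i(\xv_i^T\wv(\alphav^t))$. Then $(\nabla f(\alphav^t))_i=\tfrac1n\xv_i^T\wv(\alphav^t)$ and the Fenchel--Young equality $\ell_i(\xv_i^T\wv)+\ell_i^*(-u_i)=-u_i\,\xv_i^T\wv$ make the bracketed quantity collapse \emph{exactly} to the duality gap $\gap(\alphav^t)=\bP(\wv(\alphav^t))-\bD(\alphav^t)$. For the quadratic term I would invoke $L$-Lipschitz continuity: $\dom\,\ell_i^*\subseteq[-L,L]$ forces $|u_i-\alpha_i|\le 2L$, so that $\sum_k\|\Xk(\uv-\alphav^t)_{[k]}\|^2\le 4L^2\sum_k\sigma_k|\mathcal P_k|=4L^2\sigma$. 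This yields the master inequality $\Exp[\bD(\alphav^{t+1})-\bD(\alphav^t)]\ge \aggpar(1-\Theta)\bigl(s\,\gap(\alphav^t)-\tfrac{2s^2\sigma'L^2\sigma}{\lambda n^2}\bigr)$, valid for every $s\in[0,1]$.

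From here I would split into two uses of the master inequality. Dropping $\gap(\alphav^t)\ge \epsilon_\bD^t:=\bD(\alphav^\star)-\bD(\alphav^t)$ gives the recursion $\Exp[\epsilon_\bD^{t+1}]\le(1-\aggpar(1-\Theta)s)\Exp[\epsilon_\bD^t]+\aggpar(1-\Theta)s^2\tfrac{2\sigma'L^2\sigma}{\lambda n^2}$. Running the first $t_0$ steps with $s=1$ drives $\epsilon_\bD$ geometrically below the noise floor of order $L^2\sigma\sigma'/(\lambda n^2)$, after which an optimized, slowly decaying step size $s_t$ of order $1/t$ produces the $1/t$ decay; the bookkeeping of these two phases is precisely what produces the thresholds $t_0$ and $T_0$ in the statement. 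For the gap itself I would instead rearrange the master inequality as $\Exp[\gap(\alphav^t)]\le \tfrac{1}{\aggpar(1-\Theta)s}\bigl(\Exp[\epsilon_\bD^t]-\Exp[\epsilon_\bD^{t+1}]\bigr)+s\tfrac{2\sigma'L^2\sigma}{\lambda n^2}$, sum over $t=T_0+1,\dots,T-1$ so the dual terms telescope, and pass to the average $\overline\alphav$ by convexity of the gap together with Jensen's inequality. Choosing $s$ constant on this window and substituting the guaranteed level of $\epsilon_\bD^{T_0}$ then gives $\Exp[\gap(\overline\alphav)]\le\epsilon_\gap$ under the stated iteration count.

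The main obstacle I anticipate is not any single inequality but the precise two-phase calibration of the step sizes and the resulting constants: one must check that the $s=1$ phase reaches the required sub-optimality in $t_0$ steps, that the decaying-$s$ phase preserves the $1/t$ bound across the averaging window, and that these combine to give exactly the $T_0$ and $T$ expressions with their factor-of-two constants. A secondary subtlety is justifying convexity of $\gap(\cdot)$ (or otherwise controlling $\gap(\overline\alphav)$ by the average of the $\gap(\alphav^t)$) so that the averaged iterate inherits the telescoped bound.
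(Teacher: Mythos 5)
Your proposal is correct and follows essentially the same route as the paper: the master one-step inequality obtained from Lemma~\ref{lem:RelationOfDTOSubproblems}, Assumption~\ref{ass:localImprovement}, and the trial point $s(\uv-\alphav^t)$ with $-u_i\in\partial\ell_i(\xv_i^T\wv(\alphav^t))$ is exactly the paper's Lemma~\ref{lem:basicLemma}, the bound $\sum_k\|\Xk(\uv-\alphav^t)_{[k]}\|^2\le 4L^2\sigma$ is its Lemma~\ref{lemma:BoundOnR}, and the two-phase schedule ($s=1$ geometric phase down to the noise floor, then $s_t\sim 1/t$ by induction, then telescoping with a constant $s$ and Jensen on the averaging window) matches the paper's argument step for step, including the factor-of-two split of $\epsilon_\gap$ that yields $t_0$, $T_0$, and $T$. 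The subtlety you flag about passing the gap through the average is likewise present (and glossed over) in the paper's own proof.
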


The most important observation regarding the above result is that we do not impose any assumption on the choice of the local solver, apart from sufficient decrease condition on the local objective in Assumption~\ref{ass:localImprovement}.

Let us now comment on the leading terms of the complexity results.
The inverse dependence on $1 - \Theta$ suggests that it is worth pushing the rate of local accuracy $\Theta$ down to zero. However, when thinking about overall complexity, we have to bear in mind that achieving high accuracy on the local subproblems might be too expensive. The optimal choice would depend on the time we estimate a round of communication would take. In general, if communication is slow, it would be worth spending more time on solving local subproblems, but not so much if communication is relatively fast. We discussed this tradeoff in Section~\ref{sec:problem}.

We achieve a significant speedup by replacing the slow averaging aggregation (as in~\cite{Jaggi:cocoa}) by more aggressive adding instead, that is $\aggpar = 1$ instead of  $\aggpar = 1/K$.
Note that the safe subproblem parameter for the averaging case ($\aggpar = 1/K$) is $\sigma' := 1$, while for adding ($\aggpar = 1$) it is given by $\sigma' := K$,
both proven in Lemma~\ref{lem:sigmaPrimeNotBad}.
The resulting speedup from more aggressive adding is strongly reflected in the resulting convergence rate as shown above, when plugging in the actual parameter values $\aggpar$ and $\sigma'$ for the two cases, as we will illustrate more clearly in the next subsection.

\subsection{Discussion and Interpretations of Convergence Results}

As the above theorems suggest, it is not possible to meaningfully change the aggregation parameter $\aggpar$ in isolation. It comes naturally coupled with a particular subproblem.

In this section, we explain a simple way to be able to have the aggregation parameter $\aggpar = 1$, that is to aggressively add up the updates from each machine. The motivation for this comes from a common practical setting. When solving the SVM dual (Hinge loss: $\ell_i(a) = \max\{0, y_i-a\}$), the optimization problem comes with ``box constraints'', i.e., for all $i \in \{ 1, \dots, n \}$, we have $\alpha_i \in [0, 1]$ (see Table \ref{tbl:differentLossFunctions}). The particular values of $\alpha_i$ being~$0$ or~$1$ have a particular interpretation in the context of original problem~\eqref{eq:primal}. If we used $\aggpar < 1$, we would never be able reach the upper boundary of any variable $\alpha_i$, when starting the algorithm at $\mathbf{0}$.
This example illustrates some of the downsides of averaging vs adding updates, coming from the fact that the step-size from using averaging (by being $1/K$ times shorter) can result in $1/K$ times slower convergence.

For the case of aggressive adding, the convergence theorems for local objective~\eqref{eq:subproblem:sigma1} results derived in  Theorems \ref{thm:mainResult} and \ref{thm:mainResult:gcc}
are as follows:
\begin{corollary}[Smooth loss functions - adding]
\label{thm:mainResult:adding}
Let the assumptions of Theorem \ref{thm:mainResult} be satisfied.
If we run Algorithm~\ref{alg:cocoa} with $\aggpar = 1, \sigma'=K$
for  
\begin{equation}
\label{asfdafdafa}
 T 
   \overset{\eqref{afdsafdafsafdsafda}}{=} 
\tfrac{1}
   {
1-\Theta}
\tfrac
{\lambda\gamma n+
\sigma_{\max} K}
{ \lambda\gamma n }
    \log 
\left(
\tfrac{1}
   {1-\Theta}
\tfrac
{\lambda\gamma n+
\sigma_{\max} K}
{ \lambda\gamma n }
    \tfrac1{\epsilon_\gap}
    \right)
\end{equation}
iterations, we have 
$\Exp[
\bP( \wv(\vc{\alphav}{T})) - \bD(\vc{\alphav}{T})
]\leq \epsilon_\gap.$
\end{corollary}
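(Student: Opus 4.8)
The plan is to derive this corollary as a direct specialization of Theorem~\ref{thm:mainResult} under the particular parameter choice $\aggpar = 1$ and $\sigma' = K$, so that the work reduces to verifying admissibility of these parameters and then performing a substitution.

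First I would confirm that the pair $(\aggpar, \sigma') = (1, K)$ is a legitimate input in the sense required by the convergence analysis, namely that the safety condition \eqref{eq:sigmaPrimeSafeDefinition} holds. This is precisely the content of Lemma~\ref{lem:sigmaPrimeNotBad} evaluated at $\aggpar = 1$: it guarantees $\sigma' = \aggpar K = K \geq \sigma'_{min}$, so that the subproblems $\Gks$ form a valid block-separable lower bound on $D$ (Lemma~\ref{lem:RelationOfDTOSubproblems}), which is the structural fact underlying Theorem~\ref{thm:mainResult}. With this verified, all hypotheses of Theorem~\ref{thm:mainResult} are met whenever those of the corollary are.

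Next I would substitute $\aggpar = 1$ and $\sigma' = K$ into the duality-gap iteration bound \eqref{afdsafdafsafdsafda} of Theorem~\ref{thm:mainResult}. The factor $\tfrac{1}{\aggpar(1-\Theta)}$ collapses to $\tfrac{1}{1-\Theta}$, and the product $\sigma_{\max}\sigma'$ becomes $\sigma_{\max} K$, so the general expression $\tfrac{1}{\aggpar(1-\Theta)}\tfrac{\lambda\gamma n + \sigma_{\max}\sigma'}{\lambda\gamma n}$ turns into $\tfrac{1}{1-\Theta}\tfrac{\lambda\gamma n + \sigma_{\max} K}{\lambda\gamma n}$ in both the prefactor and the argument of the logarithm. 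This is exactly the iteration count~\eqref{asfdafdafa} stated in the corollary, and Theorem~\ref{thm:mainResult} then yields the claimed bound $\Exp[\bP(\wv(\vc{\alphav}{T})) - \bD(\vc{\alphav}{T})] \leq \epsilon_\gap$.

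There is essentially no analytical obstacle here: the corollary carries no new content beyond Theorem~\ref{thm:mainResult} and the admissibility guarantee of Lemma~\ref{lem:sigmaPrimeNotBad}. The only point requiring care is bookkeeping --- ensuring the substitution $\aggpar = 1$, $\sigma' = K$ is applied consistently in every occurrence (including inside the logarithm), and that the validity of $\sigma' = K$ is explicitly invoked rather than silently assumed. The purpose of stating it separately is expository, isolating the aggressive-adding regime the paper advocates as giving the best rate, in which (unlike the averaging choice $\aggpar = 1/K$, $\sigma' = 1$) the effective step length is not shrunk by a factor of $1/K$.
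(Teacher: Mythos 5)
Your proposal is correct and matches the paper's (implicit) derivation: the corollary is presented there as a direct specialization of Theorem~\ref{thm:mainResult} with $\aggpar=1$, $\sigma'=K$, whose admissibility is exactly the content of Lemma~\ref{lem:sigmaPrimeNotBad}. The substitution into \eqref{afdsafdafsafdsafda} is the whole argument, and you have carried it out correctly, including the occurrence inside the logarithm.
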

On the other hand, if we would just average results (as proposed in \cite{Jaggi:cocoa}), we would obtain following corollary:
\begin{corollary}[Smooth loss functions - averaging] 
Let the assumptions of Theorem~\ref{thm:mainResult} be satisfied.
If we run Algorithm~\ref{alg:cocoa} with $\aggpar = 1/K, \sigma'=1$
for  
 \begin{equation}
 \label{asfdafdafa2}
 T 
    \overset{\eqref{afdsafdafsafdsafda}}{\geq }
\tfrac{1}
   {
1-\Theta}
\tfrac
{K\lambda\gamma n+
\sigma_{\max} K}
{ \lambda\gamma n }
    \log 
\left(
\tfrac{1}
   {1-\Theta}
\tfrac
{K \lambda\gamma n+
\sigma_{\max} K}
{ \lambda\gamma n }
    \tfrac1{\epsilon_\gap}
    \right)  
\end{equation} 
 iterations, we have 
$\Exp[
\bP( \wv(\vc{\alphav}{T})) - \bD(\vc{\alphav}{T})
]\leq \epsilon_\gap.$
\end{corollary}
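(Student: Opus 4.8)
The plan is to obtain this corollary directly as a specialization of the general duality-gap bound \eqref{afdsafdafsafdsafda} from Theorem~\ref{thm:mainResult}, by substituting the concrete parameter values $\aggpar = 1/K$ and $\sigma' = 1$ that correspond to the averaging scheme. There is essentially no new mathematical content beyond that theorem; the work is to check admissibility of the parameters and then simplify the resulting expression.

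First I would verify that the pair $(\aggpar,\sigma') = (1/K,\,1)$ is admissible, i.e., that it satisfies the safety condition \eqref{eq:sigmaPrimeSafeDefinition} required for the convergence guarantees to hold. This is precisely the content of Lemma~\ref{lem:sigmaPrimeNotBad}: for any $\aggpar \in [0,1]$ the choice $\sigma' := \aggpar K$ satisfies $\aggpar K \geq \sigma'_{min}$, and setting $\aggpar = 1/K$ gives $\sigma' = (1/K)\cdot K = 1$. Hence the averaging parameters are valid, and Theorem~\ref{thm:mainResult} applies verbatim.

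Next I would substitute these values into the right-hand side of \eqref{afdsafdafsafdsafda}. Using $1/\aggpar = K$ and distributing the factor $K$ through the numerator, the prefactor becomes
$$
\frac{1}{\aggpar(1-\Theta)}\,\frac{\lambda\gamma n + \sigma_{\max}\sigma'}{\lambda\gamma n}
= \frac{K}{1-\Theta}\,\frac{\lambda\gamma n + \sigma_{\max}}{\lambda\gamma n}
= \frac{1}{1-\Theta}\,\frac{K\lambda\gamma n + \sigma_{\max} K}{\lambda\gamma n}.
$$
Since the identical prefactor appears both outside and inside the logarithm in \eqref{afdsafdafsafdsafda}, this single rearrangement immediately produces the iteration count stated in \eqref{asfdafdafa2}. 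The conclusion $\Exp[\bP(\wv(\vc{\alphav}{T})) - \bD(\vc{\alphav}{T})] \leq \epsilon_\gap$ then follows directly from the duality-gap guarantee of Theorem~\ref{thm:mainResult}.

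I do not anticipate a genuine obstacle, since the corollary is a pure specialization: the only step requiring any care is the parameter-validity check, which is fully discharged by Lemma~\ref{lem:sigmaPrimeNotBad}, and the rest is the elementary algebra above. One minor bookkeeping point is that Theorem~\ref{thm:mainResult} is stated for Algorithm~\ref{alg:cocoaPractical} while the corollary references Algorithm~\ref{alg:cocoa}; as these are equivalent reformulations of the same framework, the bound transfers without change. The only conceptual takeaway worth flagging in the proof is the contrast with the adding case (Corollary~\ref{thm:mainResult:adding}): the extra multiplicative factor of $K$ entering through $1/\aggpar = K$ is exactly the source of the $K$-fold slowdown of averaging relative to adding.
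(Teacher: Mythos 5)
Your proposal is correct and matches the paper's (implicit) argument exactly: the corollary is obtained by substituting $\aggpar = 1/K$ and $\sigma' = \aggpar K = 1$ (validated by Lemma~\ref{lem:sigmaPrimeNotBad}) into the duality-gap bound \eqref{afdsafdafsafdsafda} of Theorem~\ref{thm:mainResult} and absorbing the factor $1/\aggpar = K$ into the numerator. The paper offers no further proof, treating this as an immediate specialization, so nothing is missing from your account.
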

Comparing the leading terms in 
Equations \eqref{asfdafdafa} and \eqref{asfdafdafa2}
we see that 
the leading term for the $\nu=1$ choice is
$\mathcal{O}(\lambda\gamma n+
\sigma_{\max} K)$,
which is always better than for the $\nu=1/K$ case, when the leading term is 
$\mathcal{O}(\lambda\gamma n+
\sigma_{\max} K)$.
This strongly suggests that adding in Framework \ref{alg:cocoaPractical} is preferable, especially when 
$\lambda \gamma n \gg \sigma_{\max} $.
 
An analogously significant improvement by an order of $K$ factor follows for the case of the sub-linear convergence rate for general Lipschitz loss functions, as shown in Theorem~\ref{thm:mainResult:gcc}.

Note that the differences in the convergence rate are bigger for relatively big values of the regularizer $\lambda$. When the regularizer is $\mathcal{O}(1 / n)$, the difference is negligible. This behaviour is also present in practice, as we will point out in Section~\ref{sec:experiments}.

\section{Discussion and Related Work}
\label{sec:relatedWork}

In this section, we review a number of methods designed to solve optimization problems of the form of interest here, which are typically referred to as regularized empirical risk minimization (ERM) in the machine learning literature.
Formally described in Section~\ref{sec:problemformulation}, this problem class \eqref{eq:primal} underlies many prominent methods of supervised machine learning.

\paragraph*{\it Single-Machine Solvers.}
Stochastic Gradient Descent (SGD) is the simplest stochastic method one can use to solve the problem of structure \eqref{eq:primal}, and dates back to the work of Robbins and Monro \cite{Robbins:1951ko}. We refer the reader to \cite{moulines2011non, needell2014stochastic, nemirovski2009robust, bottou2012stochastic} for recent theoretical and practical assessment of SGD. Generally speaking, the method is extremely easy to implement, and converges to modest accuracies very quickly, which is often satisfactory in applications in machine learning. On the other hand, difficulty in choosing hyper-parameters make the method sometimes rather cumbersome, and is impractical if higher solution accuracy is needed.

The current state of the art for empirical loss minimization with strongly convex regularizers is randomized coordinate ascent on the dual objective --- Stochastic Dual Coordinate Ascent (SDCA) \cite{ShalevShwartz:2013wl}. In contrast to primal SGD methods, the SDCA algorithm family is often preferred as it is free of learning-rate parameters, and has faster (geometric) convergence guarantees.  This algorithm and its variants are increasingly used in practice \cite{Wright:2015bn,ShalevShwartz:2014dy}. On the other hand, primal-only methods apply to a larger problem class, not only of form \eqref{eq:primal} that enables formation of dual problem \eqref{eq:dual} as considered here.

Another class of algorithms gaining attention in recent very few years are `variance reduced' modifications of the original SGD algorithm. 
They are applied directly to the primal problem \eqref{eq:primal}, but unlike SGD, have property that variance of estimates of the gradients tend to zero as they approach optimal solution. 
Algorithms such as SAG \cite{schmidt2013minimizing}, SAGA \cite{defazio2014saga} and others \cite{shalev2015sdcaWODual, Defazio:2014vx} come at the cost of extra memory requirements --- they have to store a gradient for each training example. 
This can be addressed efficiently in the case of generalized linear models, but prohibits its use in more complicated models such as in deep learning. 
On the other hand, Stochastic Variance Reduced Gradient (SVRG) and its variants \cite{johnson2013accelerating, konecny2013semi, xiao2014proximal, konecny2015mini, nitanda2014stochastic} are often interpreted as `memory-free' methods with variance reduction. 
However, these methods need to compute the full gradient occasionally to drive the variance reduction, which requires a full pass through the data and is an operation one generally tries to avoid. 
This and several other practical issues have been recently addressed in~\cite{babanezhad2015stop}.
Finally, another class of extensions to SGD are stochastic quasi-Newton methods~\cite{bordes2009sgd, byrd2014stochastic}. Despite their clear potential, a lack of theoretical understanding and complicated implementation issues compared to those above may still limit their adoption in the wider community. A stochastic dual Newton ascent (SDNA) method was proposed and analyzed in~\cite{SDNA}. However, the method needs to modified substantially before it can be implemented in a  distributed environment.

\paragraph*{\it SGD-based Algorithms.}
For the empirical loss minimization problems of interest, stochastic subgradient descent (SGD) based methods are well-established.
Several distributed variants of SGD have been proposed, many of which build on the idea of a parameter server \cite{Niu:2011wo, richtarik2013distributed,  Duchi:2013te}.
Despite their simplicity and accessibility in terms of implementation, the downside of this approach is that the amount of required communication is equal to the amount of data read locally, \gs{since one data point is accessed per machine per round} (e.g., mini-batch SGD with a batch size of 1 per worker). These variants are in practice not competitive with the more communication-efficient methods considered in this work, which allow more local updates per communication round.

\paragraph*{\it One-Shot Communication Schemes.}
At the other extreme, there are distributed methods using only a single round of communication, such as \cite{Zhang:2013wq, Zinkevich:2010tj,Mann:2009tr,McWilliams:2014tl,Heinze:2016tu}.
These methods require additional assumptions on the partitioning of the data, which are usually not satisfied in practice if the data are distributed ``as is'', i.e., if we do not have the opportunity to distribute the data in a specific way beforehand. 
Furthermore, some cannot guarantee convergence rates beyond what could be achieved if we ignored data residing on all but a single computer, as shown in \cite{DANE}.
Additional relevant lower bounds on the minimum number of communication rounds necessary for a given approximation quality are presented in \cite{Balcan:2012tc,Arjevani:2015vka}.

\paragraph*{\it Mini-Batch Methods.} Mini-batch methods (which instead of just one data-example use updates from several examples per iteration) are more flexible and lie within these two communication vs. computation extremes. However,
mini-batch versions of both SGD and coordinate descent (CD) \cite{PCDM, richtarik2013distributed,ShalevShwartz:2014dy, marecek2014distributed, Yang:2013vl, Tappenden:2015vh, ALPHA, QUARTZ, csiba2015primal, csiba2016importanceminibatch} suffer from their convergence rate degrading towards the rate of batch gradient descent as the size of the mini-batch is increased. 
This follows because mini-batch updates are made based on the outdated previous parameter vector $\wv$, in contrast to methods that allow immediate local updates like \cocoa.

Another disadvantage of mini-batch methods is that the aggregation parameter is harder to tune, as it can lie anywhere in the order of mini-batch size. The optimal choice is often either unknown, or difficult to compute.
In the \cocoa setting, the parameter lies in the typically much smaller range given by $K$. In this work the aggregation parameter is further simplified and can be simply set to $1$, i.e., adding updates, which is achieved by formulating a more conservative local problem as described in Section~\ref{sec:subproblem}.

\paragraph*{\it Distributed Batch Solvers.}
With traditional batch gradient solvers not being competitive for the problem class  \eqref{eq:primal}, improved batch methods have also received much research attention recently, in the single machine case as well as in the distributed setting.
In distributed environments, often used methods are the alternating direction method of multipliers (ADMM) \cite{Boyd:2010bw}  
as well as quasi-Newton methods such as L-BFGS, which can be attractive because of their relatively low communication requirements. Namely, communication is in the order of a constant number of vectors (the batch gradient information) per full pass through the data.

ADMM also comes with an additional penalty parameter balancing between the equality constraint on the primal variable vector $\wv$ and the original optimization objective~\cite{Boyd:2010bw}, which is typically hard to tune in many applications. Nevertheless, the method has been used for distributed SVM training in, e.g., \cite{Forero:2010vv}. The known convergence rates for ADMM are weaker than the more problem-tailored methods mentioned we study here, and the choice of the penalty parameter is often unclear in practice. 

Standard ADMM and quasi-Newton methods  do not allow a gradual trade-off between communication and computation available here. An exception is the approach of Zhang, Lee and Shin \cite{Zhang:2012usa}, which is similar to our approach in spirit, albeit based on ADMM, in that they allow for the subproblems to be solved inexactly. However, this work focuses on L2-regularized problems and a few selected loss functions, and offers no complexity results.

Interestingly, our proposed \cocoap framework here -- despite clearly aimed at  cheap stochastic local solvers -- does have similarities to block-wise variants of batch proximal methods as well, as explained as follows:

The purpose of our subproblems as defined in \eqref{eq:subproblem:sigma1} is to form a data-dependent block-separable quadratic approximation to the smooth part of the original (dual) objective~\eqref{eq:dual}, while leaving the non-smooth part $R$ intact (recall that $R(\alphav)$ was defined to collect the~$\ell^*_i$ functions, and is separable over the coordinate blocks).
Now if hypothetically each of our regularized quadratic subproblems \eqref{eq:subproblem:sigma1} were to be minimized exactly, the resulting steps could be interpreted as block-wise proximal Newton-type steps on each coordinate block~$k$ of the dual \eqref{eq:dual}, where the Newton-subproblem is modified to also contain the proximal part $R$. 
This connection only holds for the special case of adding ($\aggpar=1$), and would correspond to a carefully adapted step-size in the block-wise Newton case.

One of the main crucial differences of our proposed \cocoap framework compared to all known batch proximal methods (no matter if block-wise or not) is that the latter do require high accuracy subproblem solutions, and do not allow arbitrary solvers of weak accuracy~$\Theta$ such as we do here, see also the next paragraph.
Distributed Newton methods have been analyzed theoretically only when the subproblems are solved to high precision, see e.g. \cite{DANE}. 
This makes the local solvers very expensive and the convergence rates less general than in our framework (which allows weak local solvers). 
Furthermore, the analysis of \cite{DANE} requires additional strong assumptions on the data partitioning, such that the local Hessian approximations are consistent between the machines.

\paragraph*{\it Distributed Methods Allowing Local Optimization.}
Developing distributed optimization methods that allow for arbitrary weak local optimizers requires carefully devising data-local subproblems to be solved after each communication round.

By making use of the primal-dual structure in the line of work of \cite{Yu:2012fp,Pechyony:2011wi,Yang:2013vl,Yang:2013ui,Lee:2015vra}, the \cocoa and \cocoap frameworks proposed here are the first to allow the use of any local solver --- of weak local approximation quality --- in each round.
Furthermore, the approach here also allows more control over the aggregation of updates between machines. 
The practical variant of the DisDCA Algorithm of \cite{Yang:2013vl}, called DisDCA-p, also allows additive updates but is restricted to coordinate decent \mcx{(CD)} being the local solver, and was \mcx{initially} proposed without convergence guarantees. 
\mcx{The work of \cite{Yang:2013ui} has provided the first theoretical convergence analysis for an ideal case, when the distributed data parts are all orthogonal to each other --- an unrealistic setting in practice.}
DisDCA-p can be recovered as a special case of the \cocoap framework when using \mcx{CD} as a local solver, if $|\mathcal{P}_k| = n/K$ and when using \mcx{the conservative bound} $\sigma':=K$, see  %
also \cite{Lee:2015vra,Ma:2015ti}.
The convergence theory presented here therefore also covers that method, \mcx{and extends it to arbitrary local solvers.}

\paragraph*{\it Inexact Block Coordinate Descent.} 
Our framework is related, but not identical, to running an {\em inexact} version of  block coordinate ascent, applied to all block in parallel, and to the dual problem, where the level of inexactness is controlled by the parameter $\Theta$ through the use of a (possibly randomized) iterative ``local'' solver applied to the subproblems (local problems). For previous work on {\em randomized} block coordinate descent we refer to~\cite{ICD}. See also \cite{DQA}.

\section{Numerical Experiments}
\label{sec:experiments}

In this section we explore numerous aspects of our distributed framework and demonstrate  its competitive performance in practice.
Section~\ref{sec:LocalSolverExps} first explores the impact of the local solver on overall performance, by comparing examples of various local solvers that can be used in the framework (the improved \cocoap framework as shown in Algorithms \ref{alg:cocoa} and \ref{alg:cocoaPractical}) as well as testing the effect of approximate solution quality. The results indicate that the choice of local solver can have a significant impact on overall performance. %
In Sections~\ref{sec:adingVsAveraging} and \ref{sec:subproblemParamExps} we further explore framework parameters, looking at the impact of the aggregation parameter $\nu$ and the subproblem parameter $\sigma'$, respectively. Finally, Section~\ref{sec:hugeDatasetExp} demonstrates competitive practical performance of the overall framework on a large 280GB distributed dataset.

We conduct experiments on three datasets of moderate and large size, namely \emph{rcv1test}, \emph{epsilon} and \emph{splice-site.t}\footnote{The datasets are available at \url{http://www.csie.ntu.edu.tw/~cjlin/libsvmtools/datasets/}.}. The details of these datasets are listed in Table~\ref{tab:datasets}.
\begin{table}[h]
\tbl{Datasets used for numerical experiments.}
{
      \begin{tabular}{crrr}
      \toprule
    Dataset & \multicolumn{1}{c}{$n$} &
    \multicolumn{1}{c}{$d$} & 
    \multicolumn{1}{c}{size(GB)} \\
    \colrule 
	rcv1test & 677,399 &
	  47,236 & 1.2 \\
		epsilon & 400,000 &
	  2,000 & 3.1 \\	
	  splice-site.t & 4,627,840 &
	  11,725,480 & 273.4
	\\  \botrule  
      \end{tabular}
}    
\label{tab:datasets}

\end{table}

For solving subproblems, we compare numerous local solver methods, as listed in Table~\ref{tbl:othersolvers}. Also, we apply Euclidean norm as regularizer $g(x) = \|x\|^2$ for all the experiments.  All the algorithms are implemented in C\texttt{++} with MPI, and experiments are run on a cluster of 4 Amazon EC2 m3.xlarge instances. Our open source code is available online at: \url{https://github.com/optml/CoCoA}.
\begin{table}[H]
\tbl{Local solvers used in numerical experiments.}
{
\begin{tabular}{ll}
\toprule
CD  & Coordinate Descent \cite{richtarik}\\

APPROX & Accelerated, Parallel and Proximal Coordinate Descent \cite{APPROX, APPROX-SIREV} \\

GD & Gradient Descent with Backtracking Line Search \cite{NocedalWrightBook}\\

CG & Conjugate Gradient Method \cite{CG}\\

L-BFGS &Quasi-Newton with Limited-Memory BFGS Updating \cite{byrd1995limited}\\

BB &  Barzilai-Borwein Gradient Method   \cite{barzilai1988two}\\

FISTA & Fast Iterative Shrinkage-Thresholding Algorithm \cite{beck2009fast}\\
\botrule
\end{tabular}
}
\label{tbl:othersolvers}
\end{table}

\subsection{Exploration of Local Solvers within the Framework}
\label{sec:LocalSolverExps}
In this section we compare the performance of our framework for various local solvers and various choices of inner iterations performed by a given local solver, resulting in different local accuracy measures $\Theta$. 
For simplicity, we choose the subproblem parameter $\sigma' := \nu K$ (see Lemma \ref{lem:sigmaPrimeNotBad}) as a simple obtainable and theoretically safe value for our framework.

\subsubsection{Comparison of Different Local Solvers}

Here we compare the performance of the various local solvers listed in Table \ref{tbl:othersolvers}. We here show results for quadratic loss function $\ell_i(a) = \frac12 (a-y_i)^2$
with three different values of the regularization parameter, $\lambda$=$10^{-3}$, $10^{-4}$, and $10^{-5}$, for $g(.)$ being the default Euclidean squared norm regularizer.
The dataset is RCV and we ran the framework for a maximum of $T:=100$ communication rounds. We set $\aggpar=1$ (adding) and choose $H$ which gave the best performance in CPU time (see Table \ref{tbl:optH}) for each solver.
From Figure~\ref{fig:dffsolvers}, we find that the \mcx{coordinate descent (CD)} local solver always outperforms the other solvers, even though it may be slower than L-BFGS at the beginning. The reason for this is that \mcx{CD}, as compared to the other methods, does not need to spend time evaluating the full (batch) gradient and function values. Also note that some of the solvers cannot guarantee strict decrease of the duality gap, and sometimes this fluctuation can be very dramatic.

\begin{table}[H]
\tbl{Optimal $H$ for different local solvers for RCV dataset. }
{
\begin{tabular}{c | c c c c c c c}
\toprule
Local Solver &\mcx{CD}  & APPROX & GD& CG& L-BFGS& BB& FISTA\\
$H $ &40,000 & 40,000 & 20&  5 & 10 & 15 &20 \\
\botrule
\end{tabular}
}
\label{tbl:optH}
\end{table}

\begin{figure}[H]  
\centering                  
\includegraphics[scale=.19]{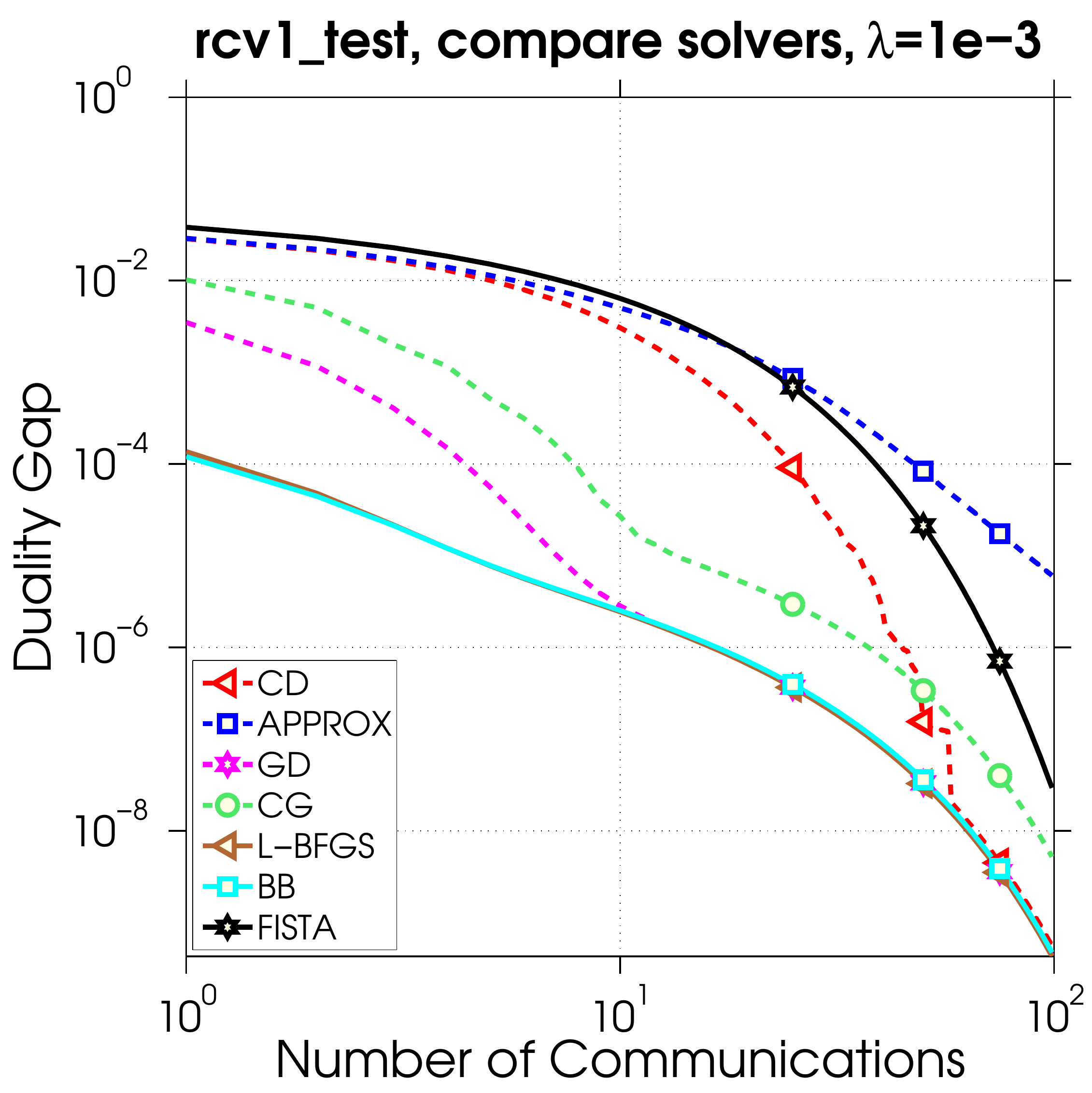}
\includegraphics[scale=.19]{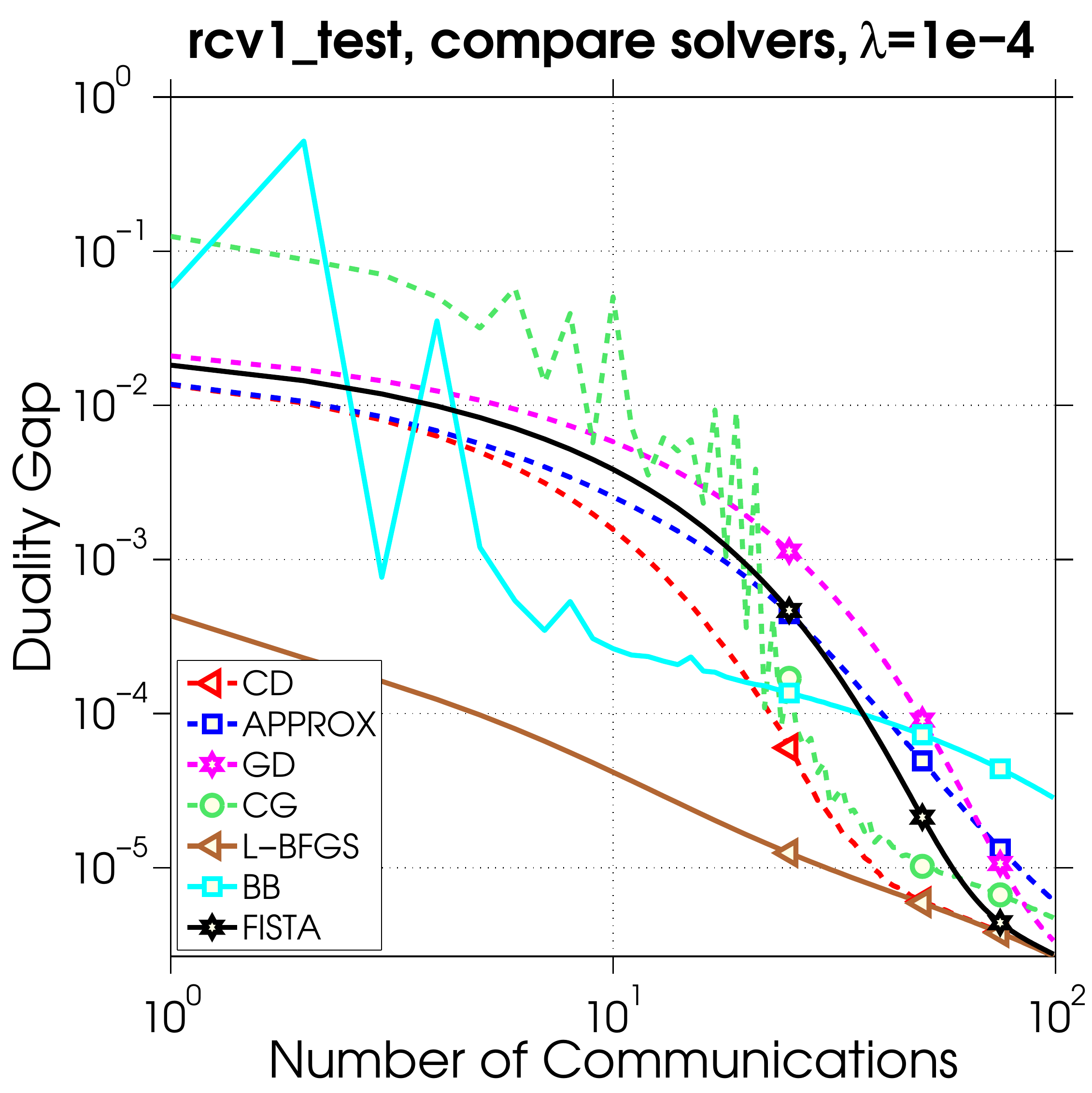}
\includegraphics[scale=.19]{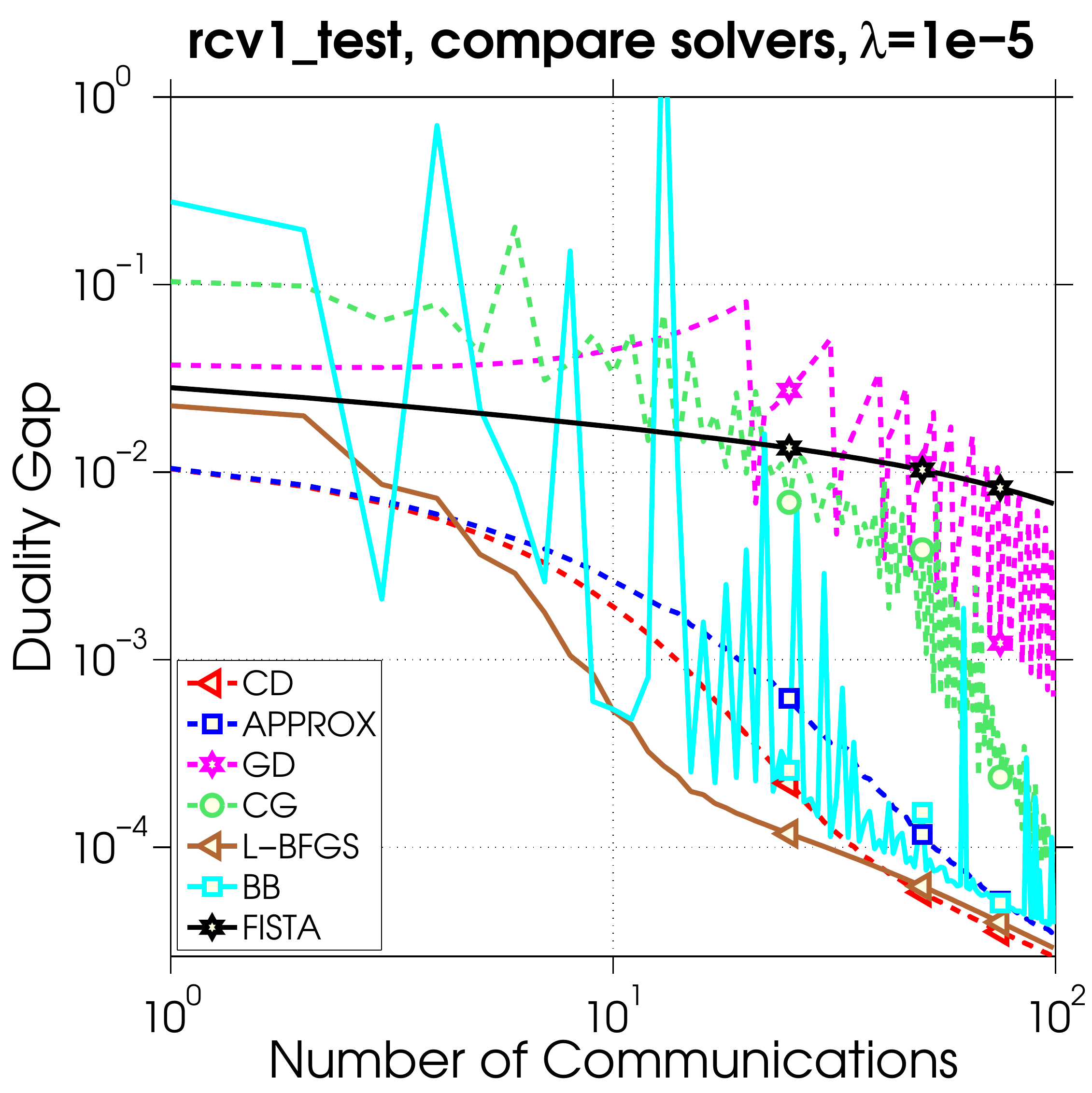}

\includegraphics[scale=.19]{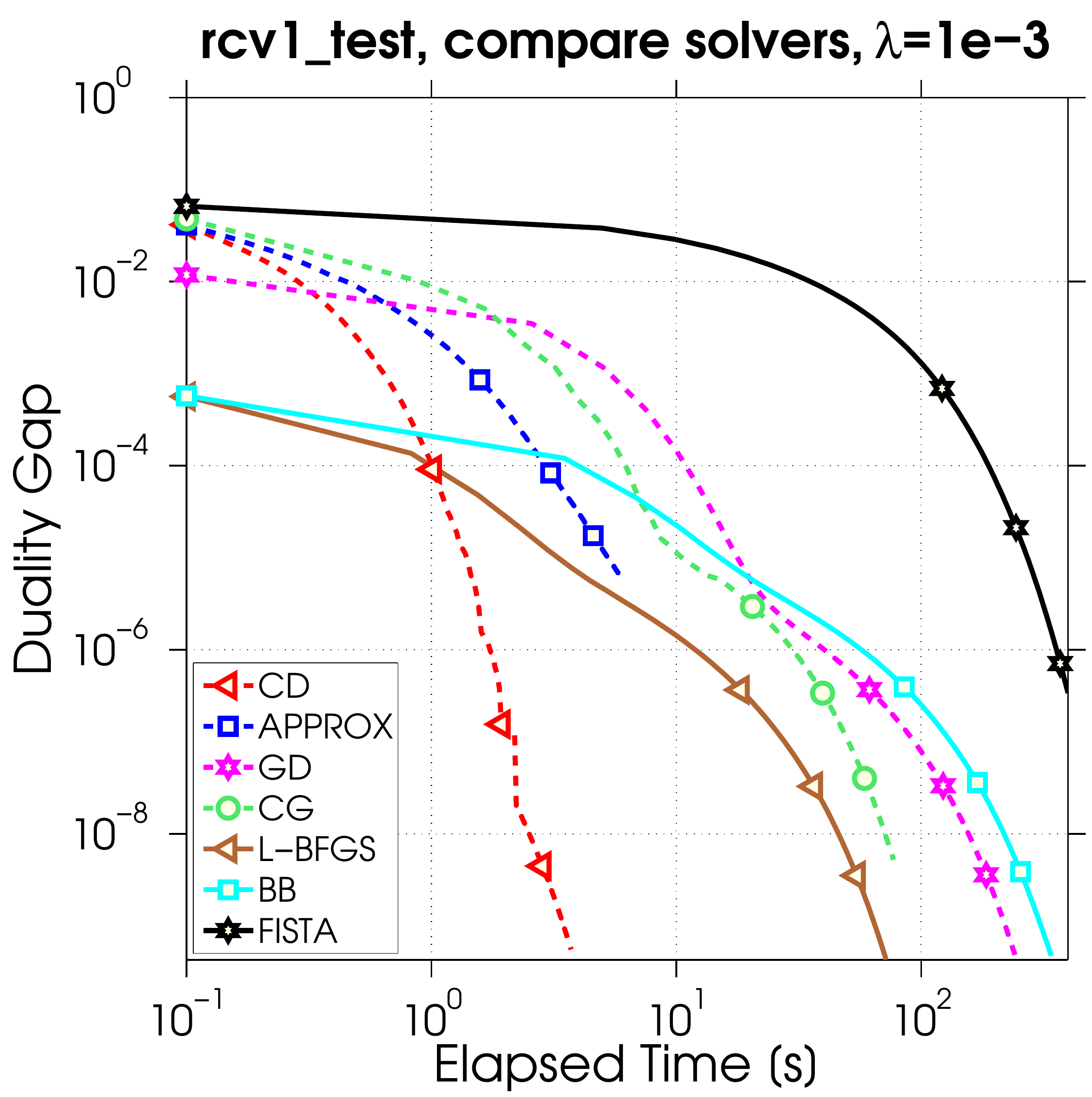}
\includegraphics[scale=.19]{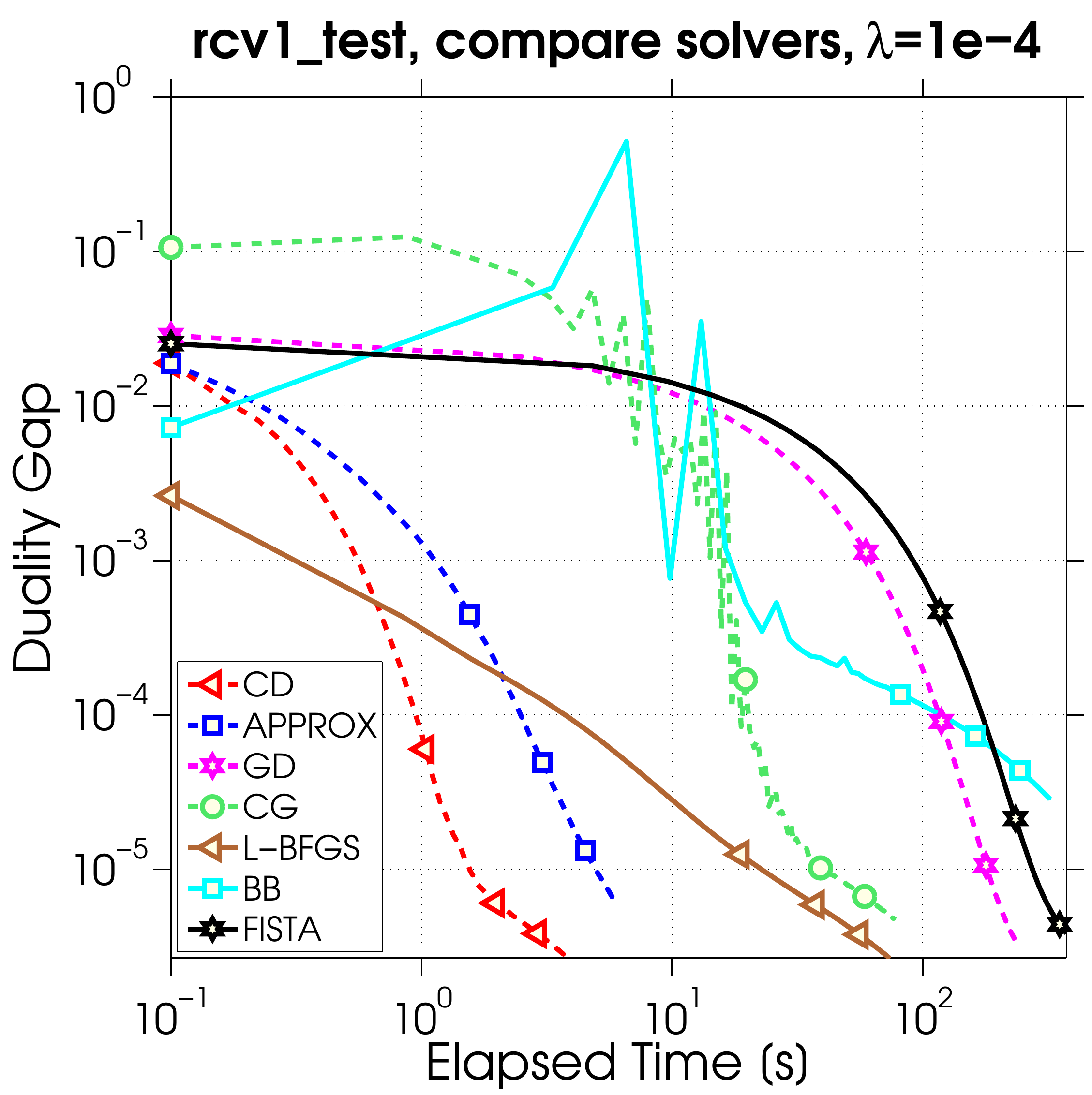}
\includegraphics[scale=.19]{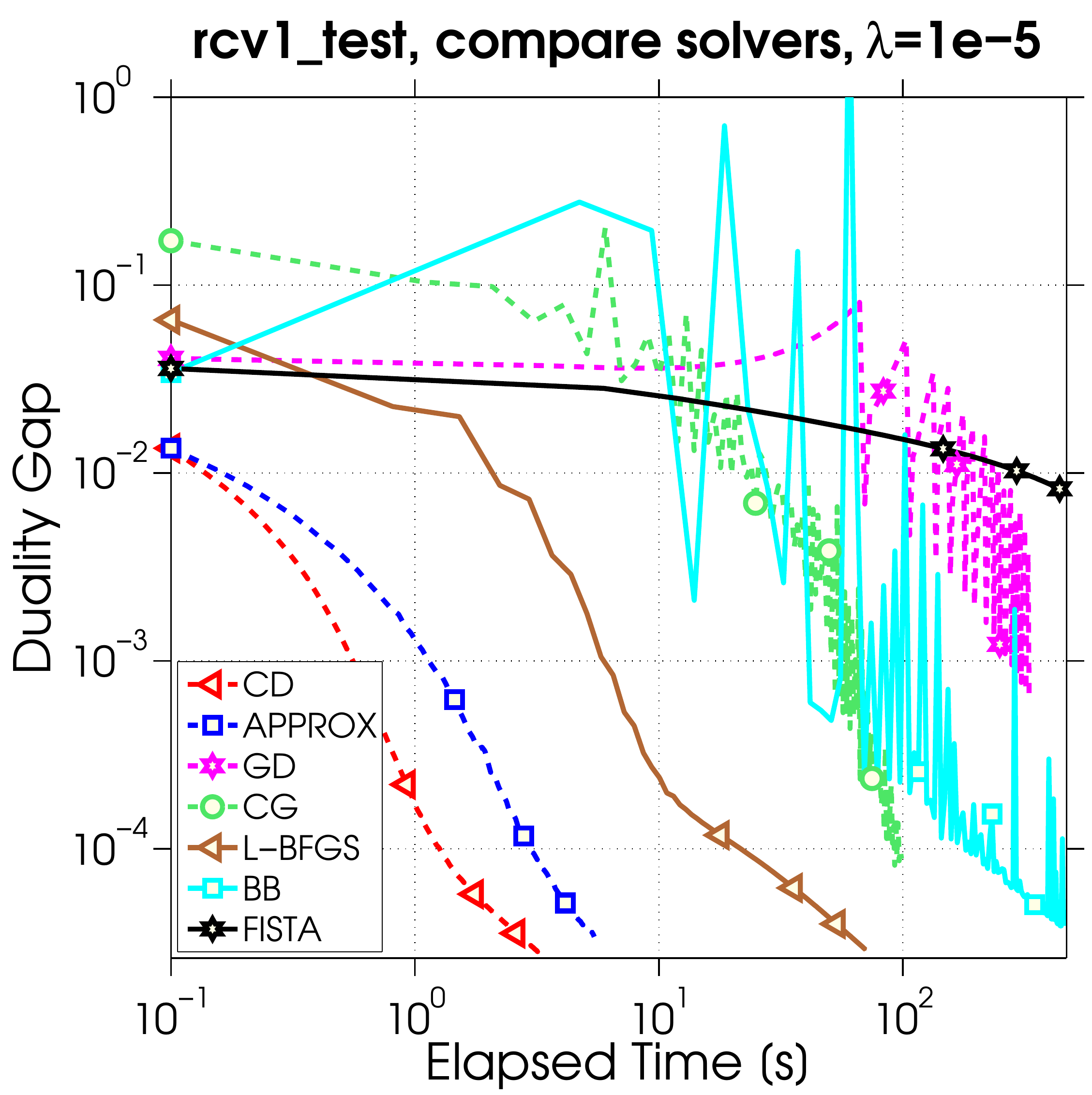}
\caption{Performance of different local solvers.} 
\label{fig:dffsolvers}
\end{figure}
 
\subsubsection{Effect of the Quality of Local Solver Solutions on Overall Performance}

Here we discuss how the quality of subproblem solutions affects the overall performance of Algorithm~\ref{alg:cocoaPractical}. In order to do so, we denote $H$ as the number of iterations the local solver is run for, within each communication round of the framework.
We choose various values for $H$ on two local solvers, \mcx{CD}~\cite{richtarik,ShalevShwartz:2013wl} and L-BFGS~\cite{byrd1995limited}, which gave the best performance in general. For \mcx{CD}, $H$ represents the number of local iterations performed on the subproblem. For L-BFGS, $H$ not only means the number of iterations, but also stands for the size of past information used to approximate the Hessian (i.e., the size of limited memory).  

Looking at Figures~\ref{fig:dffsollbfgs} and \ref{fig:dffsollbfgsxxx}, we see that for both local solver and all values of~$\lambda$, increasing $H$ will lead to less iterations of Algorithm~\ref{alg:cocoaPractical}. Of course, increasing $H$ comes at the cost of the time spent on local solvers increasing. Hence, a larger value of $H$ is not always the optimal choice with respect to total elapsed time. For example, for the rcv\_test dataset, when choosing \mcx{CD} to solve the subproblems, choosing $H$ to be $40,000$ uses less time and provides faster convergence. When using L-BFGS, $H = 10$ seems to be the best choice.

\begin{figure}[H]
\centering
\includegraphics[scale=.19]{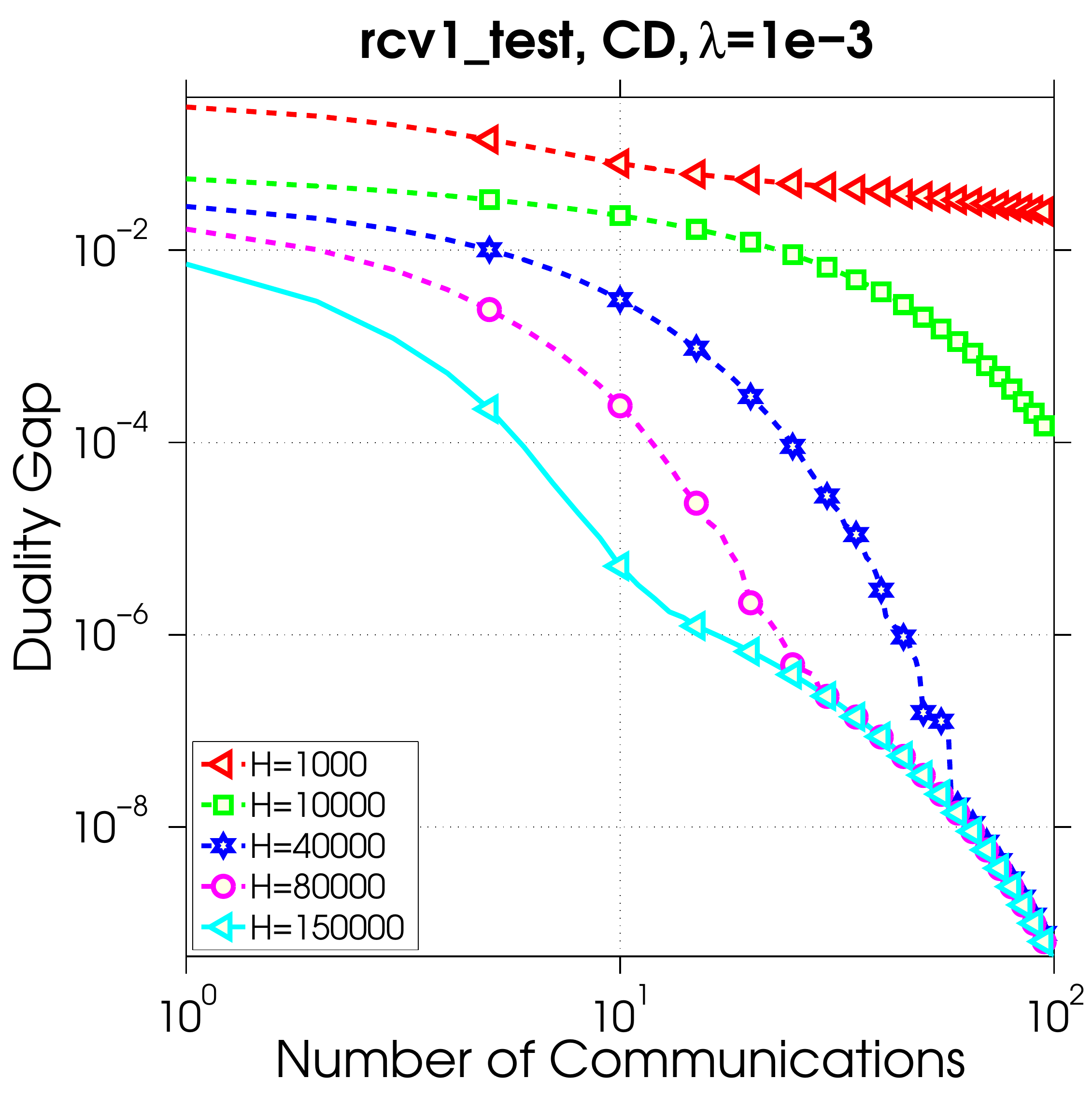}
\includegraphics[scale=.19]{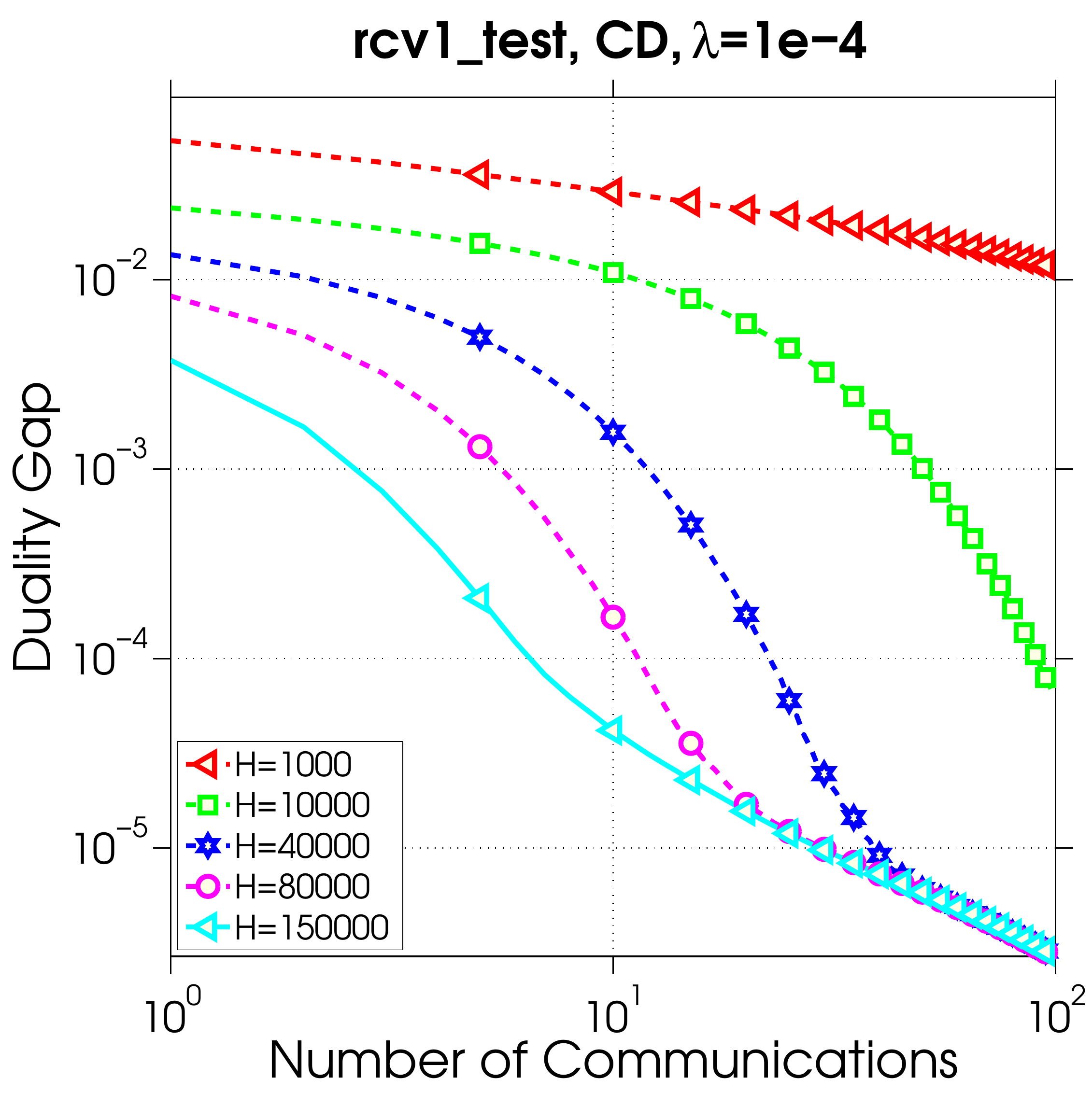}
\includegraphics[scale=.19]{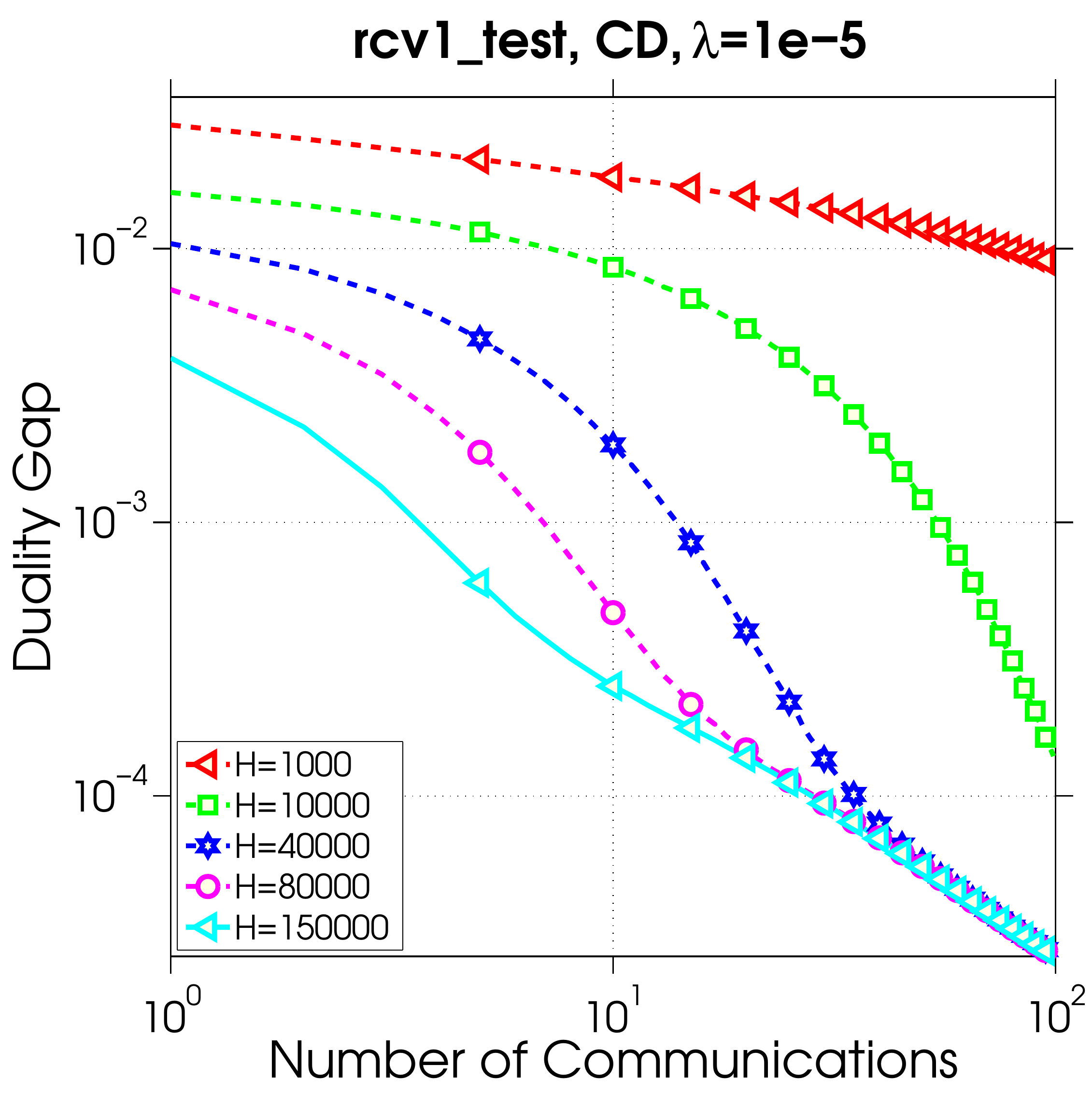}

\includegraphics[scale=.19]{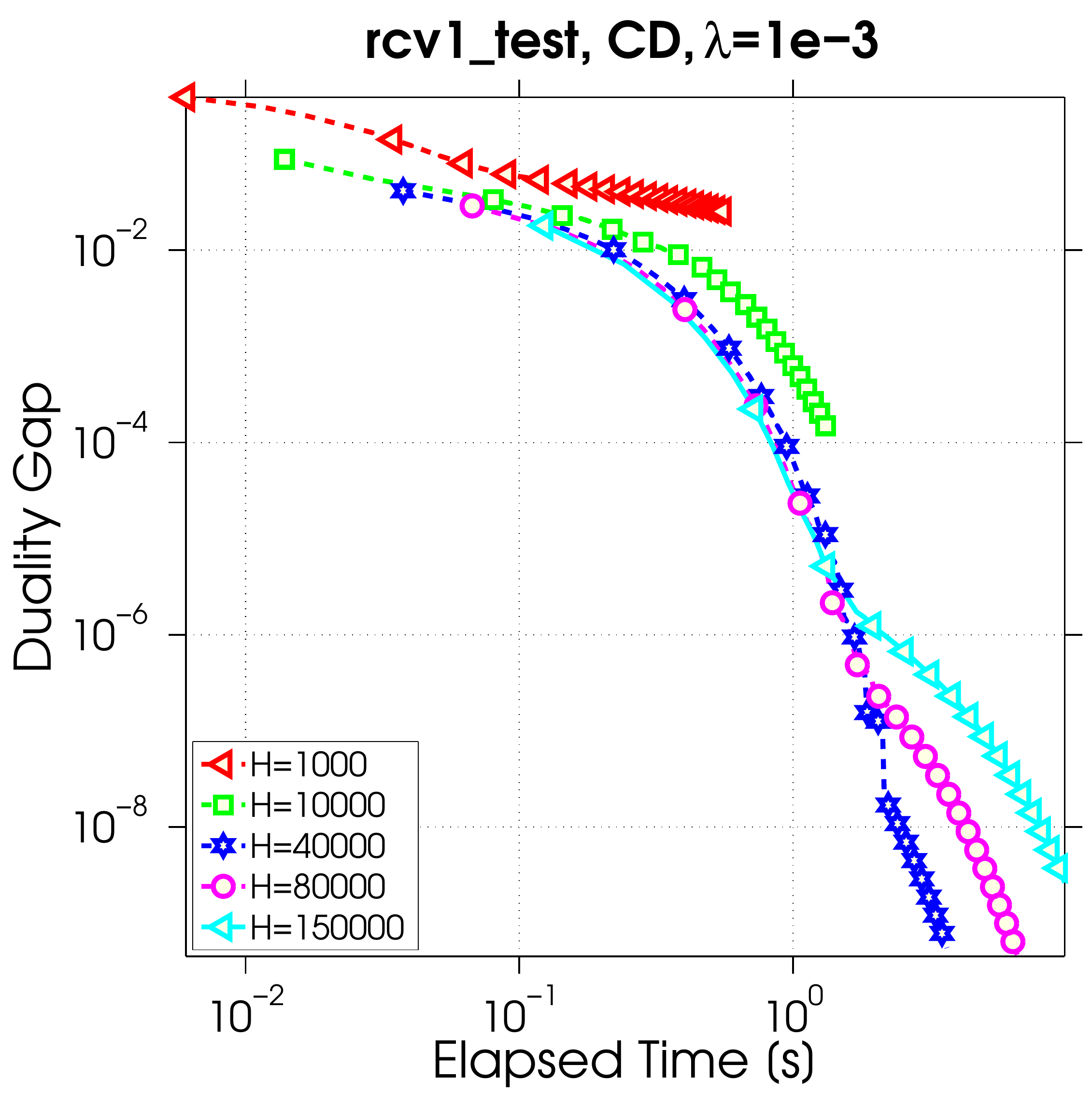}
\includegraphics[scale=.19]{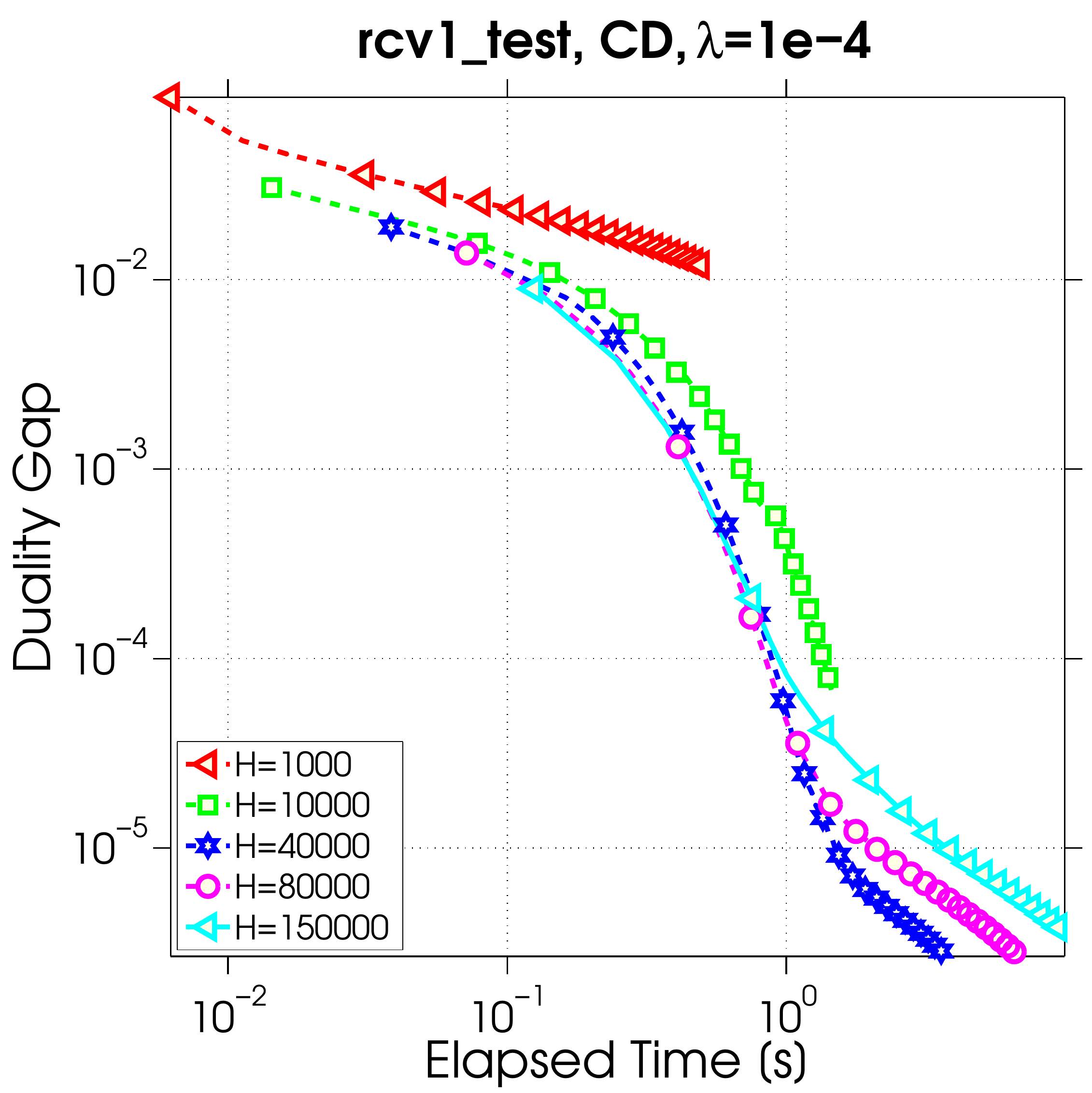}
\includegraphics[scale=.19]{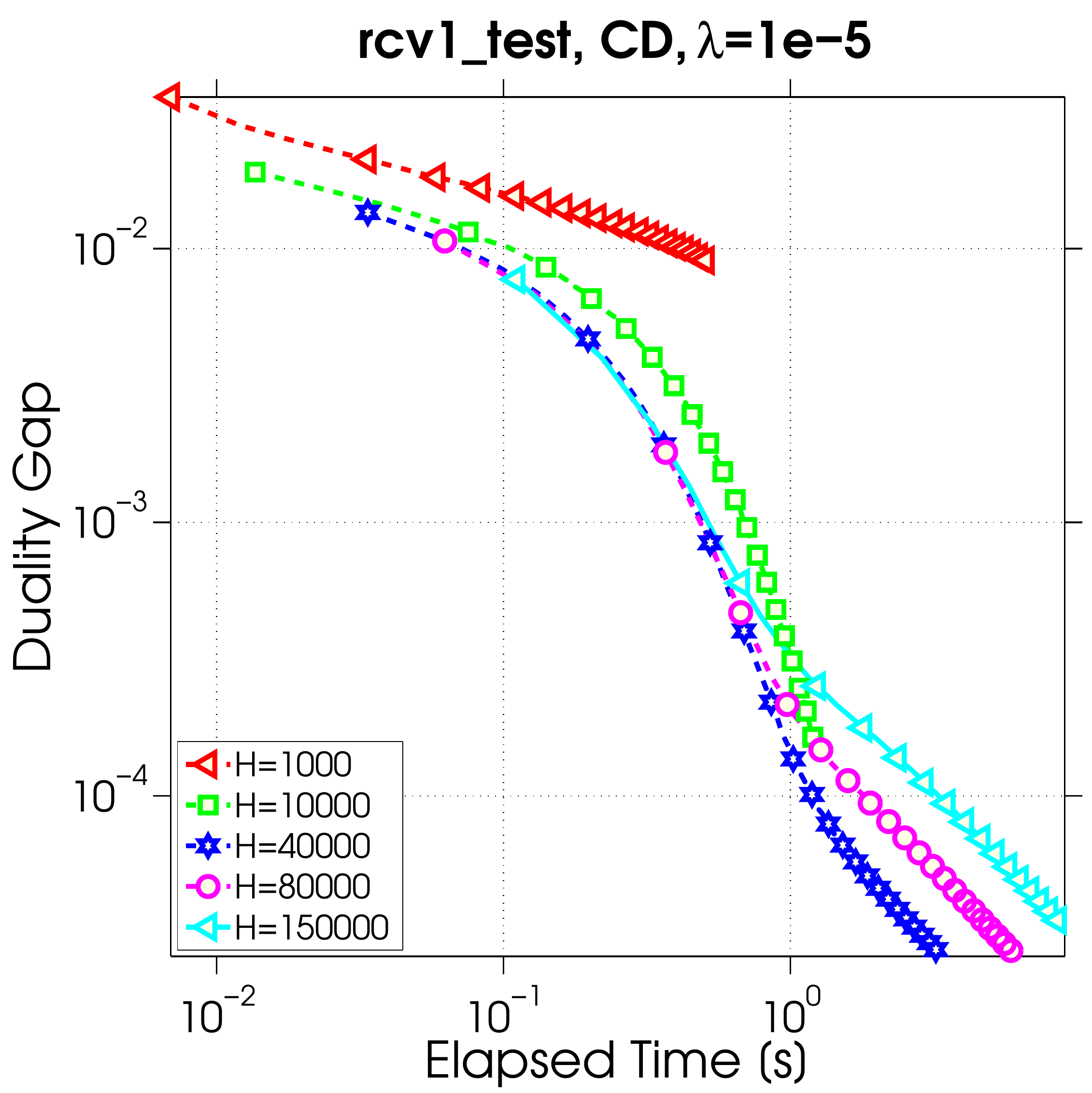}
\caption{Varying the number of iterations of \mcx{CD}  as a local solver.} 
\label{fig:dffsollbfgs}
\end{figure}

\begin{figure}[H]
\centering
\includegraphics[scale=.19]{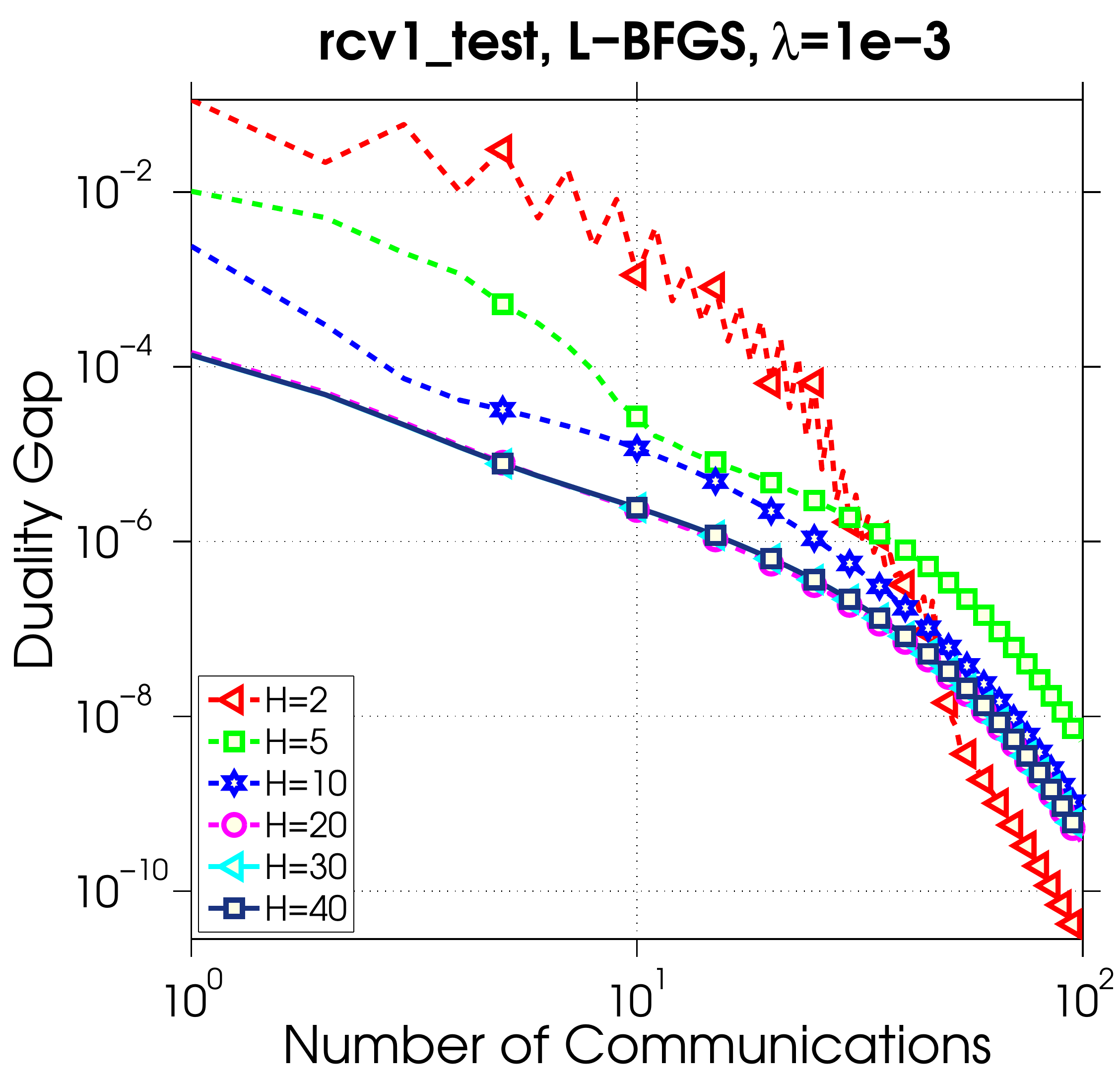}
\includegraphics[scale=.19]{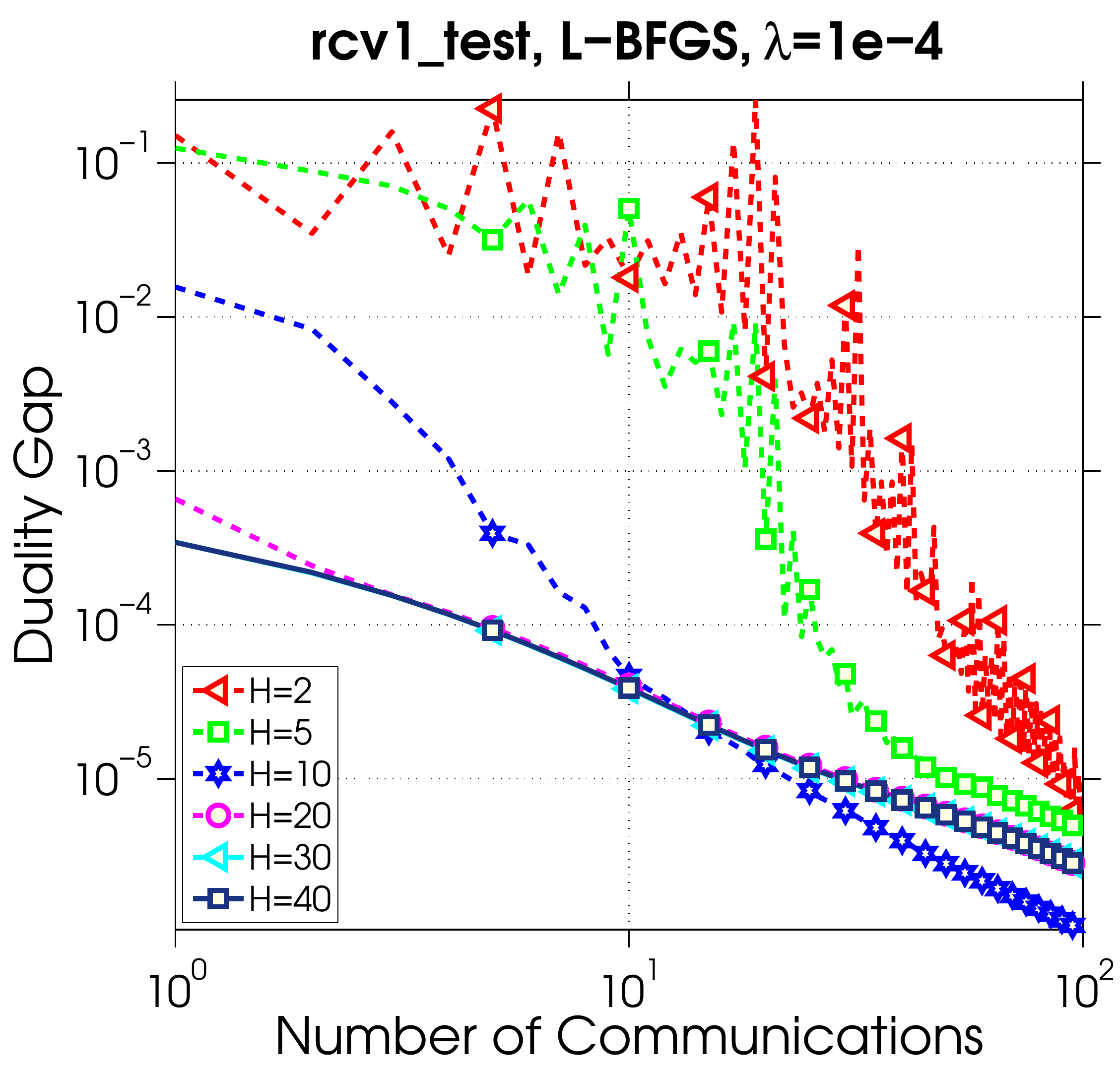}
\includegraphics[scale=.19]{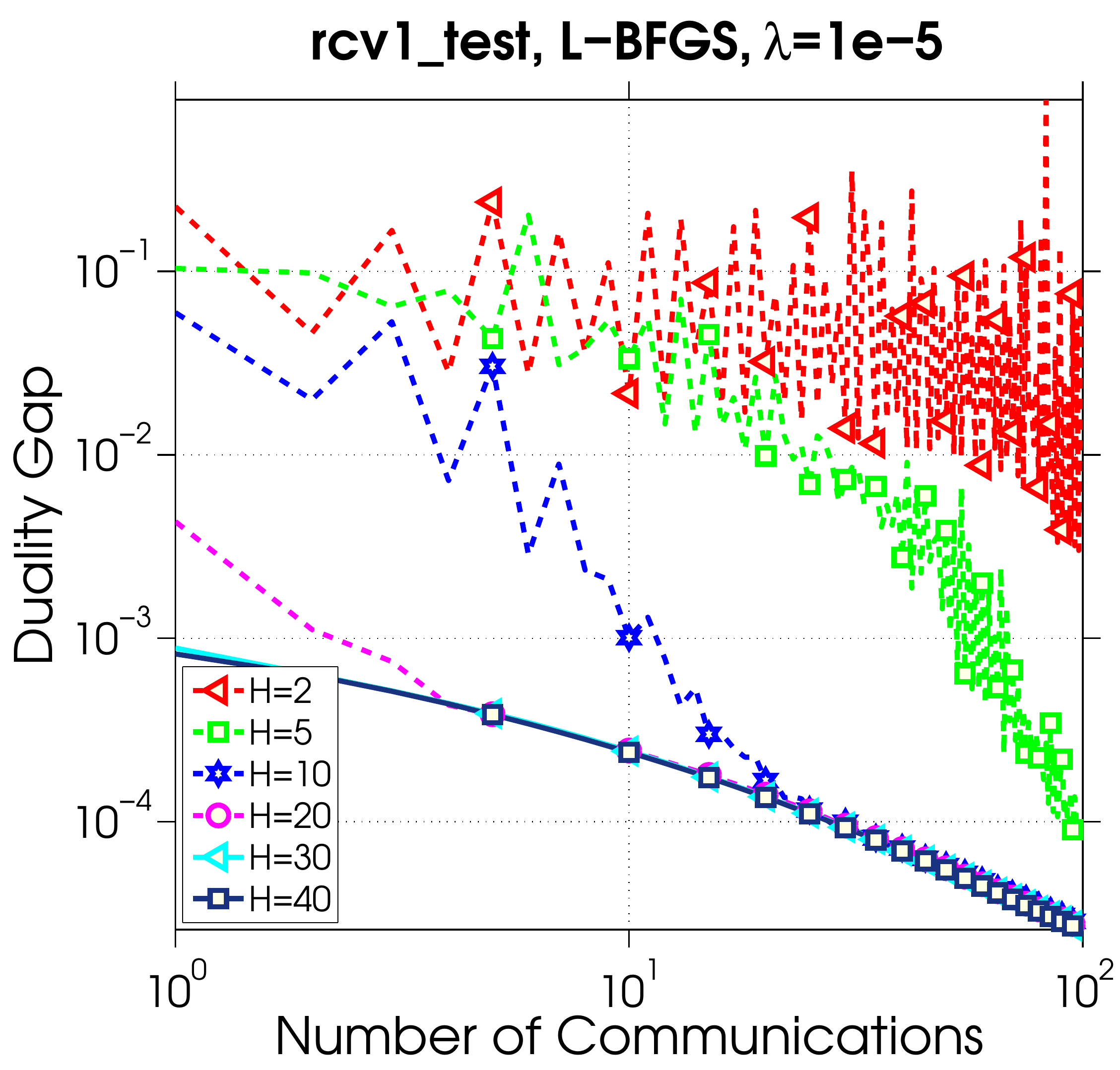}

\includegraphics[scale=.19]{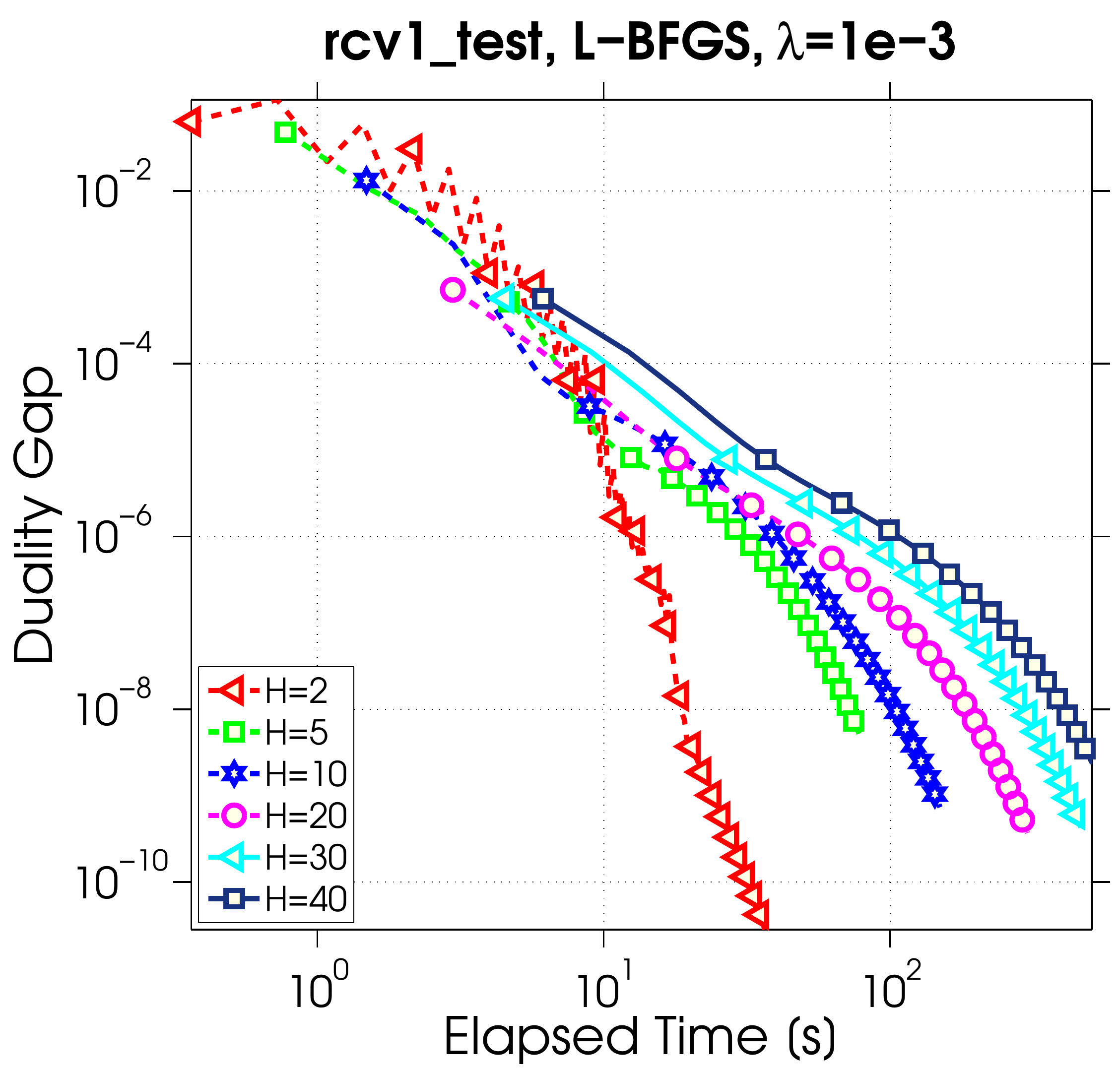}
\includegraphics[scale=.19]{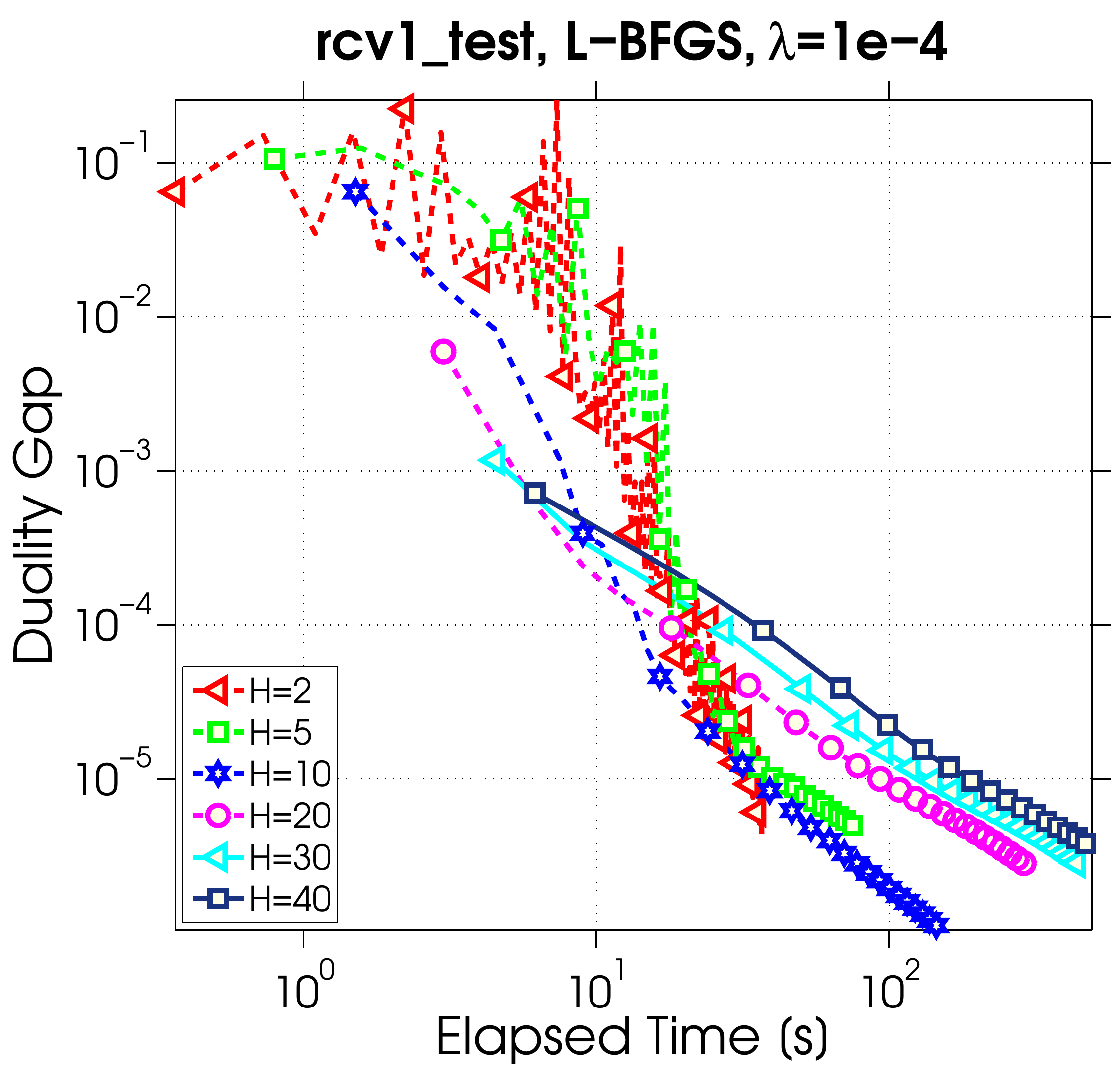}
\includegraphics[scale=.19]{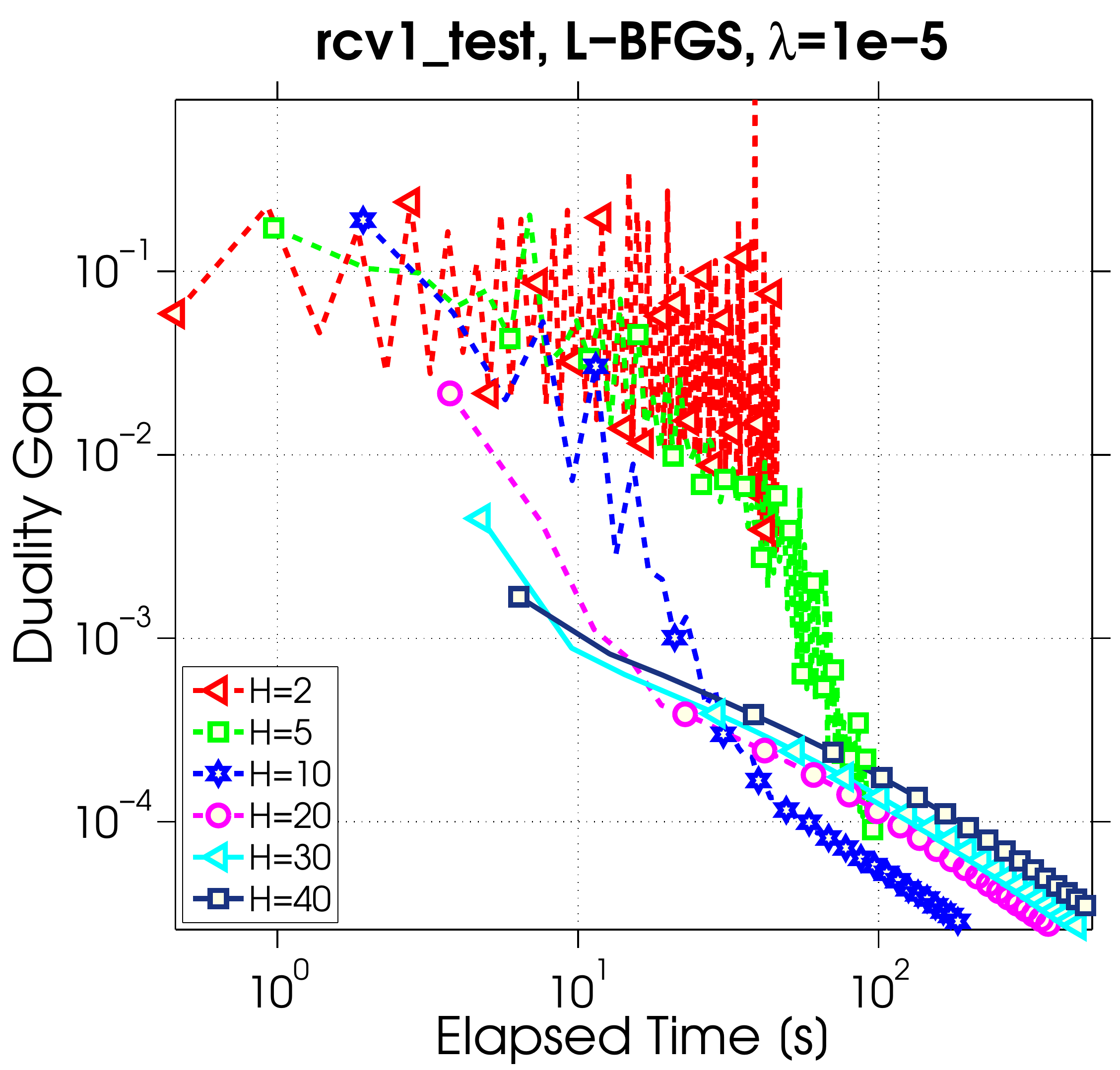}
\caption{Varying the number of iterations of L-BFGS as a local solver.} 
\label{fig:dffsollbfgsxxx}
\end{figure}

\subsection{Averaging vs.\ Adding the Local Updates}  
\label{sec:adingVsAveraging}
In this section, we compare the performance of our algorithm using two different schemes for aggregating partial updates: adding vs. averaging. This corresponds to comparing two extremes for the parameter $\nu$, either $\nu:=\frac1K$ (averaging partial solutions) or $\nu:=1$ (adding partial solutions). As discussed in Section~ \ref{sec:result}, adding the local updates ($\aggpar=1$) will lead to less iterations than taking averaging, due to choosing different $\sigma'$ in the subproblems. We verify this experimentally by considering several of the local solvers listed in Table~ \ref{tbl:othersolvers}.

We show results for RCV dataset, and we apply quadratic loss function with three different choices for the regularization parameter, $\lambda$=$1e-03$, $1e-04$, and $1e-05$. The experiments in Figures~\ref{fig:soler2}--\ref{fig:soler7} indicate that the ``adding strategy'' will  always lead to faster convergence than averaging, even though the difference is minimal when we apply a large number of iterations in the local solver. All the blue solid plots (adding) outperform the red dashed plots (averaging), which indicates the advantage of choosing $\aggpar= 1$. Another note here is that for smaller $\lambda$, we will have to spend more iterations to get the same accuracy (because the original objective function \eqref{eq:primal} is less strongly convex).

\begin{figure}[H]
\centering
\includegraphics[scale=.19]{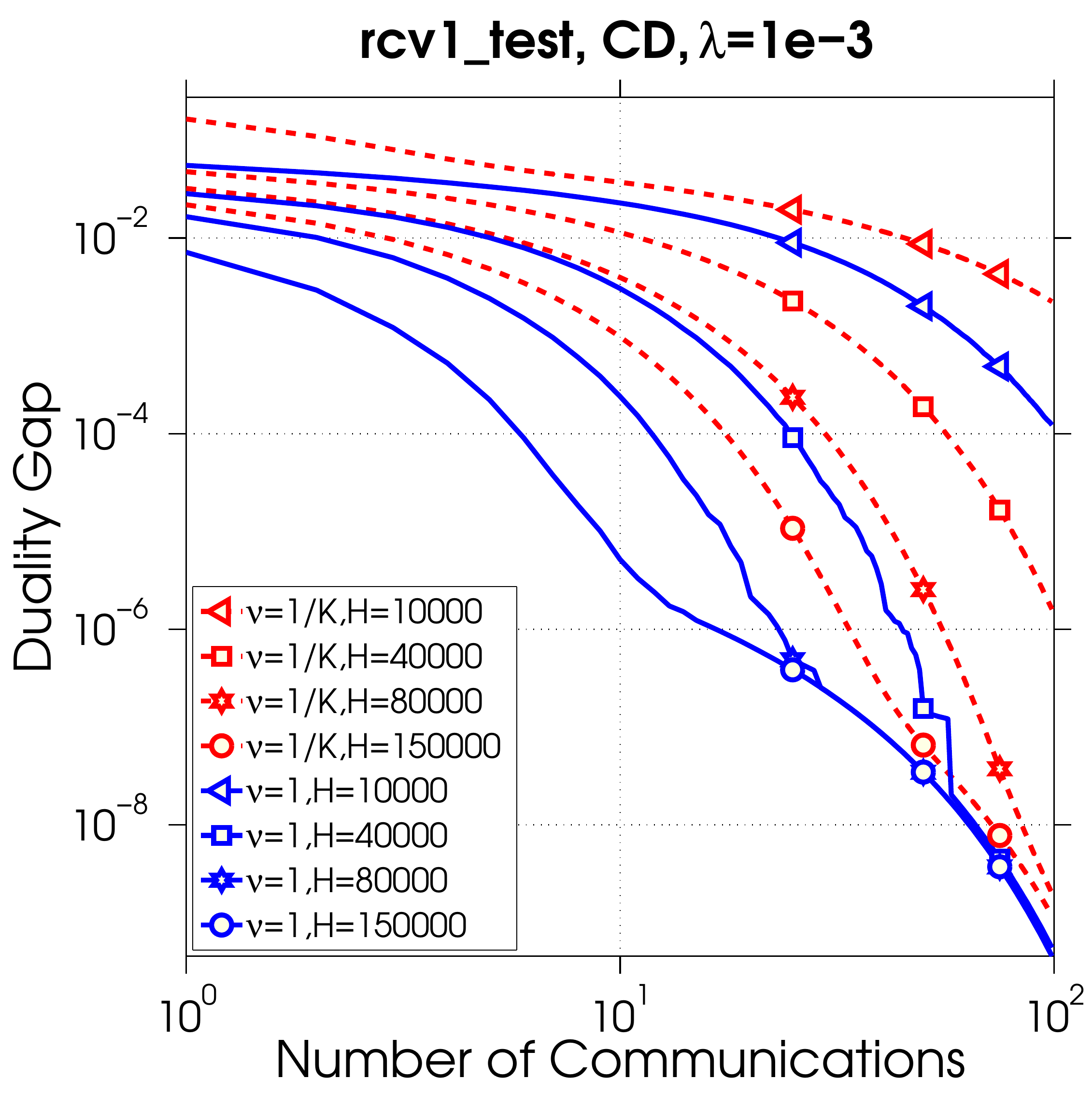}
\includegraphics[scale=.19]{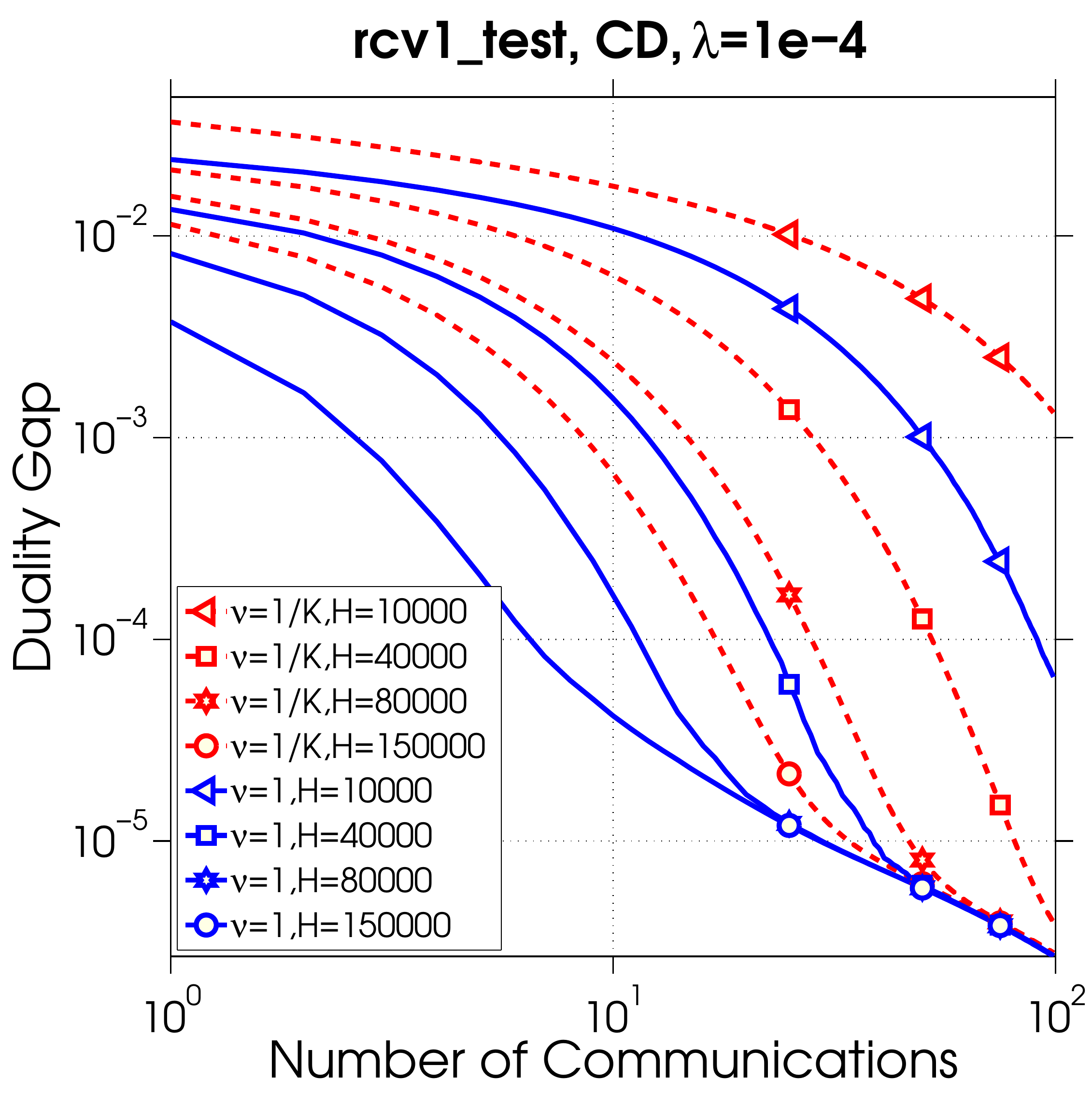}
\includegraphics[scale=.19]{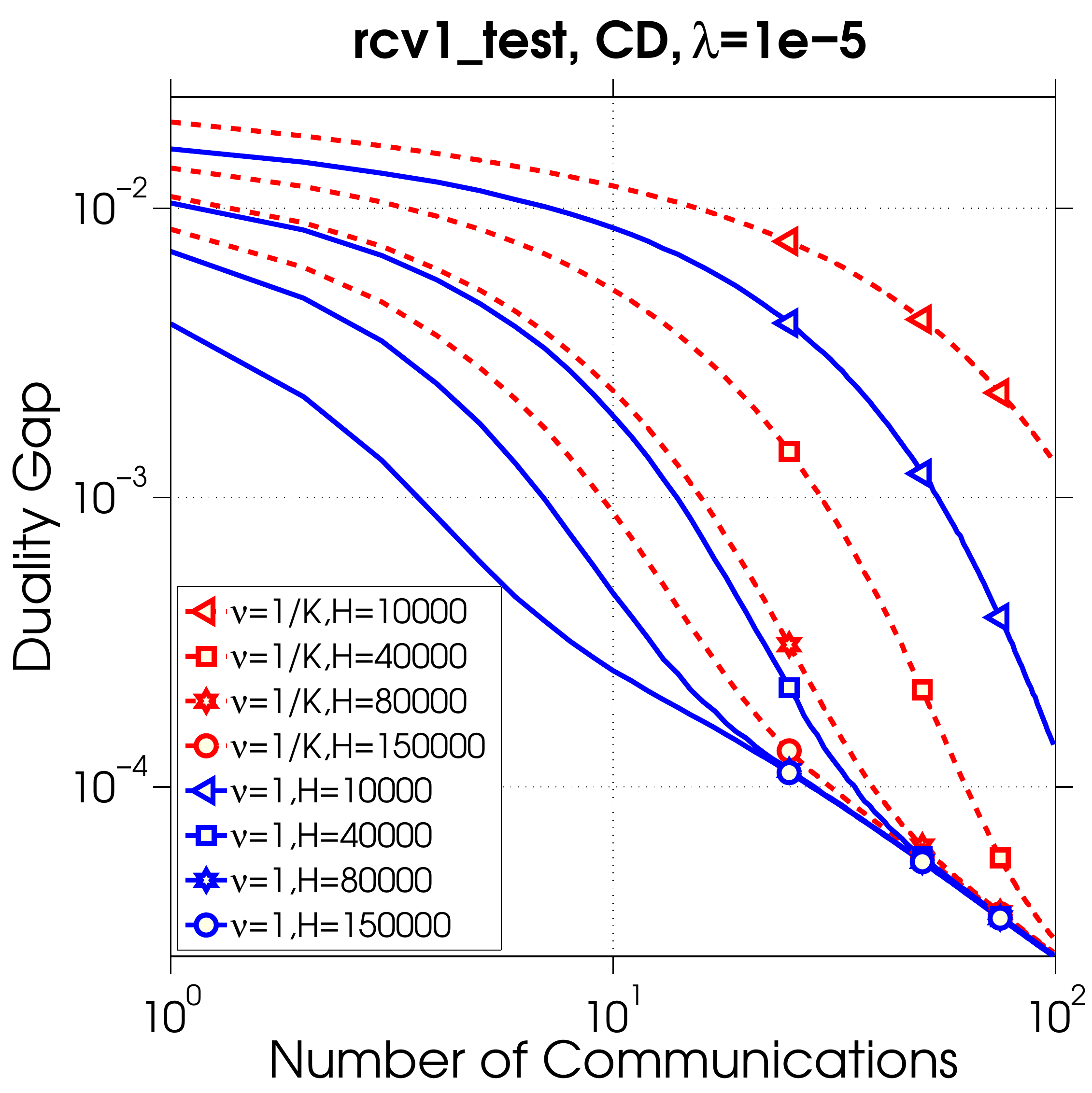}

\includegraphics[scale=.19]{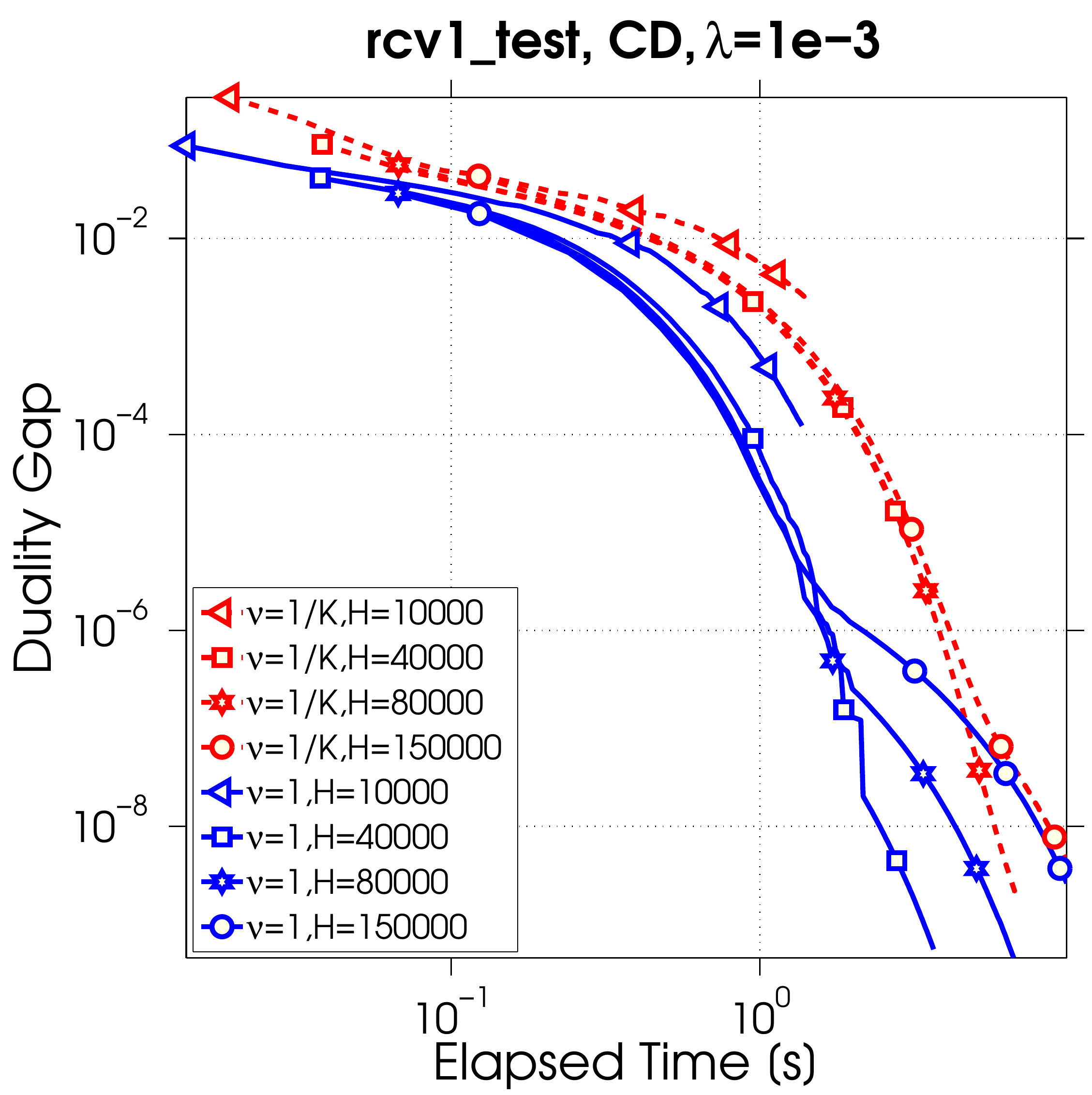}
\includegraphics[scale=.19]{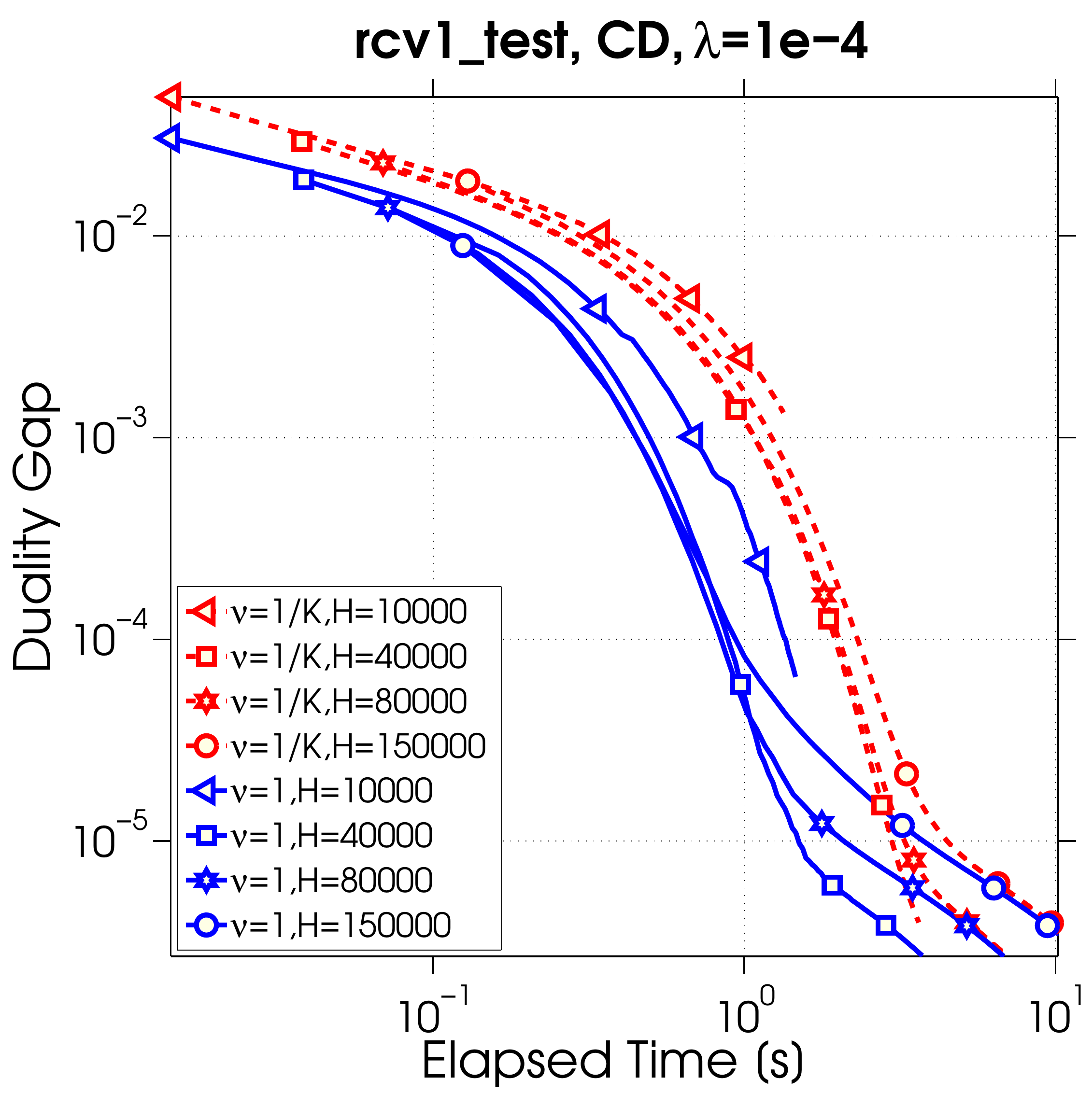}
\includegraphics[scale=.19]{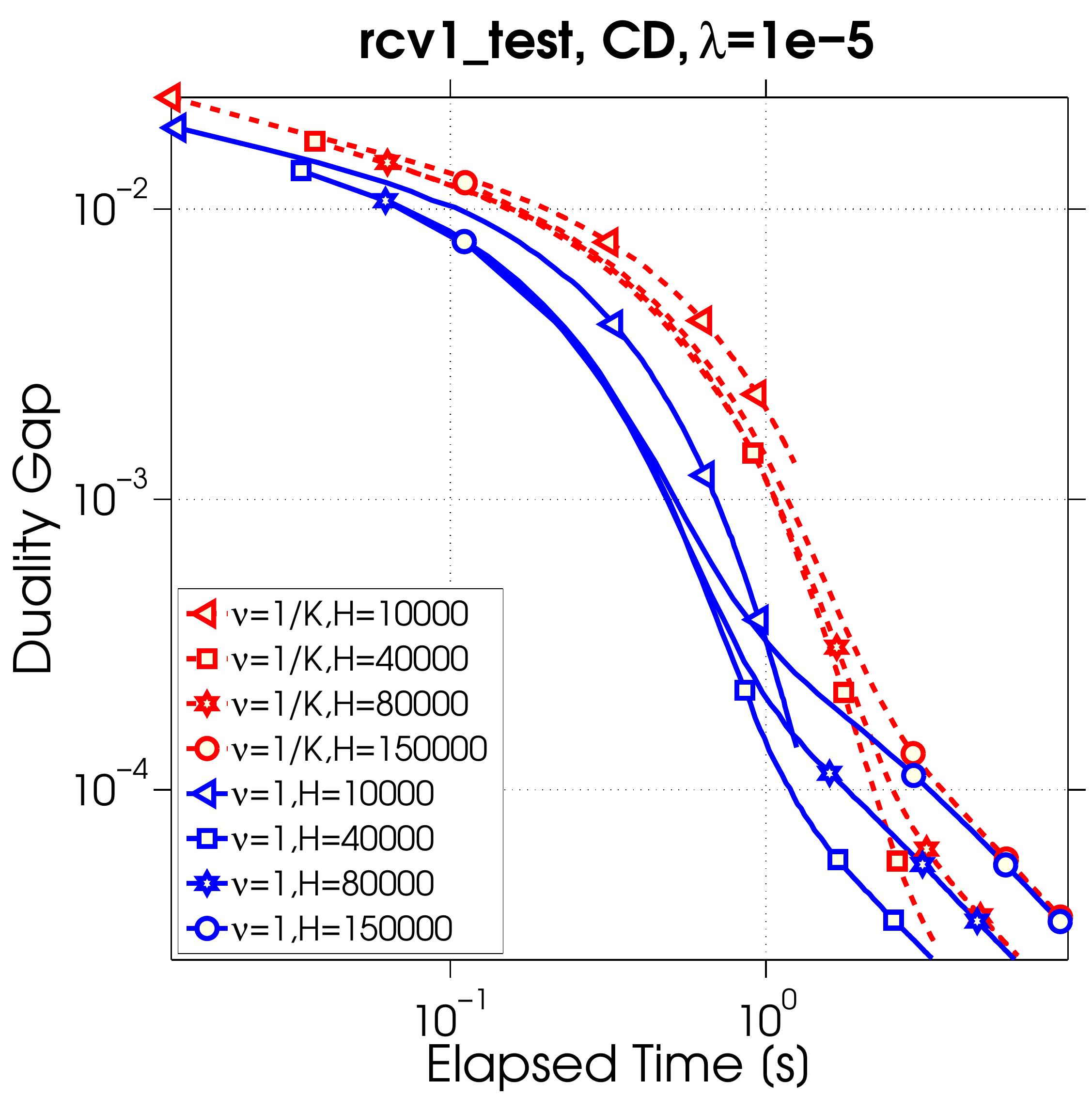}
\caption{Adding (blue solid line) vs Averaging (red dashed line) for \mcx{CD} as the local solver. } 
\label{fig:solver1}
\end{figure}

\begin{figure}[H]
\centering
\includegraphics[scale=.19]{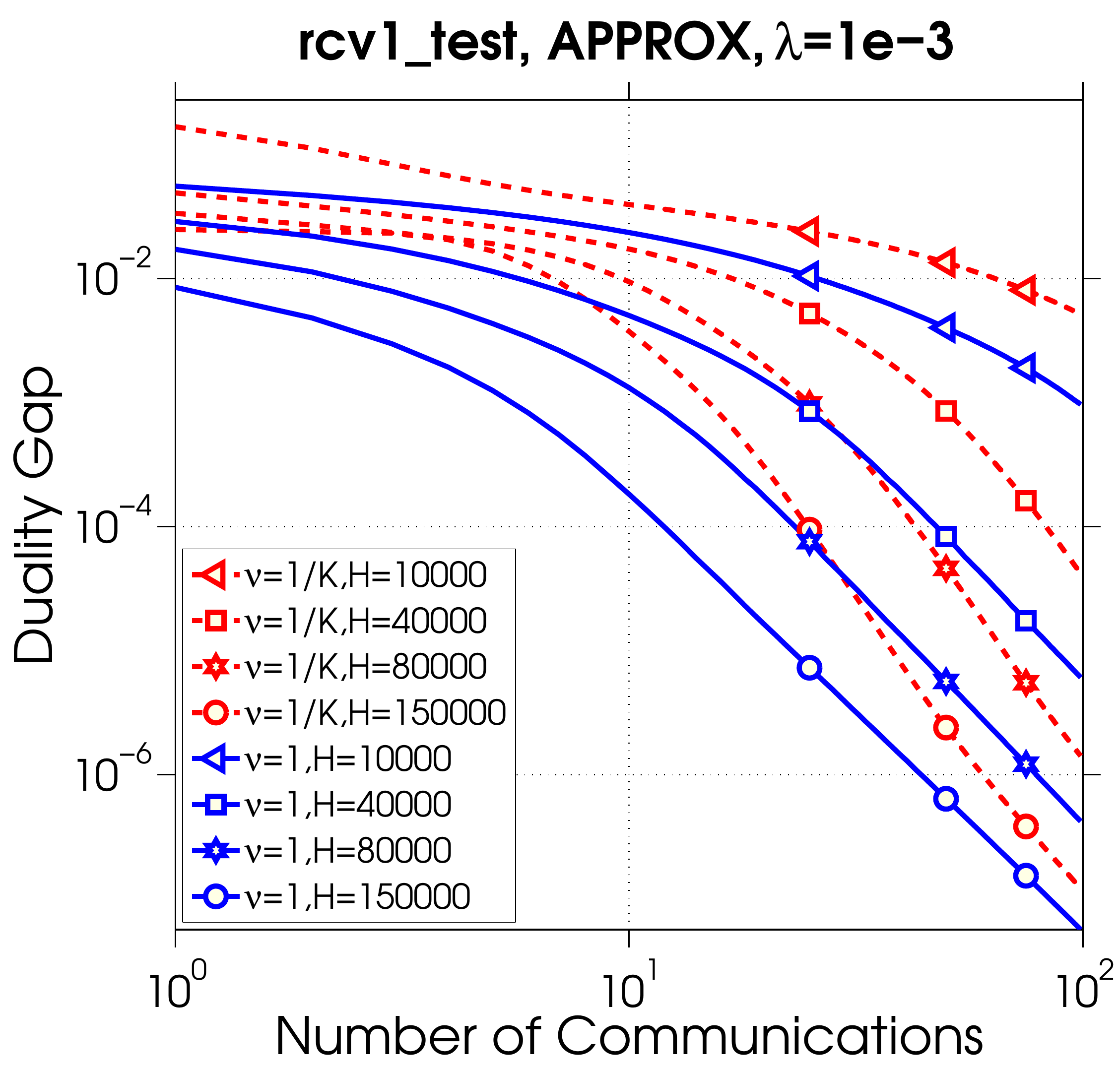}
\includegraphics[scale=.19]{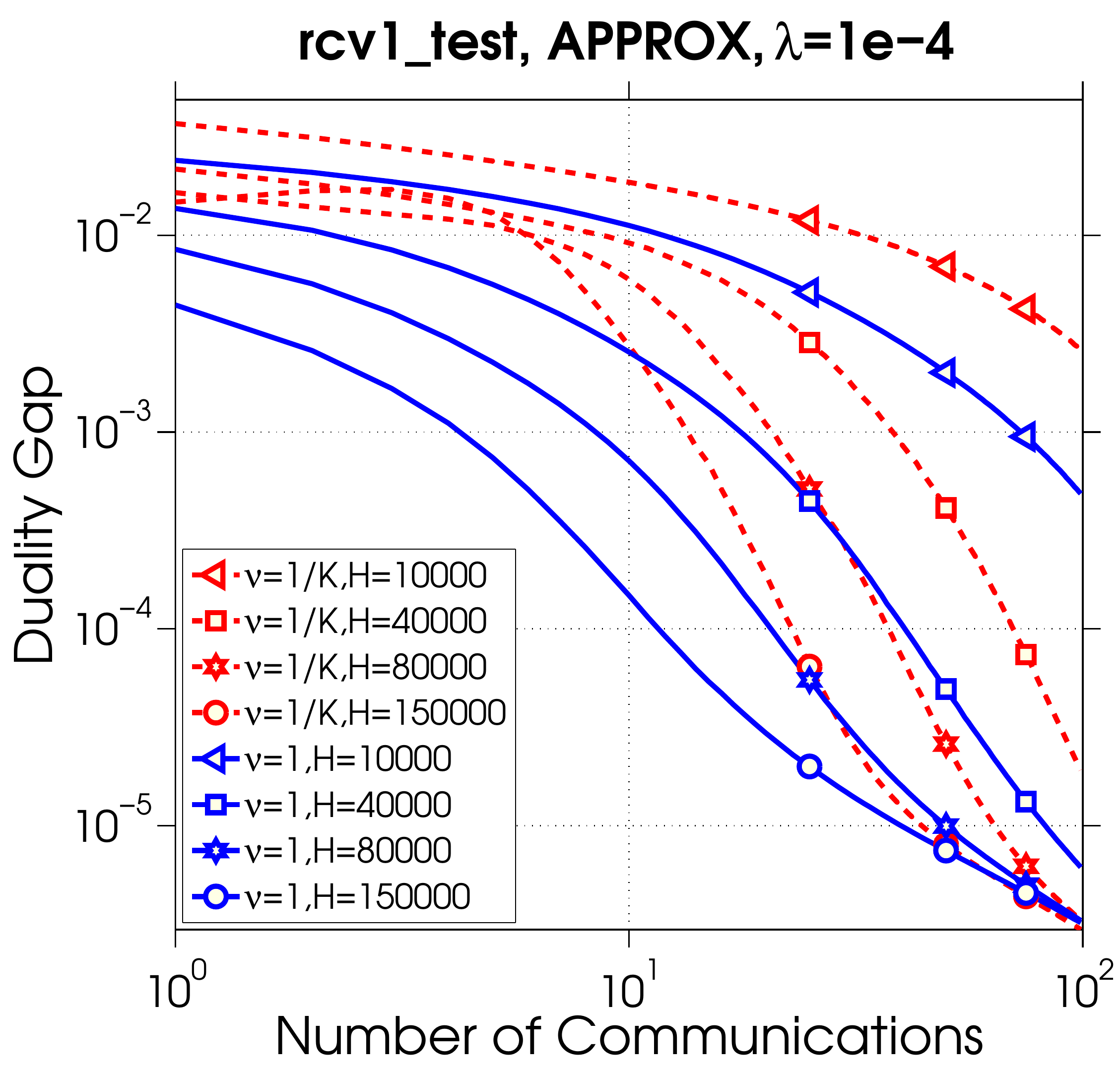}
\includegraphics[scale=.19]{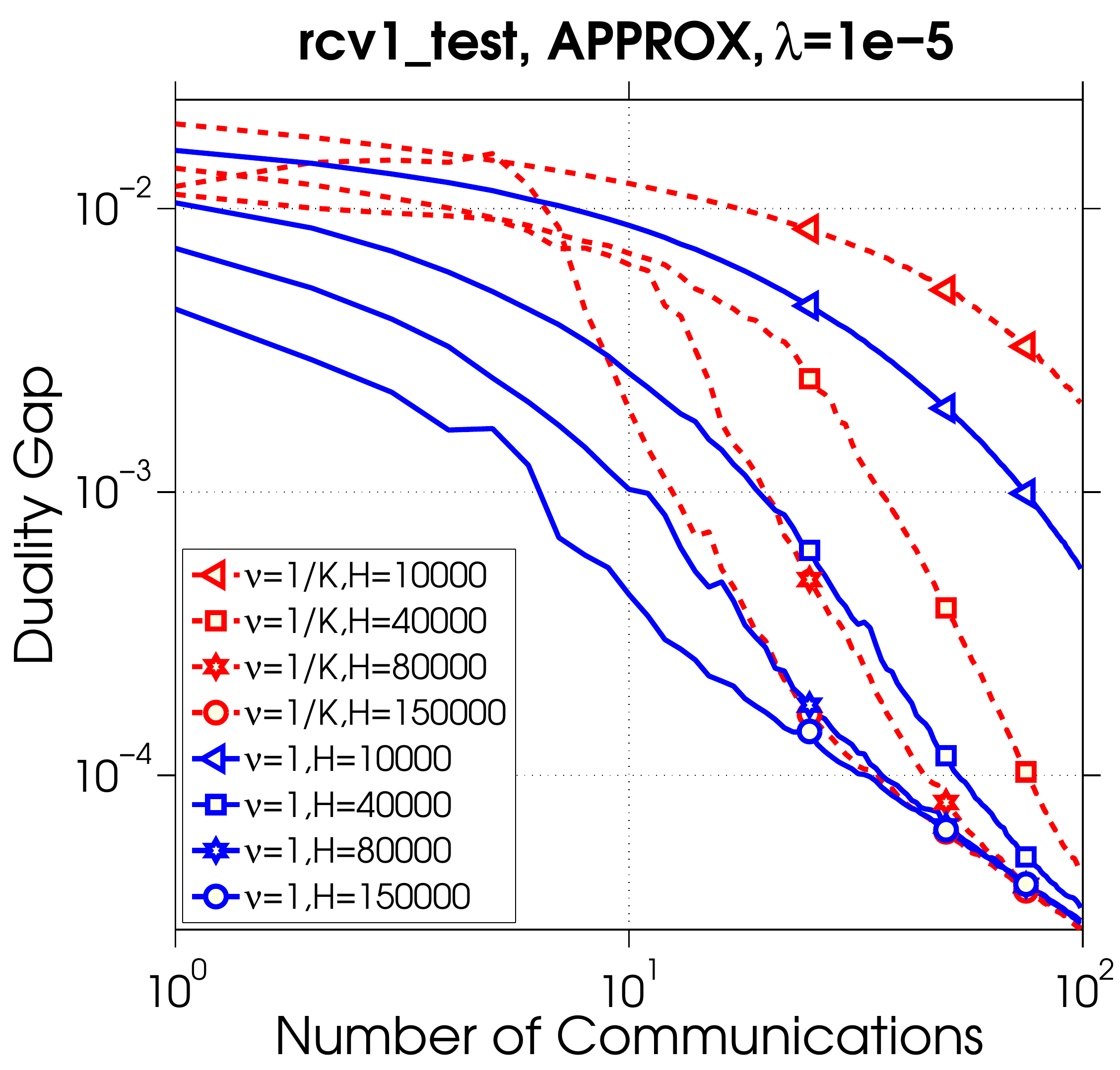}

\includegraphics[scale=.19]{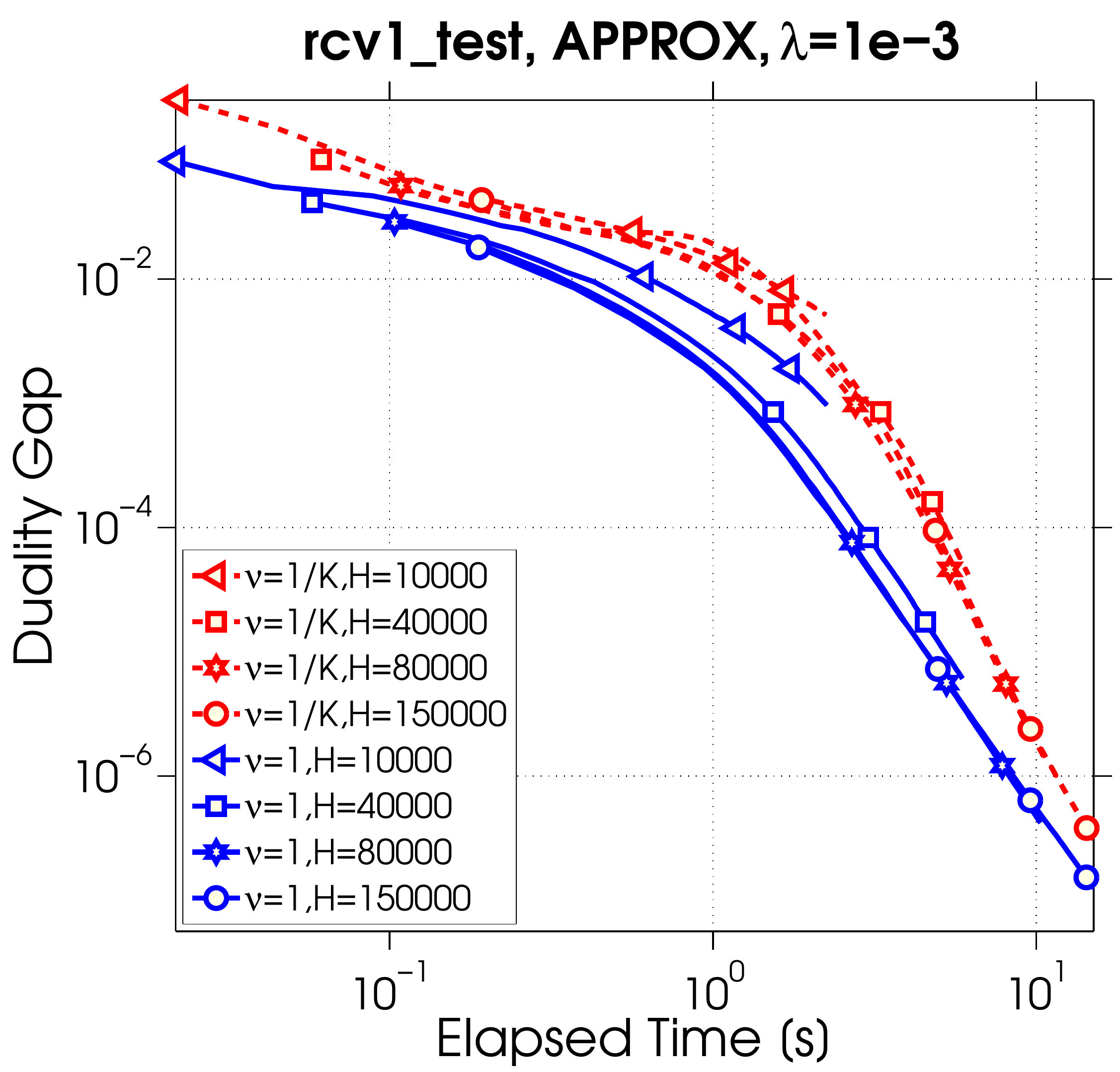}
\includegraphics[scale=.19]{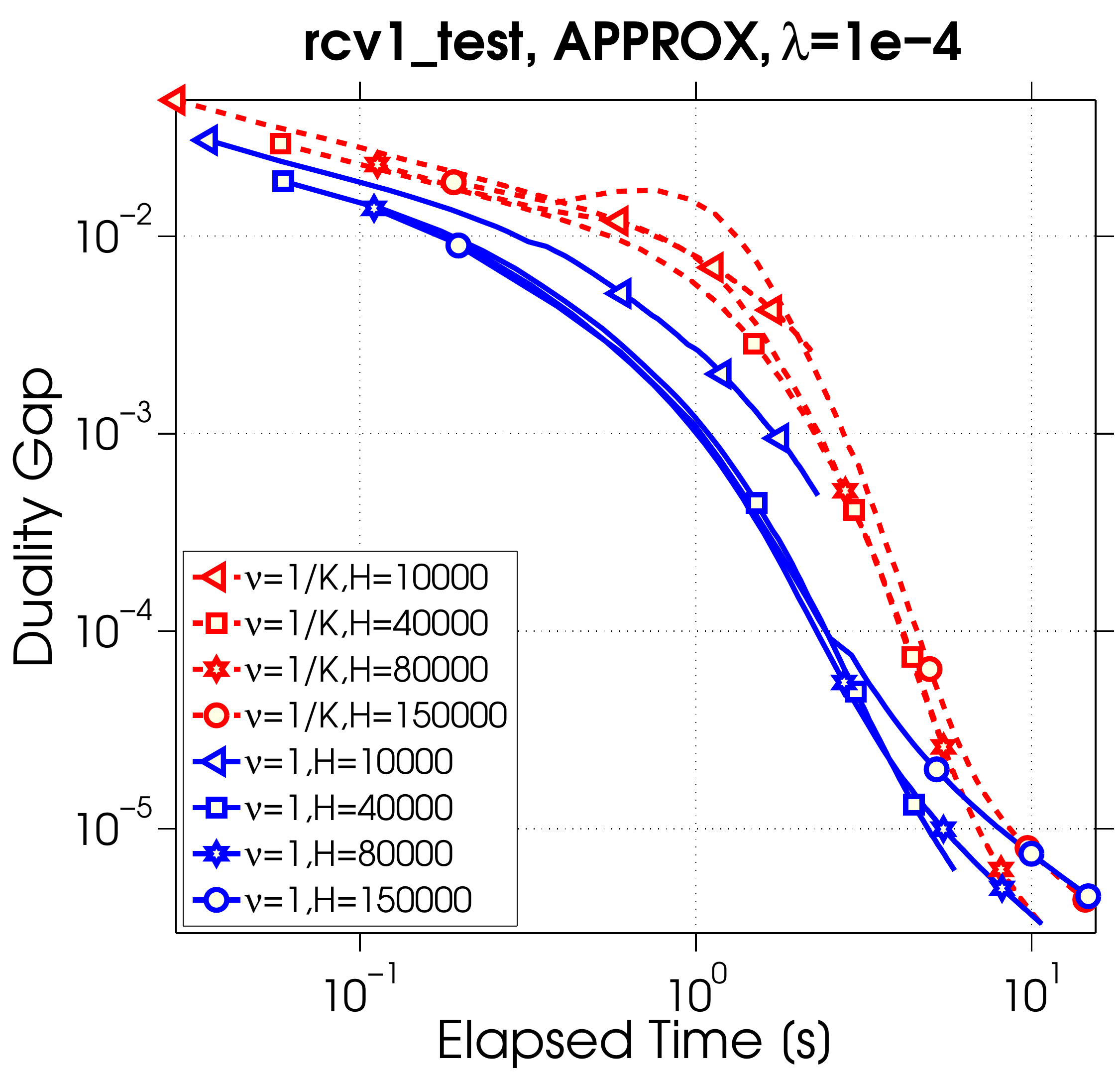}
\includegraphics[scale=.19]{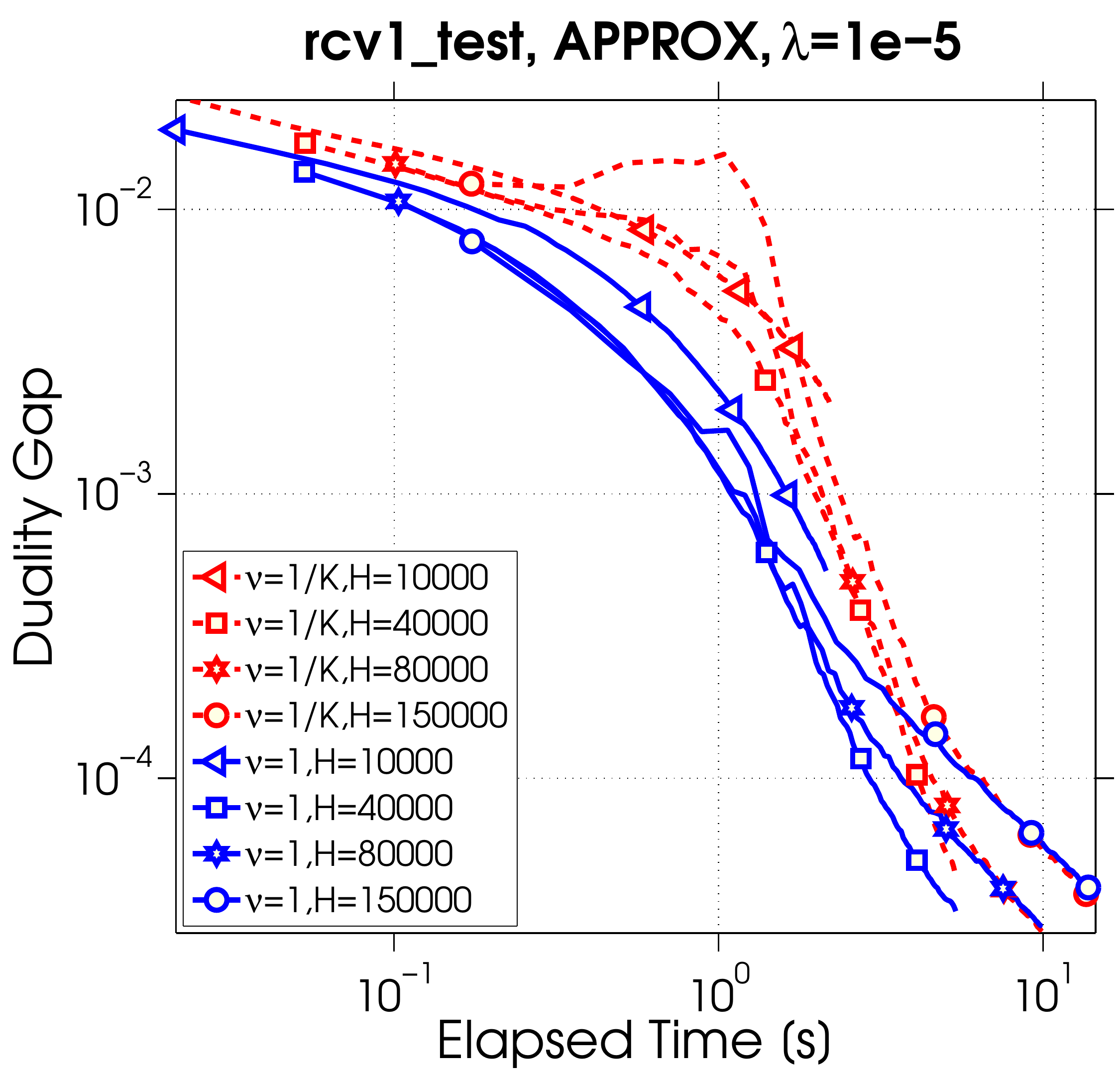}
\caption{Adding (blue solid line) vs Averaging (red dashed line) for APPROX as the local solver.} 
\label{fig:soler2}
\end{figure}

\begin{figure}[H]
\centering
\includegraphics[scale=.19]{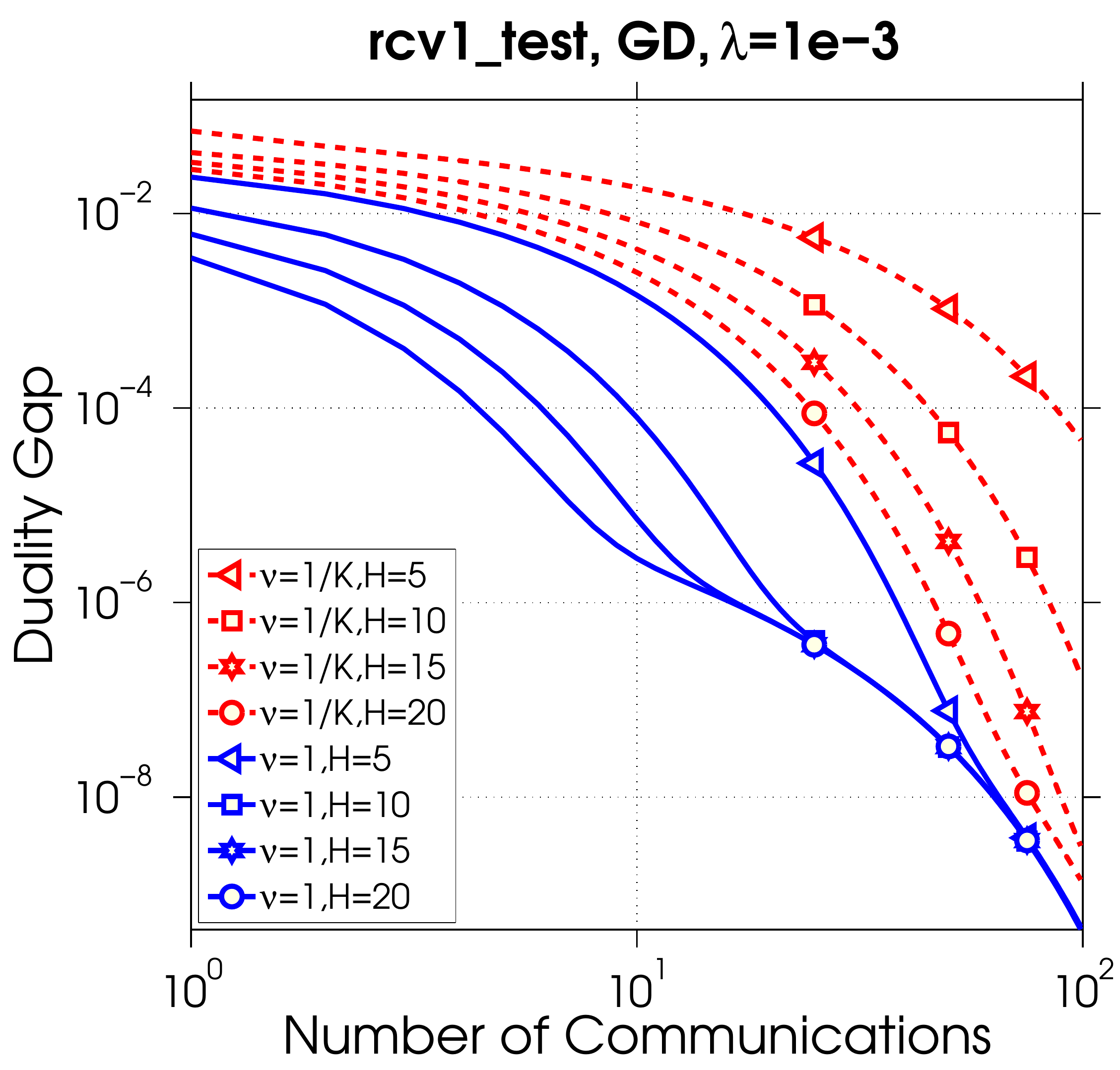}
\includegraphics[scale=.19]{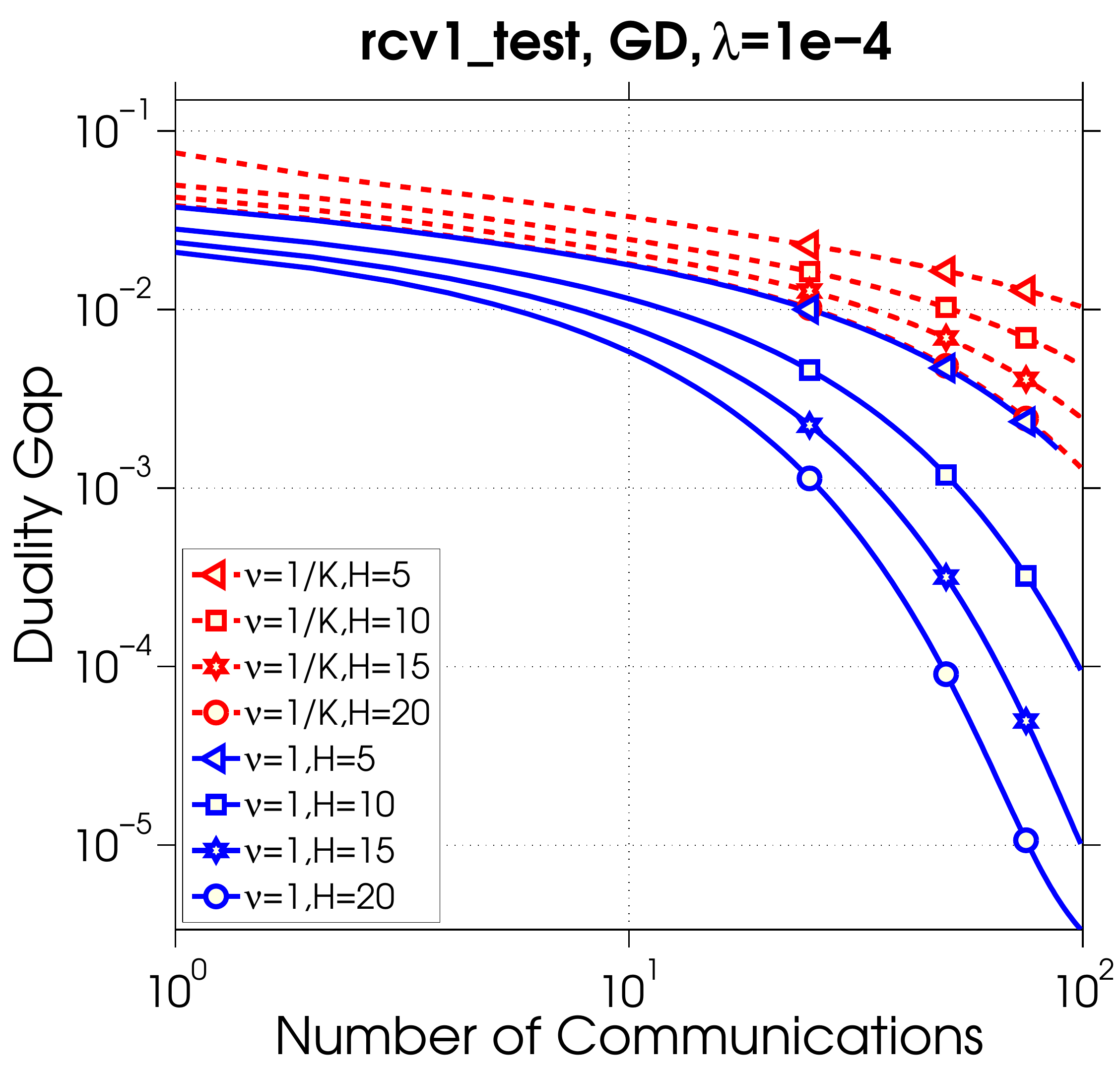}
\includegraphics[scale=.19]{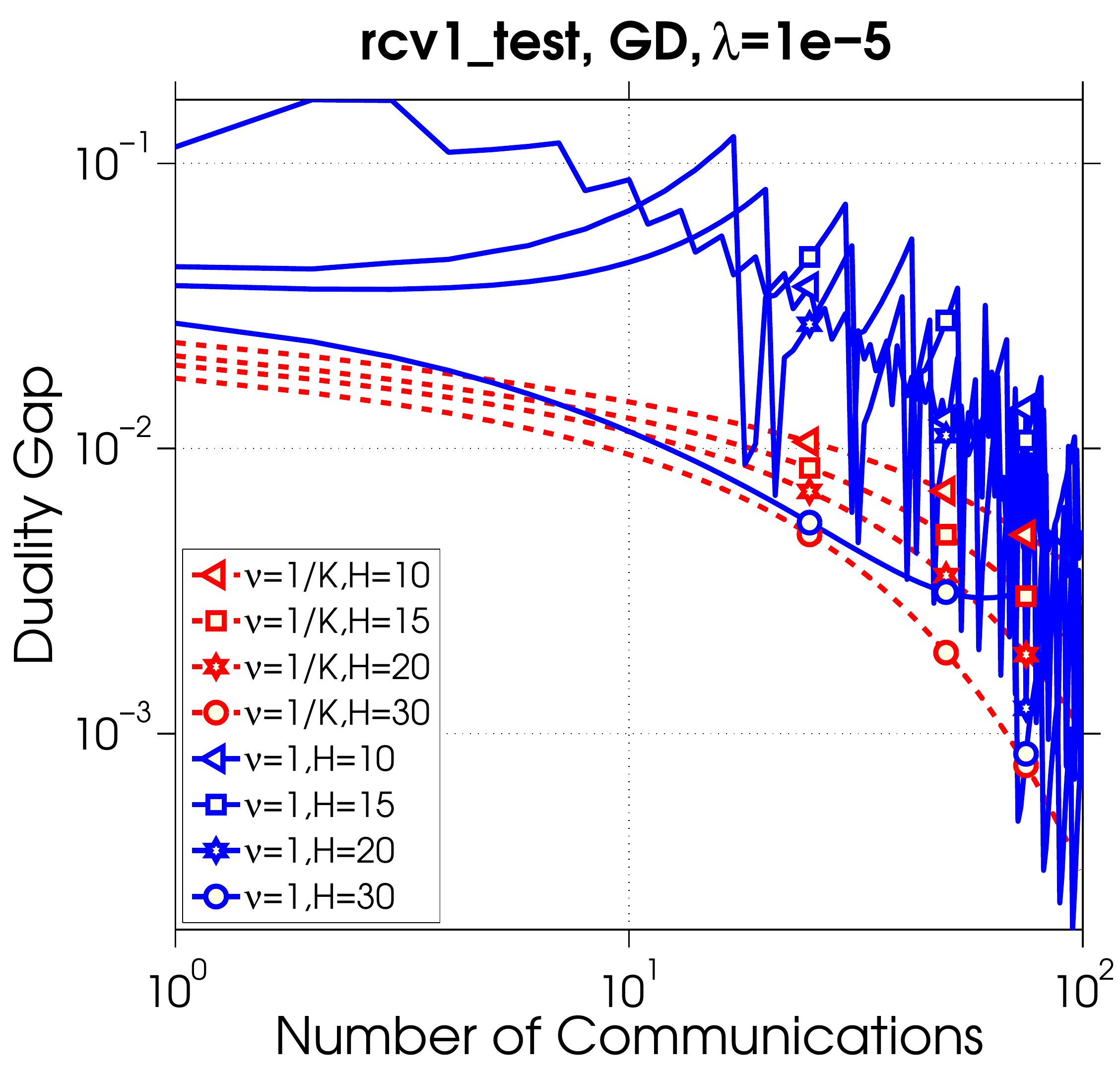}

\includegraphics[scale=.19]{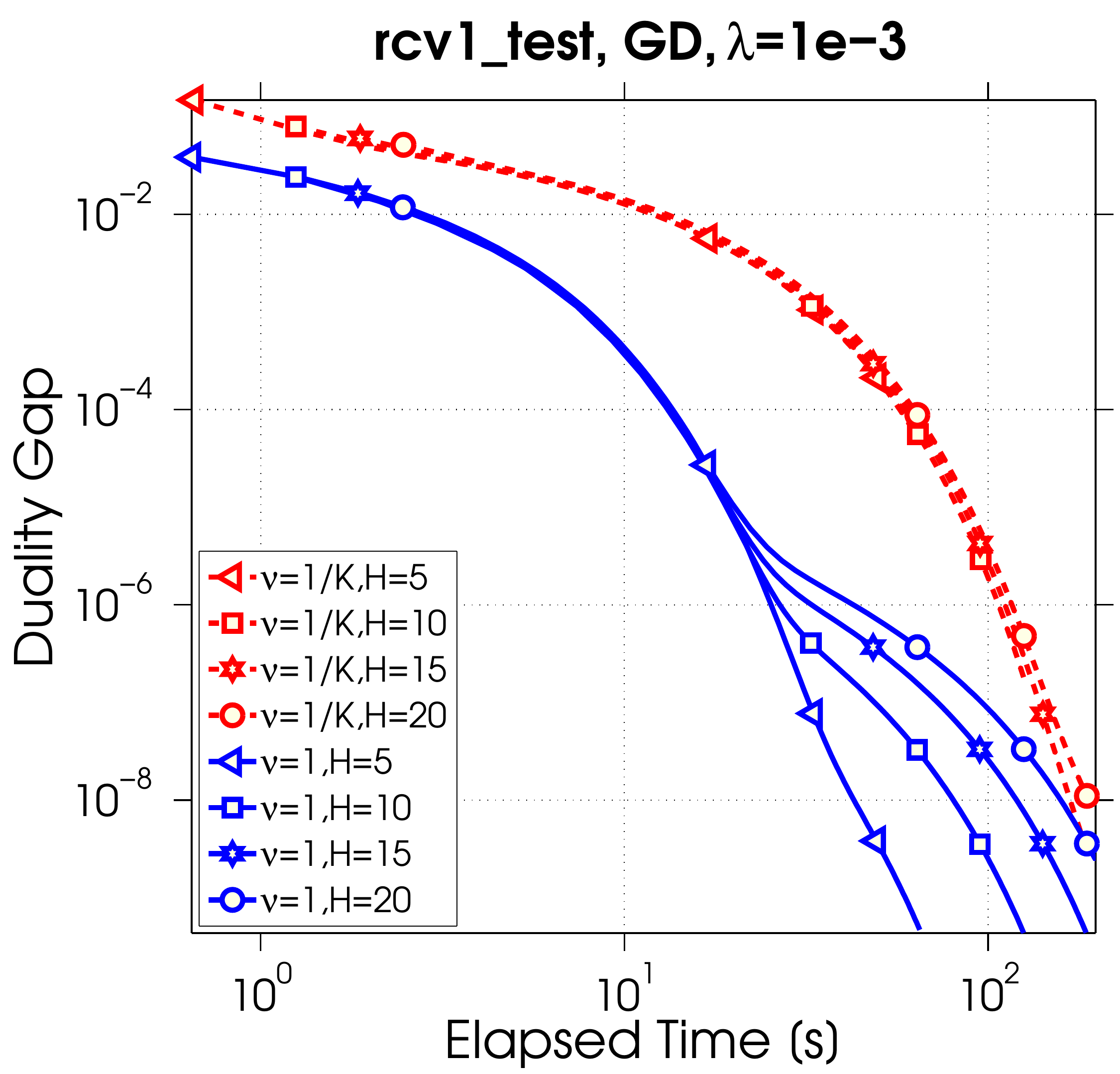}
\includegraphics[scale=.19]{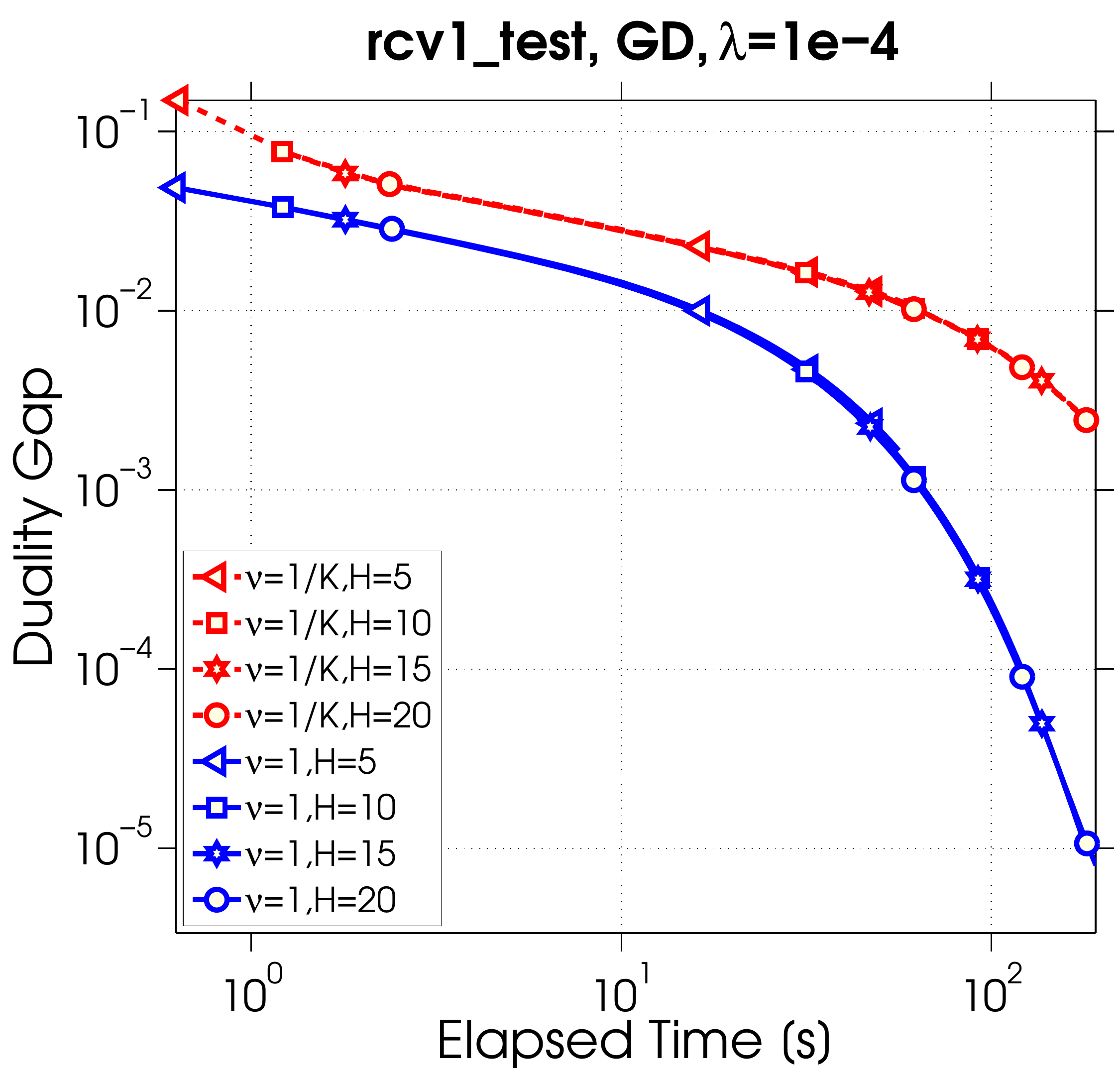}
\includegraphics[scale=.19]{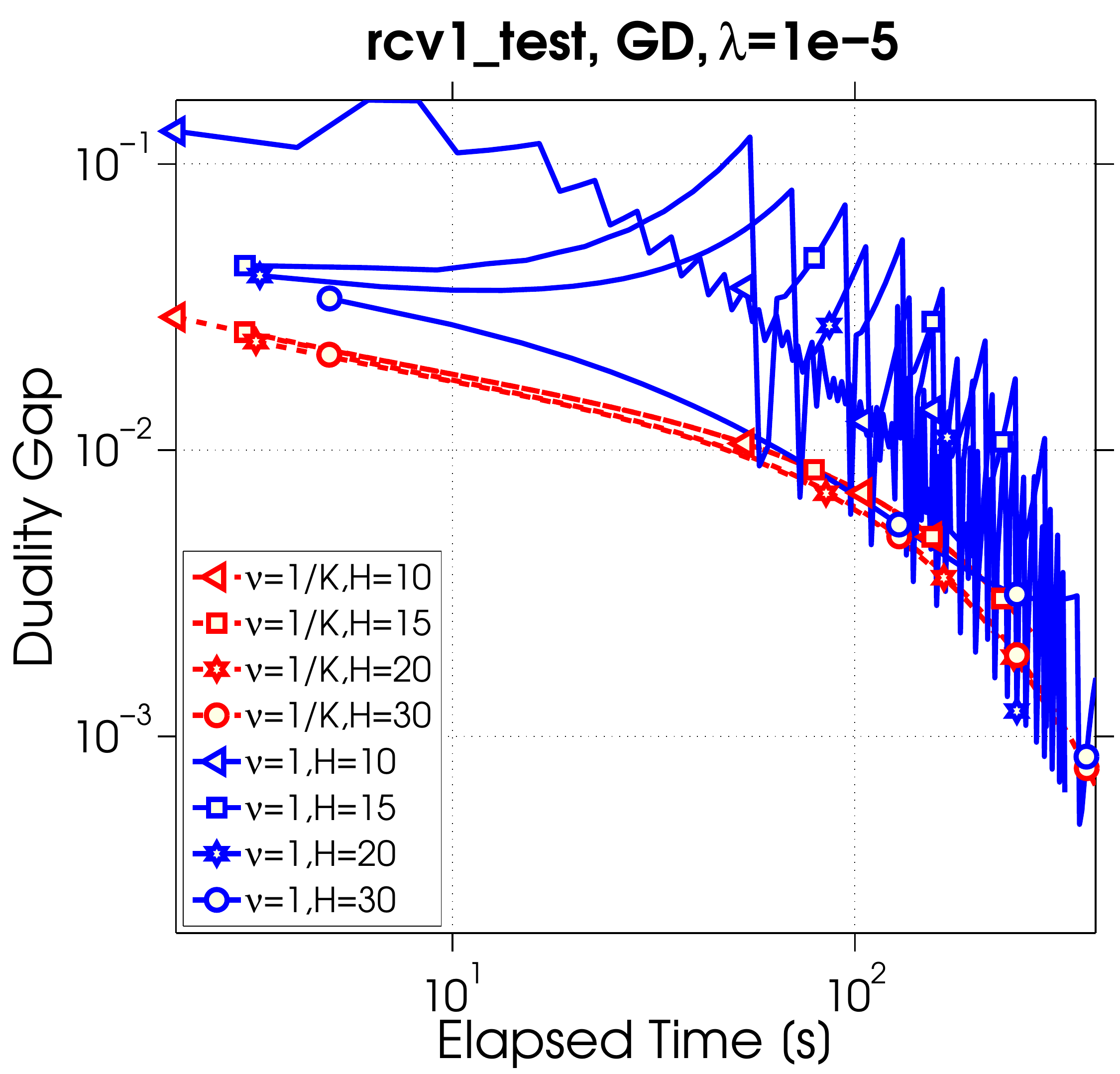}
\caption{Adding (blue solid line) vs Averaging (red dashed line) for Gradient Descent as the local solver.} 
\label{fig:soler3}
\end{figure}

\begin{figure}[H]
\centering
\includegraphics[scale=.19]{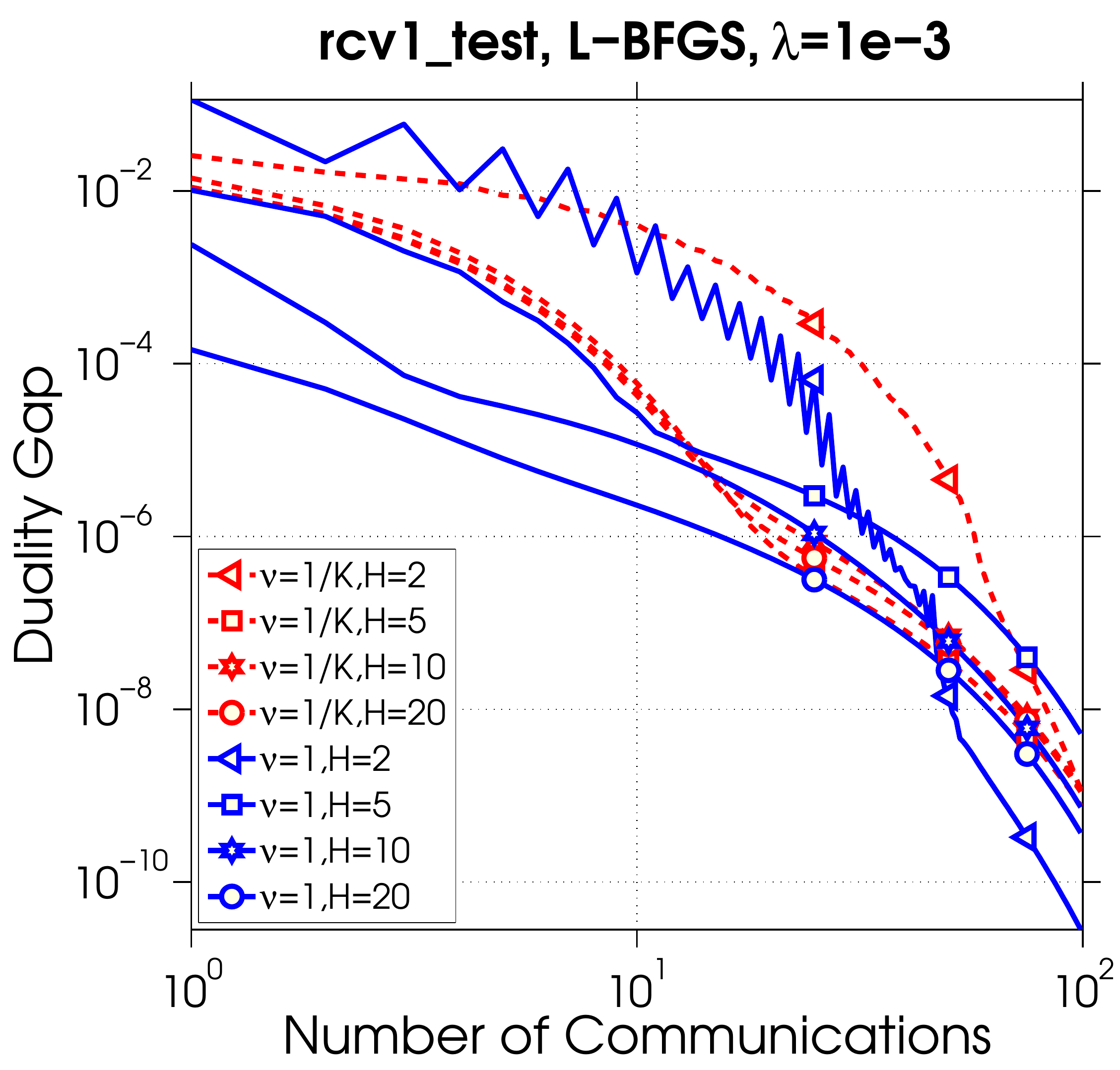}
\includegraphics[scale=.19]{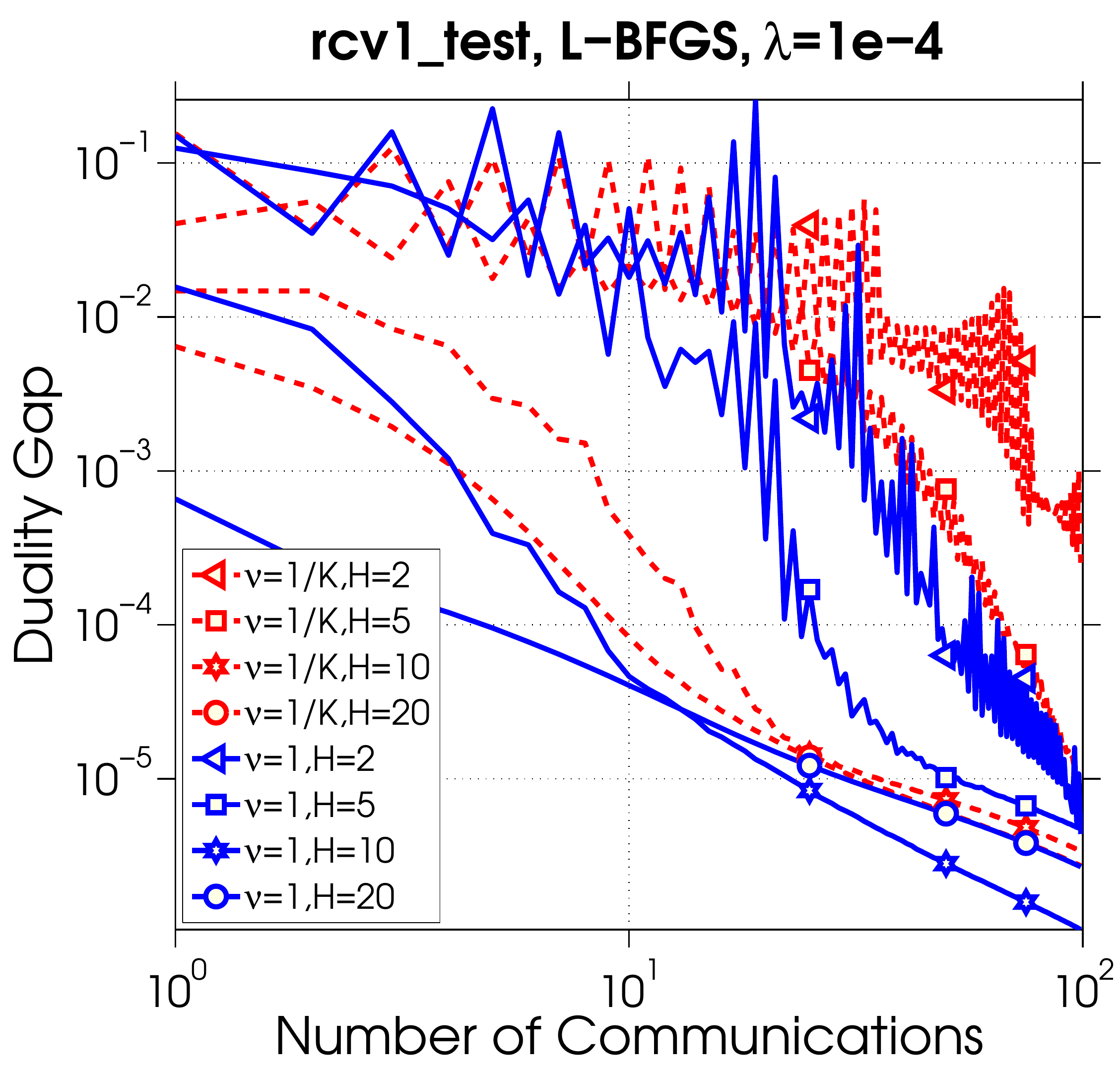}
\includegraphics[scale=.19]{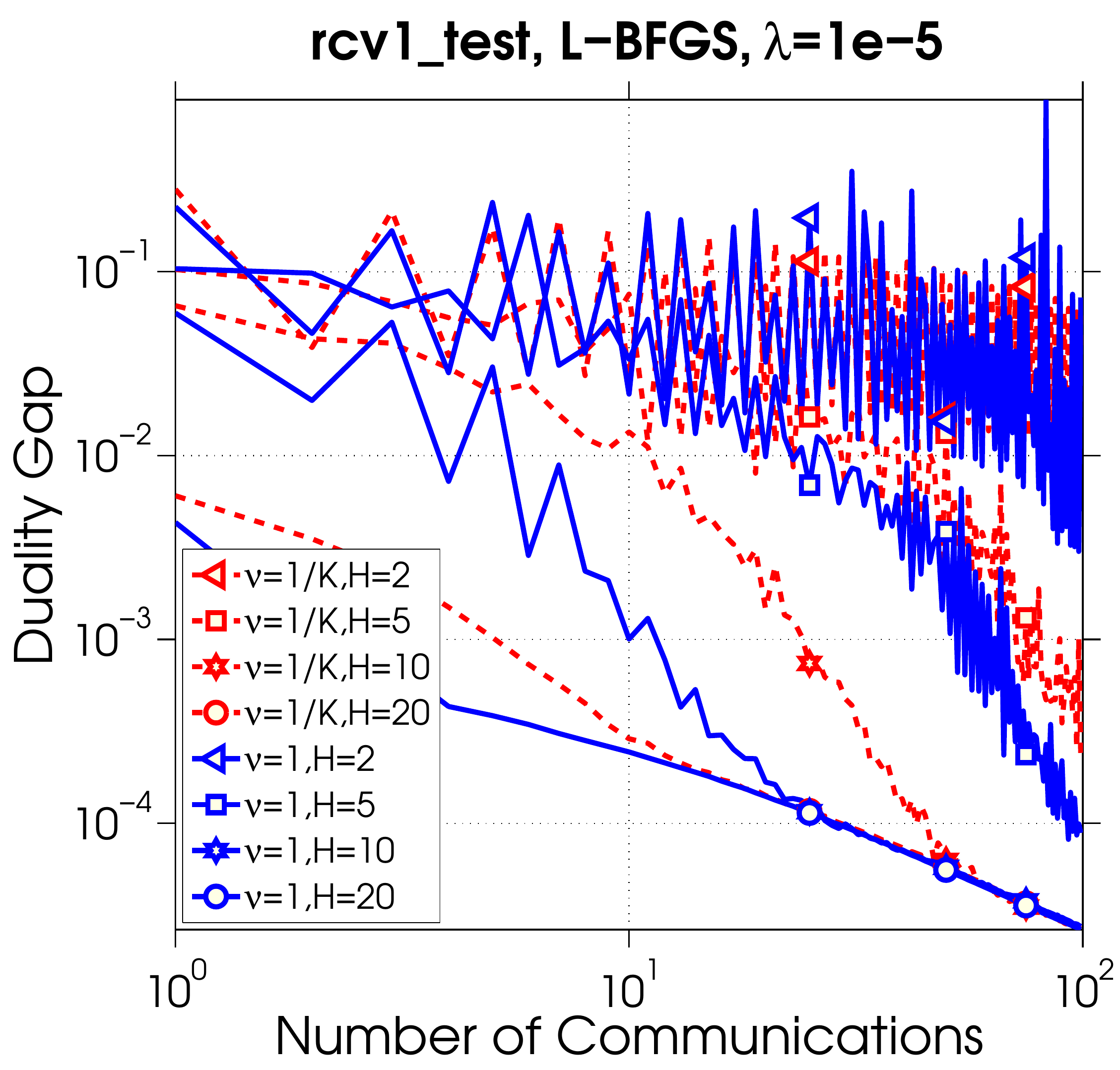}

\includegraphics[scale=.19]{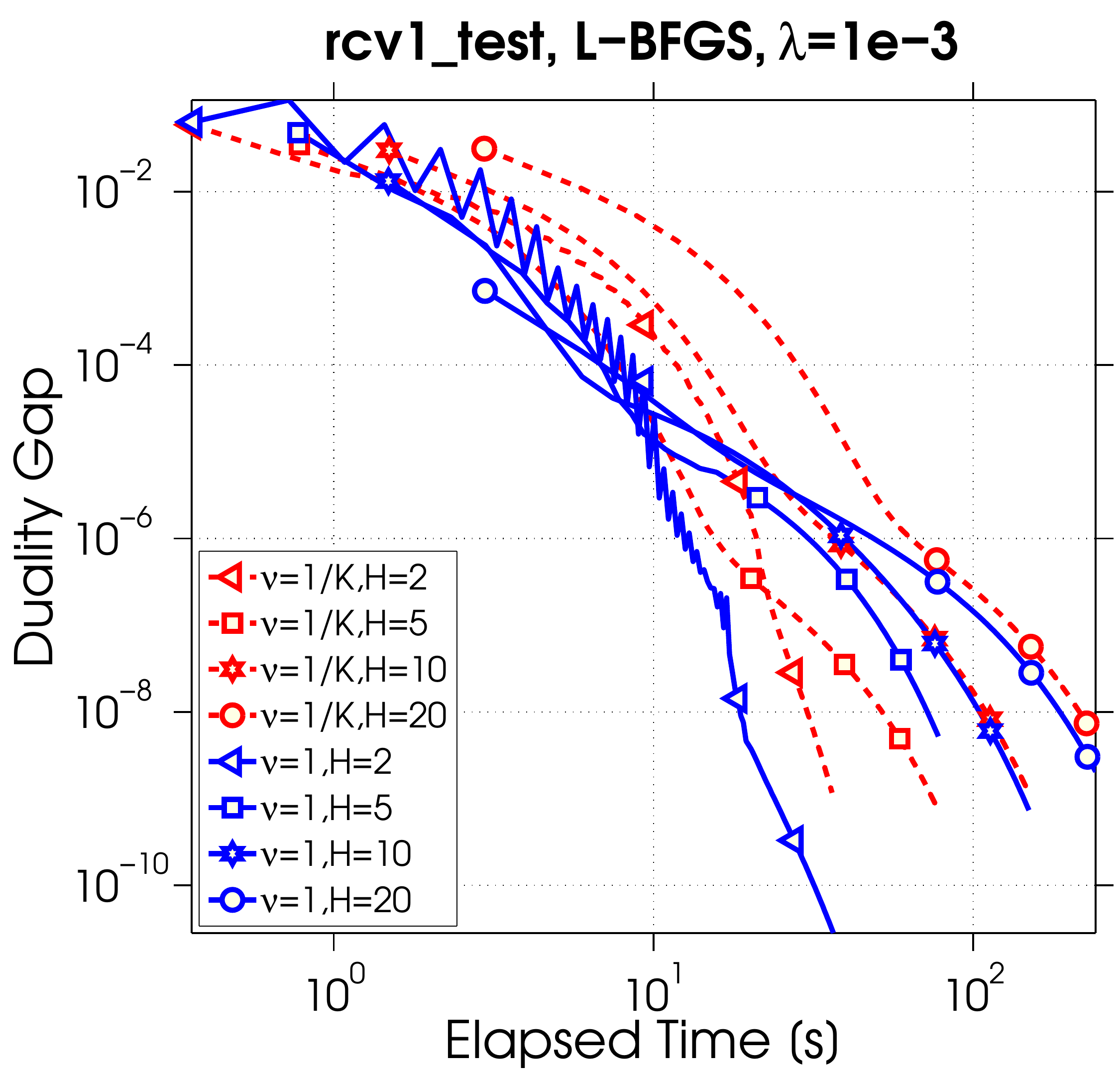}
\includegraphics[scale=.19]{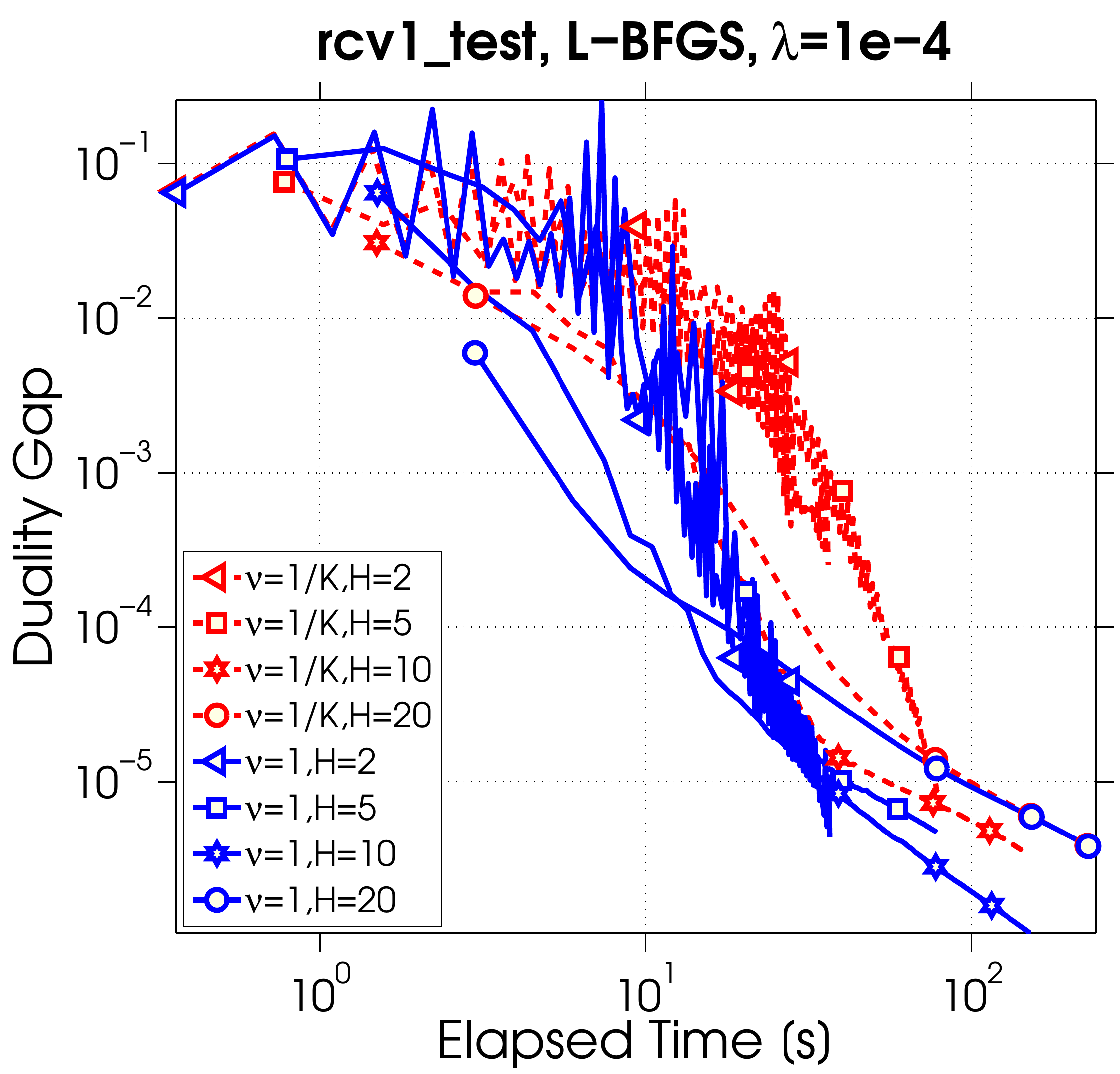}
\includegraphics[scale=.19]{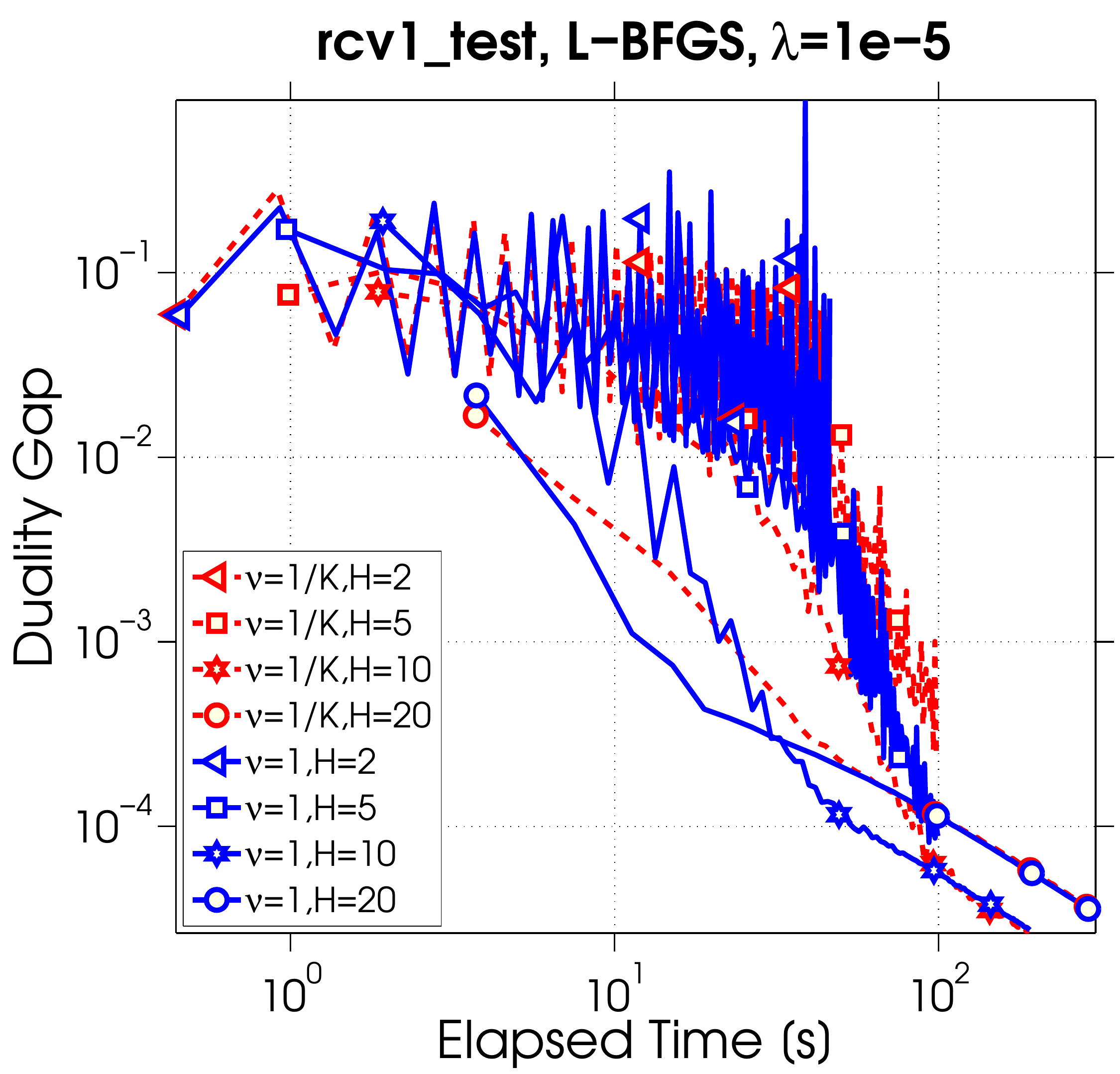}
\caption{Adding (blue solid line) vs Averaging (red dashed line) for L-BFGS as the local solver.} 
\label{fig:soler4}
\end{figure}

\begin{figure}[H]
\centering
\includegraphics[scale=.19]{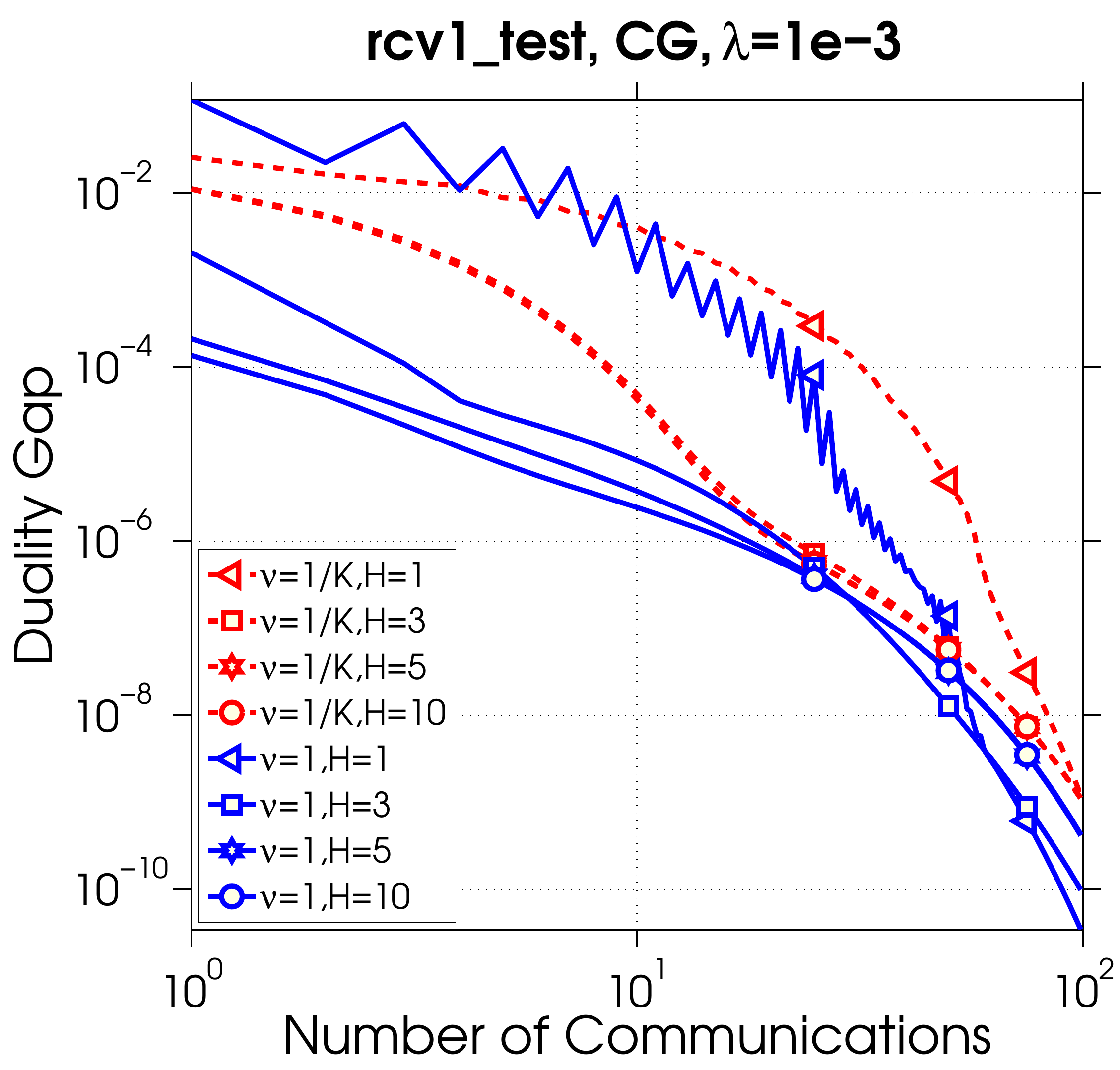}
\includegraphics[scale=.19]{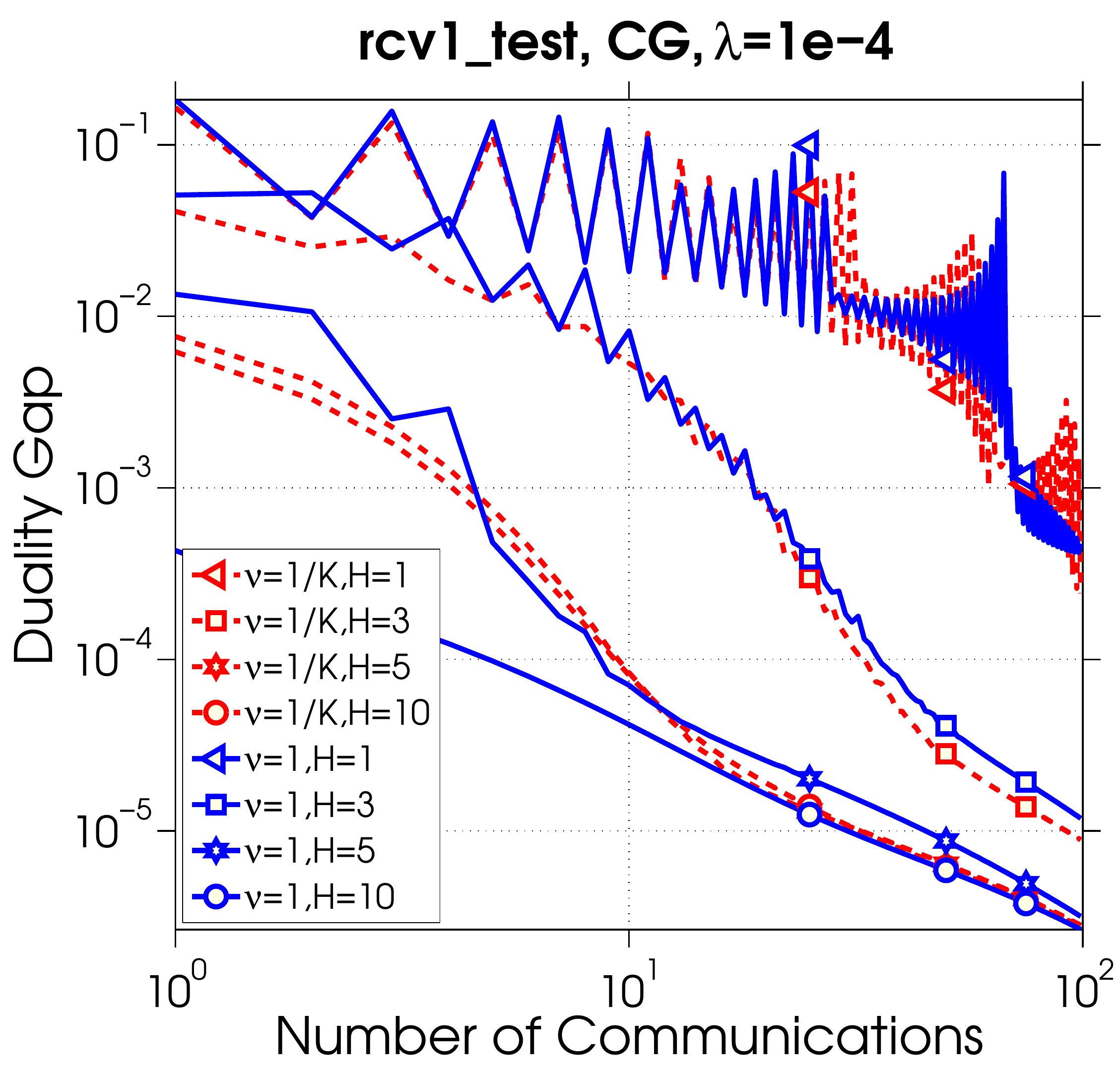}
\includegraphics[scale=.19]{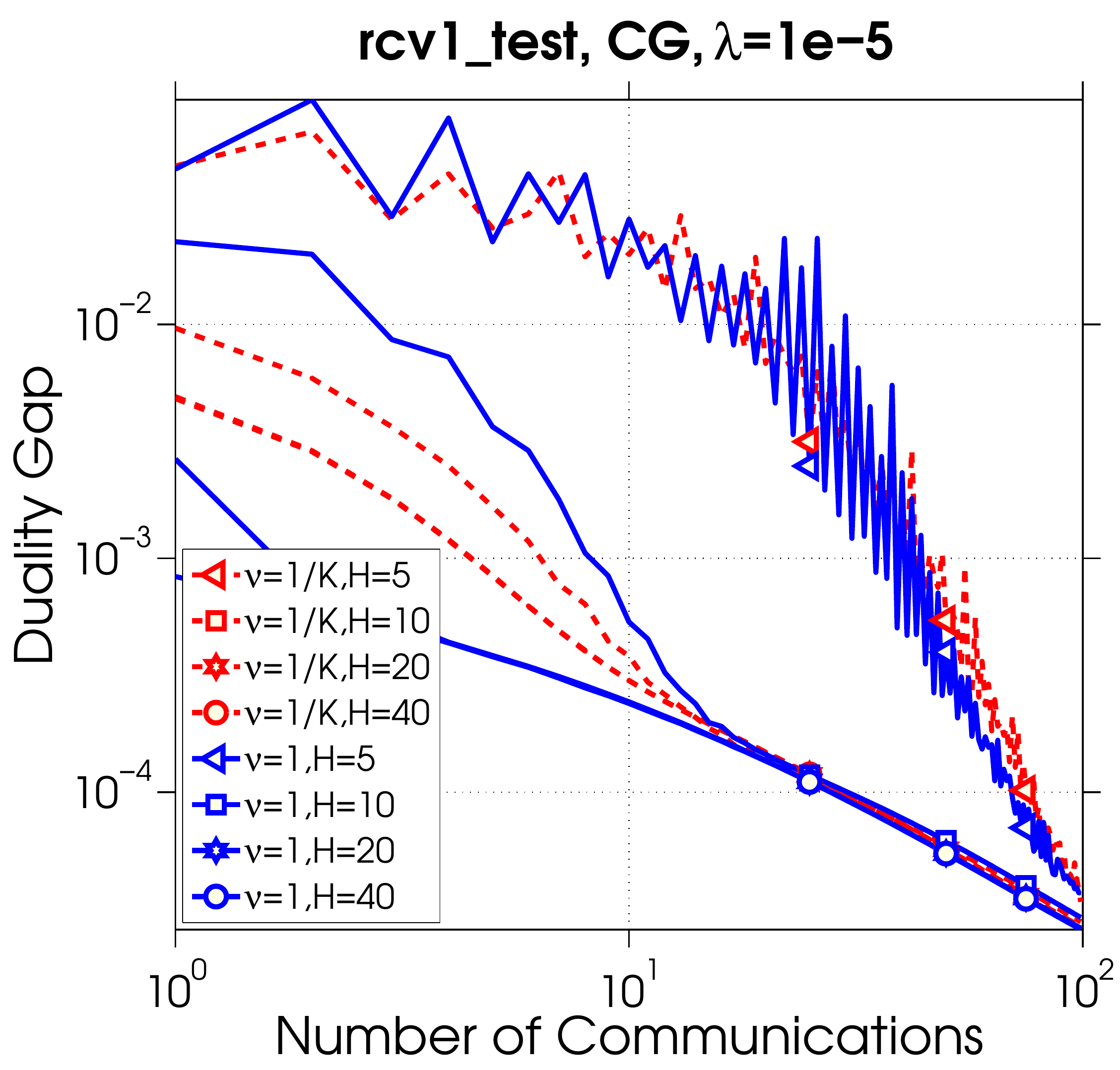}

\includegraphics[scale=.19]{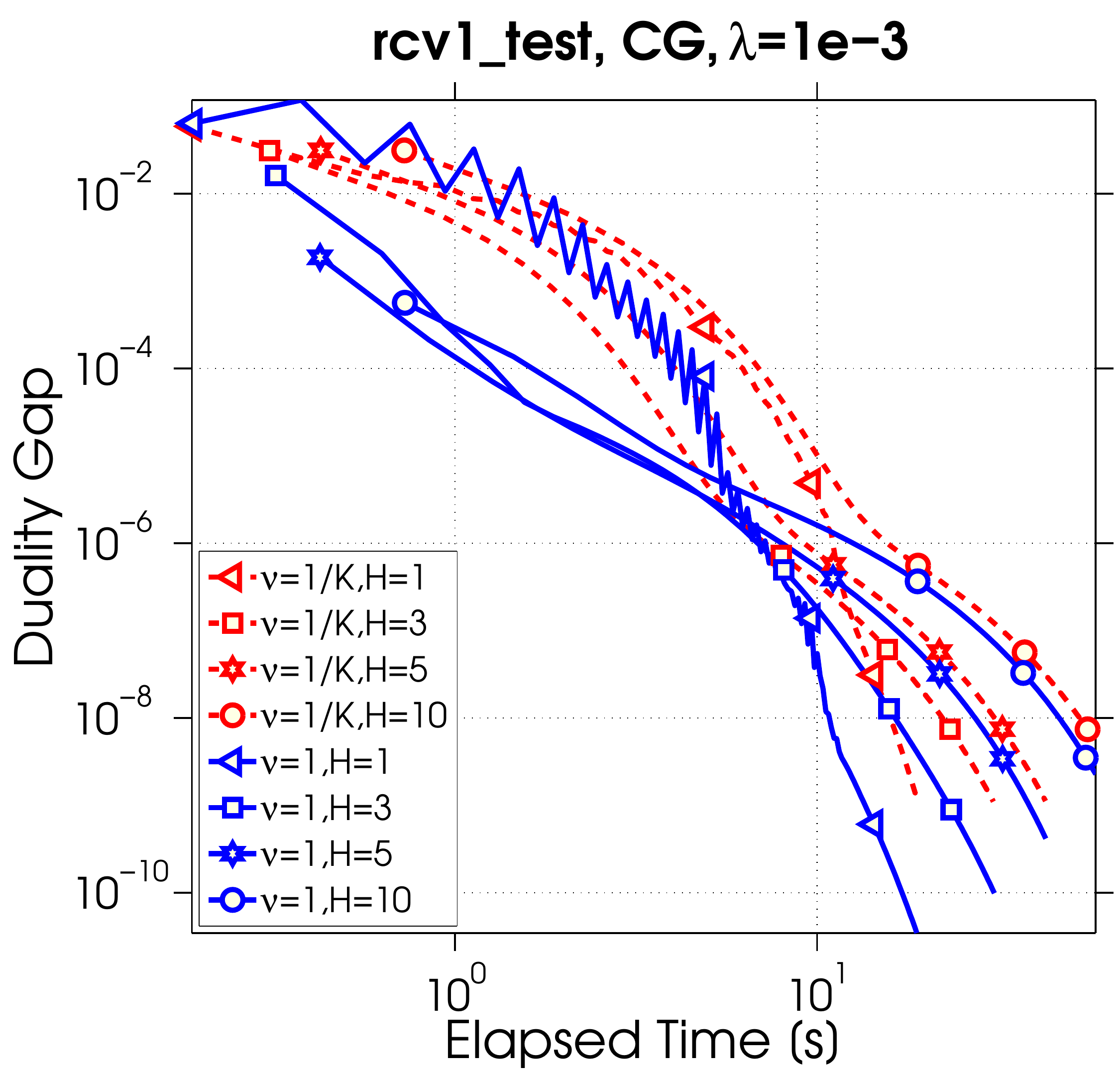}
\includegraphics[scale=.19]{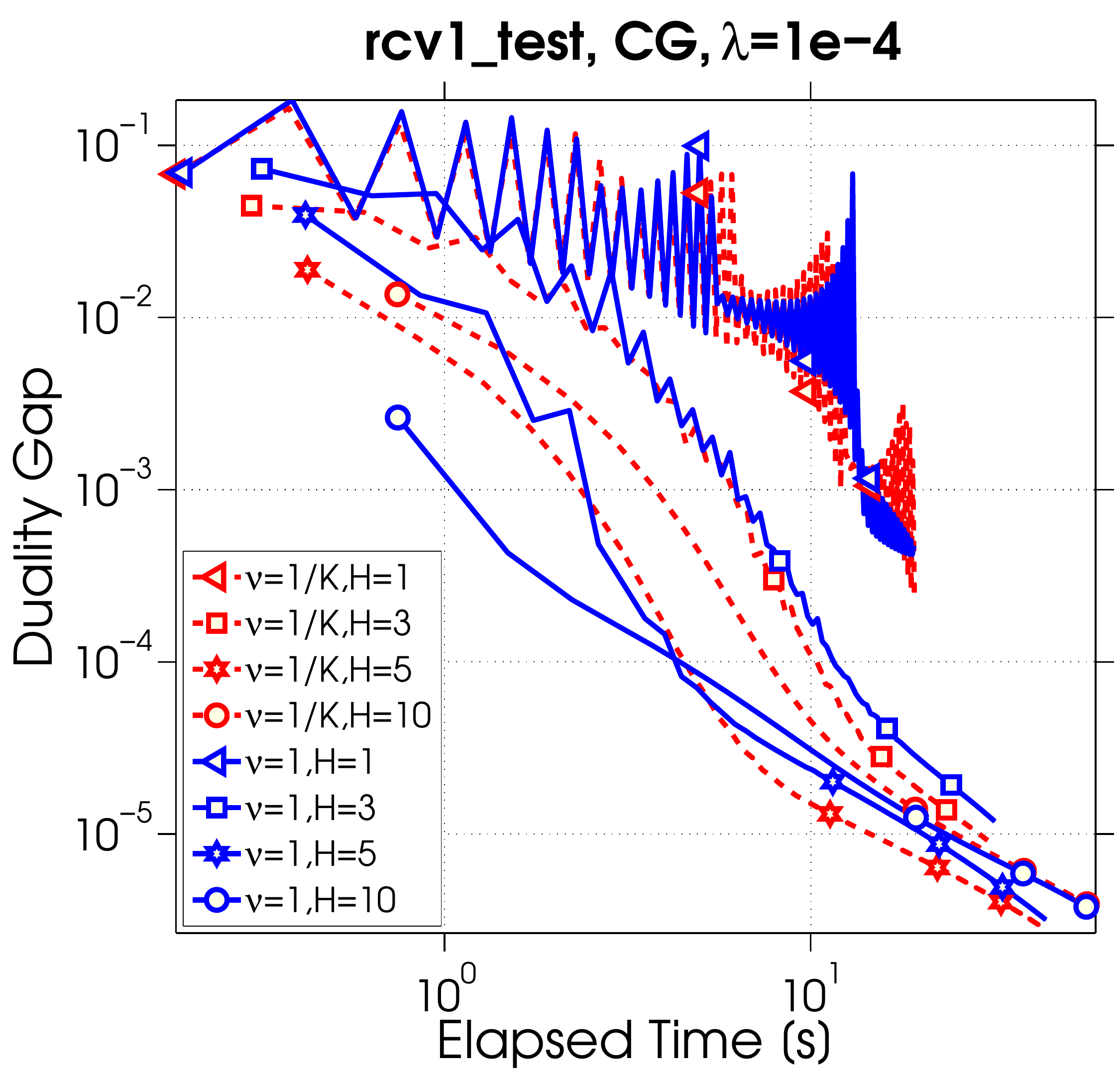}
\includegraphics[scale=.19]{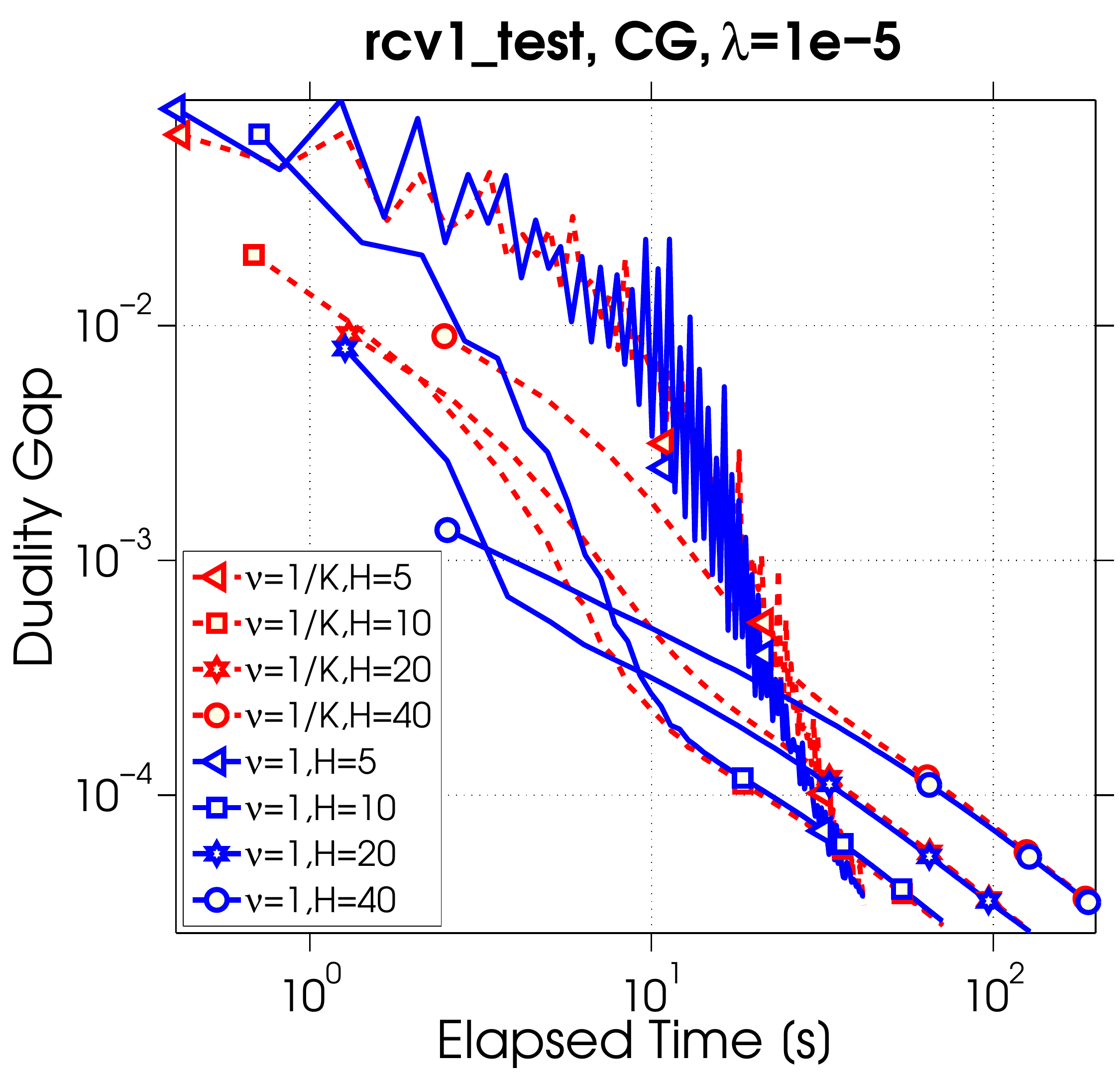}
\caption{Adding (blue solid line) vs Averaging (red dashed line) for Conjugate Gradient Method as the local solver.} 
\label{fig:soler5}
\end{figure}

\begin{figure}[H]
\centering
\includegraphics[scale=.19]{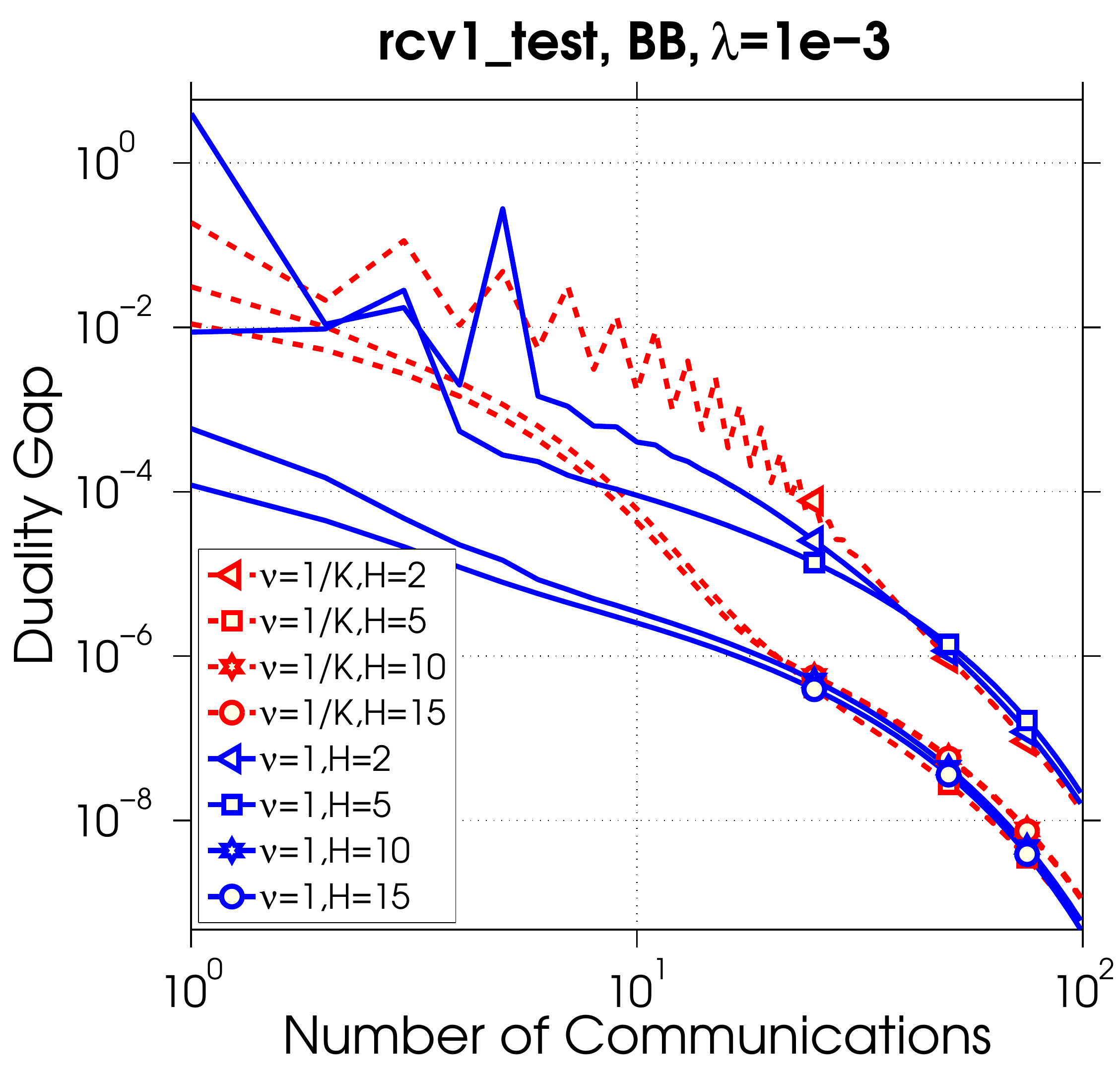}
\includegraphics[scale=.19]{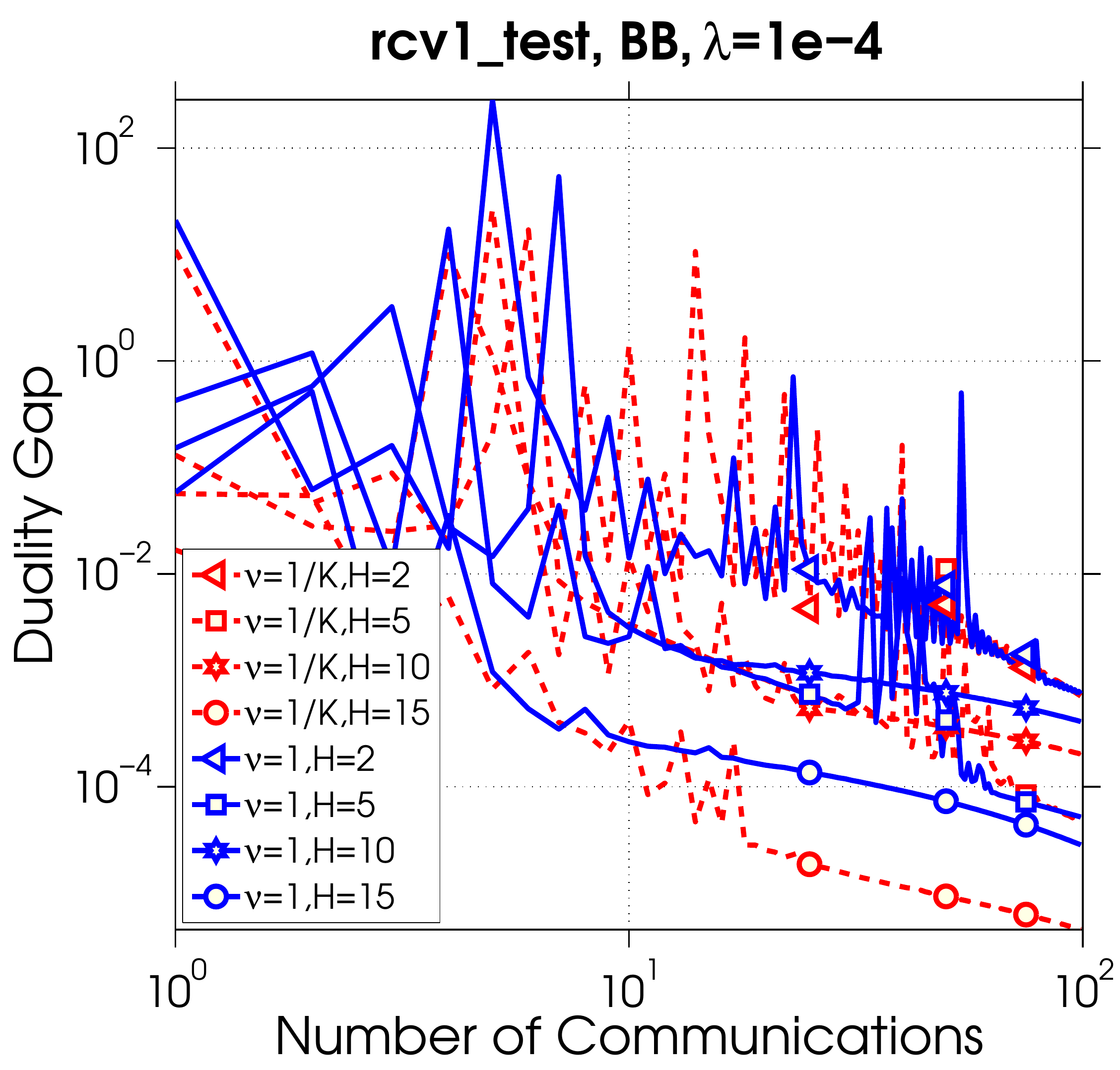}
\includegraphics[scale=.19]{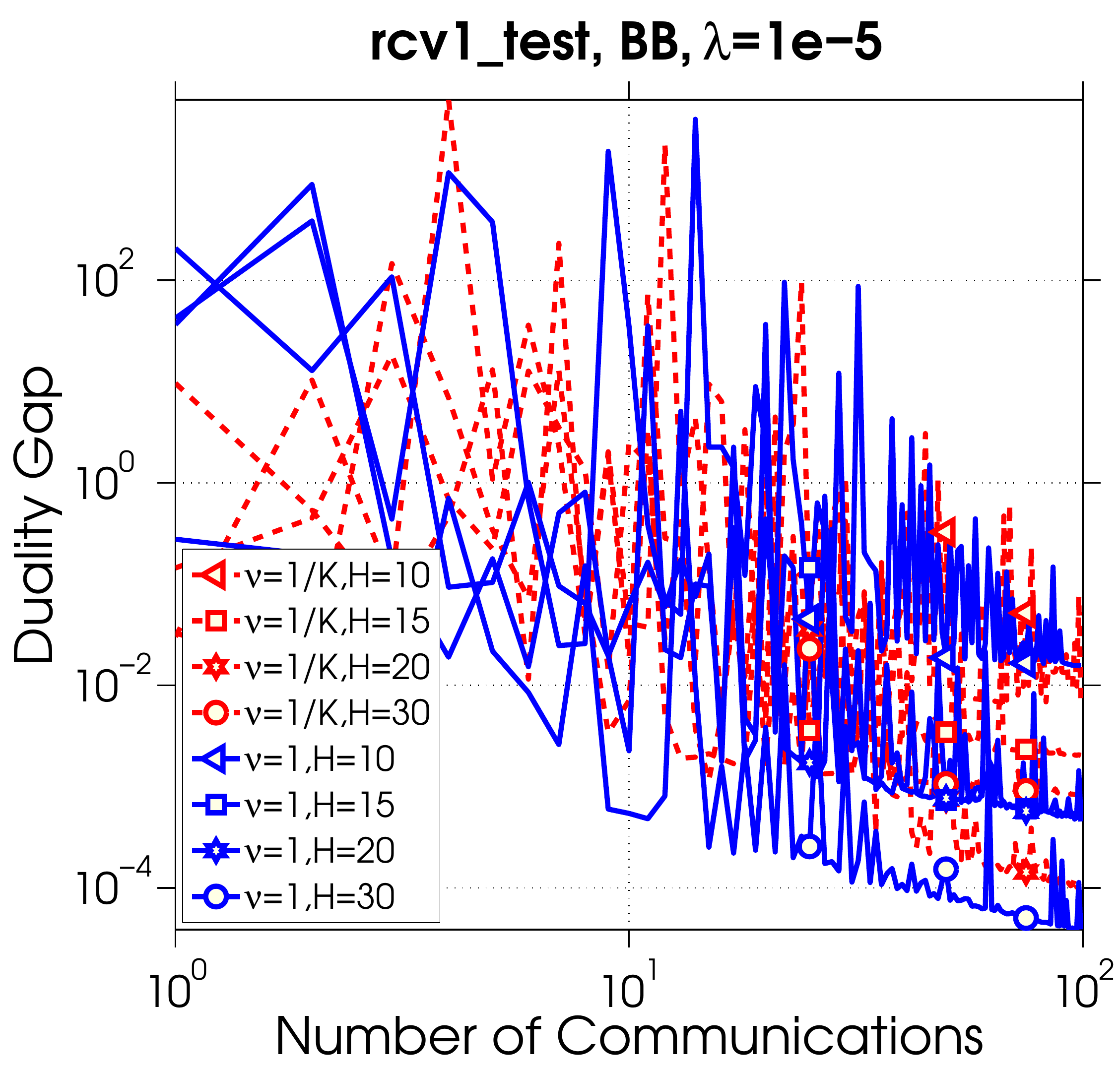}

\includegraphics[scale=.19]{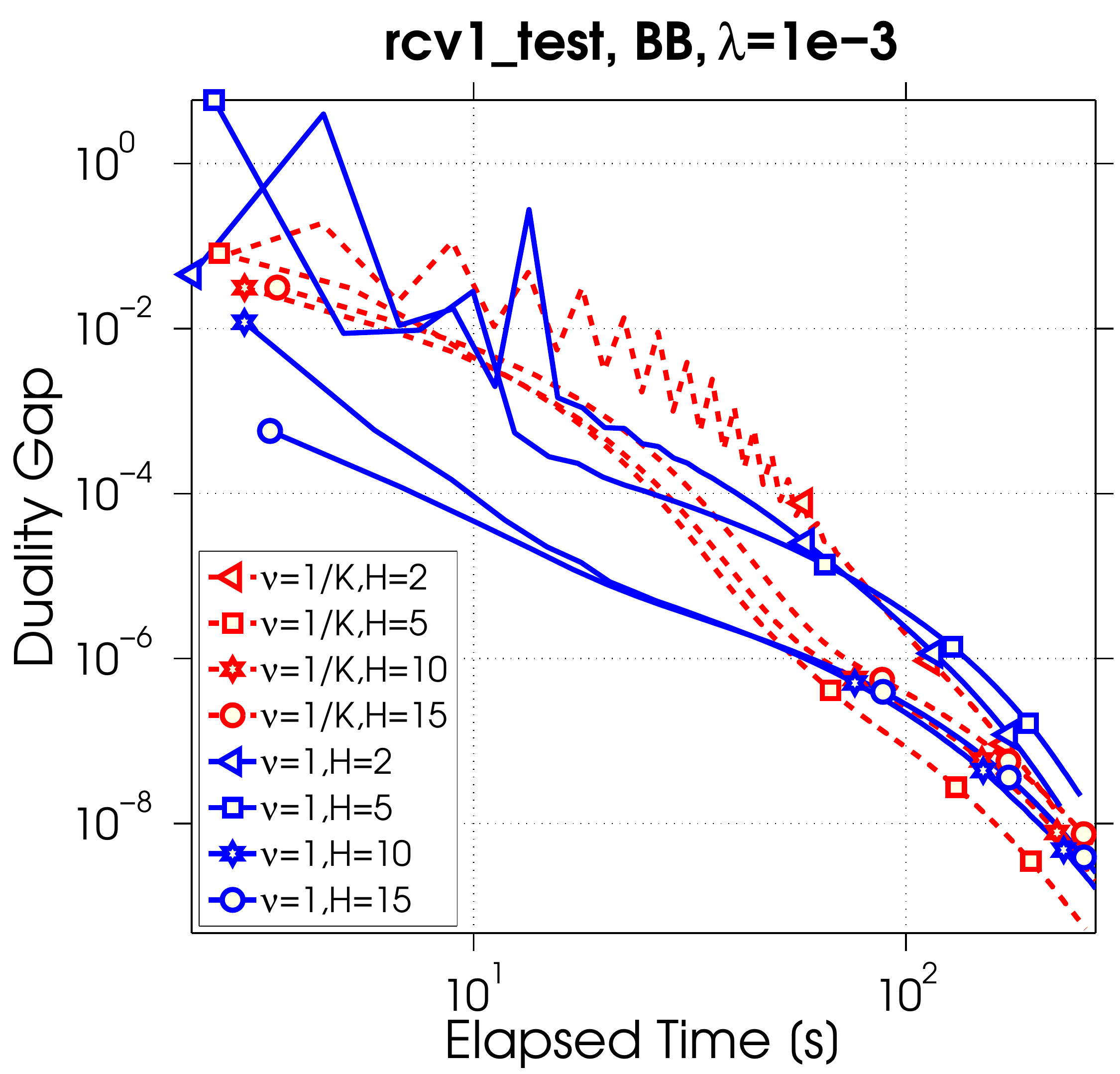}
\includegraphics[scale=.19]{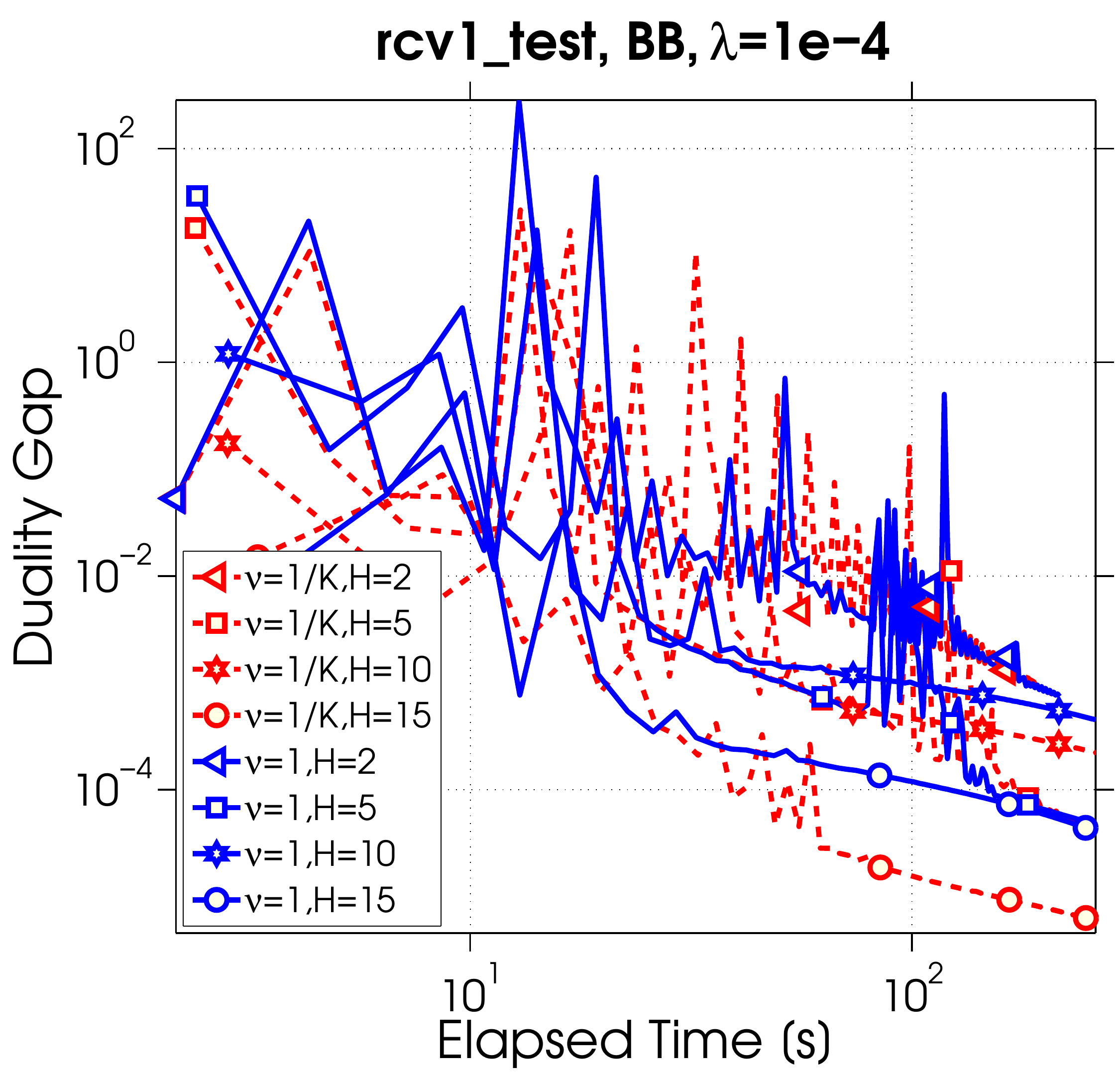}
\includegraphics[scale=.19]{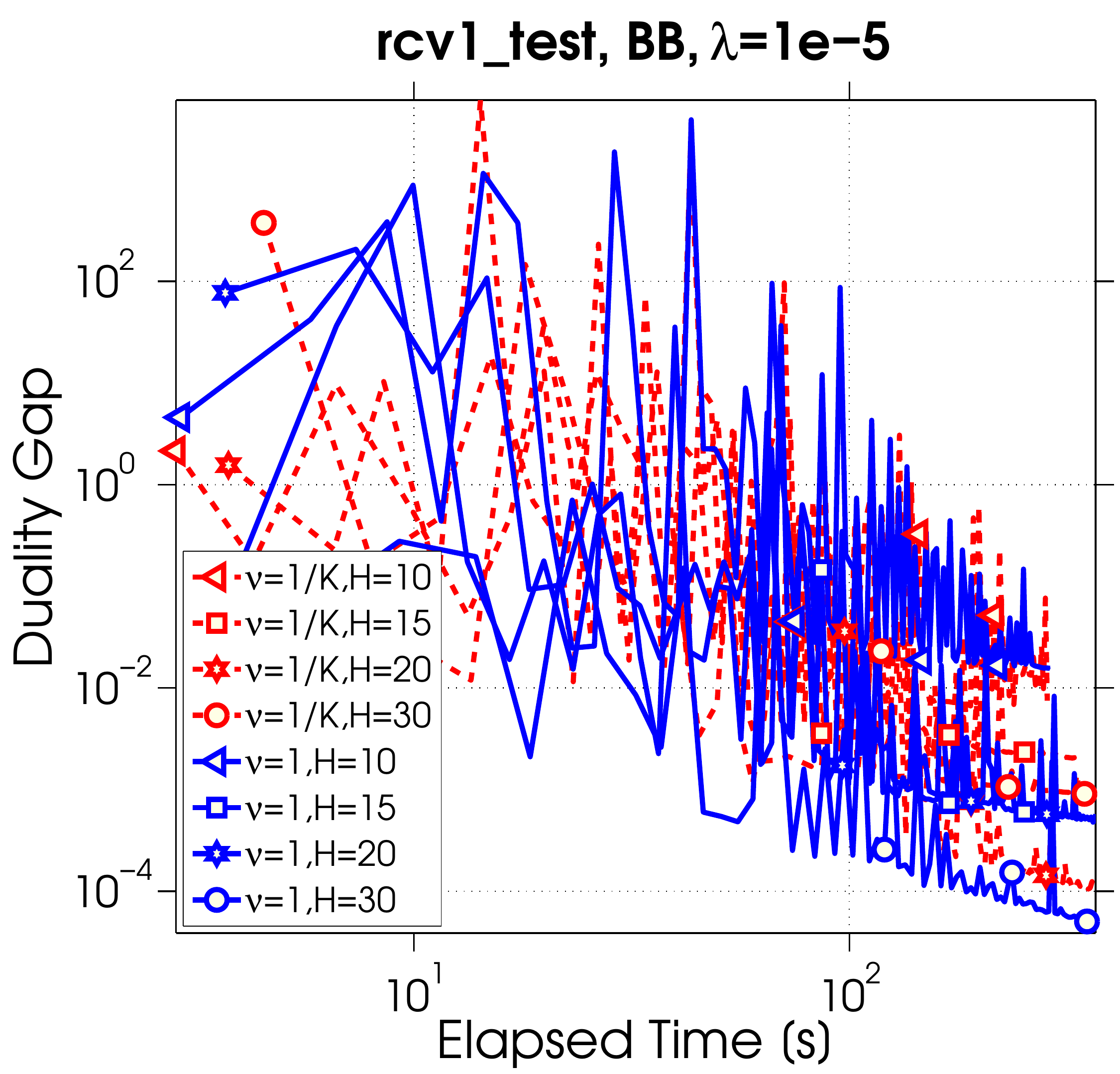}
\caption{Adding (blue solid line) vs Averaging (red dashed line) for BB as the local solver.} 
\label{fig:soler6}
\end{figure}

\begin{figure}[H]
\centering
\includegraphics[scale=.19]{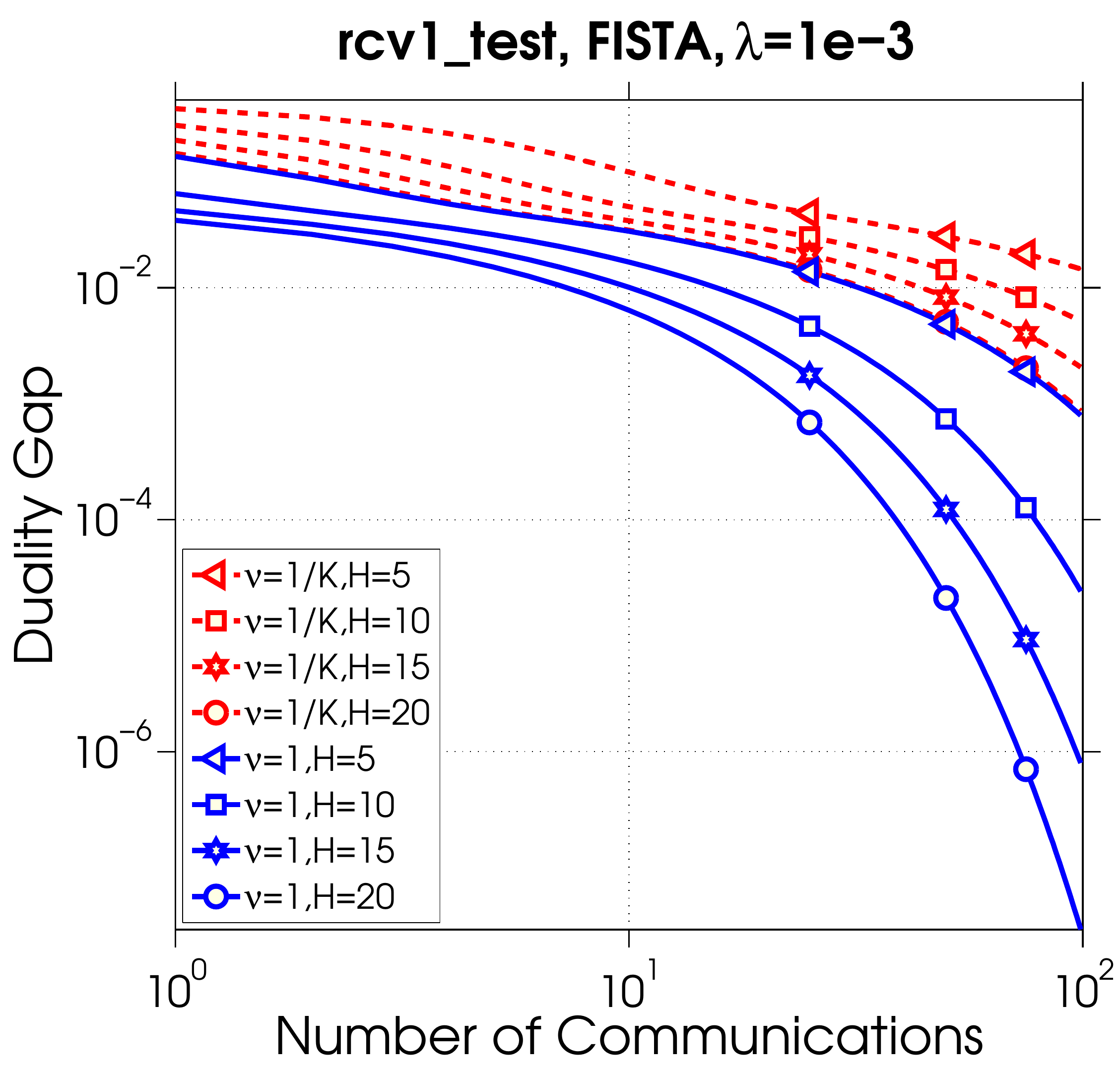}
\includegraphics[scale=.19]{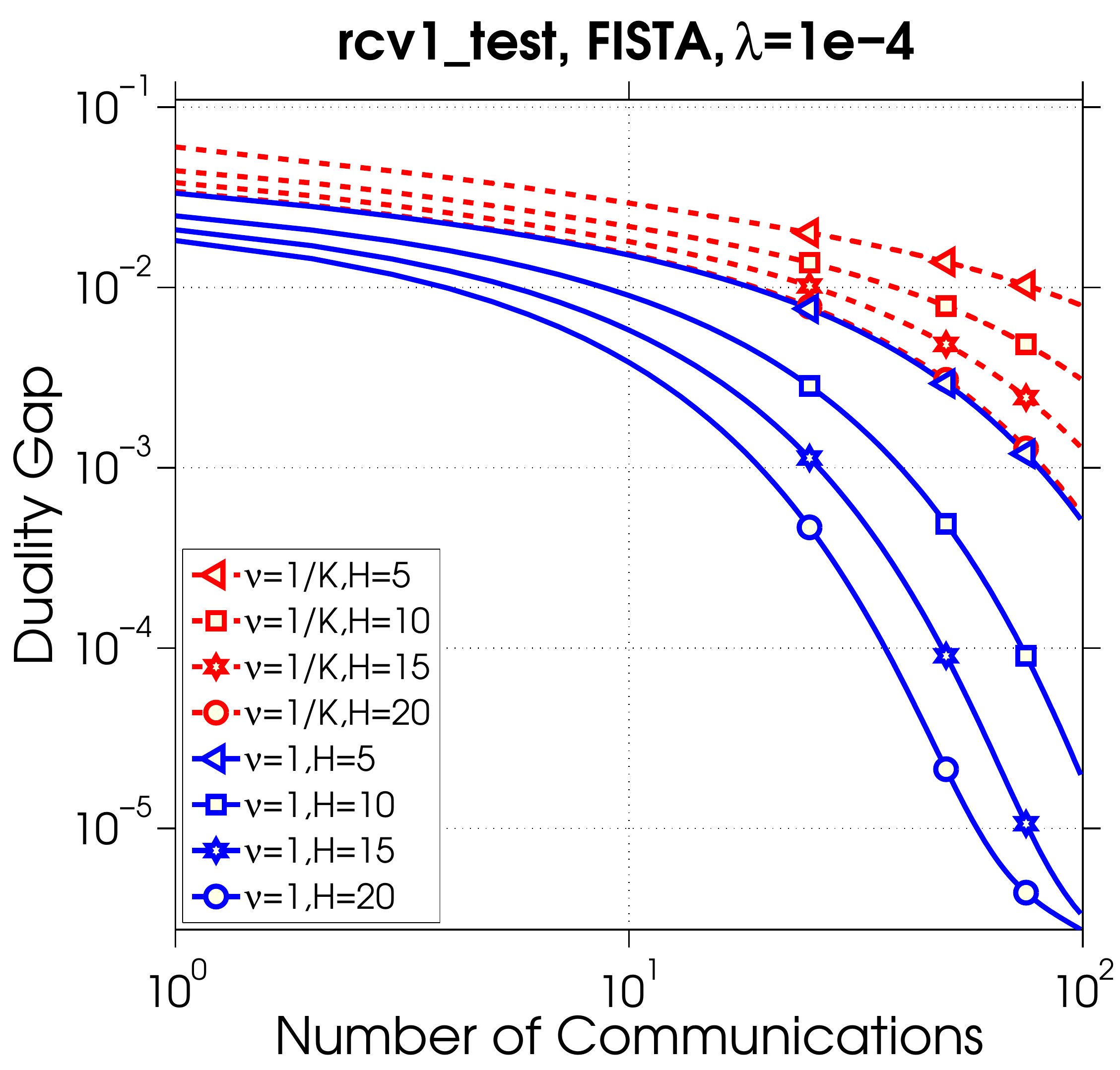}
\includegraphics[scale=.19]{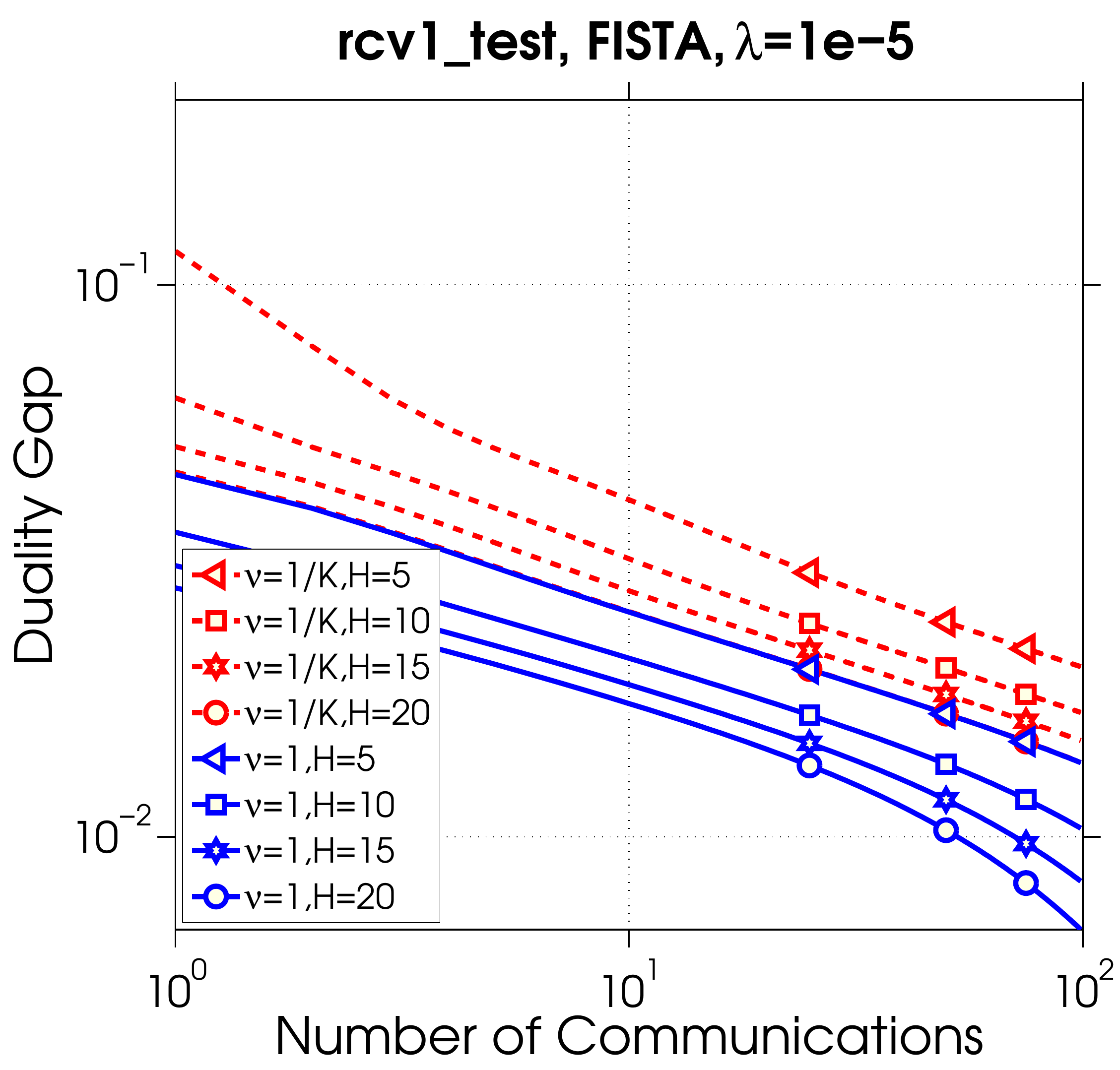}

\includegraphics[scale=.19]{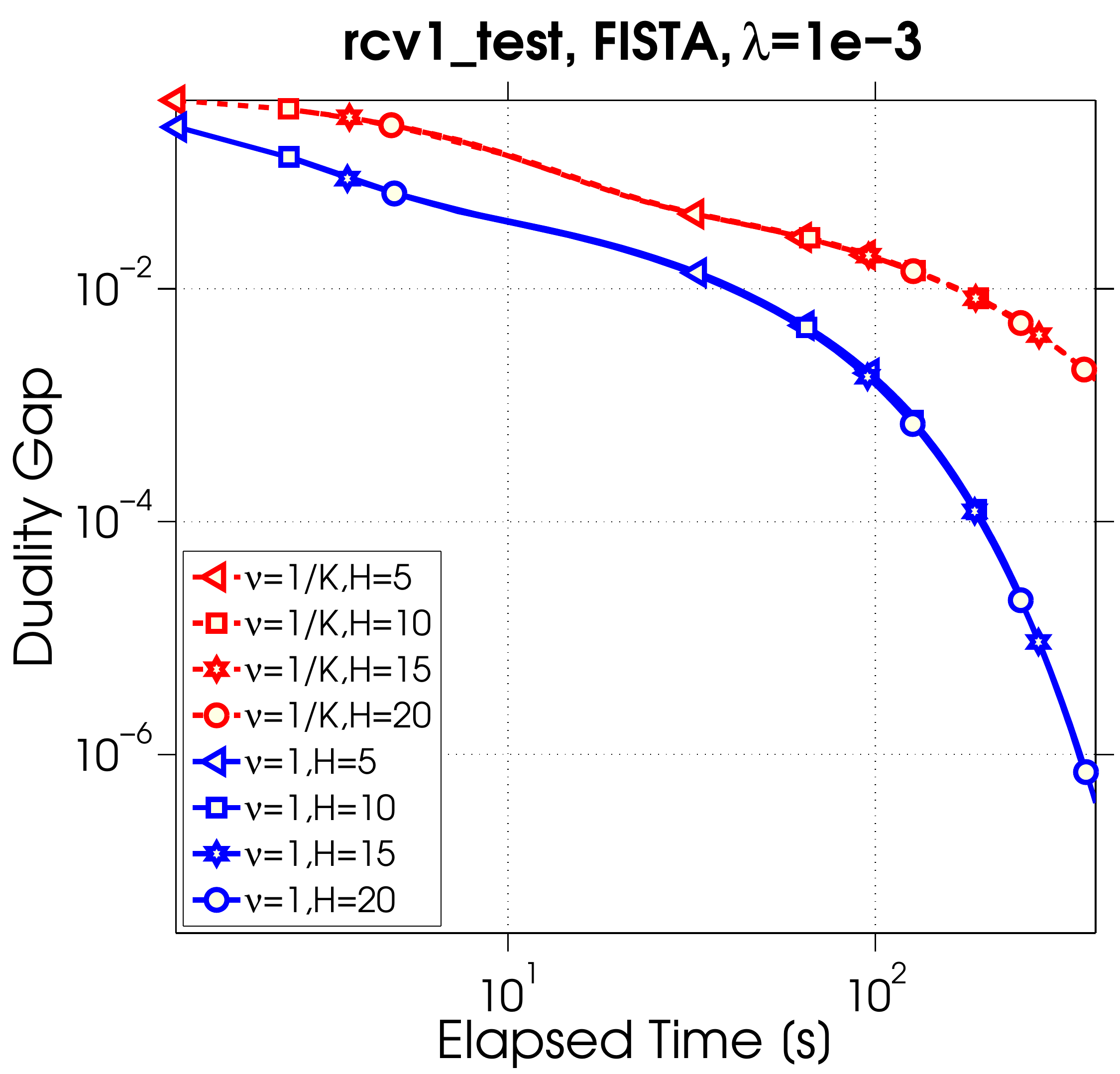}
\includegraphics[scale=.19]{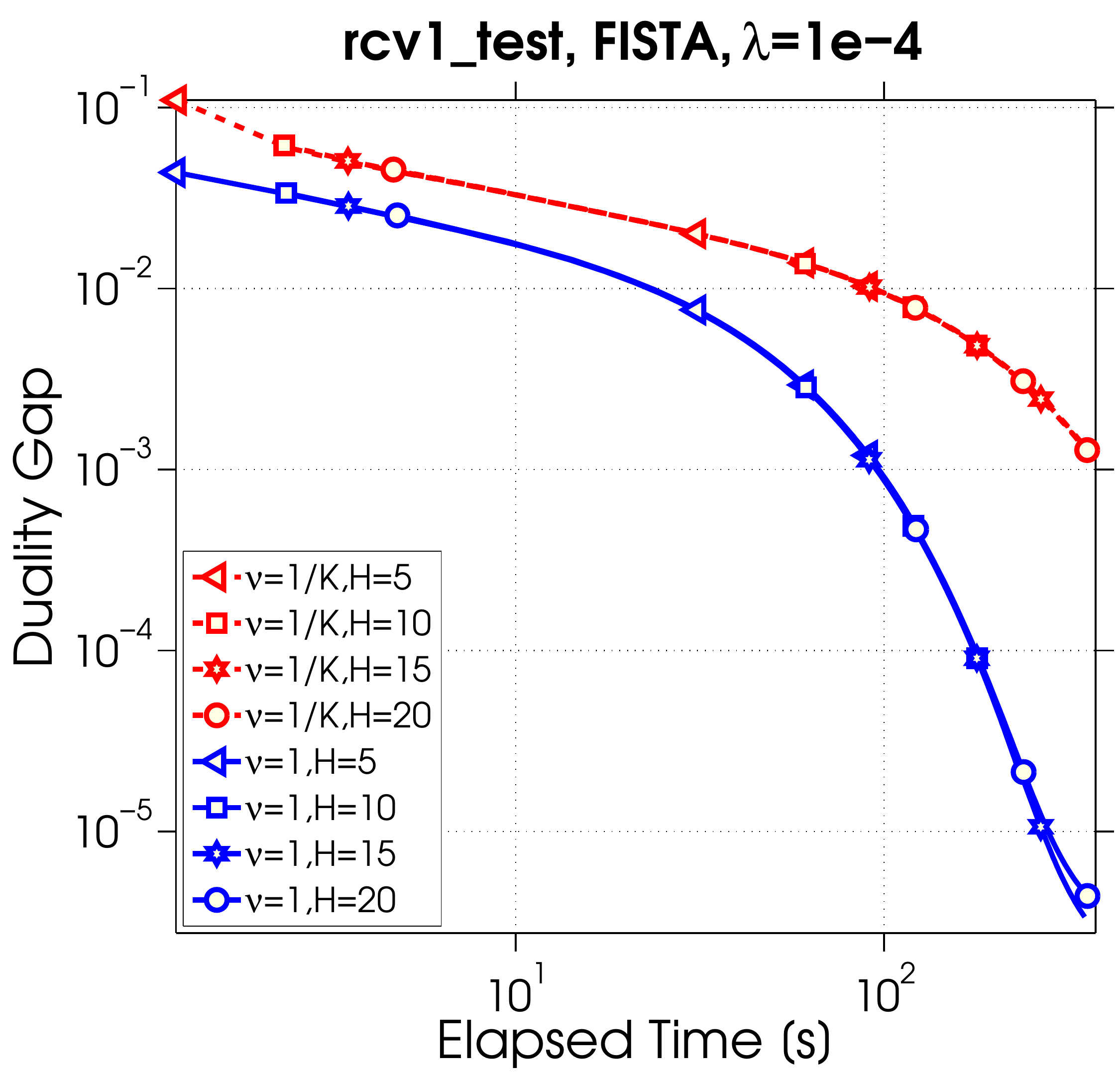}
\includegraphics[scale=.19]{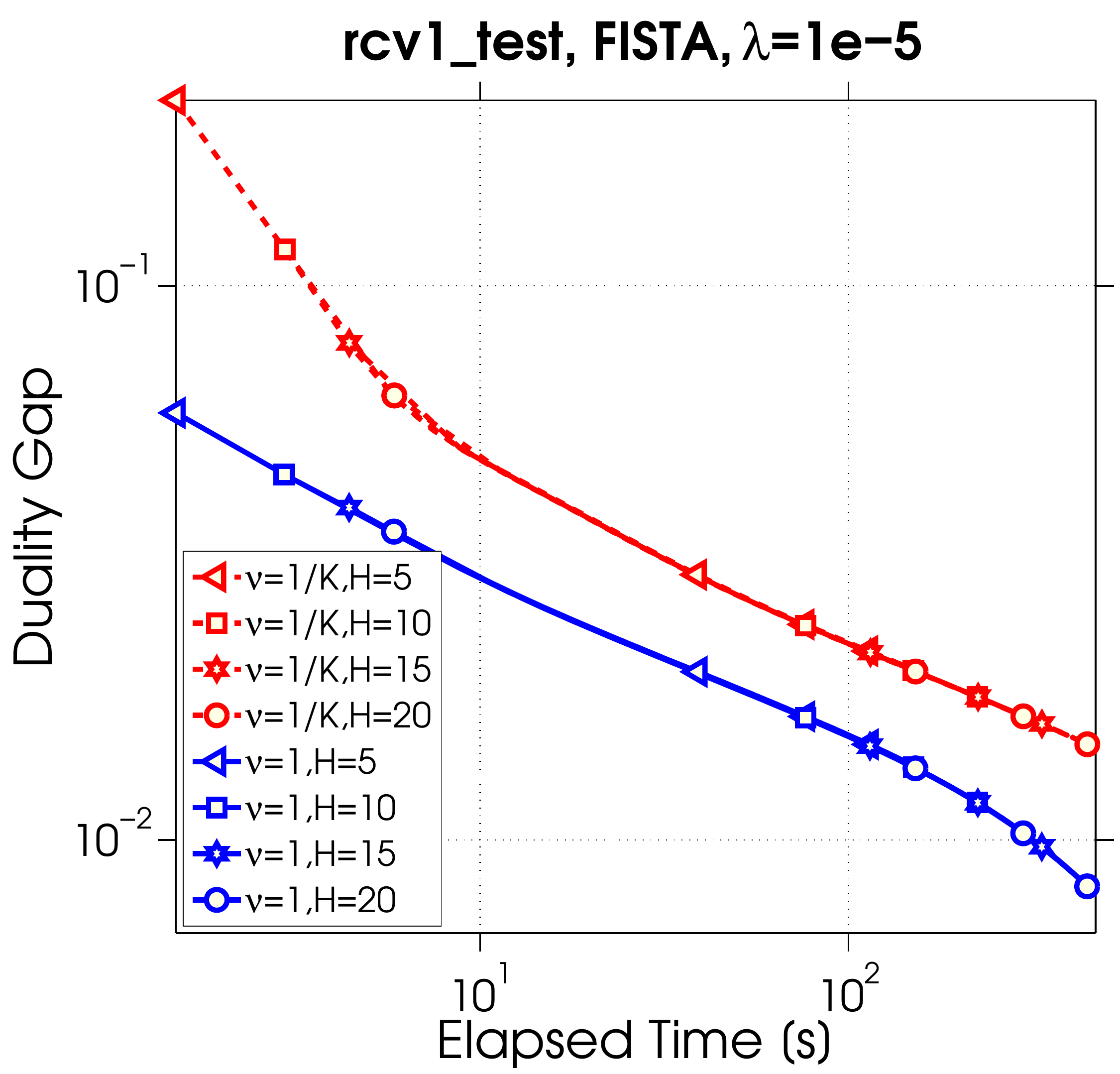}
\caption{Adding (blue solid line) vs Averaging (red dashed line) for FISTA as the local solver.} 
\label{fig:soler7}
\end{figure}

\subsection{The Effect of the Subproblem Parameter $\sigma'$}
\label{sec:subproblemParamExps}

In this section we consider the effect of the choice of the subproblem parameter on convergence (Figure \ref{fig:sigmaTest}). We plot duality gap over the number of communications for RCV and epsilon datasets with quadratic loss and set $K = 8$, $\lambda = 10^{-5}$. For $\aggpar=1$ (adding the local updates), we consider several different values of~$\sigma'$, ranging from $1$ to $8$. The value $\sigma'=8$ represents the safe upper bound of $\aggpar K$, as given in Lemma \ref{lem:sigmaPrimeNotBad}. 

Decreasing $\sigma'$ improves performance in terms of communication until a certain point, after which the algorithm diverges.  For the rcvtest dataset, the optimal convergence occurs around $\sigma'=5$, and diverges fast for $\sigma' \le 3$. For epsilon dataset, $\sigma'$ around $6$ is the best choice and the algorithm will not converge to optimal solution if $\sigma'\leq 5.$ However, more importantly, the ``safe'' upper bound of $\sigma':=\aggpar K=8$ has only slightly worse performance than the practically best (but ``un-safe'') value of~$\sigma'$. 

\begin{figure}[H]
\centering
\includegraphics[scale=.22]{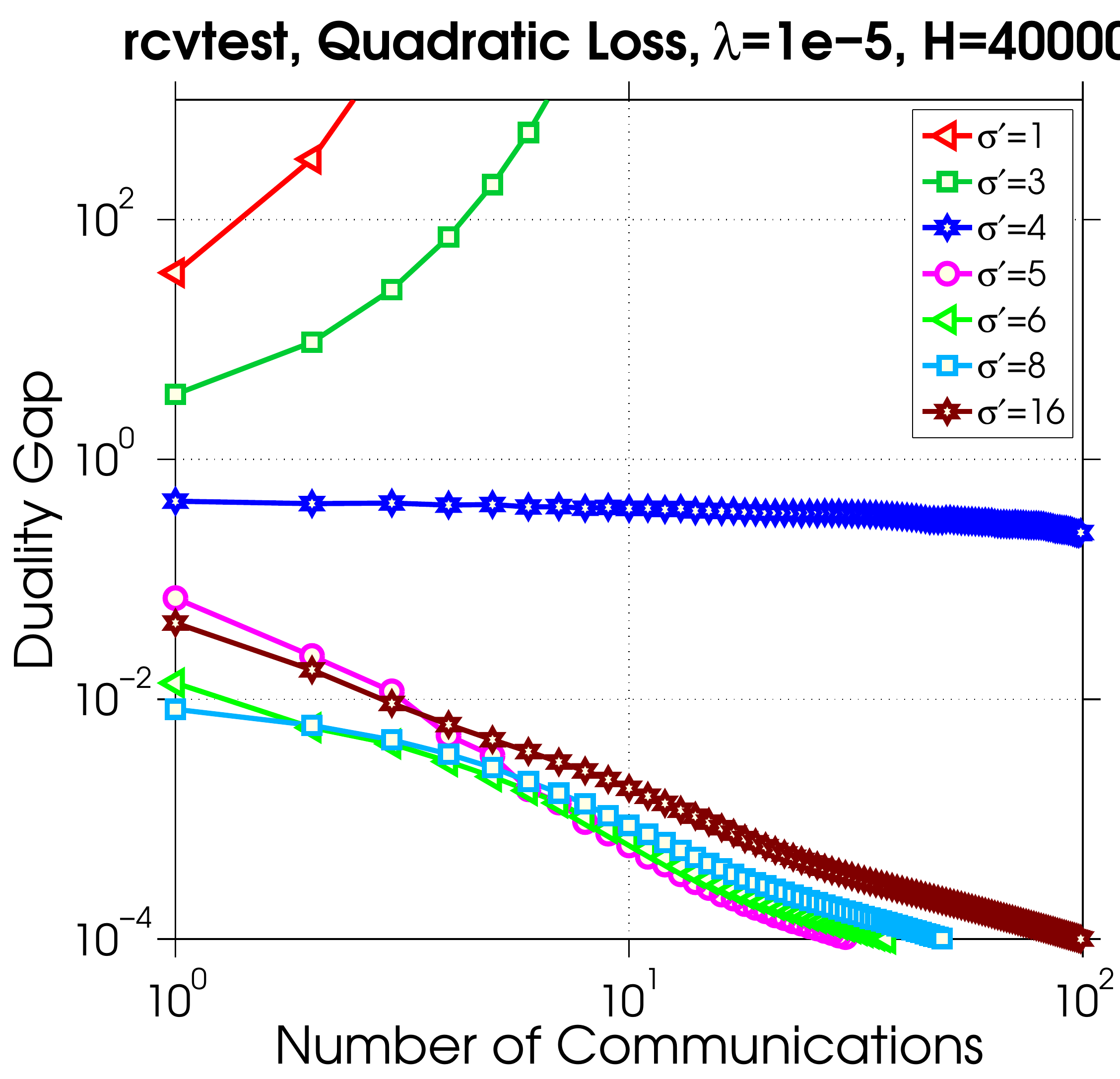}
\includegraphics[scale=.22]{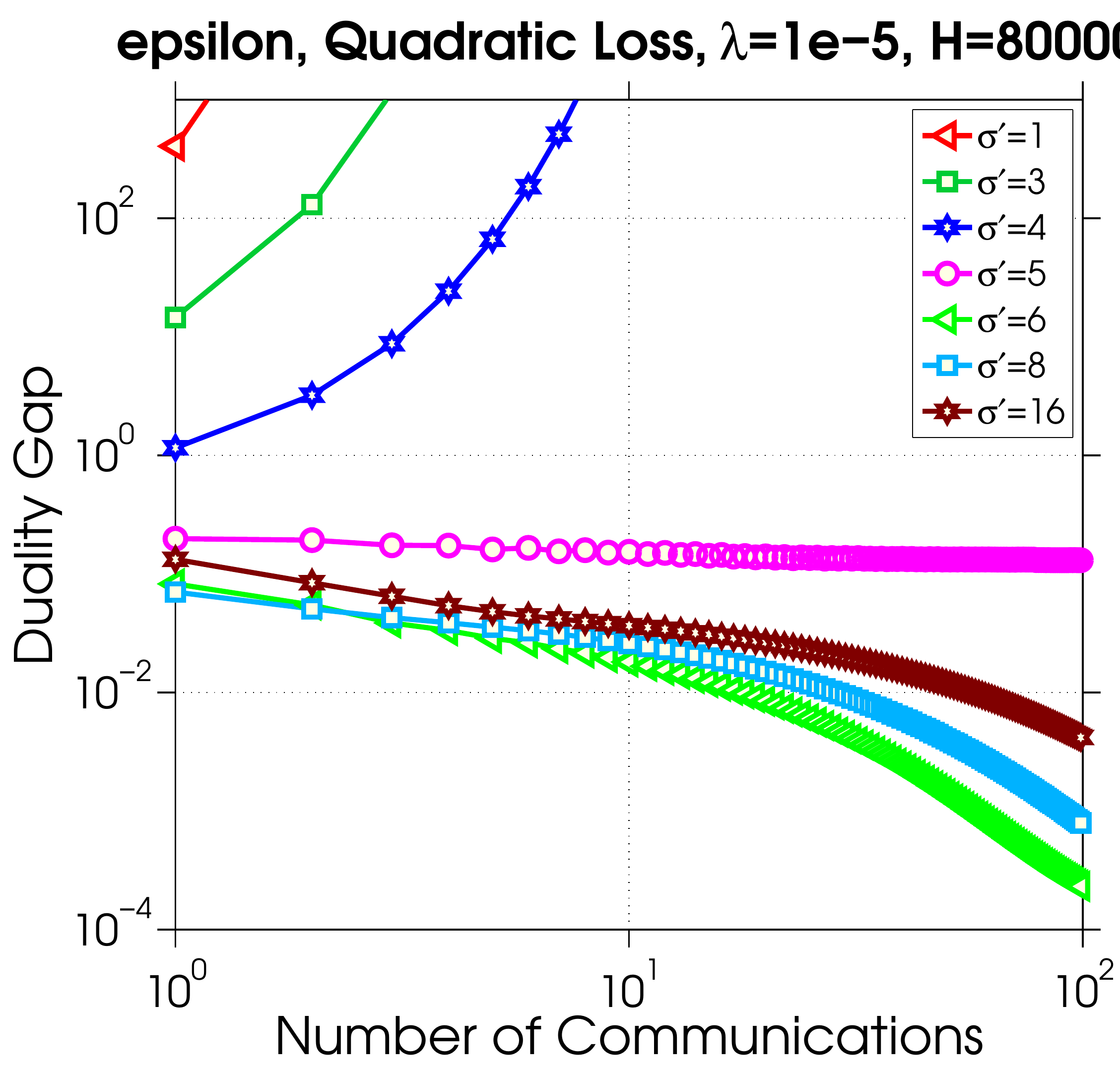}
\caption{The effect of $\sigma'$ on convergence for the rcvtest and epsilon datasets distributed across 8 machines.} 
\label{fig:sigmaTest}
\end{figure}

\subsection{Scaling Property}

\begin{figure}[H]
\centering
\includegraphics[scale=.19]{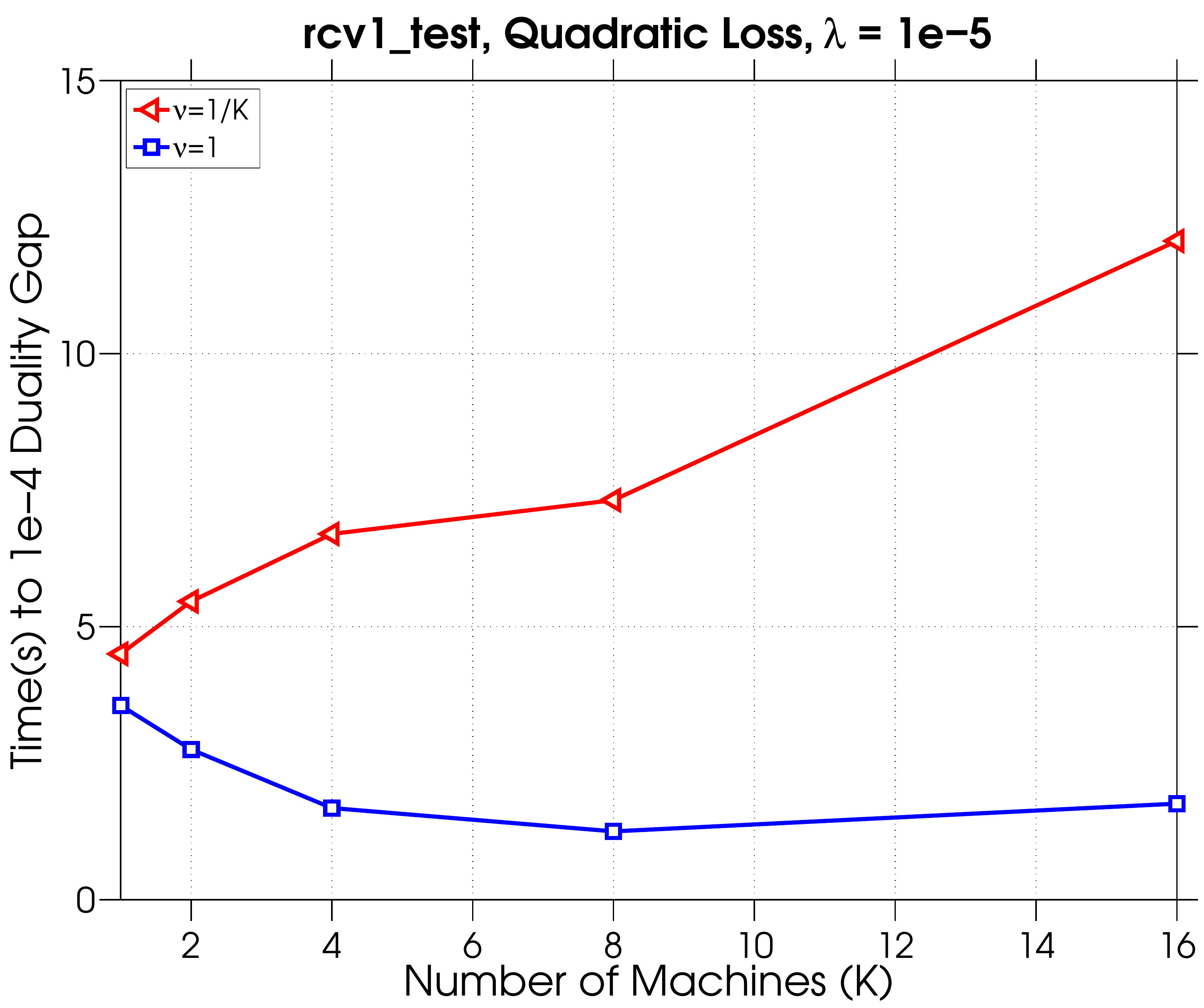}
\includegraphics[scale=.19]{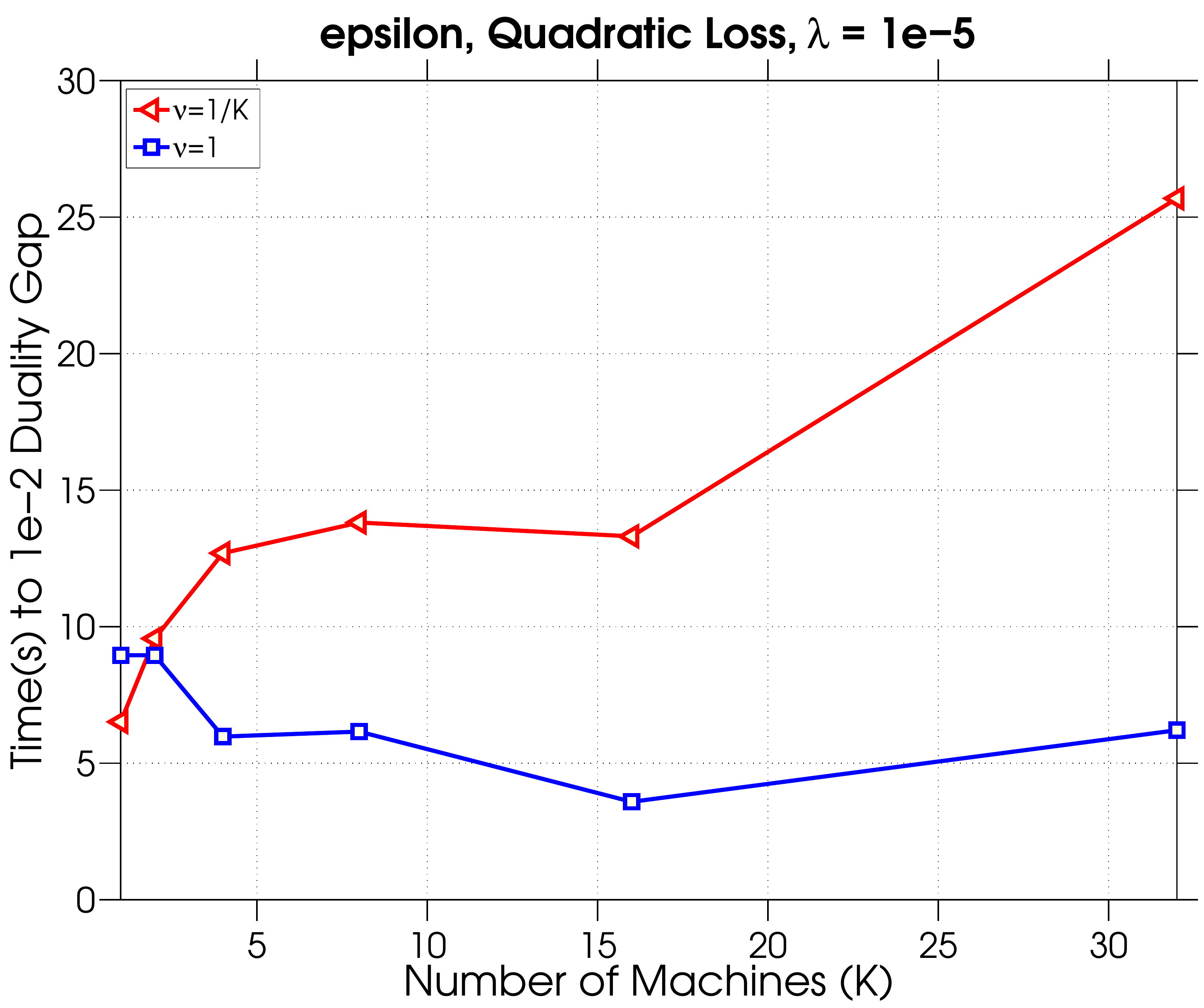}
\caption{The effect of increasing the number of machines $K$ on the time (s) to reach a solution with expected duality gap.} 
\label{fig:scaletest1}
\end{figure}

\mcx{Here we demonstrate the ability of our framework to scale with  $K$ (number of machines). We compare the run time to reach a specific tolerance on duality gap ($10^{-4}$ and $10^{-2}$) for two choices of $\aggpar$. Looking at Figure~\ref{fig:scaletest1}, we see that when choosing $\aggpar=1$, the performance improves as the number of machines increases. However, when $\aggpar = \frac{1}{K}$, the algorithm slows down as $K$ increases. The observations support our analysis in Section 4.  }

\subsection{Performance on a Big Dataset}
\label{sec:hugeDatasetExp}
As shown in Figure~\ref{fig:hugedata}, we test the algorithm on the \emph{splice-site.t} dataset, whose size is about 280 GB. 
We show experiments for three different loss functions $\ell$, namely logistic loss, hinge loss and least squares loss (see Table \ref{tbl:differentLossFunctions}).
We set $\lambda = 10^{-6}$ for the squared norm regularizer. 
The dataset is distributed  across $K=4$ machines and we use \mcx{CD} as the local solver with $H = 50,000$. In all the cases, an optimal solution can be reached in about 20 minutes and again, we observe that setting the aggregation parameter $\nu:=1$ leads to faster convergence than $\nu:=\frac 1 K$ (averaging). 

Also, the number of communication rounds for the three different loss functions are almost the same if we set all the other parameters to be same. However, the patterns of  duality gap decrease for the three loss functions are different.

\begin{figure}[H]
\centering
\includegraphics[scale=.19]{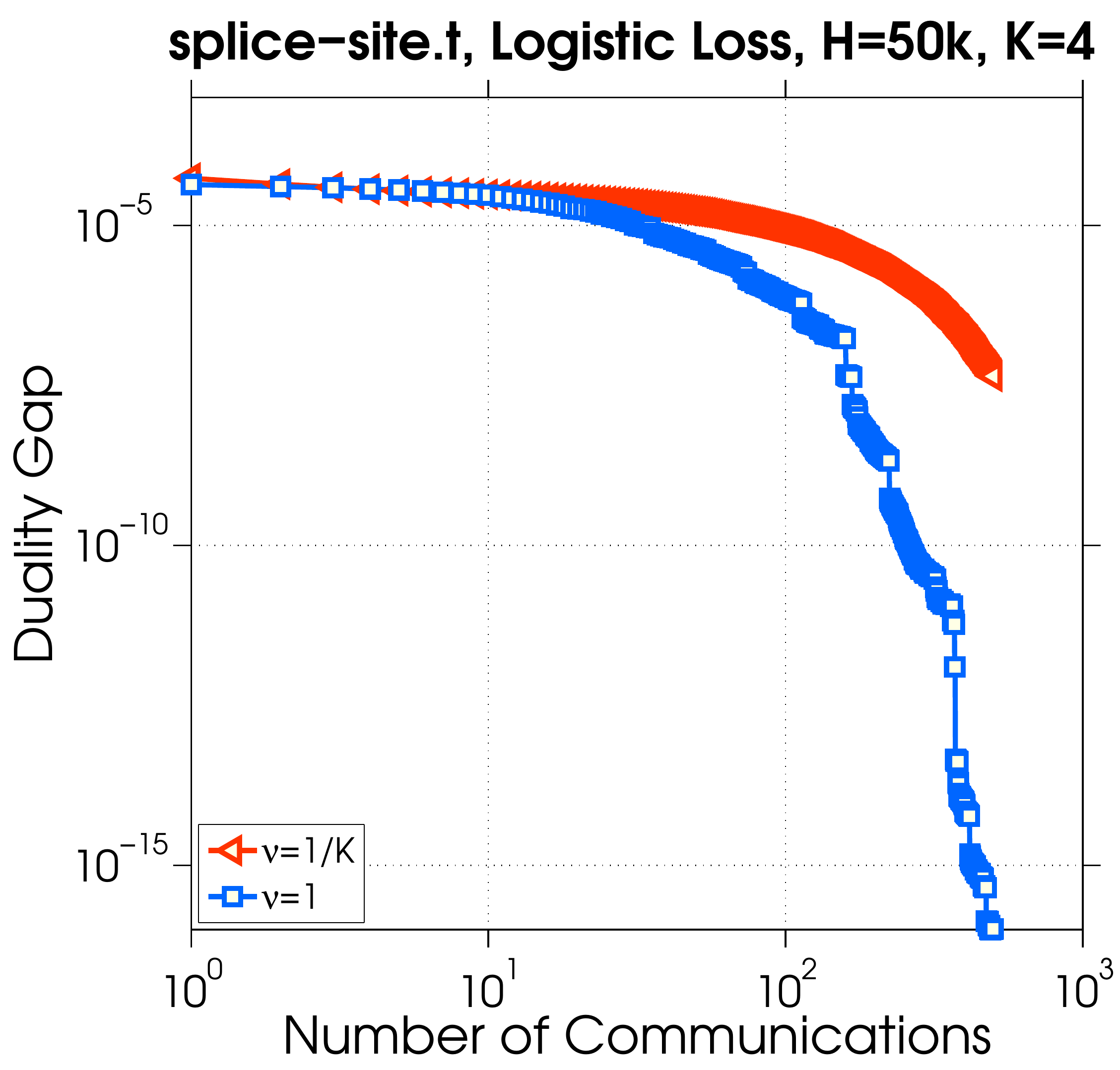}
\includegraphics[scale=.19]{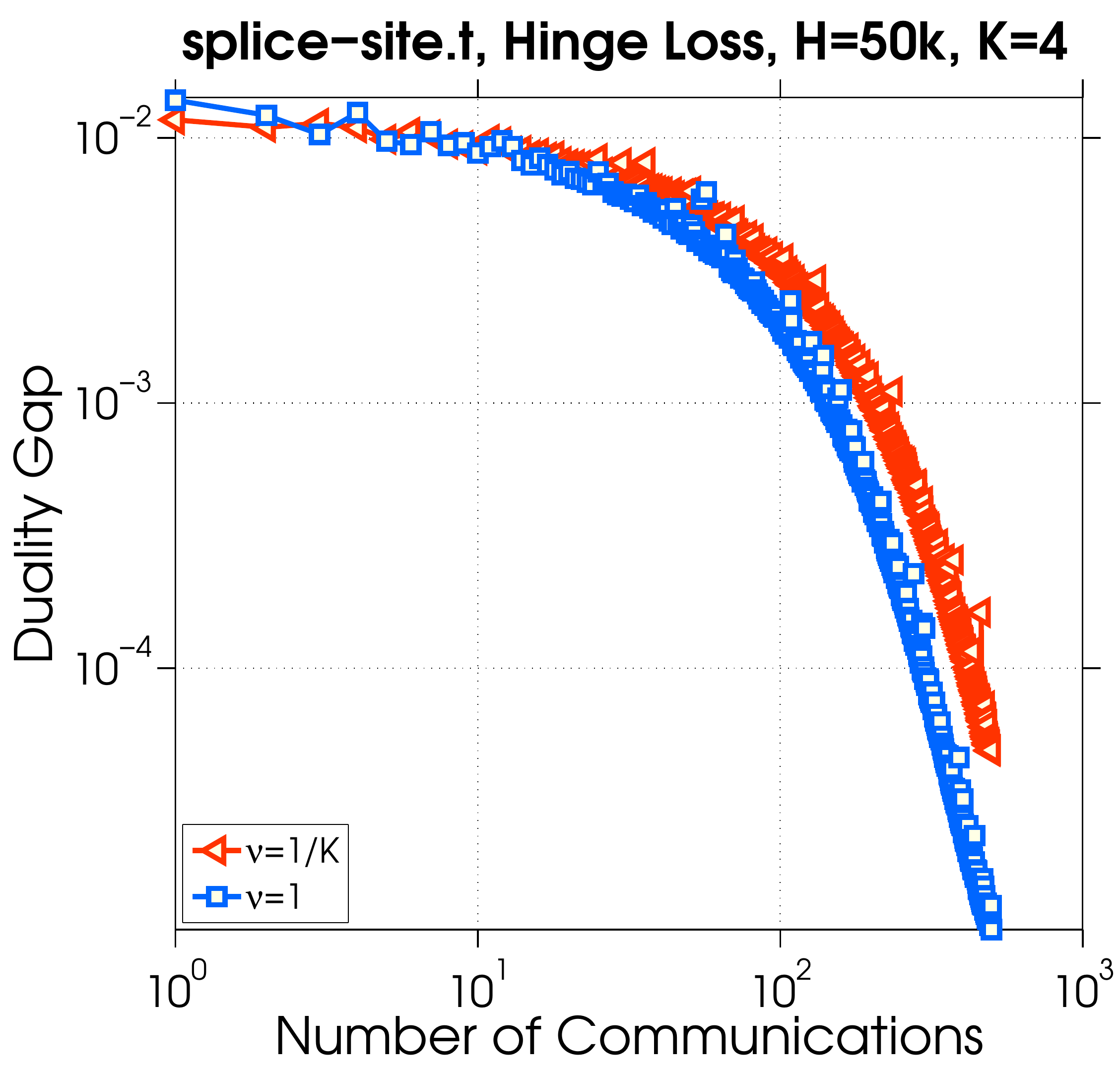}
\includegraphics[scale=.19]{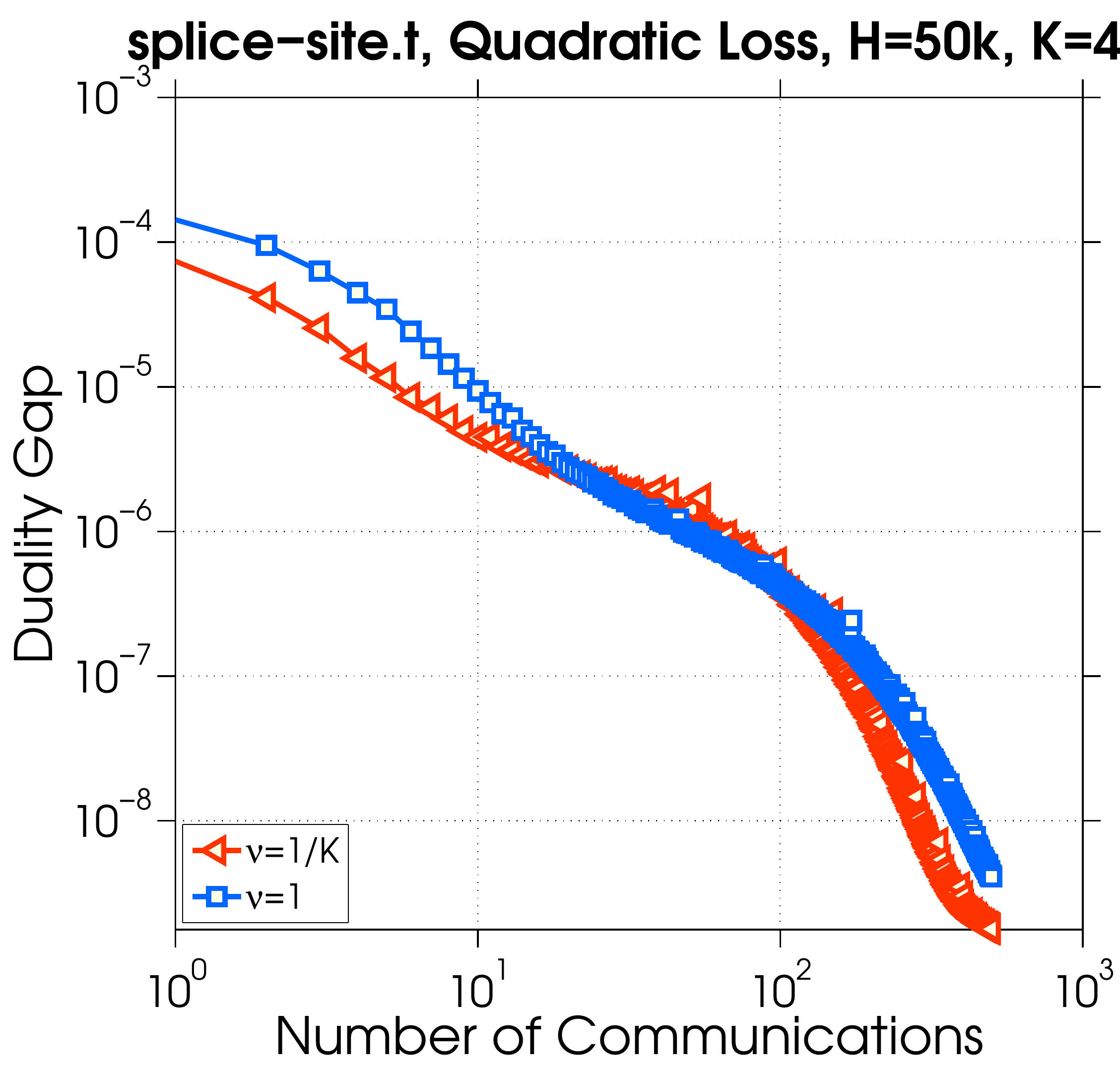}

\includegraphics[scale=.19]{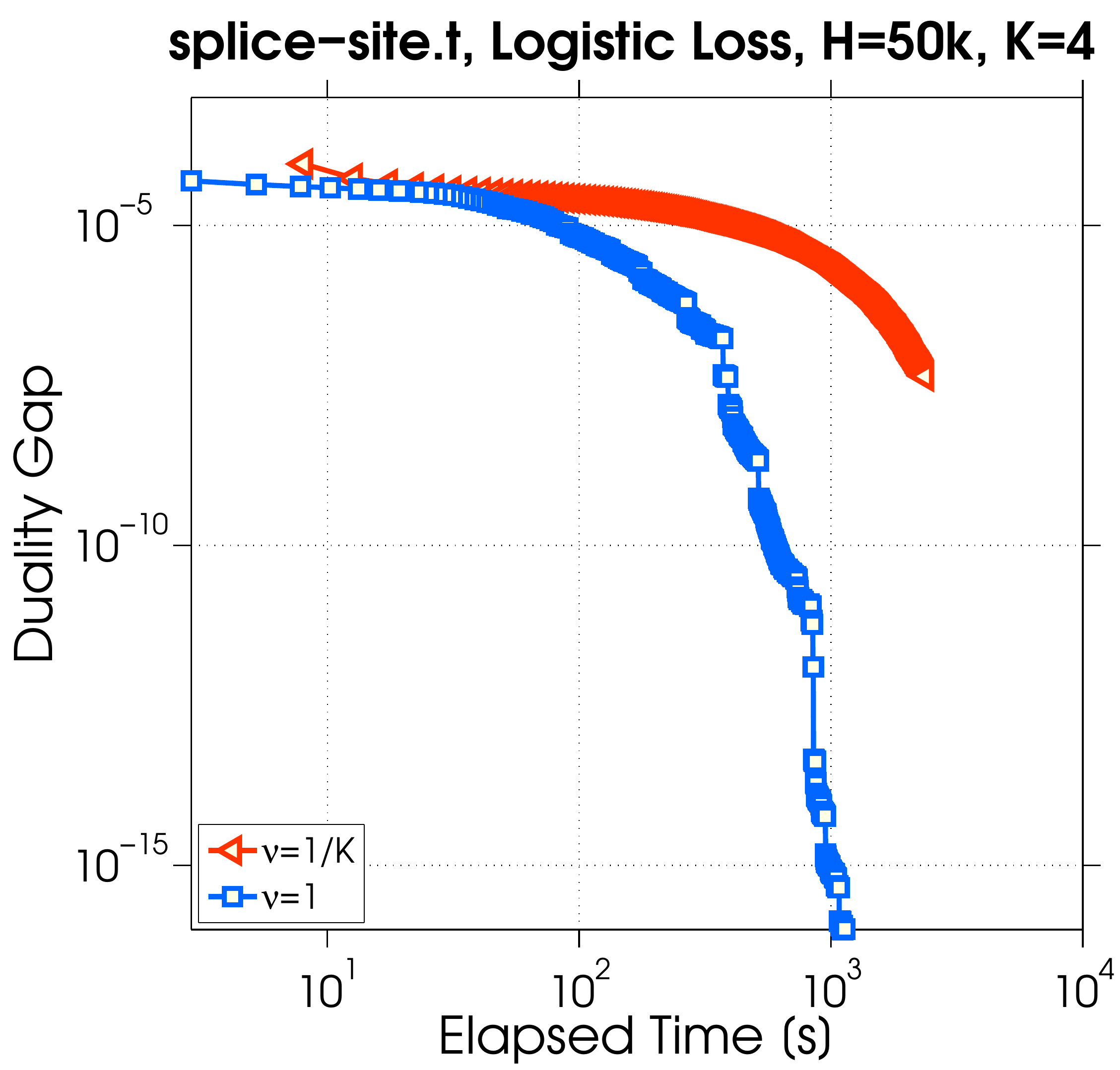}
\includegraphics[scale=.19]{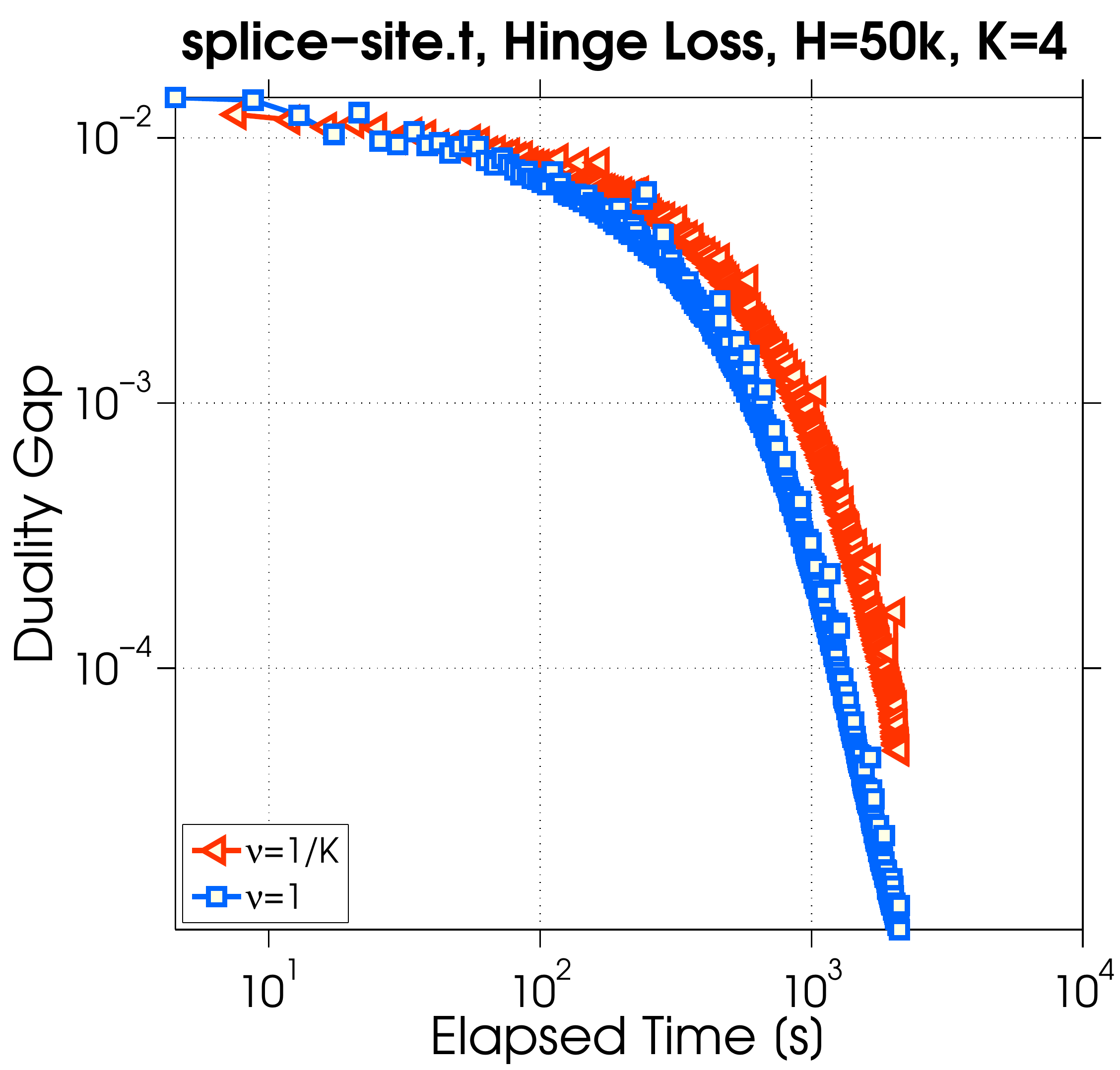}
\includegraphics[scale=.19]{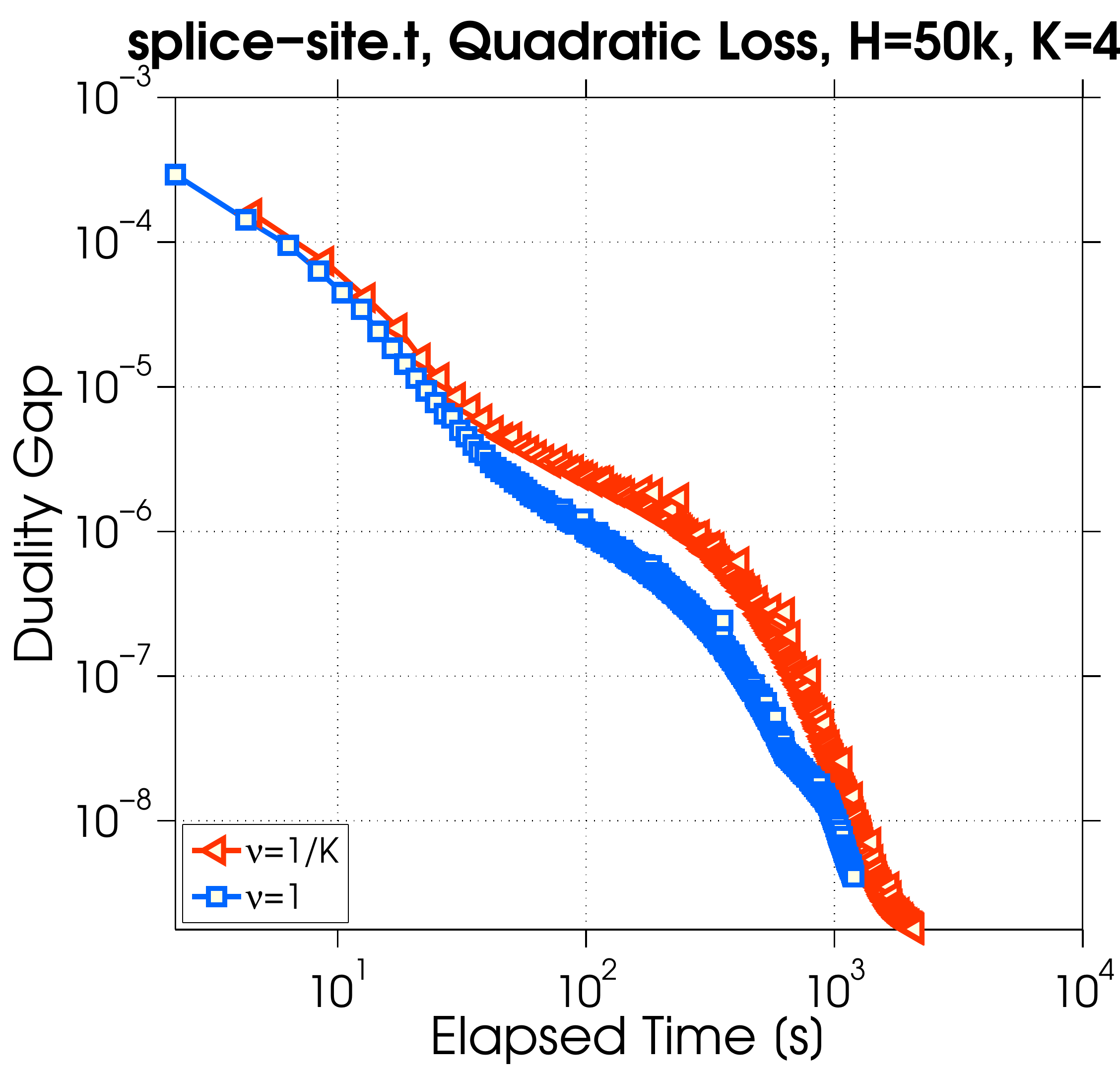}
\caption{Performance of Algorithm \ref{alg:cocoaPractical} on \emph{splice-site.t} dataset, with three different loss functions. } 
\label{fig:hugedata}
\end{figure}

\subsection{Comparison with other distributed methods}

\mcx{Finally, we compare the \cocoap framework with several competing distributed optimization algorithms. The DiSCO algorithm~\cite{Zhang:2015mp} is a Newton-type method, where in each iteration the updates on iterates are computed inexactly using a Preconditioned Conjugate Gradients (PCG) method. As suggested in \cite{Zhang:2015mp}, in our implementation of DISCO we apply the Stochastic Average Gradient (SAG) method~\cite{schmidt2013minimizing}  as the solver to get the initial solutions for each local machine and solve the linear system during PCG.  DiSCO-F~\cite{2016arXiv160305191M}, improves on the computational efficiency of original DiSCO, based on data partitioned across features rather than examples. The DANE algorithm~\cite{Shamir:2013vf} is another distributed Newton-type method, where in each iteration there are two rounds of communications. Also, a subproblem is to be solved in each iteration to obtain updates. For all these algorithms, all the hyper parameters are tuned manually to optimize their performance.

The experiments are conducted on a compute cluster with $K=4$ machines. We run all algorithms using two datasets. Since not all  methods  are primal-based in nature,  it is difficult to continue using duality gap as a measure of optimality. Instead, the norm of the gradient of the primal objective function \eqref{eq:primal} is used to compare the relative quality of the solutions obtained. 

As shown in Figure~\ref{fig:diffmeth}, in terns of number of communications, \cocoap usually converges more rapidly than competing methods during the early  iterations, but tends to get slower later on in the iterative process. This  illustrates that the Newton-type methods can accelerate in the vicinity of the optimal solution, as expected. However, \cocoap can still beat other methods in running time. The main reason for this is the fact that the subproblems in our framework can be solved more efficiently, compared with DiSCO and DANE.}

\begin{figure}[H]
\centering
\includegraphics[scale=.14]{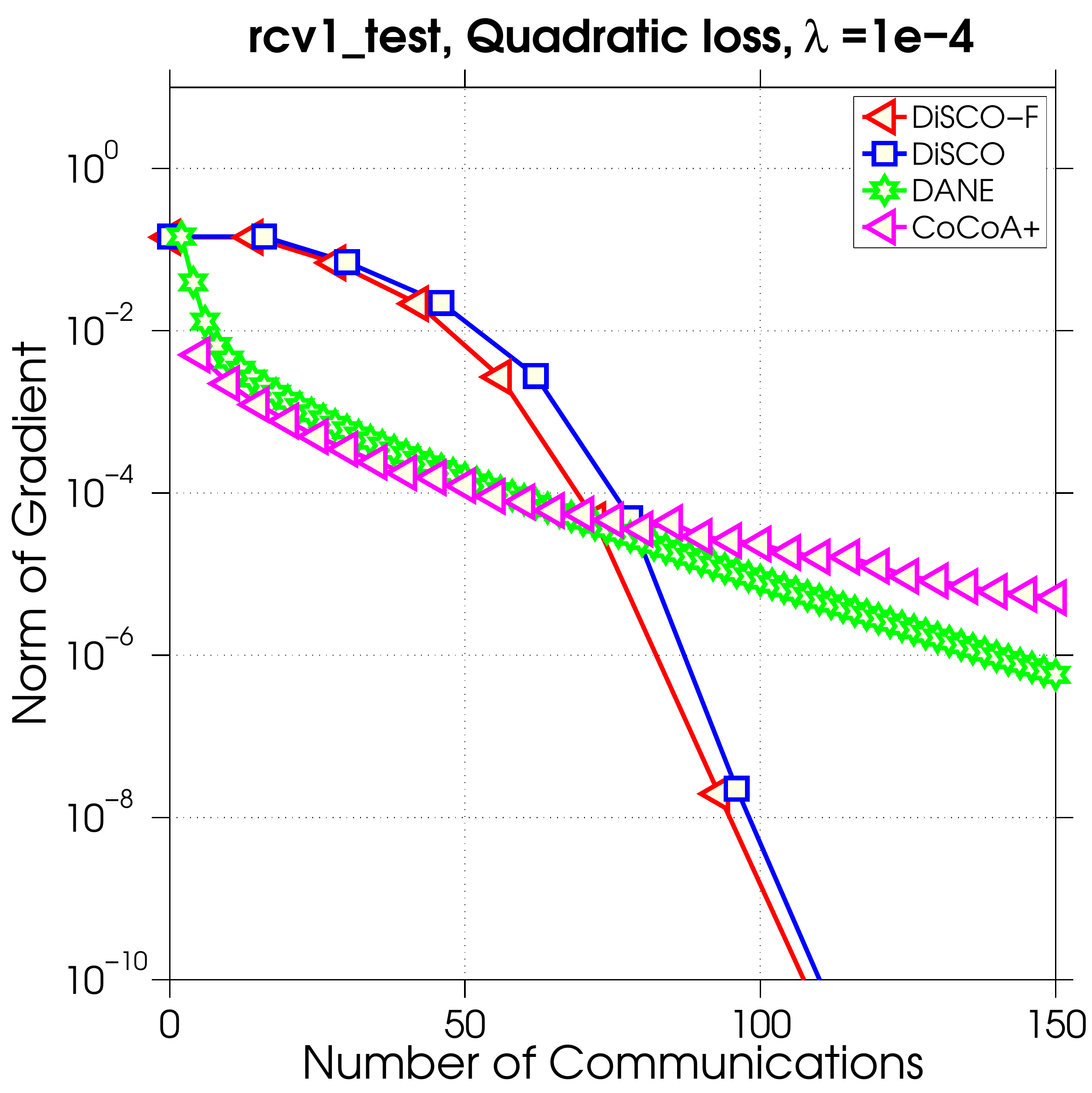}
\includegraphics[scale=.14]{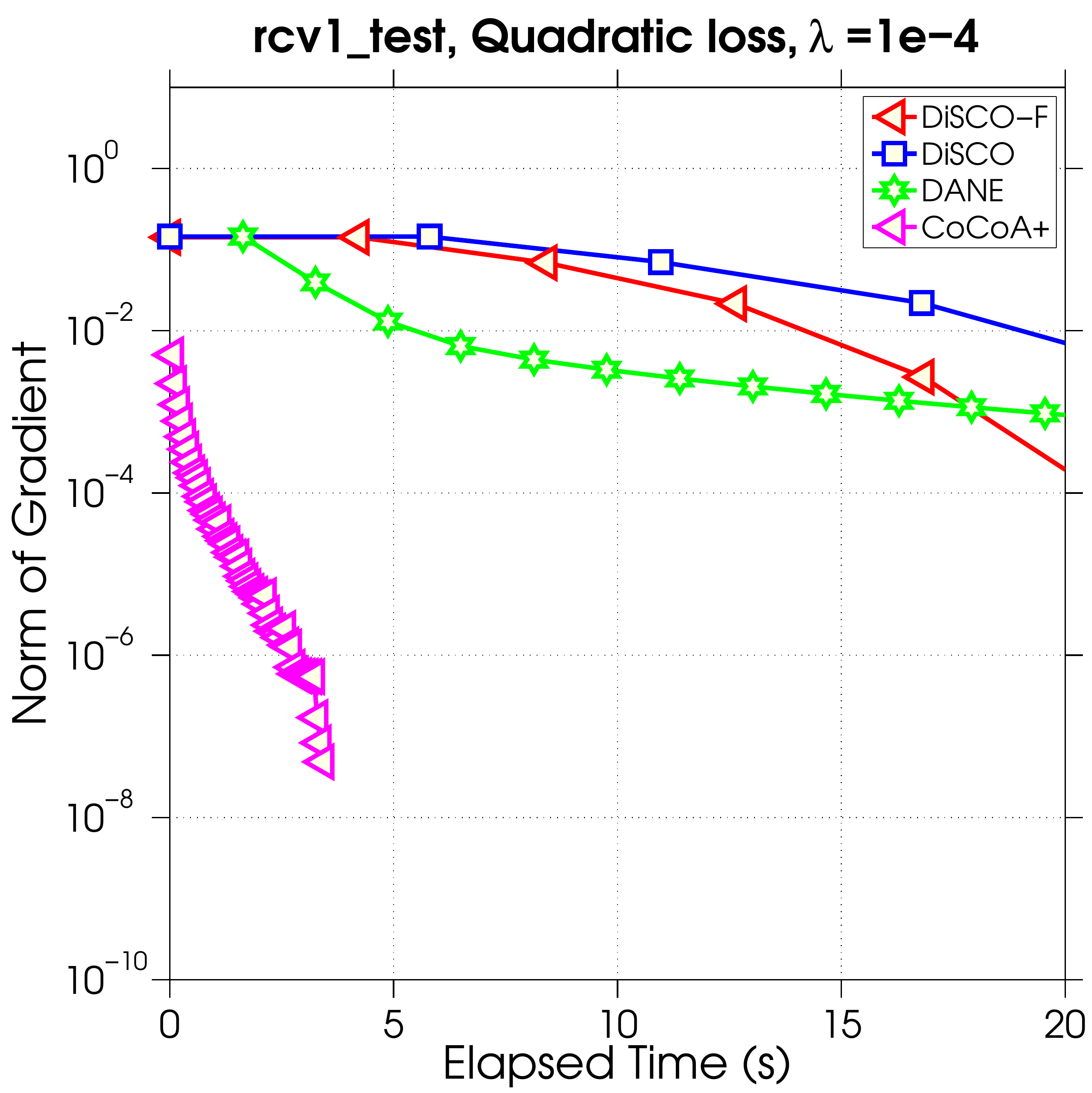}
\includegraphics[scale=.14]{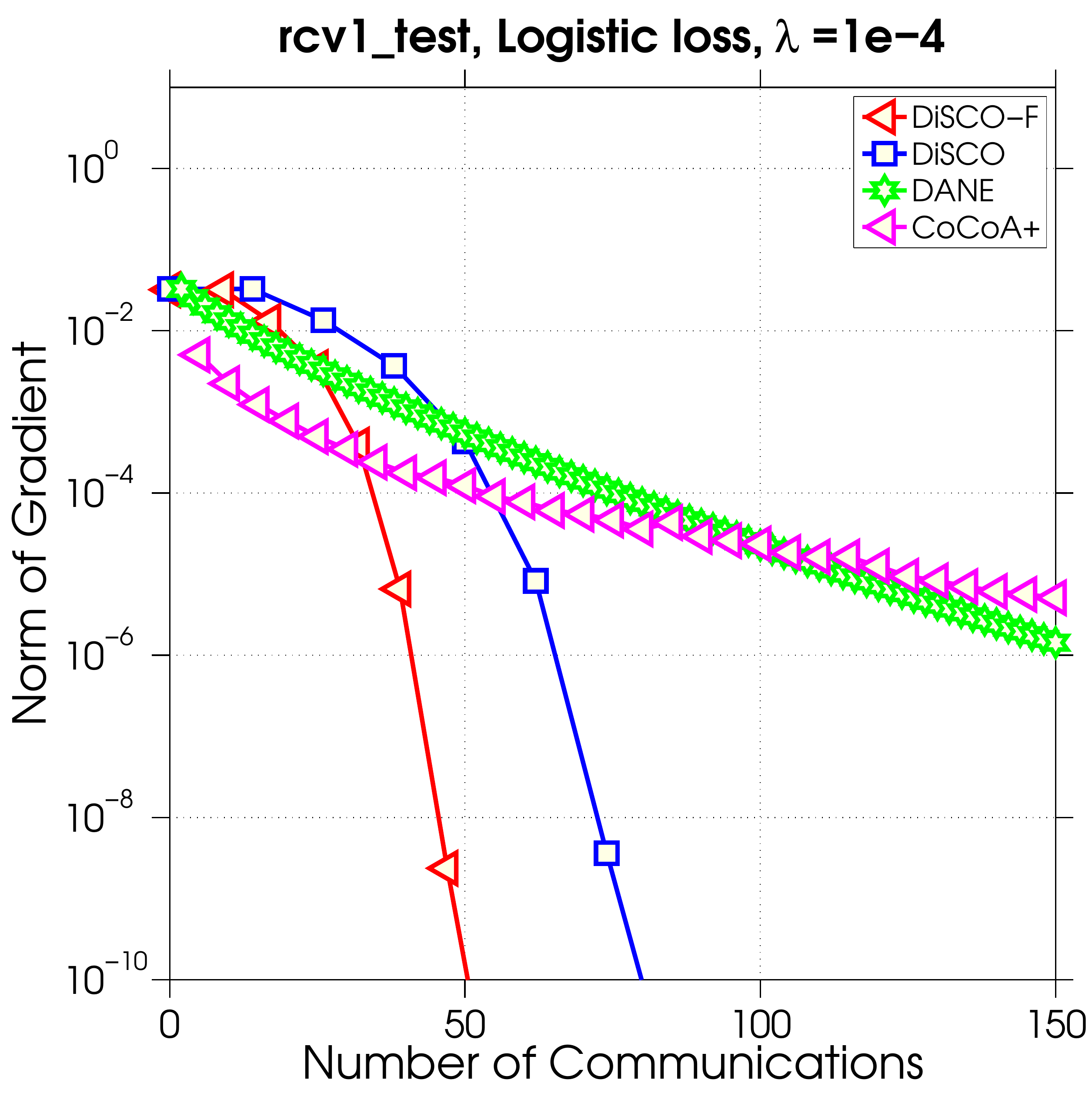}
\includegraphics[scale=.14]{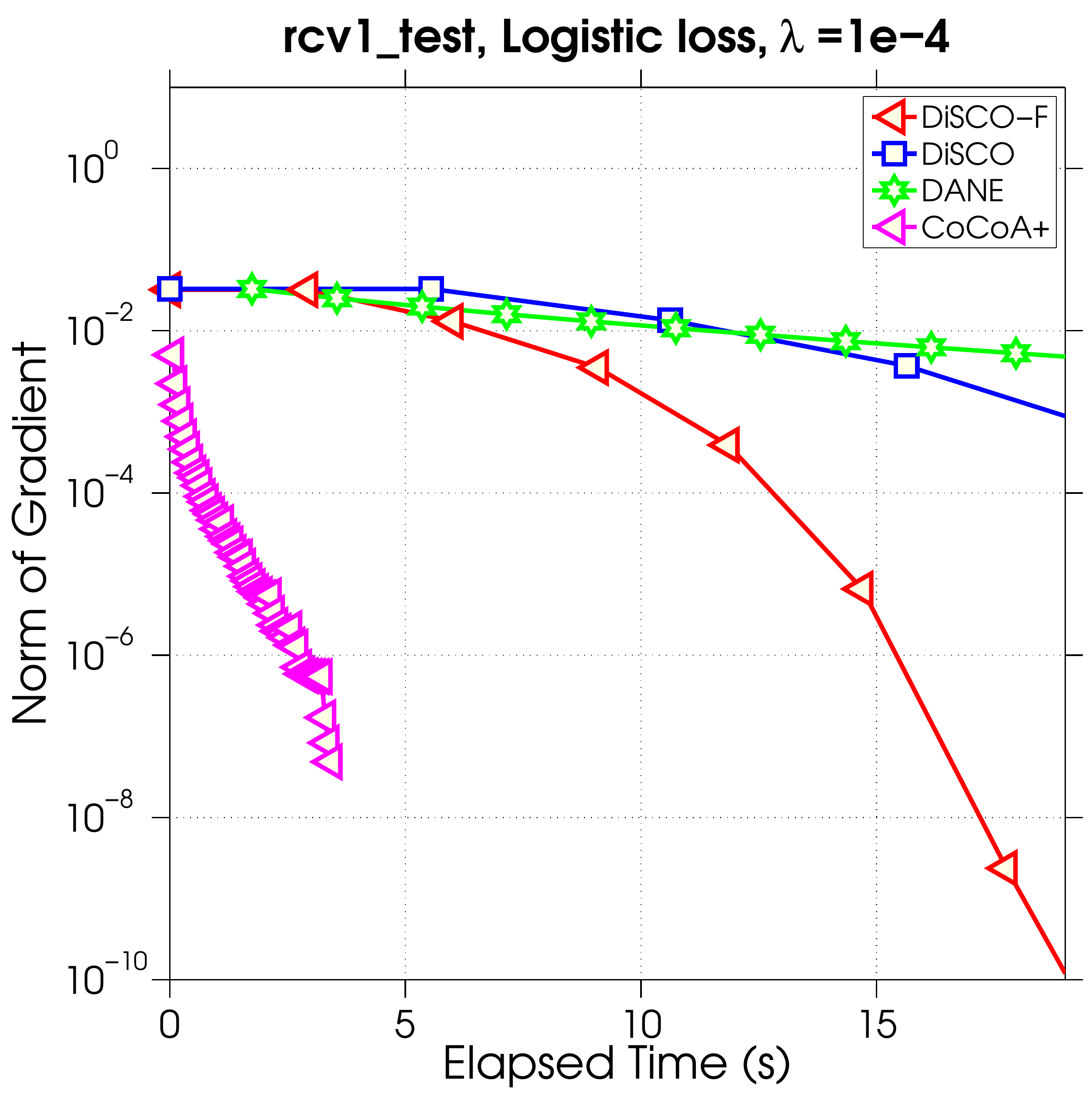}

\includegraphics[scale=.14]{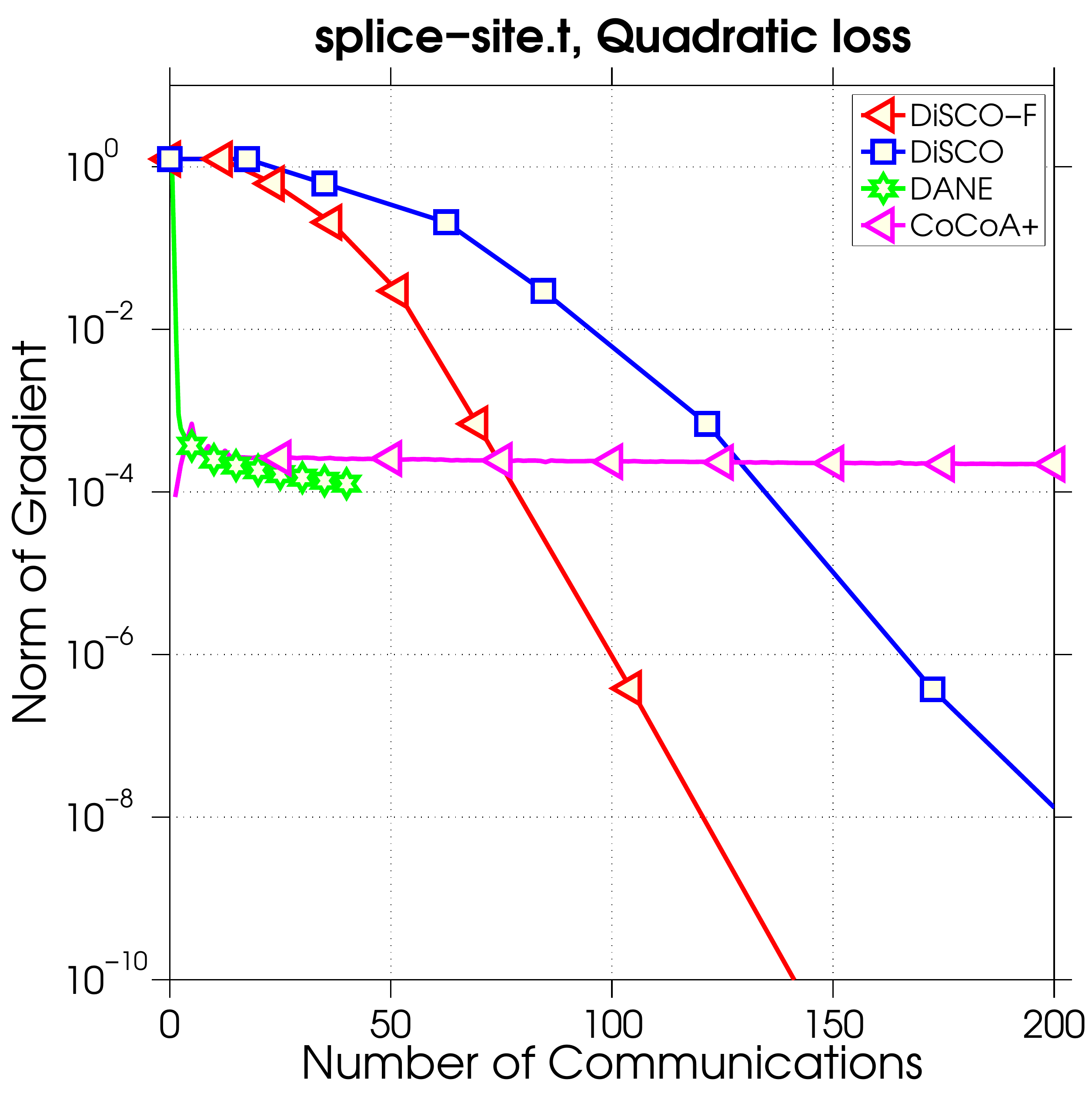}
\includegraphics[scale=.14]{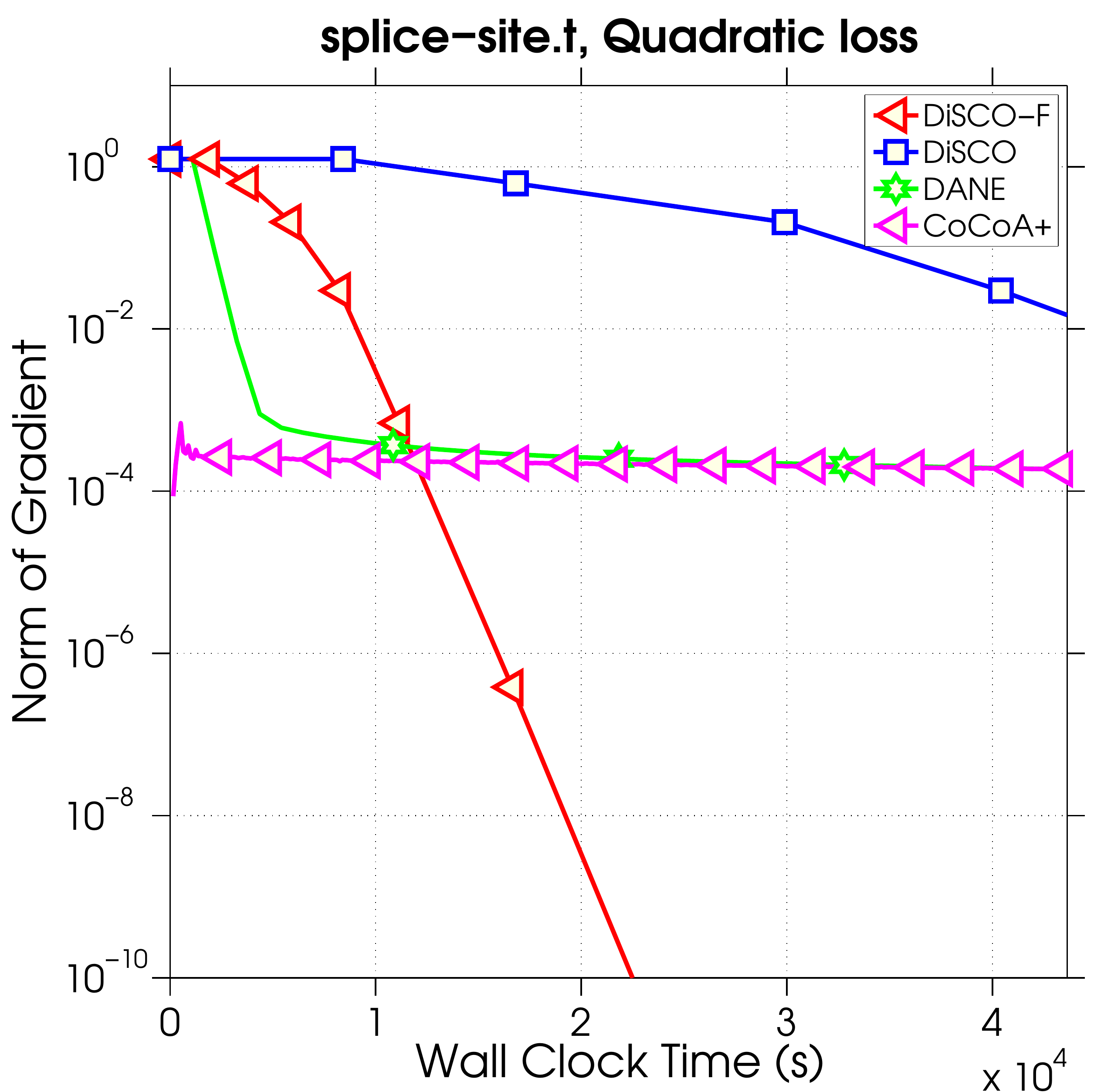}
\includegraphics[scale=.14]{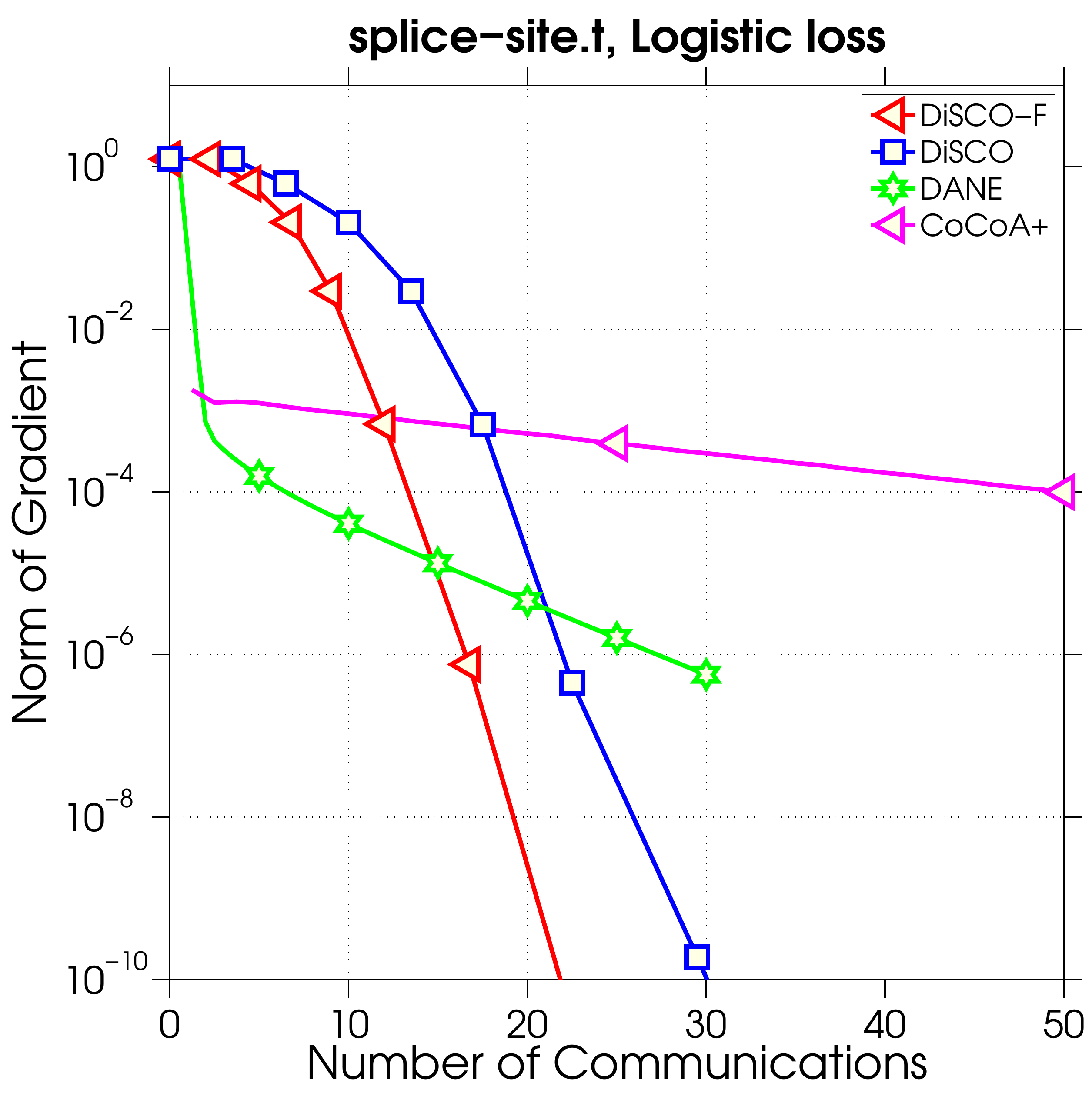}
\includegraphics[scale=.14]{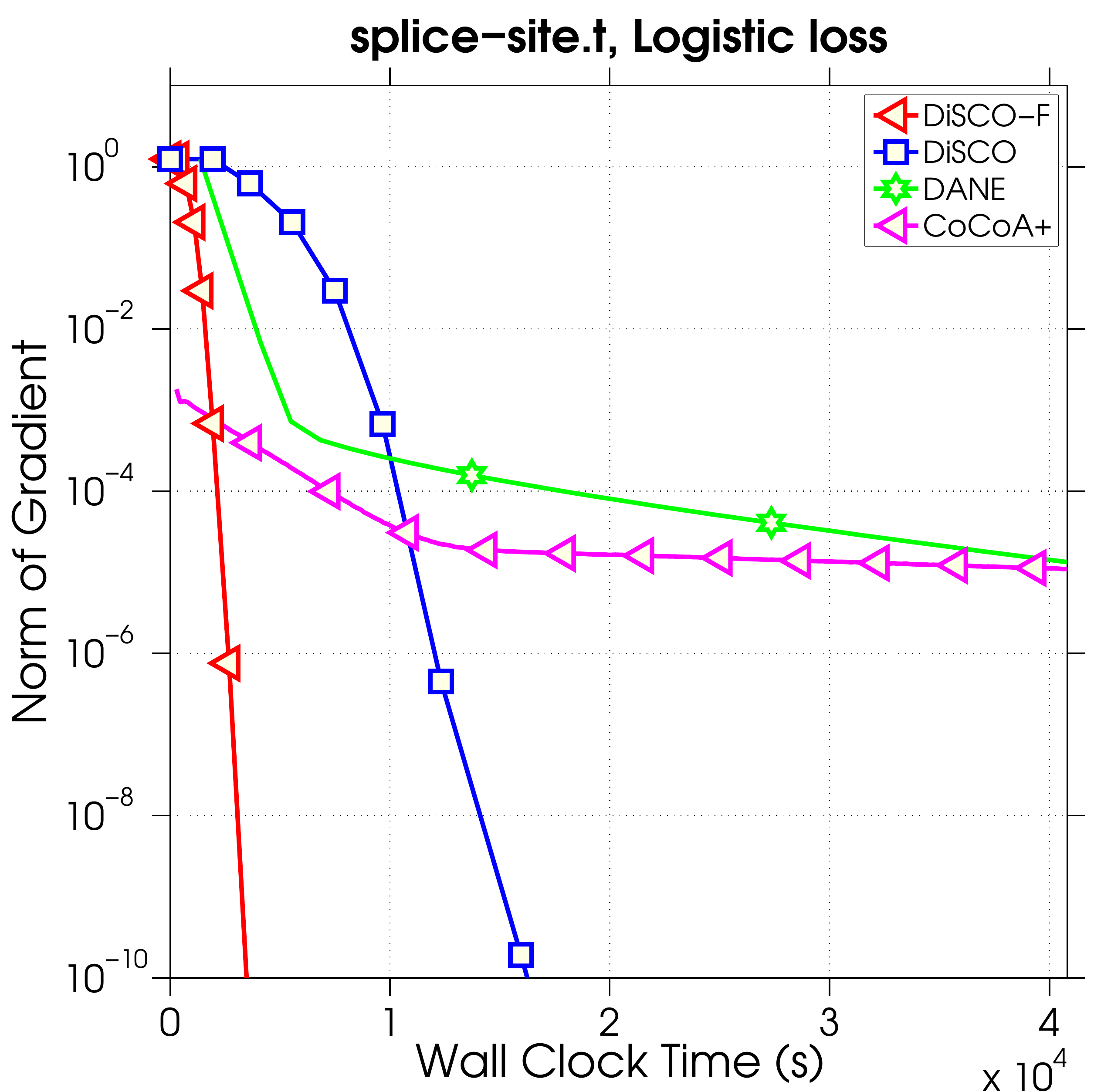}
\caption{Performance of several distributed frameworks on solving \eqref{eq:primal} with different losses on two datasets.} 
\label{fig:diffmeth}
\end{figure}

\section{Conclusion}
We present a novel \cocoap framework for distributed optimization which enables fast and communication-efficient \textit{additive aggregation} in distributed primal-dual optimization.  We analyze the theoretical complexity  of  \cocoap, giving strong 
primal-dual convergence rates with outer iterations scaling independently of  the number of machines. 
We extended the basic theory to allow for non-smooth loss functions, arbitrary strongly convex regularizers, and primal-dual convergence results. Our 
experimental results show significant speedups in terms of runtime over previous methods, including the 
original \cocoa framework as well as other state-of-the-art methods.

\bigskip
 
 \small
\bibliographystyle{gOMS}
\bibliography{literature}
\normalsize

\clearpage

\appendix

\section{Proofs}

\subsection{Proof of Lemma \ref{lem:quartz}}
 
Since $g$ is $1$-strongly convex, $g^*$ is $1$-smooth, and thus we can use \eqref{def:Lsmoothness:gstar} as follows
\begin{align*}
f(\alphav + \hv) &= \lambda g^*\left( \tfrac1{\lambda n} X \alphav + \tfrac1{\lambda n} X \hv \right) \\ 
&\overset{\eqref{def:Lsmoothness:gstar}}{\leq} \lambda \left( g^* \left( \tfrac1{\lambda n} X \alphav \right) + \< \nabla g^* \left( \tfrac1{\lambda n} X \alphav \right) , \tfrac1{\lambda n} X \hv > + \tfrac12 \left\| \tfrac1{\lambda n} X \hv  \right\|^2 \right) \\ 
&= f(\alphav) + \< \nabla f(\alphav), \hv > + \tfrac{1}{2 \lambda n^2} \hv^T X^T X \hv.\qedhere 
\end{align*}

\subsection{Proof of Lemma \ref{lem:step:sigma1}}

Indeed, 
\begin{align*}
D(\alphav+\nu \textstyle{\sum}_{k=1}^K \hk)
&
=D(\alphav+\nu \hv)
\\
&\overset{\eqref{eq:dual}}{=}
 \tfrac1n \textstyle{\sum}_{i=1}^n -\ell_i^*(- \alpha_i-\nu h_i)
 - \lambda g^* \left( \tfrac1{\lambda n} X (\alphav+\nu \hv) \right) 
\\
&\overset{\eqref{eq:fRdefinition}}{=}
 \tfrac1n \textstyle{\sum}_{i=1}^n -\ell_i^*(- \alpha_i-\nu h_i)
 - f(\alphav+\nu \hv)  
\\
&\overset{\eqref{eq:quartz}}{\geq} 
 \tfrac{1-\nu}n \textstyle{\sum}_{i=1}^n -\ell_i^*(- \alpha_i)
+ \nu \tfrac1n \textstyle{\sum}_{i=1}^n -\ell_i^*(- \alpha_i-h_i)
\\& \qquad 
 - f(\alphav) - \nu \< \nabla f(\alphav), \hv> -\nu^2 \tfrac{1}{2 \lambda n^2} \hv^T X^T X \hv
\\
&\overset{\eqref{eq:dual},\eqref{eq:sigmaPrimeSafeDefinition}}{\geq} 
(1-\nu) D(\alphav)  
- \nu   \textstyle{\sum}_{k=1}^K  R_k \left( \alphak + \hk \right)
\\& \qquad 
 -\nu \tfrac1K \textstyle{\sum}_{k=1}^K f(\alphav) - \nu \textstyle{\sum}_{k=1}^K \< \nabla f(\alphav), \hk> -\nu  \sigma' \tfrac{1}{2 \lambda n^2} \hv^T G \hv
\\
&\overset{\eqref{eq:subproblem:sigma1}}{=}
(1-\nu) D(\alphav)  
+ \nu \tfrac1K\Gks(\hk; \alphav),
\end{align*}
where the first inequality follows from Jensen and the last equality also follows from the block diagonal definition of $\nBG$ given in \eqref{eq:nBGDefinition}, i.e.
\begin{equation}
\label{eq:nBG:XTX:relation}
\hv^T \nBG \hv = \textstyle{\sum}_{k = 1}^ K  \hk \Xk^T \Xk \hk.
\end{equation}

\subsection{Proof of Lemma \ref{lem:sigmaPrimeNotBad}}
Considering $\hv\in\R^n$ with zeros in all coordinates except those that belong to the $k$-th block $\mathcal P_k$, we have $\hv^T X^TX \hv = \hv^T \nBG \hv$, and thus $\sigma' \geq \nu$.
Let $\vsub{\hv}{k, l}$ denote $\hk - \vsub{\hv}{l}$. Since $X^TX$ %
is a positive semi-definite matrix, for $k, l \in \{ 1, \dots, K \}, k \neq l$ we have 
\begin{equation}
\label{eq:hklhelper}
0 \leq \vsub{\hv}{k, l}^T X^TX \vsub{\hv}{k, l} = \hk^T X^TX \hk + \vsub{\hv}{l}^T X^TX \vsub{\hv}{l} - 2 \hk^T X^TX \vsub{\hv}{l}.
\end{equation}
By taking any $\hv \in \R^n$ for which $ \hv^T \nBG \hv \leq 1$, in view of \eqref{eq:sigmaPrimeSafeDefinition}, we get
\begin{align*}
\hv^T X^TX \hv &= \sum_{k=1}^K \sum_{l=1}^K \hv^T_{[k]} X^TX \hv^T_{[l]}
= \sum_{k=1}^K \hk^T X^TX \hk^T + \sum_{k\neq l} \hv^T_{[k]} X^TX \hv^T_{[l]}
\\
&\overset{\eqref{eq:hklhelper}}{\leq} \sum_{k=1}^K \hk^T X^TX \hk^T + \sum_{k \neq l} \frac12 \left[ \hk^T X^TX \hk + \vsub{\hv}{l}^T X^TX \vsub{\hv}{l} \right]
\\
&= K \sum_{k=1}^K \hk^T X^TX \hk = K \hv^T \nBG \hv \leq K.
\end{align*}
Therefore we can conclude that $\nu \hv^TX^TX\hv \leq \nu K$ for all $\hv$ included in the definition~\eqref{eq:sigmaPrimeSafeDefinition} of $\sigma'_{\min}$, proving the claim.

\subsection{Proofs of Theorems~\ref{thm:mainResult}~and~\ref{thm:mainResult:gcc}}
\label{sec:analysis}

Before we state the proofs of the main theorems, we will write and prove few crucial lemmas.

\begin{lemma}
\label{lem:basicLemma}
Let $\ell_i^*$ be strongly\footnote{%
Note that the case of weakly convex $\ell_i^*(.)$ is explicitly allowed here as well, as the Lemma holds for the case $\gamma = 0$.
} %
convex with convexity parameter
$\gamma \geq 0$ with respect to the norm $\|\cdot\|$, $\forall i\in[n]$.
Then for all iterations~$t$ of Algorithm~\ref{alg:cocoa} under Assumption~\ref{ass:localImprovement}, and any $s\in [0,1]$, it holds that
\begin{align}
\label{eq:lemma:dualDecrease_VS_dualityGap}
&\Exp[
\bD(\vc{\alphav}{t+1})
-
\bD(\vc{\alphav}{t})
 ]
\geq
\aggpar
(1-\Theta)
 \Big(
 s \gap(\vc{\alphav}{t})
-
\frac{\sigma' s^2}{2\lambda n^2}
\vc{R}{t}
 \Big), \vspace{-2mm}
\end{align}
where\vspace{-2mm}
\begin{align*}
\tagthis \label{eq:defOfR}
\vc{R}{t}&:=
-
\tfrac{ \lambda\gamma n (1-s)}{\sigma' s }
 \|\vc{\uv}{t}-\vc{\alphav}{t}\|^2 
+ 
\textstyle{\sum}_{k=1}^K   
  \| X \vsubset{  (\vc{\uv}{t}  - \vc{\alphav}{t} )}{k}\|^2,
\end{align*}
for $\vc{\uv}{t} \in\R^n$
with 
\begin{equation}
\label{eq:defintionOfUi}
-\vc{u_i}{t}
 \in \partial \ell_i(\wv(\vc{\alphav}{t})^T \xv_i).
\end{equation}
\end{lemma}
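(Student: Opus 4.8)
The plan is to chain three inequalities --- from the true dual increase down to the aggregated subproblem value, from the inexact local solution to the exact subproblem maximizer, and from that maximizer down to an explicit trial point --- and then to recognize the resulting term that is linear in $s$ as the duality gap.

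First I would invoke Lemma~\ref{lem:step:sigma1} with $\hv=\sum_{k}\hk^t$. Since $\vc{\alphav}{t+1}=\vc{\alphav}{t}+\aggpar\sum_k\hk^t$, this yields the deterministic bound
\[
\bD(\vc{\alphav}{t+1})-\bD(\vc{\alphav}{t})\ \geq\ \aggpar\Big(\textstyle\sum_{k=1}^K\Gks(\hk^t;\vc{\alphav}{t})-\bD(\vc{\alphav}{t})\Big).
\]
Taking expectations and summing Assumption~\ref{ass:localImprovement} over the blocks $k$ (using $\sum_k\Gks(\mathbf{0};\vc{\alphav}{t})=\bD(\vc{\alphav}{t})$) replaces the approximate value by $(1-\Theta)$ times the gap between the \emph{optimal} aggregated subproblem value $\sum_k\Gks(\hk^\star;\vc{\alphav}{t})$ and $\bD(\vc{\alphav}{t})$; this is exactly where the prefactor $\aggpar(1-\Theta)$ in the claim originates.

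Next, because each $\hk^\star$ maximizes its subproblem, I may lower bound $\sum_k\Gks(\hk^\star;\vc{\alphav}{t})$ by evaluating at \emph{any} feasible trial point. The decisive choice is $\hk:=s\,\vsubset{(\vc{\uv}{t}-\vc{\alphav}{t})}{k}$ with $s\in[0,1]$ and $\vc{\uv}{t}$ as in \eqref{eq:defintionOfUi}. Substituting into \eqref{eq:subproblem:sigma1}, the quadratic penalty contributes $-\tfrac{\sigma's^2}{2\lambda n^2}\sum_k\|X\vsubset{(\vc{\uv}{t}-\vc{\alphav}{t})}{k}\|^2$, while I would bound the conjugate part $\sum_k R_k(\vsubset{\alphav}{k}+\hk)=\tfrac1n\sum_i\ell_i^*(-(1-s)\alpha_i^t-s u_i^t)$ by $\gamma$-strong convexity of $\ell_i^*$ along the segment from $-\alpha_i^t$ to $-u_i^t$, producing the term $+\tfrac{\gamma s(1-s)}{2n}\|\vc{\uv}{t}-\vc{\alphav}{t}\|^2$. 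Collecting these two quadratics reproduces precisely $-\tfrac{\sigma's^2}{2\lambda n^2}\vc{R}{t}$ for $\vc{R}{t}$ as defined in \eqref{eq:defOfR}.

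The crux is to identify the remaining term linear in $s$ with $\gap(\vc{\alphav}{t})$. For this I would substitute $\nabla f(\vc{\alphav}{t})=\tfrac1n X^T\wv(\vc{\alphav}{t})$ and then use two Fenchel--Young equalities: the primal--dual map $\wv(\vc{\alphav}{t})=\nabla g^*(\vv)$ gives $g(\wv)+g^*(\vv)=\langle\wv,\vv\rangle=\tfrac1{\lambda n}\sum_i\alpha_i^t\,\xv_i^T\wv$, and the inclusion $-u_i^t\in\partial\ell_i(\xv_i^T\wv)$ gives $\ell_i(\xv_i^T\wv)+\ell_i^*(-u_i^t)=-u_i^t\,\xv_i^T\wv$. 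After these substitutions the coefficient of $s$ collapses to $\tfrac1n\sum_i[\ell_i(\xv_i^T\wv)+\ell_i^*(-\alpha_i^t)+\alpha_i^t\,\xv_i^T\wv]$, which is exactly $\gap(\vc{\alphav}{t})$ by \eqref{eq:gap}. I expect this algebraic cancellation --- turning the first-order term into the gap via the two conjugacy equalities --- to be the most delicate step; I would also record the trivial edge cases $s=0$ (both sides vanish) and $\gamma=0$ (the first term of $\vc{R}{t}$ drops out, consistent with allowing weakly convex $\ell_i^*$). Combining the three chained inequalities then gives \eqref{eq:lemma:dualDecrease_VS_dualityGap}.
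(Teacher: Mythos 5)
Your proposal is correct and follows essentially the same route as the paper's proof: Lemma~\ref{lem:step:sigma1} applied to the aggregated update, Assumption~\ref{ass:localImprovement} summed over blocks to introduce the factor $\aggpar(1-\Theta)$, the trial point $s\,\vsubset{(\vc{\uv}{t}-\vc{\alphav}{t})}{k}$ combined with $\gamma$-strong convexity of $\ell_i^*$, and the two Fenchel--Young equalities (for $g$ and for $\ell_i$) to collapse the linear-in-$s$ term into $\gap(\vc{\alphav}{t})$ via \eqref{eq:asdfjiwjfeojawfa}. The bookkeeping that merges the two quadratics into $-\tfrac{\sigma's^2}{2\lambda n^2}\vc{R}{t}$ also matches the paper exactly.
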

\begin{proof}
For sake of notation, 
we will write 
$\alphav$ instead of $\vc{\alphav}{t}$,
$\wv$ instead of $\wv(\vc{\alphav}{t})$
and
$\uv$ instead of $\vc{\uv}{t}$.

Now, let us estimate the expected change of the dual objective. 
Using the definition of the dual update $\vc{\alphav}{t+1} := \vc{\alphav}{t} + \aggpar \, \sum_k \hk$ resulting in Algorithm~\ref{alg:cocoaPractical}, we have
\begin{align*}
&\Exp\big[\bD(\vc{\alphav}{t})
 - \bD(\vc{\alphav}{t+1})\big]
 =
\Exp\Big[\bD(\alphav)
 - \bD(\alphav +
  \aggpar \sum_{k=1}^K
  \hk)\Big]
\\ %
&\overset{\eqref{eq:asfdjalkfjlsaflasdfa}}{\leq}
\Exp\Big[\bD(\alphav)
-(1-\aggpar)\bD(\alphav)
-\aggpar 
 \sum_{k=1}^K 
 \Ggk (\hk^t; \alphav)
\Big]\\
&=
\aggpar
\Exp\Big[
 \bD(\alphav)
- 
 \sum_{k=1}^K 
 \Ggk (\hk^t; \alphav)
\Big]
\\
&
=
\aggpar
\Exp\Big[
 \bD(\alphav)
 -
 \sum_{k=1}^K 
 \Ggk(\hk^\star; \alphav)
 +
 \sum_{k=1}^K 
 \Ggk(\hk^\star; \alphav)
- 
 \sum_{k=1}^K 
 \Ggk (\hk^t; \alphav)
\Big]
\\ 
&\overset{\eqref{eq:localQualityOfImprovement}}{\leq}
\aggpar
\bigg(
 \bD(\alphav)
 -
 \sum_{k=1}^K 
 \Ggk(\hk^\star; \alphav)
 +
 \Theta
 \Big(
 \sum_{k=1}^K  
 \Ggk(\hk^\star; \alphav)
 -
\underbrace{  \sum_{k=1}^K  
 \Ggk({\bf 0}; \alphav)
 }_{\bD(\alphav)}
 \Big)
\bigg)
\\
&=
\aggpar
(1-\Theta)
\Big(
\underbrace{
 \bD(\alphav)
 -
 \sum_{k=1}^K 
 \Ggk(\hk^\star; \alphav)
 }_{C}
\Big).
\tagthis
\label{eq:Afasfwafewaef}
\end{align*} 
Now, let us upper bound 
the $C$ term 
(we will denote by
$\hv^\star 
 = \sum_{k=1}^K \hk^\star$):
\begin{align*}
C&
\overset{\eqref{eq:dual},
\eqref{eq:subproblem:sigma1}}{=}
   \frac1n 
 \sum_{i =1}^n 
 \left(
\ell_i^*(-\alpha_i - h^*_i)
-\ell_i^*(- \alpha_i)
\right)
 +\< \nabla f(\alphav), \hv >
 + \sum_{k=1}^K 
\frac\lambda2
 \sigma'   \Big\|\frac1{\lambda n} X \hk^\star\Big\|^2
\\
&\leq  
   \frac1n 
 \sum_{i =1}^n 
 \left(
\ell_i^*(-\alpha_i - s (u_i - \alpha_i))
-\ell_i^*(- \alpha_i)
\right)
 +
\< \nabla f(\alphav), s (\uv  - \alphav ) > 
 \\ &\qquad + \sum_{k=1}^K 
\frac\lambda2
 \sigma'   \Big\|\frac1{\lambda n}X\vsubset{s (\uv  - \alphav )}{k}\Big\|^2
\\
&\overset{\mbox{Strong conv.}}{\leq} 
   \frac1n 
 \sum_{i =1}^n 
 \left(
s \ell_i^*(-u_i )
+
(1-s)
\ell_i^*(-\alpha_i )
-
\frac{\gamma}{2}
(1-s)s (u_i -\alpha_i)^2
-\ell_i^*(- \alpha_i)
\right)
\\&\quad\quad\quad\quad\quad   +
\< \nabla f(\alphav), s (\uv  - \alphav ) > 
 + \sum_{k=1}^K 
\frac\lambda2
 \sigma'   \Big\|\frac1{\lambda n} X \vsubset{s (\uv  - \alphav )}{k}\Big\|^2 
\\
&=
   \frac1n 
 \sum_{i =1}^n 
 \left(
s \ell_i^*(-u_i )
  -s 
\ell_i^*(-\alpha_i )
-
\frac{\gamma}{2}
(1-s)s (u_i -\alpha_i)^2
\right)
\\&\qquad 
+\< \nabla f(\alphav), s (\uv  - \alphav ) > 
  + \sum_{k=1}^K 
\frac\lambda2
 \sigma'   \Big\|\frac1{\lambda n} X \vsubset{s (\uv  - \alphav )}{k}\Big\|^2.  
\end{align*}
The convex conjugate maximal property implies that
\begin{equation}
\label{eq:adjwofcewa}
\ell_i^*(-u_i)
\overset{\eqref{eq:defintionOfUi}}{=} -u_i \wv(\alphav)^T \xv_i
  -\ell_i(\wv(\alphav)^T \xv_i).
\end{equation}
Moreover, from the definition of the primal and dual optimization problems \eqref{eq:primal},
\eqref{eq:dual}, we can write the duality gap as
\begin{align}
\notag
\gap(\alphav) &:= \bP(\wv(\alphav))-\bD(\alphav)
\\
\notag
&\overset{
\eqref{eq:primal},
\eqref{eq:dual}
}{=}
 \tfrac1{n} 
 \sum_{i=1}^n
 \left(
  \ell_i( \xv_i^T \wv(\alphav)) 
 +  \ell_i^*(- \alpha_i)
  \right)
    +\lambda g(\wv(\alphav)) + \lambda g^*(\vv(\alphav))   
\\
\notag
&= \tfrac1{n} 
 \sum_{i=1}^n
 \left(
  \ell_i( \xv_i^T \wv(\alphav)) 
 +  \ell_i^*(- \alpha_i)
  \right)
    +\lambda g(\nabla g^*(\vv(\alphav))) + \lambda g^*(\vv(\alphav))  
\\
\notag
&= \tfrac1{n} 
 \sum_{i=1}^n
 \left(
  \ell_i( \xv_i^T \wv(\alphav)) 
 +  \ell_i^*(- \alpha_i)
  \right)
    +\lambda \vv(\alphav)^T \wv(\alphav)
\\
\label{eq:asdfjiwjfeojawfa}
&= \tfrac1{n} 
 \sum_{i=1}^n
 \left(
  \ell_i( \xv_i^T \wv(\alphav)) 
 +  \ell_i^*(- \alpha_i)
 +  \wv(\alphav)^T\xv_i \alpha_i
  \right).    
\end{align}
Hence,
\begin{align*}
C
&\overset{
\eqref{eq:adjwofcewa}}
{\leq}
  \frac1n 
 \sum_{i =1}^n 
 \left( 
-s u_i \wv(\alphav)^T \xv_i
  -s\ell_i(\wv(\alphav)^T \xv_i)
  -s 
\ell_i^*(-\alpha_i )
\underbrace{-s \wv(\alphav)^T \xv_i \alpha_i
+s \wv(\alphav)^T \xv_i \alpha_i
}_{0}
\right)
\\&\qquad 
+ \frac1n \sum_{i=1}^n \left(-
\frac{\gamma}{2}
(1-s)s (u_i -\alpha_i)^2
\right)
 +\< \nabla f(\alphav), s (\uv  - \alphav ) > 
 + \sum_{k=1}^K 
\frac\lambda2
 \sigma'   \Big\|\frac1{\lambda n} X \vsubset{s (\uv  - \alphav )}{k}\Big\|^2 
\\
&=
  \frac1n 
 \sum_{i =1}^n 
 \left( 
  -s\ell_i(\wv(\alphav)^T \xv_i)
  -s\ell_i^*(-\alpha_i )
  -s \wv(\alphav)^T \xv_i \alpha_i
\right)
\\ &\qquad +
  \frac1n 
 \sum_{i =1}^n 
 \left(  s \wv(\alphav)^T \xv_i
( \alpha_i-u_i )
 -
\frac{\gamma}{2}
(1-s)s (u_i -\alpha_i)^2
\right)
\\&\qquad  +\frac1n  
\wv(\alphav)^T X  s (\uv  - \alphav )
 + \sum_{k=1}^K 
\frac\lambda2
 \sigma'   \Big\|\frac1{\lambda n} X \vsubset{s (\uv  - \alphav )}{k}\Big\|^2  
\\
&\overset{\eqref{eq:asdfjiwjfeojawfa}}{=}
 -s \gap(\alphav)
-
\frac{\gamma}{2}
(1-s)s 
  \frac1n 
 \sum_{i =1}^n 
 \|\uv-\alphav\|^2 
 + 
\frac{\sigma' s^2}{2\lambda n^2}
\sum_{k=1}^K   
  \| X \vsubset{  (\uv  - \alphav )}{k}\|^2.
  \tagthis 
  \label{eq:asdfafdas}
 \end{align*}
Now, the claimed improvement bound
\eqref{eq:lemma:dualDecrease_VS_dualityGap}
follows
by plugging 
\eqref{eq:asdfafdas}
into \eqref{eq:Afasfwafewaef}.
\end{proof}

\begin{lemma}
\label{lemma:BoundOnR}
If $\ell_i$ are $L$-Lipschitz 
continuous for all $i\in [n]$, then\vspace{-3mm}
\begin{equation}
\label{eq:asfjoewjofa}
\forall t: 
\vc{R}{t}
\leq 4L^2 
\underbrace{\sum _{k=1}^K 
\sigma_k  |\mathcal{P}_k|}_{=: \sigma}, \vspace{-2mm} %
\end{equation}
where\vspace{-1mm}
\begin{equation}
\label{eq:definitionOfSigmaK}
\sigma_k \eqdef
 \max_{\vsubset{\alphav}{k} \in \R^n}
 \frac{\|\Xk \vsubset{\alphav}{k}\|^2}{
 \|\vsubset{\alphav}{k}\|^2}.
\end{equation}
\end{lemma}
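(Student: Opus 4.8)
The plan is to exploit two consequences of $L$-Lipschitz continuity of each $\ell_i$. First, in this (possibly non-smooth) regime we may take the dual strong convexity constant $\gamma = 0$ -- the conjugates $\ell_i^*$ need only be weakly convex, which is exactly the case explicitly permitted in Lemma~\ref{lem:basicLemma} -- and this immediately annihilates the first term in the definition \eqref{eq:defOfR} of $\vc{R}{t}$, leaving $\vc{R}{t} = \textstyle\sum_{k=1}^K \| \Xk \vsubset{(\vc{\uv}{t} - \vc{\alphav}{t})}{k} \|^2$, where I have used $X\vsubset{\hv}{k} = \Xk \vsubset{\hv}{k}$ since $\vsubset{\hv}{k}$ is supported on $\mathcal{P}_k$. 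Second, I would show that both $\vc{\alphav}{t}$ and $\vc{\uv}{t}$ are bounded coordinatewise by $L$: $L$-Lipschitzness forces $\dom \ell_i^* \subseteq [-L, L]$, so finiteness of $\bD(\vc{\alphav}{t})$ yields $|\vc{\alpha_i}{t}| \le L$, while $-\vc{u_i}{t} \in \partial \ell_i(\cdot)$ by \eqref{eq:defintionOfUi} together with the fact that subgradients of an $L$-Lipschitz function have magnitude at most $L$ gives $|\vc{u_i}{t}| \le L$. Combining the two via the triangle inequality gives the uniform coordinate bound $|\vc{u_i}{t} - \vc{\alpha_i}{t}| \le 2L$ for every $i$.

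With these two ingredients in hand, the remaining estimate is routine. I would apply the definition \eqref{eq:definitionOfSigmaK} of $\sigma_k$ (the top eigenvalue of the block of $\Xk^T \Xk$ indexed by $\mathcal{P}_k$) to bound $\| \Xk \vsubset{(\vc{\uv}{t} - \vc{\alphav}{t})}{k} \|^2 \le \sigma_k \, \| \vsubset{(\vc{\uv}{t} - \vc{\alphav}{t})}{k} \|^2$, and then use the coordinate bound to get $\| \vsubset{(\vc{\uv}{t} - \vc{\alphav}{t})}{k} \|^2 = \sum_{i \in \mathcal{P}_k} (\vc{u_i}{t} - \vc{\alpha_i}{t})^2 \le 4L^2 |\mathcal{P}_k|$. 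Summing over $k$ then gives $\vc{R}{t} \le \sum_{k=1}^K \sigma_k \cdot 4L^2 |\mathcal{P}_k| = 4L^2 \sigma$, which is precisely \eqref{eq:asfjoewjofa}.

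The step I would be most careful with -- and the only real subtlety -- is justifying $|\vc{\alpha_i}{t}| \le L$ rigorously, i.e.\ that the iterates never leave the effective domain of the dual objective. This follows because the local objective $\Gk$ contains the term $-R_k(\alphak + \hk)$, which equals $-\infty$ as soon as some coordinate exits $\dom \ell_i^*$; hence any local update of finite objective value keeps $\alphak + \hk$ in the (convex) domain, and convexity of $[-L,L]$ together with $\aggpar \in (0,1]$ propagates this to $\vc{\alphav}{t+1} = \vc{\alphav}{t} + \aggpar\sum_k \hk$. Everything else -- the eigenvalue inequality and the triangle-inequality bound on $|\vc{u_i}{t} - \vc{\alpha_i}{t}|$ -- is elementary, so the crux of the argument is really the translation of primal Lipschitz continuity into these two boundedness facts.
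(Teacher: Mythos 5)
Your proof is correct and follows essentially the same route as the paper's: set $\gamma=0$ to kill the first term of $\vc{R}{t}$, bound $\|X\vsubset{(\vc{\uv}{t}-\vc{\alphav}{t})}{k}\|^2$ by $\sigma_k\|\vsubset{(\vc{\uv}{t}-\vc{\alphav}{t})}{k}\|^2$, and use $|\vc{u_i}{t}-\vc{\alpha_i}{t}|\le 2L$ to conclude. The only difference is that the paper outsources the coordinatewise $2L$ bound to Lemma~3 of Shalev-Shwartz and Zhang, whereas you spell out its content (Lipschitzness confines $\dom \ell_i^*$ and the subgradients to $[-L,L]$, and the iterates stay in the domain), which is a welcome but not substantively different elaboration.
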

\begin{proof}
For general convex functions, the strong convexity parameter is 
$\gamma=0$, and hence the definition of $\vc{R}{t}$ becomes
\begin{align*} 
\vc{R}{t}
\overset{\eqref{eq:defOfR}}{=}
  \sum _{k=1}^K   
  \| X \vsubset{  (\vc{\uv} {t} - \vc{\alphav}{t} )}{k}\|^2
\overset{\eqref{eq:definitionOfSigmaK}}{\leq}   
\sum _{k=1}^K 
\sigma_k  
  \|   \vsubset{  (\vc{\uv} {t} - \vc{\alphav}{t} )}{k}\|^2
\overset{\mbox{Lemma 3 in \cite{ShalevShwartz:2013wl}}}{\leq}   
\sum _{k=1}^K 
\sigma_k  |\mathcal{P}_k| 4L^2. 
\end{align*}
\end{proof}

\subsubsection{Proof of Theorem \ref{thm:convergenceNonsmooth}}

At first let us estimate expected change of dual feasibility. By using the main Lemma \ref{lem:basicLemma}, we have
\begin{align*} 
 &\Exp[\bD(\alphav^\star)-\bD(\vc{\alphav}{t+1})]
 =
\Exp[\bD(\alphav^\star)-\bD(\vc{\alphav}{t+1})+\bD(\vc{\alphav}{t})-\bD(\vc{\alphav}{t})]
\\
&
\overset{\eqref{eq:lemma:dualDecrease_VS_dualityGap}
}{=}
\bD(\alphav^\star)-\bD(\vc{\alphav}{t})
-\aggpar
(1-\Theta)  
 s \gap(\vc{\alphav}{t})
+
\aggpar
(1-\Theta)
\tfrac{\sigma'}{2\lambda }
(\frac s{  n})^2
\vc{R}{t}
\\
&
\overset{\eqref{eq:gap}
}{=}
\bD(\alphav^\star)-\bD(\vc{\alphav}{t})
-\aggpar
(1-\Theta)
   s  (\bP(\wv(\vc{\alphav}{t}))-\bD(\vc{\alphav}{t}))
+
\aggpar
(1-\Theta)  \tfrac{\sigma'}{2\lambda }
(\frac s{  n})^2
\vc{R}{t} 
\\
&\leq
\bD(\alphav^\star)-\bD(\vc{\alphav}{t})
-\aggpar
(1-\Theta)
 s  (\bD(\alphav^\star )-\bD(\vc{\alphav}{t}) )
+
\aggpar
(1-\Theta) 
\tfrac{\sigma'}{2\lambda }
(\frac s{  n})^2
\vc{R}{t} \\
&
\overset{\eqref{eq:asfjoewjofa}}{\leq} 
\left( 
 1-\aggpar
(1-\Theta)
   s
\right) 
   (\bD(\alphav^\star )-\bD(\vc{\alphav}{t}))
+
\aggpar
(1-\Theta) 
\tfrac{\sigma'}{2\lambda }
(\frac s{  n})^2
4L^2  \sigma.
\tagthis 
\label{eq:asoifejwofa}
\end{align*}
 Using
\eqref{eq:asoifejwofa}
recursively we have 
 \begin{align*} 
 &\Exp[\bD(\alphav^\star)-\bD(\vc{\alphav}{t})]
 =
\left( 
 1-\aggpar
(1-\Theta)
   s
\right)^t 
   (\bD(\alphav^\star )-\bD(\vc{\alphav}{0}))
\\&\qquad \qquad\qquad \qquad +
\aggpar
(1-\Theta) 
\tfrac{\sigma'}{2\lambda }
(\frac s{  n})^2
4L^2  \sigma 
\sum_{j=0}^{t-1}
\left( 
 1-\aggpar
(1-\Theta)
   s
\right)^j
\\
&=
\left( 
 1-\aggpar
(1-\Theta)
   s
\right)^t 
   (\bD(\alphav^\star )-\bD(\vc{\alphav}{0}))
+
\aggpar
(1-\Theta) 
\tfrac{\sigma'}{2\lambda }
(\frac s{  n})^2
4L^2  \sigma 
\frac{1-\left( 
 1-\aggpar
(1-\Theta)
   s
\right)^t}
     { 
  \aggpar
(1-\Theta)
   s }
\\
&\leq
\left( 
 1-\aggpar
(1-\Theta)
   s
\right)^t 
   (\bD(\alphav^\star )-\bD(\vc{\alphav}{0}))
+
 s
\frac{4L^2  \sigma   \sigma'}{2\lambda n^2}. 
\tagthis
\label{eq:asfwefcaw}  
 \end{align*}
The choice of 
$s:=1$ and $t= t_0:= \max\{0,\lceil  
\frac1{\aggpar (1-\Theta)}
\log(
 2\lambda n^2 (\bD(\alphav^\star )-\bD(\vc{\alphav}{0}))
  / (4 L^2 \sigma \sigma')
  )
 \rceil\}$
will lead to 
\begin{align}\label{eq:induction_step1}
  \Exp[\bD(\alphav^\star)-\bD(\vc{\alphav}{t})]
 &\leq  
\left( 
 1-\aggpar
(1-\Theta)  
\right)^{t_0}
  (\bD(\alphav^\star )-\bD(\vc{\alphav}{0}))
+ 
\frac{4L^2  \sigma   \sigma'}{2\lambda n^2}
\notag 
\\&\leq 
\frac{4L^2  \sigma   \sigma'}{2\lambda n^2}
+
\frac{4L^2  \sigma   \sigma'}{2\lambda n^2}
=
\frac{4L^2  \sigma   \sigma'}{\lambda n^2}.
\end{align} 
Now, we are going to show that 
\begin{align}
\label{eq:expectationOfDualFeasibility}
\forall t\geq t_0 :  \Exp[\bD(\alphav^\star )-\bD(\vc{\alphav}{t})]
&\leq 
\frac{4L^2  \sigma   \sigma'}{\lambda n^2( 1+ \frac12  \aggpar (1-\Theta)  (t-t_0))}.
\end{align}
Clearly, \eqref{eq:induction_step1} implies that \eqref{eq:expectationOfDualFeasibility} holds for $t=t_0$.
Now imagine that it holds for any $t\geq t_0$ then we show that it also has to hold for $t+1$. 
Indeed, using 
\begin{equation}
\label{eq:asdfjoawjdfas}
s=
\frac{1}
 {1+ \frac12 \aggpar (1-\Theta) (t-t_0)} \in [0,1]
\end{equation} 
  we obtain
\begin{align*}
&\Exp[
\bD(\alphav^\star )-\bD(\vc{\alphav}{t+1})]
\overset{\eqref{eq:asoifejwofa}
}{\leq}
\left( 
 1-\aggpar
(1-\Theta)
   s
\right) 
   (\bD(\alphav^\star )-\bD(\vc{\alphav}{t}))
+
\aggpar
(1-\Theta) 
\tfrac{\sigma'}{2\lambda }
(\frac s{  n})^2
4L^2  \sigma
\\
&\overset{\eqref{eq:expectationOfDualFeasibility}
}{\leq}
\left( 
 1-\aggpar
(1-\Theta)
   s
\right) 
   \frac{4L^2  \sigma   \sigma'}{\lambda n^2( 1+ \frac12  \aggpar (1-\Theta)  (t-t_0))}
+
\aggpar
(1-\Theta) 
\tfrac{\sigma'}{2\lambda }
(\frac s{  n})^2
4L^2  \sigma
\\
&
\overset{\eqref{eq:asdfjoawjdfas}}{=}
\frac{4L^2  \sigma   \sigma'}
     {\lambda n^2}
\left( 
\frac{
1+ \frac12 \aggpar (1-\Theta) (t-t_0)
-\aggpar
(1-\Theta)
+
\aggpar
(1-\Theta) 
\tfrac{1}{2}
}
 {(1+ \frac12 \aggpar (1-\Theta) (t-t_0))^2}
\right)
\\
&=
\frac{4L^2  \sigma   \sigma'}
     {\lambda n^2}
\underbrace{\left( 
\frac{
1+ \frac12 \aggpar (1-\Theta) (t-t_0)
-\frac12 \aggpar
(1-\Theta)
}
 {(1+ \frac12 \aggpar (1-\Theta) (t-t_0))^2}
\right)}_{D}.
\end{align*}
Now, we will upperbound $D$ as follows
\begin{align*}
D&=
\tfrac1
{1+ \frac12 \aggpar (1-\Theta) (t+1-t_0)}
\underbrace{
\tfrac{
(1+ \frac12 \aggpar (1-\Theta) (t+1-t_0))
(1+ \frac12 \aggpar (1-\Theta) (t-1-t_0))
}
 {(1+ \frac12 \aggpar (1-\Theta) (t-t_0))^2}}_{\leq 1}
 \\
&\leq  
\tfrac1
{1+ \frac12 \aggpar (1-\Theta) (t+1-t_0)},
\end{align*}
where in the last inequality we have used the fact that geometric mean
 is less or equal to arithmetic mean. 
 
If $\overline \alphav$ is defined as \eqref{eq:averageOfAlphaDefinition}
then we obtain that
\begin{align*}
\Exp[\gap(\overline\alphav)] &=  
 \Exp\left[\gap\left(\sum_{t=T_0}^{T-1} \tfrac1{T-T_0} \vc{\alphav}{t}\right)\right]
 \leq
  \tfrac1{T-T_0} \Exp\left[\sum_{t=T_0}^{T-1} \gap\left( \vc{\alphav}{t}\right)\right]
\\
&
\overset{
\eqref{eq:lemma:dualDecrease_VS_dualityGap}
,\eqref{eq:asfjoewjofa}
}{\leq}
  \tfrac1{T-T_0} \Exp\left[\sum_{t=T_0}^{T-1} 
\left(
\frac1{\aggpar
(1-\Theta)
 s}
(
\bD(\vc{\alphav}{t+1})
-
\bD(\vc{\alphav}{t})
 )
 +
\tfrac{4L^2 \sigma \sigma' s}{2\lambda n^2 }
\right)  
  \right]
\\  
 &=
\frac1{\aggpar
(1-\Theta)
 s}
   \frac1{T-T_0} 
   \Exp\left[
\bD(\vc{\alphav}{T})
-
\bD(\vc{\alphav}{T_0})
  \right] 
+\tfrac{4L^2 \sigma \sigma' s}{2\lambda n^2 }  
\\  
 &\leq
\frac1{\aggpar
(1-\Theta)
 s}
   \frac1{T-T_0} 
   \Exp\left[
\bD(\alphav^\star)
-
\bD(\vc{\alphav}{T_0})
  \right] 
+\tfrac{4L^2 \sigma \sigma' s}{2\lambda n^2 }.  
\tagthis \label{eq:askjfdsanlfas}
  \end{align*}
Now, if $T\geq \lceil
\frac1{\aggpar (1-\Theta)}\rceil+T_0$ such that $T_0\geq t_0$
we obtain
\begin{align*}
\Exp[\gap(\overline\alphav)] 
&\overset{\eqref{eq:askjfdsanlfas}
,\eqref{eq:expectationOfDualFeasibility}
}{\leq}
\frac1{\aggpar
(1-\Theta)
 s}
   \frac1{T-T_0} 
\left(
\frac{4L^2  \sigma   \sigma'}{\lambda n^2( 1+ \frac12  \aggpar (1-\Theta)  (T_0-t_0))}
\right)
+\frac{4L^2 \sigma \sigma' s}{2\lambda n^2 }
\\
&=
\frac{
4L^2  \sigma   \sigma'}{\lambda n^2}
\left(
\frac1{\aggpar
(1-\Theta)
 s}
   \frac1{T-T_0} 
\frac{1}{ 1+ \frac12  \aggpar (1-\Theta)  (T_0-t_0)}
+\frac{  s}{2 }
\right). 
\tagthis
\label{eq:fawefwafewa}
\end{align*}
Choosing 
\begin{equation}
\label{eq:afskoijewofaw}
s=\frac{1}{(T-T_0) \aggpar (1-\Theta)} \in [0,1]
\end{equation}
gives us
\begin{align*}
\Exp[\gap(\overline\alphav)] 
&
\overset{\eqref{eq:fawefwafewa},
\eqref{eq:afskoijewofaw}}{\leq}
\frac{
4L^2  \sigma   \sigma'}{\lambda n^2}
\left(
\frac{1}{ 1+ \frac12  \aggpar (1-\Theta)  (T_0-t_0)}
+\frac{1}{(T-T_0) \aggpar (1-\Theta)} \frac{  1}{2 }
\right). \tagthis
\label{eq:afsjweofjwafea}
\end{align*}
To have right hand side of
\eqref{eq:afsjweofjwafea}
smaller then 
$\epsilon_\gap$
it is sufficient to choose
$T_0$ and $T$ such that
\begin{eqnarray}
\label{eq:sfadwafeewafa}
\frac{4L^2  \sigma   \sigma'}{\lambda n^2}
\left(
\frac{1}{ 1+ \frac12  \aggpar (1-\Theta)  (T_0-t_0)}
\right)
&\leq & \frac12 \epsilon_\gap,
\\
\label{eq:sfadwafeewafa2}
\frac{4L^2  \sigma   \sigma'}{\lambda n^2}
\left(
\frac{1}{(T-T_0) \aggpar (1-\Theta)} \frac{  1}{2 }
\right)
&\leq & \frac12 \epsilon_\gap.
\end{eqnarray}
Hence of 
if
\begin{eqnarray*}
t_0+
\frac{2}{ \aggpar (1-\Theta) }
\left(
\frac
{8L^2  \sigma   \sigma'}
{\lambda n^2 \epsilon_\gap}
-1
\right)
&\leq & 
 T_0 
,
\\
T_0
+
\frac
{4L^2  \sigma   \sigma'}
{\lambda n^2 \epsilon_\gap
\aggpar (1-\Theta)}
&\leq &  T,  
\end{eqnarray*}
then 
\eqref{eq:sfadwafeewafa}
and
\eqref{eq:sfadwafeewafa2}
are satisfied.

\subsubsection{Proof of Theorem \ref{thm:convergenceSmoothCase}
}
If the function $\ell_i(.)$ is $(1/\gamma)$-smooth then $\ell_i^*(.)$ is $\gamma$-strongly convex with respect to the
$\|\cdot\|$ norm.
From \eqref{eq:defOfR}
we have
\begin{align*}
\vc{R}{t}&
\overset{\eqref{eq:defOfR}}{=}
-
\tfrac{ \lambda\gamma n (1-s)}{\sigma' s }
   \|\vc{\uv}{t}-\vc{\alphav}{t}\|^2 
+ 
 {\sum}_{k=1}^K   
  \| X \vsubset{  (\vc{\uv}{t}  - \vc{\alphav}{t} )}{k}\|^2
\\%
&
\overset{\eqref{eq:definitionOfSigmaK}}{\leq}  
-
\tfrac{ \lambda\gamma n (1-s)}{\sigma' s }
   \|\vc{\uv}{t}-\vc{\alphav}{t}\|^2 
+ 
 {\sum}_{k=1}^K   
 \sigma_k
  \|  \vsubset{   (\vc{\uv}{t}  - \vc{\alphav}{t}  )}{k}\|^2
\\
&\leq
-
\tfrac{ \lambda\gamma n (1-s)}{\sigma' s }
   \|\vc{\uv}{t}-\vc{\alphav}{t}\|^2 
+
\sigma_{\max} 
 {\sum}_{k=1}^K   
  \|  \vsubset{ (  \vc{\uv}{t}  - \vc{\alphav}{t}  )}{k}\|^2
\\
&=
\left(
-
\tfrac{ \lambda\gamma n (1-s)}{\sigma' s }
+\sigma_{\max}
\right)
   \|\vc{\uv}{t}-\vc{\alphav}{t}\|^2.\tagthis
   \label{eq:afjfocjwfcea} 
\end{align*}
 If we plug 
 \begin{equation}
 s=
  \frac{ \lambda\gamma n }
      {\lambda\gamma n+
\sigma_{\max} \sigma'}\in [0,1]
\label{eq:fajoejfojew}
\end{equation} 
into
\eqref{eq:afjfocjwfcea}
we obtain that
$\forall t: \vc{R}{t}\leq 0$.
Putting the  same $s$
into
\eqref{eq:lemma:dualDecrease_VS_dualityGap}
will give us
\begin{align*}
\Exp[
\bD(\vc{\alphav}{t+1})
-
\bD(\vc{\alphav}{t})
 ]
&\overset{\eqref{eq:lemma:dualDecrease_VS_dualityGap}
,\eqref{eq:fajoejfojew}}{\geq}
\aggpar
(1-\Theta)
 \frac{ \lambda\gamma n }
      {\lambda\gamma n+
\sigma_{\max} \sigma'} \gap(\vc{\alphav}{t})
\notag
\\&\geq
\aggpar
(1-\Theta)
 \frac{ \lambda\gamma n }
      {\lambda\gamma n+
\sigma_{\max} \sigma'} \bD(\alphav^\star)-\bD(\vc{\alphav}{t}).
\tagthis
\label{eq:fasfawfwaf}
\end{align*}
Using the fact that
$\Exp[\bD(\vc{\alphav}{t+1})-\bD(\vc{\alphav}{t})]
=\Exp[\bD(\vc{\alphav}{t+1})-\bD(\alphav^\star)]
+\bD(\alphav^\star)-\bD(\vc{\alphav}{t})
$
we have 
\begin{align*}
\Exp[\bD(\vc{\alphav}{t+1})-\bD(\alphav^\star)]
+\bD(\alphav^\star)-\bD(\vc{\alphav}{t})
\overset{
\eqref{eq:fasfawfwaf}}
{
\geq
}
\aggpar
(1-\Theta)
 \frac{ \lambda\gamma n }
      {\lambda\gamma n+
\sigma_{\max} \sigma'} \bD(\alphav^\star)-\bD(\vc{\alphav}{t})
\end{align*}
which is equivalent with
\begin{align*}
\Exp[\bD(\alphav^\star)-\bD(\vc{\alphav}{t+1})]
\leq 
\left(
1-\aggpar
(1-\Theta)
 \frac{ \lambda\gamma n }
      {\lambda\gamma n+
\sigma_{\max} \sigma'}\right)
\bD(\alphav^\star)-\bD(\vc{\alphav}{t}).
\tagthis \label{eq:affpja}
\end{align*}
Therefore if we denote by $\vc{\epsilon_\bD}{t} = \bD(\alphav^\star)-\bD(\vc{\alphav}{t})$
we have that
\begin{align*}
 \Exp[\vc{\epsilon_\bD}{t}] 
 &\overset{\eqref{eq:affpja}}{\leq}   \left(
 1-\aggpar
(1-\Theta)
 \frac{ \lambda\gamma n }
      {\lambda\gamma n+
\sigma_{\max} \sigma'}
   \right)^t \vc{\epsilon_\bD}{0}
\leq 
\left(
 1-\aggpar
(1-\Theta)
 \frac{ \lambda\gamma n }
      {\lambda\gamma n+
\sigma_{\max} \sigma'}
   \right)^t
\\&\leq \exp\left(-t \aggpar
(1-\Theta)
 \frac{ \lambda\gamma n }
      {\lambda\gamma n+
\sigma_{\max} \sigma'}
     \right).
\end{align*}
The right hand side will be smaller than some $\epsilon_\bD$ if 
$$
 t   
    \geq 
\frac{1}
   {\aggpar
(1-\Theta)}
\frac
{\lambda\gamma n+
\sigma_{\max} \sigma'}
{ \lambda\gamma n }
    \log \frac1{\epsilon_\bD}.
$$
Moreover, to bound the duality gap, we have
\begin{align*}
\aggpar
(1-\Theta)
 \frac{ \lambda\gamma n }
      {\lambda\gamma n+
\sigma_{\max} \sigma'} \gap(\vc{\alphav}{t})
&
\overset{
\eqref{eq:fasfawfwaf}
}{\leq}
\Exp[
\bD(\vc{\alphav}{t+1})
-
\bD(\vc{\alphav}{t})
 ]
\leq 
\Exp[
\bD(\alphav^\star)
-
\bD(\vc{\alphav}{t})
 ]. 
\end{align*}
Therefore  $\gap(\vc{\alphav}{t})\leq 
\frac1{
\aggpar
(1-\Theta)}
 \frac      {\lambda\gamma n+
\sigma_{\max} \sigma'} 
{ \lambda\gamma n }    \vc{\epsilon_\bD}{t}$.  
Hence if $\epsilon_\bD \leq 
\aggpar
(1-\Theta)
 \frac{ \lambda\gamma n }
      {\lambda\gamma n+
\sigma_{\max} \sigma'} 
 \epsilon_\gap $
then $\gap(\vc{\alphav}{t})\leq \epsilon_\gap$.
Therefore
after 
$$
 t   
    \geq 
\frac{1}
   {\aggpar
(1-\Theta)}
\frac
{\lambda\gamma n+
\sigma_{\max} \sigma'}
{ \lambda\gamma n }
    \log 
\left(
\frac{1}
   {\aggpar
(1-\Theta)}
\frac
{\lambda\gamma n+
\sigma_{\max} \sigma'}
{ \lambda\gamma n }
    \frac1{\epsilon_\gap}
    \right) 
$$
iterations we have obtained a duality gap less than $\epsilon_\gap$.

\end{document}